\documentclass{article}

\usepackage{microtype}
\usepackage{graphicx}
\usepackage{subcaption}
\usepackage{booktabs} % for professional tables
\usepackage{enumitem}
\usepackage{tabularx}
\usepackage{multirow}
\usepackage{hyperref}
\usepackage{xspace}
\usepackage{bbm}
\usepackage{bm}

\def\tr{\mathop{\rm tr}\nolimits}

\def\diag{\mathop{\rm diag}\nolimits}

\newcommand{\RR}{\mathbb{R}}
\newcommand{\II}{\mathbb{I}}

\def\eps{\epsilon}

\newcommand{\bx}{\mathbf{x}}

\newcommand{\bz}{\mathbf{z}}
\newcommand{\bb}{\mathbf{b}}
\newcommand{\bv}{\mathbf{v}}

\newcommand{\bg}{\mathbf{g}}

\newcommand{\bX}{\bm{X}}

\newcommand{\bS}{\mathbf{S}}
\newcommand{\bW}{\mathbf{W}}
\newcommand{\bG}{\mathbf{G}}

\newcommand{\bu}{\mathbf{u}}

\newcommand{\bbeta}{\bm{\beta}}

\newcommand{\bSigma}{\bm{\Sigma}}
\newcommand{\bphi}{\bm{\phi}}

\def\as{{\rm a.s.}}
\def\wp{\text{\rm \ \ with probability\ }}

\newcommand{\new}{\mathrm{new}}

\newcommand{\bl}{\bm{\hat \beta}}

\newcommand{\bli}{\bm{ \hat \beta}_{/i }}

\newcommand{\brb}{\bm{\breve{\beta}}}
\newcommand{\brbli}{\bm{\breve{\beta}}_{/i}}
\newcommand{\tbli}{\bm{ \tilde \beta}_{/i }}

\newcommand{\GI}{\bm{G}_{/i}}

\newcommand{\ld}{\dot{\ell}}
\newcommand{\ldd}{\ddot{\ell}}

\def\textiid{i.i.d.\@\xspace}
\newcommand\iid{\ifmmode\text{ i.i.d. } \else \textiid \fi}

\newcommand{\poly}{\rm{poly}}

\usepackage[preprint]{icml2026}

\usepackage{amsmath}
\usepackage{amssymb}
\usepackage{mathtools}
\usepackage{amsthm}

\usepackage[noabbrev]{cleveref}

\theoremstyle{plain}
\newtheorem{theorem}{Theorem}[section]
\newtheorem{proposition}[theorem]{Proposition}
\newtheorem{lemma}[theorem]{Lemma}

\theoremstyle{definition}

\newtheorem{assumption}[theorem]{Assumption}
\theoremstyle{remark}
\newtheorem{remark}[theorem]{Remark}
\theoremstyle{definition}
\newtheorem{example}[theorem]{Example}

\usepackage[textsize=tiny]{todonotes}
\newcommand{\HP}{\text{\rm HP}}
\newcommand{\vecop}{\text{\rm vec}}
\def\one{\mathbf 1}
\icmltitlerunning{Imperfect Influence, Preserved Rankings: A Theory of TRAK for Data Attribution}

\begin{document}

\twocolumn[
  \icmltitle{Imperfect Influence, Preserved Rankings: \\
  A Theory of TRAK for Data Attribution}

  % It is OKAY to include author information, even for blind submissions: the
  % style file will automatically remove it for you unless you've provided
  % the [accepted] option to the icml2026 package.

  % List of affiliations: The first argument should be a (short) identifier you
  % will use later to specify author affiliations Academic affiliations
  % should list Department, University, City, Region, Country Industry
  % affiliations should list Company, City, Region, Country

  % You can specify symbols, otherwise they are numbered in order. Ideally, you
  % should not use this facility. Affiliations will be numbered in order of
  % appearance and this is the preferred way.
  \icmlsetsymbol{equal}{*}

  \begin{icmlauthorlist}
    \icmlauthor{Han Tong}{columbia}
    \icmlauthor{Shubhangi Ghosh}{columbia}
     \icmlauthor{Haolin Zou}{columbia}
    \icmlauthor{Arian Maleki}{columbia}
  \end{icmlauthorlist}
    \icmlaffiliation{columbia}{Department of Statistics, Columbia University, New York, NY, USA}
  \icmlcorrespondingauthor{Han Tong}{ht2672@columbia.edu}
  \icmlcorrespondingauthor{Arian Maleki}{mm4338@columbia.edu}

  % You may provide any keywords that you find helpful for describing your
  % paper; these are used to populate the "keywords" metadata in the PDF but
  % will not be shown in the document
  \icmlkeywords{Data Evaluation, Leave-one-out}

  \vskip 0.3in
]

% this must go after the closing bracket ] following \twocolumn[ ...

% This command actually creates the footnote in the first column listing the
% affiliations and the copyright notice. The command takes one argument, which
% is text to display at the start of the footnote. The \icmlEqualContribution
% command is standard text for equal contribution. Remove it (just {}) if you
% do not need this facility.

% Use ONE of the following lines. DO NOT remove the command.
% If you have no special notice, KEEP empty braces:
\printAffiliationsAndNotice{}  % no special notice (required even if empty)
% Or, if applicable, use the standard equal contribution text:
% \printAffiliationsAndNotice{\icmlEqualContribution}

\begin{abstract}
  % This document provides a basic paper template and submission guidelines.
  % Abstracts must be a single paragraph, ideally between 4--6 sentences long.
  % Gross violations will trigger corrections at the camera-ready phase.
Data attribution, tracing a model’s prediction back to specific training data, is an important tool for interpreting sophisticated AI models. The widely used TRAK algorithm addresses this challenge by first approximating the underlying model with a kernel machine and then leveraging techniques developed for approximating the leave-one-out (ALO) risk. Despite its strong empirical performance, the theoretical conditions under which the TRAK approximations are accurate as well as the regimes in which they break down remain largely unexplored. In this paper, we provide a theoretical analysis of the TRAK algorithm, characterizing its performance and quantifying the errors introduced by the approximations on which the method relies. We show that although the approximations incur significant errors, TRAK’s estimated influence remains highly correlated with the original influence and therefore largely preserves the relative ranking of data points. We corroborate our theoretical results through extensive simulations and empirical studies.
\end{abstract}
\section{Introduction}
\subsection{Related work and contributions}

Modern machine learning systems, including large language models, rely heavily on massive training datasets, where data quality significantly impacts model performance. This dependence has motivated growing interest in \emph{data attribution}, the task of tracing a model's predictions back to individual training samples, to better understand model behavior, identify harmful or redundant data, and improve data efficiency.

A popular class of data attribution methods is based on counterfactual reasoning: asking how a model's prediction would change if a particular training point were removed or reweighted. Popular approaches rely on quantities such as the Shapley value \citep{jia2019towards,ghorbani2019data,wang2024data} and influence functions \citep{hampel1974influence,sagun2017empirical,yeh2019fidelity,han2020explaining,hammoudeh22identifying,ilyas2022datamodels,park2023trak,guu2023simfluence,hammoudeh2024training}, whose computation is often prohibitively expensive or even intractable for modern AI models. Recently, the TRAK algorithm \citep{park2023trak} was proposed as a scalable data attribution method tailored to modern neural networks and has since gained popularity \cite{ye2023cognitive,xia2024less,hammoudeh2024training}. Its computational efficiency stems from three key approximations of the influence functions: linearization, random projection, and Approximate-Leave-one-Out (ALO) corrections \citep{rad2018scalable,auddy24a}. TRAK has demonstrated strong empirical performance on large models and datasets.

Despite its practical success, little is known about the theoretical conditions under which TRAK provides accurate attributions, or about the error introduced by its multiple approximation steps. This lack of theoretical understanding limits our ability to determine when and for what tasks TRAK can be reliably used, and what information is lost by its approximations. 

This paper aims to close this gap. 
We provide the first systematic theoretical analysis of TRAK by evaluating the errors each approximation step of TRAK introduces. To give a high-level overview of our contributions, let $\mathcal{I}^{\rm True} (\bz_i, \bz_{\rm new})$ denote the true influence of datapoint $\bz_i$ on a test data point $\bz_{\rm new}$ \cite{koh2017understanding}. This quantity will be defined more formally in \eqref{def:influ_func}. Furthermore, let $\mathcal{I}^{\rm TRAK} (\bz_i, \bz_{\rm new})$ denote the TRAK approximation of $\mathcal{I}^{\rm True} (\bz_i, \bz_{\rm new})$. 

Our analysis reveals the following facts about TRAK: 
\begin{enumerate}
\item \textbf{Negative result:} While the error introduced by the ALO approximation is typically small and benign, both the linearization step and the projection step, especially when the number of projections is much smaller than the number of parameters, can introduce substantial errors. As a result, the exact ranking of training data points induced by $\mathcal{I}^{\rm TRAK}(\bz_i, \bz_{\rm new} )$ can differ significantly from that induced by $\mathcal{I}^{\rm True}(\bz_i, \bz_{\rm new})$.

\item \textbf{Positive result:} Despite the large errors TRAK introduces, it can detect datapoints that have large influences from the ones that have small influences. Informally speaking, if the influence of datapoint $\bz_i$, i.e. $\mathcal{I}^{\rm True}(\bz_i, \bz_{\rm new})$ is much larger than the $\mathcal{I}^{\rm True}(\bz_j, \bz_{\rm new})$, then the  $\mathcal{I}^{\rm TRAK}(\bz_i, \bz_{\rm new} )$ remains much larger than $\mathcal{I}^{\rm True}(\bz_j, \bz_{\rm new})$ as well. Hence, when the objective is to identify the most influential data points, TRAK provides accurate and reliable information.

\end{enumerate}

\subsection{TRAK Influence function}
In this section, we formally present the three approximations employed by the TRAK algorithm. In the next section, we study the errors introduced by each approximation step to shed light on the performance of TRAK.
Consider the standard supervised learning setting, in which we are given a dataset 
\[
\mathcal{D} = \{(y_1, \mathbf{x}_1), (y_2, \mathbf{x}_2), \ldots, (y_n, \mathbf{x}_n)\}, 
\]
wherre \( y_i \in \mathbb{R} \) represents the response, and \( \mathbf{x}_i \in \mathbb{R}^p \) denotes the corresponding feature vector. The goal is to learn a predictor that, given a new feature vector \( \mathbf{x}_{\mathrm{new}} \), can accurately predict its associated response. Many successful approaches in AI and machine learning, including generalized linear models and neural networks, use predictors of the form \( f(\bx; \bbeta) \), where \( \bbeta \in \mathbb{R}^d \) are parameters (to be learned from data), and $f(\cdot ; \bbeta)$ is a linear or nonlinear function that produces the predictions. In most applications, the model parameters are estimated using empirical risk minimization:
\begin{eqnarray}\label{eq:bl} 
\bl \triangleq \underset{\bm{\beta} \in \RR^d}{\arg\min} \Biggl \{ \sum_{i=1}^n \ell ( y_i,f(\bx_i;\bbeta) ) \Biggr \}, 
\end{eqnarray} 
where $\ell(y,f(\bx; \bbeta))$ denotes the loss function, such as the squared loss or cross-entropy loss, depending on the application and the nature of the response variable.

To interpret model predictions, many data attribution techniques seek to quantify the influence of individual training samples on a model’s output. Specifically, the influence of a training datapoint $\bz_i = (y_i, \bx_i)$ on the model’s prediction at a new datapoint $\bz_{\mathrm{new}} = (y_{\mathrm{new}}, \bx_{\mathrm{new}})$ is defined as 
\begin{equation}\label{def:influ_func} 
\mathcal{I}^{\mathrm{True}}(\bz_i, \bz_{\mathrm{new}})
\triangleq
f(\bx_{\mathrm{new}}; \bli)
-
f(\bx_{\mathrm{new}}; \bl).
\end{equation}
where
\begin{equation}\label{eq:optimization_lo} 
\bli \triangleq \arg\min_{\bm{\beta} \in \RR^d}
\left\{
\sum_{j \neq i} \ell\bigl(y_j, f(\bx_j; \bm{\beta})\bigr)
\right\}.
\end{equation}
A fundamental challenge in applying \eqref{def:influ_func} for data valuation is its prohibitive computational cost. Computing the influence of a single training sample requires solving the leave-one-out optimization problem in \eqref{eq:optimization_lo}. In modern machine learning and AI applications, where both the sample size \(n\) and the model dimension \(d\) are large, the exact computation of \(\bli\) becomes computationally prohibitive, especially when it must be performed repeatedly for many training data points. The \textbf{TRAK} algorithm mitigates this issue by approximating $\bli$ using the following sequence of steps:

\paragraph{Step 1: Linearization.}  
The first step in TRAK is to linearly approximate $f(\bx, \bbeta)$ around $\bl$. Define
\begin{align*}
\bm{g}_j &= \nabla f(\bx_j, \bl),  \\
\bm{b}_j &= f(\bx_j, \bl) - \nabla f(\bx_j, \bl)^\top \bl.
\end{align*}
Instead of solving the nonlinear optimization problem \eqref{eq:optimization_lo}, TRAK solves the following linearized version:
\begin{eqnarray}  
\label{def:brb_b_i}
\breve{\bm{\beta}}_{/i} &\triangleq& \underset{\bm{\beta} \in \RR^d}{\arg\min} \Biggl \{ \sum_{j \neq i} \ell \big(y_j, \bg_j^\top \bm{\beta} + \bb_j \big) \Biggr \}.  
\end{eqnarray}  
The intuition is that the linear approximation of $f(\bx, \bbeta)$ around $\bl$ provides a sufficiently accurate surrogate, and the corresponding optimization problem changes only slightly. As a result, we can use the linear function of $\{\bg_i\}_{i\in [n]}$ as an approximation of $f(\bx_i, \bbeta)$ when $\bbeta$ is close to $\bl$.

\paragraph{Step 2: Random projection.}  
To further reduce computational cost, TRAK applies random projection to the gradients $\bg_i$. The idea is to reduce dimensionality while approximately preserving inner products. More specifically, we define the gradient matrix used in the TRAK algorithm as:
\begin{equation}\label{eq:Gdef_main}
\bG(\bbeta) := \big(\nabla f(\bx_1, \bbeta), \dots, \nabla f(\bx_n, \bbeta)\big)^\top,
\end{equation}
and for notational simiplicty, we write $\bG:=\bG(\bl)$.
Using a random projection matrix $\bS \in \RR^{p\times k}$ with $k \ll p$, TRAK defines the feature map $\bphi(\bx) = \bS^\top \bg(\bx)$, where $\bg(\bx) = \nabla f(\bx, \bl)$. Accordingly, we define $\Phi = (\bphi_1,\ldots,\bphi_n)^\top \in \RR^{n\times k}$, where $\bphi_i = \bS^\top \bg_i$ and $\bphi_{\new} = \bS^\top \bg_{\new}$.

\paragraph{Step 3: Approximate Leave-One-Out (ALO).}  
Building on the ideas of approximate leave-one-out \citet{rad2018scalable, beirami2017optimal}, TRAK estimates the influence via the ALO formula \footnote{In the TRAK paper, the authors note that replacing $\diag[\bm\ldd(\brb)]$ with the identity matrix and ignoring the denominator in \eqref{trak:alo} does not significantly affect empirical performance. As a result, they use the simplified expression
\[
\ld_i(\brb)\,
\bphi_{\mathrm{new}}^\top 
\bigl(\Phi^\top \Phi\bigr)^{-1}
\bphi_{i}.
\]
However, for theoretical analysis, we will consider the complete form in \eqref{trak:alo}.}, which provides the following approximation for $\mathcal{I}^{\mathrm{True}}(\bz_i, \bz_{\mathrm{new}})$:
\begin{multline}
     \mathcal{I}^{\mathrm{TRAK}}(\bz_i, \bz_{\mathrm{new}};k) \\\triangleq\frac{
            \ld_i(\brb)\,
            \bphi_{\mathrm{new}}^\top 
            \bigl(\Phi^\top \diag[\bm\ldd(\brb)] \Phi\bigr)^{-1}
            \bphi_{i}
        }{
            1 - 
            \ldd_i(\brb)\,
            \bphi_{i}^\top 
            \bigl(\Phi^\top \diag[\bm\ldd(\brb)] \Phi\bigr)^{-1}
            \bphi_{i}
},\label{trak:alo}
\end{multline}
where $k$ is the projection dimension, $\ld(y,z)$ and $\ldd(y,z)$ denote the first and second order derivative of the loss function with respect to $z$ respectively, $\ld_i(\brb) = \ld(y_i, \bx_i^{\top} \brb)$, and $\bm{\ld}(\brb)= [\ld_1(\brb), \ld_2(\brb), \ldots, \ld_n(\brb)]^{\top}$. Finally, $\brb$ is defined by
\begin{equation} 
\breve{\bm{\beta}} = \underset{\bm{\beta} \in \RR^d}{\arg\min} \Biggl \{ \sum_{i=1}^n \ell \big(y_i, \bg_i^\top \bm{\beta} + \bb_i \big) \Biggr \}.  
\label{eq:brb}
\end{equation}

%\begin{remark}
% Assumption~\ref{ass:unique_minimizer}, it is not difficult to prove that $\brb = \bl$, as already shown in the proof of Theorem~\ref{theo:step1}.
%\end{remark}

% \begin{equation}\label{eq:TRAK_Influence}
% \mathcal{I}^{\mathrm{TRAK}}(\bz_i, \bz_{\mathrm{new}})  \triangleq
% f(\bx_{\mathrm{new}}; \bl)
% -
% f(\bx_{\mathrm{new}}; \brb_{\backslash i}).
% \end{equation}

We should also note that TRAK includes two additional steps: (i) Ensemble averaging, which mitigates variability arising from random initialization and the inherent randomness of optimization algorithms used to find minima of the training loss, and (ii) the application of soft thresholding to the influence estimates, introducing shrinkage under the assumption that only a small subset of training points meaningfully affects the prediction for a given test example. However, since we only study the accuracy of approximations involved in TRAK, we skip these two steps.

\subsection{Notations}
Inspired by recent advances in AI models with billions of parameters trained on massive datasets, we study regimes in which $n$, $p$, and $d$ are all large. Hence, throughout the paper we will use the following standard notations. 
 For deterministic sequences
$\{a_n\}$ and $\{b_n\}$, we write $a_n = O(b_n)$ if there exists a constant $C > 0$ such
that $|a_n| \le C |b_n|$ for all sufficiently large $n$, and $a_n = o(b_n)$ if
$a_n / b_n \to 0$ as $n \to \infty$. We write $a_n = \Theta(b_n)$ if
$a_n = O(b_n)$ and $b_n = O(a_n)$. For random sequences $\{X_n\}$, we write
$X_n = O_p(b_n)$ if for every $\varepsilon > 0$ there exist constants $C > 0$ and $N$
such that $\mathbb{P}(|X_n| > C b_n) < \varepsilon$ for all $n \ge N$, and
$X_n = o_p(b_n)$ if $X_n / b_n $ goes to zero in probability.

\section{Main theoretical contributions}
\label{sec:thm-result}

\subsection{Our theoretical framework}
\label{sec:main_assumptions}

Throughout the theoretial part of the paper we assume that the elements of  $\mathcal{D}$ are independent draws from the joint distribution
\begin{equation}
\label{def:q}
(y_i, \bx_i) \sim q(y_i \mid f(\bx_i; \bbeta^*))\, p(\bx_i).
\end{equation}
Here, $\bbeta^* \in \mathbb{R}^d$ denotes the true parameter vector. Unlike classical linear models, the parameter dimension $d$ may differ from the feature dimension $p$; for example, in neural networks, $d$ corresponds to the total number of weights.

We define the loss function as the negative log-likelihood
\[
\ell(y, f(\bx; \bbeta)) = - \log q(y \mid f(\bx; \bbeta)).
\]
Motivated by modern machine learning and AI applications, where the models have billions or even trillions of parameters and are trained on gigabytes of date, in our theoretical studies we focus on the regime that the sample size $n$, the number of features $p$, and number of parameters $d$ are all large. In other words, in our analysis we will ignore the errors that are negligible for large values of $n,p$ and $d$.   

In addition to this assumption, to characterize the scaling behavior of the approximation errors introduced by TRAK, we impose the following structural assumptions. Their precise formal statements are deferred to Appendix~\ref{app:detailed_assumptions}.

\begin{assumption}[Data and Model Regularity]\label{ass:main_assumptions}
We make the following structural assumptions throughout the paper.

\medskip
\noindent
\textbf{(A1) Sub-Gaussian design.}
Let $\bm{X}_n$ denote the matrix with rows $\bx_i^{\top}$. 
The rows of $\bm{X}_n \in \mathbb{R}^{n \times p}$ are mean-zero sub-Gaussian random vectors with covariance matrix $\bm{\Sigma}_p$. Moreover, there exist constants $c, C > 0$ such that
$\frac{c}{\|\bbeta^*\|^2} \;\leq\; \lambda_{\min}(\bm{\Sigma}_p)
\;\leq\; \lambda_{\max}(\bm{\Sigma}_p)
\;\leq\; \frac{C}{\|\bbeta^*\|^2}.$

\medskip
\noindent
\textbf{(A2) Model and loss regularity.}
In addition to assumptions about feature vectors we require some regularity conditions on the loss function and $f(\cdot, \cdot)$. Since the precise technical statements of these assumptions require additional space, we provide a brief and informal description here and defer the exact definitions to the supplementary material.

\noindent $\bullet$ The norm of $\nabla f(\bx,\bbeta^*)$ is sub-Gaussian with a bounded second-moment. See Assumptions  \ref{as2} and \ref{ass:nsg_grad} for formal statement.

\noindent $\bullet$ The Hessian of the empirical risk in \eqref{eq:bl} and \eqref{eq:brb} are well-conditioned in a small vicinity of $\bl$. See Assumptions \ref{as3} and \ref{as4} for formal statement. 
    
\noindent $\bullet$ The loss $\ell$ is strongly convex, and both $\ell$ and its gradient are Lipschitz continuous. See Assumptions \ref{as5} for the formal definition. 

\end{assumption}
% These assumptions are standard in high-dimensional M-estimation and ensure that both the original and linearized problems are well-behaved. Their detailed formulations appear as Assumptions~\ref{as1}--\ref{ass:nsg_grad} in the Appendix.

In the online supplement, we provide additional intuition for the chosen scalings and discuss why the underlying assumptions are satisfied. For instance, we verify in Appendix~\ref{sec:corder_check} that our regularity assumptions on the loss function $\ell$ (Assumption~\ref{as5}) hold for commonly used objectives, including the squared loss, cross-entropy loss , and the Poisson loss with a softplus link. 

Below, we present two canonical models and demonstrate that all of our assumptions hold for both. 

\begin{example}[Linear model]
Consider the class of generalized linear model that are popular in Statistics and machine learning. In these models we have
\[
f(\bx,\bbeta)=\bx^\top \bbeta.
\]
For these models all the assumptions  Assumptions~\ref{as2}, \ref{as3}, \ref{as4}, \ref{as5} and \ref{ass:nsg_grad} hold.
\label{Ex:lin}
\end{example}

The proof can be found in Section \ref{sec:output} of the Appendix.

\begin{example}[Neural network]
\label{Ex:NN}
Consider a neural network with one hidden layer. For such models, 
\[
f(\bx,\bW,\bv)=\bv^\top \sigma(\bW\bx),
\]
where $\bW\in\mathbb{R}^{h\times p}$, $\bv\in\mathbb{R}^h$, and $\sigma(\cdot)$ is an elementwise activation function with bounded derivative. For this model Assumptions~\ref{as2}, \ref{as3}, \ref{as4}, \ref{as5} and \ref{ass:nsg_grad} hold. The proof can be found in Section \ref{sec:output}. 
\end{example}

Based on these assumptions, we now present our main theoretical results.

\subsection{Theoretical Results}

\subsubsection{Overview of the results}
Our contributions here are presented in four sections:

\noindent $\bullet$ Section \ref{ssec:size:influence} characterizes the size of  $\mathcal{I}^{\mathrm{True}}(\bz_i, \bz_{\mathrm{new}})$ in two settings: 
(1) when $\bz_i$ is related to $\bz_{\mathrm{new}}$, in which case the influence of $\bz_i$ on $\bz_{\mathrm{new}}$ is expected to be large; and 
(2) when $\bz_i$ and $\bz_{\mathrm{new}}$ are unrelated, in which case the influence of $\bz_i$ on $\bz_{\mathrm{new}}$ is expected to be small. 
These order-of-magnitude characterizations will be used in the next sections to interpret and explain the properties of $\mathcal{I}^{\mathrm{True}}(\bz_i, \bz_{\mathrm{new}})$.

\noindent $\bullet$ Section \ref{ssec:linearization} studies the error introduced by the linearization step in TRAK and highlights the key limitations of this approximation, and also its interesting features.  

\noindent $\bullet$ Section \ref{ssec:ALO} studies the error that is introduced by the ALO approximation in TRAK, and shows that the error that is introduced in this step is benign and substantially smaller than the errors that are introduced in the linearization and projection steps of TRAK. 

\noindent $\bullet$ Section \ref{ssec:projection} studies the error that is introduced by the projection step in TRAK and shows the important limitations of the projection step, specially when the dimension of the projection is much smaller than the number of parameters.

\subsubsection{Size of influence}\label{ssec:size:influence}

In this section, we study the magnitude of $\mathcal{I}^{\mathrm{True}}(\bz_i, \bz_{\new})$. Our first result obtains an upper bound on the influence function and shows its sharpness.

\begin{proposition}\label{theo:I_true_corr}
Under Assumption~\ref{ass:main_assumptions} (formally, Assumptions~\ref{as1}--\ref{ass:nsg_grad} in the Appendix), 
for any $\eps > 0$ such that $\|\bbeta^*\|^2 \le n^{1-\eps}$, 
there exists an absolute constant $C^{\rm True}$ such that, with probability tending to $1$,
\begin{equation*}
\sup_{i \in [n]} \big| \mathcal{I}^{\mathrm{True}}(\bz_i, \bz_{\rm new}) \big|
\;\le\;
C^{\rm True} \frac{\|\bbeta^*\|^2 \poly(\log n)}{n},
\end{equation*}
where $\poly(\log n)$ denotes a polynomial function of $\log(n)$. 
Furthermore, if Assumption~\ref{as9} holds, then there exists an absolute constant $\tilde C^{\rm True}$ such that, with probability tending to $1$,
\begin{equation}\label{eq:inf:exact:x_i}
 \big| \mathcal{I}^{\mathrm{True}}(\bz_i, \bz_{i}) \big|
\;\ge\;
\tilde C^{\rm True} \frac{\|\bbeta^*\|^2}{n}.
\end{equation}
\end{proposition}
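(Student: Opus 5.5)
We bound $\bli-\bl$ through a one-step Newton (leave-one-out) expansion of the first-order conditions, then transfer this to the prediction via a mean-value expansion of $f(\bx_{\rm new};\cdot)$. Write $L(\bbeta)=\sum_j \ell(y_j,f(\bx_j;\bbeta))$ and $L_{/i}=L-\ell(y_i,f(\bx_i;\cdot))$. Since $\nabla L(\bl)=0$, we have $\nabla L_{/i}(\bl)=-\ld_i(\bl)\bg_i$. The mean-value theorem along the segment $[\bl,\bli]$ then gives
\[
\bli-\bl=\Big(\textstyle\int_0^1\nabla^2L_{/i}(\bl+t(\bli-\bl))\,dt\Big)^{-1}\ld_i(\bl)\,\bg_i .
\]
By the well-conditioning assumptions (Assumptions~\ref{as3}, \ref{as4}), the averaged Hessian has all eigenvalues of order $n/\|\bbeta^*\|^2$ in a small neighborhood of $\bl$. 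This matches the scaling in (A1), where $\lambda(\bSigma_p)\asymp\|\bbeta^*\|^{-2}$, so that the Hessian $\sum_j\ldd_j\bg_j\bg_j^\top$ scales like $n\|\bbeta^*\|^{-2}$.

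The first step is to show that $\bli$ lies in that neighborhood uniformly in $i$. For this, strong convexity (Assumption~\ref{as5}) gives the a priori bound $\|\bli-\bl\|\lesssim \|\bbeta^*\|^2|\ld_i|\,\|\bg_i\|/n$. Using the sub-Gaussian control on $\|\nabla f\|$ (Assumptions~\ref{as2}, \ref{ass:nsg_grad}) together with a union bound over $i\in[n]$, we get $\sup_i\|\bg_i\|\lesssim \|\bbeta^*\|^{-1}\sqrt{d}\,\poly(\log n)$ at the natural normalization. Similarly, the Lipschitz bound on $\ld$ gives $\sup_i|\ld_i|=O(\poly(\log n))$. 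The condition $\|\bbeta^*\|^2\le n^{1-\eps}$ is what makes this displacement $o(1)$ in the relevant norm, so the local conditioning applies.

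The second step is the expansion
\[
\mathcal I^{\rm True}=\nabla f(\bx_{\rm new};\tilde\bbeta)^\top(\bli-\bl)=\ld_i\,\tilde\bg_{\rm new}^\top \bar H_{/i}^{-1}\bg_i .
\]
Here we want to avoid the crude Cauchy--Schwarz bound, which loses a factor of $d$. Instead, we use that $\bx_{\rm new}$ is independent of the training data, so conditionally on $\bg_i$ and $\bar H_{/i}$ the scalar $\bg_{\rm new}^\top \bar H^{-1}_{/i}\bg_i$ is sub-Gaussian. Its variance proxy is $\|\bar H_{/i}^{-1}\bg_i\|^2_{\bSigma_g}$, which by the eigenvalue bounds is of order $(\|\bbeta^*\|^2/n)^2\cdot \bg_i^\top\bSigma_g\bg_i$. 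The quantity $\bg_i^\top\bSigma_g\bg_i$ is in turn $O(\poly\log n)$ under the bounded second-moment assumption. A union bound over $i$ then yields $\sup_i|\mathcal I^{\rm True}|\le C\|\bbeta^*\|^2\poly(\log n)/n$. The replacement of $\tilde\bg_{\rm new}$ (evaluated at the intermediate point) by $\bg_{\rm new}$ is handled by a Lipschitz-gradient bound times the small displacement $\|\bli-\bl\|$.

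\textbf{Main obstacle.} The delicate point in the upper bound is that $\bar H_{/i}$ depends on $\bli$. I would handle this by first replacing it with $\nabla^2L_{/i}(\bl)$, which is leave-$i$-out apart from the dependence through $\bl$, and controlling the difference via the Hessian-Lipschitz assumption.

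\textbf{Lower bound.} For \eqref{eq:inf:exact:x_i}, take $\bz_{\rm new}=\bz_i$, which gives the quadratic form $\ld_i\,\bg_i^\top\bar H^{-1}_{/i}\bg_i$ with no sign cancellation. By the upper eigenvalue bound on the Hessian, $\bar H^{-1}_{/i}\succeq c\|\bbeta^*\|^2/n$. It therefore suffices to have $\|\bg_i\|^2\gtrsim 1$ and $|\ld_i|\gtrsim 1$ with probability tending to one. I expect Assumption~\ref{as9} to provide exactly these nondegeneracy conditions: a lower bound on the gradient second moment, concentrated via sub-Gaussianity, and a non-vanishing residual derivative. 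The remaining terms, namely the Hessian perturbation and the difference between $\bg_i$ at $\bl$ and at the intermediate point, are lower order by the displacement estimate of the first step, which completes the lower bound.
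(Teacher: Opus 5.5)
Your upper bound has a genuine gap: it assumes $\bz_{\rm new}$ is independent of the training data, and the proposition assumes no such thing. The first part is meant for arbitrary $\bz_{\rm new}$, in particular $\bz_{\rm new}=\bz_i$. That is why it is paired with \eqref{eq:inf:exact:x_i}, which shows the rate $\|\bbeta^*\|^2/n$ is attained in the correlated case; the independent case is handled separately in Proposition~\ref{theo:I_true}. For $\bz_{\rm new}=\bz_i$, the scalar $\nabla f(\bx_i,\tilde\bbeta)^\top\bar H_{/i}^{-1}\bg_i$ is a quadratic form in $\bg_i$, not a linear form in a fresh sub-Gaussian vector, so your conditioning step is unavailable. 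In fact, done carefully with $\bSigma_g\preceq C\|\bbeta^*\|^{-2}$, your variance proxy is of order $(\|\bbeta^*\|\,\|\bg_i\|/n)^2$. That gives the independent-case rate $\|\bbeta^*\|\poly(\log n)/n$, which is smaller than \eqref{eq:inf:exact:x_i} by a factor of order $\|\bbeta^*\|$, so this route cannot be repaired to cover the correlated case. (Even under independence, $\nabla f(\bx_{\rm new},\bl)$ is not covered by Assumption~\ref{ass:nsg_grad}, which is stated at $\bbeta^*$. You would have to pass through $\bg_{{\rm new},*}$ and Lemma~\ref{lem:bound_beta}, as in the proof of Proposition~\ref{theo:I_true}.)

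What you are missing is that the Cauchy--Schwarz bound you discarded loses nothing under the paper's assumptions. Assumption~\ref{as2} bounds $\|\nabla f(\bx,\bbeta)\|\le C_1\poly(\log n)$ uniformly for $\bx\in\{\bx_{\rm new},\bx_1,\dots,\bx_n\}$ and $\bbeta$ in the convex hull of $\bl,\bli,\bbeta^*$. (In the paper's examples $d\asymp\|\bbeta^*\|^2$, so your own $\sqrt d\,\poly(\log n)/\|\bbeta^*\|$ is already $\poly(\log n)$.) Assumption~\ref{as3} bounds the smallest eigenvalue of the averaged leave-$i$-out Hessian below by $c_1 n\|\bbeta^*\|^{-2}$, uniformly over $i$ and that hull. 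Directly from your mean-value representation, $|\mathcal{I}^{\mathrm{True}}(\bz_i,\bz_{\rm new})|\le\frac{\|\bbeta^*\|^2}{c_1 n}\,\|\nabla f(\bx_{\rm new},\tilde\bbeta)\|\,|\ld_i(\bl)|\,\|\bg_i\|$, and Assumption~\ref{as5} bounds $|\ld_i|$. This is the paper's proof; it needs no independence and no localization. Your ``main obstacle'' and a priori displacement step are unnecessary, since the assumptions are already stated on the segment; also, $\ldd\ge\mu$ alone would not give strong convexity in $\bbeta$ for nonlinear $f$.

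Your lower bound is essentially the paper's, with one correction. Assumption~\ref{as9} is a pointwise bound at $\bbeta^*$, not a moment condition at $\bl$. So first replace both $\nabla f(\bx_i,\tilde\bbeta)$ and $\ld_i(\bl)\bg_i$ by $\bg_{i,*}$ and $\ld(y_i,f(\bx_i,\bbeta^*))\bg_{i,*}$ via Lemma~\ref{lem:bound_beta}, at cost $O_{\rm P}(\|\bbeta^*\|^2/n^{1.5-\eps})$. Only then apply the upper eigenvalue bound of Assumption~\ref{as3}.
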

The proof of this proposition is presented in Section \ref{proof:upper:inf:corr} of the Appendix. This theorem shows how large an influence of a datapoint on $\bz_{\rm new}$ can be. But to understand this result better, we have to discuss how large $ \big| \mathcal{I}^{\mathrm{True}}(\bz_i, \bz_{\rm new}) \big|$ can get when ${\bz}_{\rm new}$ is independent of the training set. In such cases, we expect the influence to be small. Our next theorem addresses this question.

\begin{proposition}\label{theo:I_true}
Suppose that $\bz_{\rm new}$ is indepedent of the training set $\mathcal{D}$. Under Assumption~\ref{ass:main_assumptions} (formally, Assumptions~\ref{as1}--\ref{ass:nsg_grad} in the Appendix), for any $\eps > 0$ such that $\|\bbeta^*\|^2 \le n^{1-\eps}$, there exists an absolute constant $C^{\rm True}$ such that, with probability tending to $1$,
\begin{equation*}
\sup_{i \in [n]} \big| \mathcal{I}^{\mathrm{True}}(\bz_i, \bz_{\new}) \big|
\;\le\;
C^{\rm True} \frac{\|\bbeta^*\|}{n^{1-\eps}}.
\end{equation*}
\end{proposition}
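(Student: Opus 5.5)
We bound $\mathcal{I}^{\mathrm{True}}(\bz_i,\bz_{\new})$ by expressing the leave-one-out perturbation $\bli-\bl$ through a one-step Newton (ALO-type) representation and then exploiting the independence of $\bz_{\new}$ from $\mathcal{D}$.

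\textbf{Step 1: Taylor expansion of $f$ at $\bx_{\new}$.} First, we expand
\[
\mathcal{I}^{\mathrm{True}}(\bz_i,\bz_{\new}) = \nabla f(\bx_{\new},\bl)^\top(\bli-\bl) + R_i ,
\]
where $|R_i|\lesssim \|\nabla^2_{\bbeta} f\|\,\|\bli-\bl\|^2$ on the segment between $\bl$ and $\bli$.

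\textbf{Step 2: Newton representation of the leave-one-out perturbation.} The first-order optimality conditions of \eqref{eq:bl} and \eqref{eq:optimization_lo}, combined with the mean value theorem, give
\[
\bli-\bl = \bm{H}_{/i}^{-1}\,\ld_i(\bl)\,\nabla f(\bx_i,\bl),
\]
where $\bm{H}_{/i}$ is an averaged Hessian of the leave-$i$-out empirical risk. The well-conditioning assumptions (Assumptions \ref{as3}, \ref{as4}) should give $\|\bm{H}_{/i}^{-1}\|\lesssim \|\bbeta^*\|^2/n$, since each term scales like $\lambda(\bSigma_p)\sim 1/\|\bbeta^*\|^2$. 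The sub-Gaussianity of the gradients and of $\ld_i$ (Assumptions \ref{as2}, \ref{ass:nsg_grad}, \ref{as5}), together with a union bound over $i$, give
\[
\sup_i \|\ld_i\,\nabla f(\bx_i,\bl)\| \lesssim \poly(\log n)\,\|\nabla f\|_{\text{typ}}.
\]
I expect this step to essentially reuse the argument behind Proposition~\ref{theo:I_true_corr}, which already controls $\sup_i\|\bli-\bl\|$. That control makes the remainder $R_i$ of order $\|\bbeta^*\|^2\poly(\log n)/n^2$ times gradient scales, which is negligible.

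\textbf{Step 3: Exploit independence of $\bz_{\new}$.} The main term is $\bg_{\new}^\top \bv_i$ with
\[
\bv_i=\bm{H}_{/i}^{-1}\ld_i\nabla f(\bx_i,\bl).
\]
In Proposition~\ref{theo:I_true_corr} it was bounded by Cauchy--Schwarz, $\|\bg_{\new}\|\|\bv_i\|$, which gives the $\|\bbeta^*\|^2/n$ rate. Here, $\bv_i$ is $\mathcal{D}$-measurable while $\bx_{\new}$ is independent of $\mathcal{D}$. The plan is:
\begin{itemize}
\item replace $\nabla f(\bx_{\new},\bl)$ by $\nabla f(\bx_{\new},\bbeta^*)$, paying $\|\nabla^2 f\|\,\|\bl-\bbeta^*\|\,\|\bv_i\|$, which needs a consistency rate for $\bl$;
\item condition on $\mathcal{D}$ and use the sub-Gaussian assumption on $\nabla f(\bx,\bbeta^*)$ in the direction $\bv_i$ (Assumption \ref{ass:nsg_grad}), so that $|\nabla f(\bx_{\new},\bbeta^*)^\top\bv_i|\lesssim \|\bv_i\|\,\|\bSigma\|^{1/2}\sqrt{\log n}$ uniformly over $i$ with probability $1-o(1)$.
\end{itemize}
Using $\|\bSigma_p\|^{1/2}\sim 1/\|\bbeta^*\|$ and $\|\bv_i\|\lesssim \|\bbeta^*\|^2\poly(\log n)/n$ (times gradient norm factors of order $\sqrt{p}/\|\bbeta^*\|$ cancelled by Hessian scaling), the product becomes
\[
\|\bbeta^*\|\poly(\log n)/n,
\]
and $\poly(\log n)\le n^{\eps}$ absorbs the logarithms into the stated $n^{-(1-\eps)}$ rate.

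\textbf{Main obstacle.} The hardest part will be the replacement of $\bl$ by $\bbeta^*$ in the gradient at $\bx_{\new}$, together with tracking the scalings so that exactly one factor of $\|\bbeta^*\|$ is gained. For a neural network $\nabla f(\bx_{\new},\bl)$ genuinely depends on $\bl$, which is correlated with $\bv_i$. As a fallback, if the rate of $\|\bl-\bbeta^*\|$ is insufficient, I would instead take a union bound over an $n^{-c}$-net of $\bbeta$ in a ball around $\bl$, combined with Lipschitz continuity of $\nabla f$. The condition $\|\bbeta^*\|^2\le n^{1-\eps}$ ensures that the net entropy costs only $\poly(\log n)$, or at worst $n^{\eps}$, factors.
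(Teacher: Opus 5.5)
Your proposal is correct and follows essentially the same route as the paper. Both expand $f(\bx_{\new},\cdot)$ to second order at $\bl$, bound $\sup_i\|\bli-\bl\|\lesssim \|\bbeta^*\|^2\mathrm{poly}(\log n)/n$ via the mean-value identity from the optimality conditions together with Assumption~\ref{as3}, split $\bg_{\new}=\bg_{\new,*}+(\bg_{\new}-\bg_{\new,*})$, and control the leading term using independence and sub-Gaussianity of $\bg_{\new,*}$.

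The step you flag as the main obstacle is handled in the paper by the standard consistency bound of Lemma~\ref{lem:bound_beta}: the score at $\bbeta^*$ has mean zero, the Hessian lower bound converts this into a rate for $\bl$, and Assumption~\ref{as4} then gives $\|\bg_{\new}-\bg_{\new,*}\|\lesssim\sqrt{\log n}\,n^{\eps-1/2}$. The replacement error is therefore only $O(\|\bbeta^*\|^2\mathrm{poly}(\log n)/n^{1.5-\eps})$, and no net argument is needed.
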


The proof of Proposition~\ref{theo:I_true} is shown in Appendix~\ref{sec:prop-proof}. 

Note that since $n$, $p$, and $d$ are all assumed to be large, we expect $\|\bbeta^*\|$ to be much larger than $1$. Comparing the results of the two theorems, we observe a substantial gap between the influence of data points $\bz_i$ that are strongly correlated with $\bz_{\rm new}$ and those that are largely independent of $\bz_{\rm new}$. The key takeaway is that, even when the approximations to $\mathcal{I}^{\rm True}(\bz_i, \bz_{\rm new})$ are not quantitatively accurate, they can still reliably identify the most influential data points due to the pronounced separation in influence magnitudes. 

As we clarify in the next sections, this large separation is in fact one of the  reasons behind the empirical success of the TRAK method.

\subsubsection{Linearization step}\label{ssec:linearization}

In this section, we aim to understand the error that is introduced in the influence function because of the linearization step of TRAK. If the projection and ALO are removed from the TRAK, then the approximate influence function is given by 
\begin{equation}
    \mathcal{I}^{\mathrm{Linear}}(\bz_i,\bz_{\new}) \triangleq \bg_{\mathrm{new}}^\top (\brbli-\brb),
    \label{def:Inflinear}
\end{equation}
where $\bg_{\mathrm{new}} = \nabla f(\bx_{\rm new}, \bl)$, $\brbli$ is defined in \eqref{def:brb_b_i} and  \begin{equation} 
\breve{\bm{\beta}}= \underset{\bm{\beta} \in \RR^d}{\arg\min} \Biggl \{ \sum_{j=1}^n \ell \big(y_j, \bg_j^\top \bm{\beta} + \bb_j \big) \Biggr \}.  
\label{eq:brbli}
\end{equation}
Following our approach from the last section we aim to study $|\mathcal{I}^{\mathrm{Linear}}(\bz_i,\bz_{\new}) - \mathcal{I}^{\mathrm{True}}(\bz_i,\bz_{\new})|$ for two cases: (i) $\bz_i$ and $\bz_{\rm new}$ are strongly correlated and (ii) $\bz_i$ and $\bz_{\rm new}$ are independent. We start with the strongly correlated case.

\begin{theorem} \label{theo:step1_corr}
Under Assumptions~\ref{ass:main_assumptions} (formally, Assumptions~\ref{as1}--\ref{ass:nsg_grad} in the Appendix), for any $\eps > 0$ such that $\|\bbeta^*\|^2 \le n^{1-\eps}$, there exists an absolute constant $C^{\rm{Linear}}$ such that, with probability tending to $1$, the linearization error satisfies:
    \begin{equation*}
        \sup_{i \in [n]}\big| \mathcal{I}^{\mathrm{True}} (\bz_i,\bz_i) - \mathcal{I}^{\mathrm{Linear}}(\bz_i,\bz_i) \big| 
        \leq C^{\rm{Linear}} \frac{\|\bbeta^*\|^2}{n}.  \end{equation*}
    % where \(\bg_{\mathrm{new}} = \nabla f (\bx_{\mathrm{new}}, \bl)\), $ \mathcal{I}^{\mathrm{True}}$ is defined in \eqref{def:influ_func} and $ \mathcal{I}^{\mathrm{Linear}}$ is defined in \eqref{def:Inflinear}.
    Furthermore, if Assumption~\ref{as9} holds, then there exists an absolute constant $c^{\rm Linear}$ such that, with probability tending to $1$,
    \begin{equation}\label{eq:lowerbound:inf:correlated}
    |\mathcal{I}^{\mathrm{Linear}}(\bz_i,\bz_i) |\geq c^{\rm Linear} \frac{\|\bbeta^*\|^2}{n}. 
    \end{equation}
\end{theorem}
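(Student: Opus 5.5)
The plan is to handle the two claims separately. The first follows from a crude bound that Proposition~\ref{theo:I_true_corr} essentially already supplies. The second needs a genuine lower bound on the linearized leave-one-out shift.

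\textbf{Upper bound.} First I would note that $\brb=\bl$. Indeed, the gradient of the linearized objective in \eqref{eq:brb} at $\bl$ is $\sum_j \ld(y_j, f(\bx_j;\bl))\nabla f(\bx_j,\bl)$. This is exactly the first-order condition of \eqref{eq:bl}, and the Hessian well-conditioning assumptions (\ref{as3}, \ref{as4}) give uniqueness. Hence $\mathcal{I}^{\rm Linear}(\bz_i,\bz_i)=\bg_i^\top(\brbli-\bl)$.

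By the triangle inequality,
\[
|\mathcal{I}^{\rm True}-\mathcal{I}^{\rm Linear}|\le |\mathcal{I}^{\rm True}(\bz_i,\bz_i)|+|\bg_i^\top(\brbli-\bl)|.
\]
A crude bound with a $\poly(\log n)$ loss is immediate from Proposition~\ref{theo:I_true_corr}. The stated bound has no log factor, so I would instead compare the two leave-one-out solutions directly. Each is characterized by a first-order condition:
\begin{itemize}
\item for $\brbli$: $\sum_{j\ne i}\ld(y_j,\bg_j^\top\brbli+\bb_j)\bg_j=0$;
\item for $\bli$: $\sum_{j\neq i}\ld(y_j,f(\bx_j;\bli))\nabla f(\bx_j,\bli)=0$.
\end{itemize}
I would Taylor expand both around $\bl$. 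Write $\bm H=\sum_{j\ne i}\ldd_j\bg_j\bg_j^\top$ for the leave-$i$-out Gauss--Newton Hessian. Then
\[
\brbli-\bl=\bm H_{\breve{}}^{-1}\ld_i\bg_i+\text{(remainder)},
\]
where $\bm H_{\breve{}}$ is a mean-value version of $\bm H$. Similarly $\bli-\bl$ equals the inverse full empirical Hessian of \eqref{eq:optimization_lo} applied to $\ld_i\bg_i$, plus a remainder.

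Strong convexity and the Hessian conditioning assumptions give $\lambda_{\min}\gtrsim n/\|\bbeta^*\|^2$ for both Hessians. With $\|\bg_i\|^2=O(\|\bbeta^*\|^2\cdot\|\bbeta^*\|^{-2}\cdot p)$ controlled through Assumption~\ref{ass:nsg_grad} and the scaling in (A1), each influence term is of order $\ld_i\,\bg_i^\top\bm H^{-1}\bg_i\lesssim\|\bbeta^*\|^2/n$. The difference between $f(\bx_i;\bli)-f(\bx_i;\bl)$ and $\bg_i^\top(\bli-\bl)$ is a second-order term, bounded via Lipschitz gradients by $\|\bli-\bl\|^2$ times a Hessian bound, which is of smaller order. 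Combining these gives the first claim, uniformly in $i$ after a union bound over sub-Gaussian tails.

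\textbf{Lower bound.} I would use the exact identity from strong convexity. Since $\bl$ minimizes the full linearized objective and $\brbli$ the leave-$i$-out one,
\[
\brbli-\bl=\Big(\textstyle\int_0^1\sum_{j\ne i}\ldd_j(t)\bg_j\bg_j^\top dt\Big)^{-1}\ld_i(\bl)\bg_i=: \bm A_i^{-1}\ld_i\bg_i ,
\]
so that
\[
\mathcal{I}^{\rm Linear}(\bz_i,\bz_i)=\ld_i\,\bg_i^\top\bm A_i^{-1}\bg_i .
\]
The quadratic form satisfies $\bg_i^\top\bm A_i^{-1}\bg_i\ge\|\bg_i\|^2/\lambda_{\max}(\bm A_i)$. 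Here $\lambda_{\max}(\bm A_i)\lesssim n\lambda_{\max}(\bm\Sigma)$-type bounds hold by Lipschitzness of $\ld$ and the well-conditioning assumption, and $\|\bg_i\|^2$ is bounded below with high probability by the second-moment and sub-Gaussian norm assumptions. The remaining factor is $|\ld_i|$, which must be bounded away from zero; I expect Assumption~\ref{as9} is exactly the condition that guarantees this (a non-degenerate residual at the point $i$), as it was used for \eqref{eq:inf:exact:x_i}.

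\textbf{Main obstacle.} I expect the hardest step to be getting the correct $\|\bbeta^*\|^2/n$ scaling for the quadratic form $\bg_i^\top\bm A_i^{-1}\bg_i$, both above and below. This requires the gradient norms and Hessian eigenvalues to scale compatibly with the covariance scaling in (A1), rather than producing an extra factor of $d/n$. I would try to establish the lower bound through the same argument as the lower bound \eqref{eq:inf:exact:x_i}. The linearized problem is just the GLM case of Example~\ref{Ex:lin} with features $\bg_j$, so that analysis transfers with $\bx_j$ replaced by $\bg_j$.
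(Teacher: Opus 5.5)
Your skeleton matches the paper's. You identify $\brb=\bl$ and use a triangle inequality in which $|\bg_i^\top(\bli-\bl)|$ and $|\bg_i^\top(\brbli-\brb)|$ are each of order $\|\bbeta^*\|^2/n$, with a smaller curvature remainder. For the lower bound you use the mean-value representation $\mathcal{I}^{\rm Linear}(\bz_i,\bz_i)=\ld_i\,\bg_i^\top\bm A_i^{-1}\bg_i$. Several steps, however, fail as written.

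\textbf{Upper bound.} The refinement you advertise is not there. You never bound the \emph{difference} of the two leave-one-out solutions. You bound each term separately, which is exactly the crude bound. The factors involved, $\sup_i|\ld_i|$ and $\sup_i\|\bg_i\|^2$, are only $O(\poly(\log n))$ under Assumptions~\ref{as2}, \ref{as5} and~\ref{ass:nsg_grad}. The union bound over $i\in[n]$ that you invoke is precisely where a $\log n$ enters. So your argument gives $\|\bbeta^*\|^2\poly(\log n)/n$, which is also all the paper's proof delivers; it ends with $O_{\rm P}(\|\bbeta^*\|^2\poly(\log n)/n)$.

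Your scaling $\|\bg_i\|^2=O(\|\bbeta^*\|^2\cdot\|\bbeta^*\|^{-2}\cdot p)=O(p)$ is wrong and would inflate every bound by a factor $p$. The scalings you call the ``main obstacle'' are simply assumed. Assumption~\ref{as2} gives $\|\nabla f\|\le C_1\poly(\log n)$, and Assumption~\ref{as3} gives eigenvalues of $\bG(\bl)^\top\bG(\bl)$ of order $n/\|\bbeta^*\|^2$. Reducing to the GLM of Example~\ref{Ex:lin} is not available anyway: the $\bg_j=\nabla f(\bx_j,\bl)$ all depend on the whole sample through $\bl$, so they are not independent rows.

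\textbf{Lower bound.} The inequality $\bg_i^\top\bm A_i^{-1}\bg_i\ge\|\bg_i\|^2/\lambda_{\max}(\bm A_i)$ is the right one. At this point the paper invokes \eqref{eq:upperbound:inv}, an upper bound on the inverse, which points the wrong way. Your inequality needs $\ldd$ bounded along the path, and Assumption~\ref{as5} gives that only up to $\poly(\log n)$.

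The real gap is the lower bound on $\|\bg_i\|^2$. It does not follow from the second-moment and norm-sub-Gaussian assumptions. Those control only upper tails, and a lower bound on $\mathbb{E}\|\bg_{i,*}\|^2$ does not stop $\|\bg_{i,*}\|$ from being small with non-vanishing probability. The lower bound you need is the first half of Assumption~\ref{as9}, $\|\nabla f(\bx_i;\bbeta^*)\|^2\ge D_1$. Like the condition on $|\ld|$, it is stated at $\bbeta^*$, whereas your representation involves $\bg_i=\nabla f(\bx_i,\bl)$ and $\ld_i(\bl)$.

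You therefore need a transfer step, which is missing. Before invoking Assumption~\ref{as9}, the paper replaces $\bg_i$ by $\bg_{i,*}$ using Lemma~\ref{lem:bound_beta}: $\|\bg_i-\bg_{i,*}\|=O(\sqrt{\log n}\,n^{\eps-1/2})$, at a cost of $O_{\rm P}(\|\bbeta^*\|^2/n^{1.5-\eps})$. You need this and a comparable perturbation of $\ld_i(\bl)$ to $\ld(y_i,f(\bx_i,\bbeta^*))$. That second step is not automatic: Cauchy--Schwarz with Lemma~\ref{lem:bound_beta} makes $|f(\bx_i,\bl)-f(\bx_i,\bbeta^*)|$ small only when $\|\bbeta^*\|^2\ll n^{1/2-\eps}$.
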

The proof of this result can be found in Section \ref{proof:step1_corr} of Appendix.
We highlight several points concerning this theorem.

\begin{remark}
% \textcolor{red}{should think about it later. We do not have it yet! I think my simulation can show it?}.
While this theorem obtains an upper bound on the difference $\big| \mathcal{I}^{\mathrm{True}} (\bz_i,\bz_i) - \mathcal{I}^{\mathrm{Linear}}(\bz_i,\bz_i) \big|$, our arguments presented in Appendix~\ref{sec:zizi-mag} show that this upper bound is also in fact sharp, and the order of the error is in fact $\Theta_p(\frac{\|\bbeta^*\|^2}{n})$. 
\end{remark}

\begin{remark}\label{remark:negatvie:1}
If we compare the error term $C^{\rm Linear} \frac{\|\bbeta^*\|^2}{n}$ with the order of $\mathcal{I}^{\mathrm{True}} (\bz_i,\bz_i)$ in Proposition \ref{theo:I_true} we notice that the error is very large and has the same order as the quantity that we wanted to approximate. This is consistent with the ``Negative result" that we reported in the introduction. 
\end{remark}

Now to show some of the positive features of the linearization, our next theorem establishe the accuracy of $\mathcal{I}^{\rm Linear} (\bz_i, \bz_{\rm new})$ for the case that $\bz_i$ and $\bz_{\rm new}$ are independent. 

\begin{theorem}\label{theo:step1}
Under Assumptions~\ref{ass:main_assumptions} (formally, Assumptions~\ref{as1}--\ref{ass:nsg_grad} in the Appendix), for any $\eps > 0$ such that $\|\bbeta^*\|^2 \le n^{1-\eps}$, there exists an absolute constant $\tilde C^{\rm{Linear}}$ such that, with probability tending to $1$, the linearization error satisfies:
    \begin{equation*}
        \sup_{i \in [n]}\big| \mathcal{I}^{\mathrm{True}} (\bz_i,\bz_{\new}) - \mathcal{I}^{\mathrm{Linear}}(\bz_i,\bz_{\new}) \big| 
        \leq \tilde C^{\rm{Linear}} \frac{\|\bbeta^*\|}{n^{1-\eps}}.   
    \end{equation*}
    % where \(\bg_{\mathrm{new}} = \nabla f (\bx_{\mathrm{new}}, \bl)\), $ \mathcal{I}^{\mathrm{True}}$ is defined in \eqref{def:influ_func} and $ \mathcal{I}^{\mathrm{Linear}}$ is defined in \eqref{def:Inflinear}.
\end{theorem}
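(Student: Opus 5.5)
The plan is to bound the triangle inequality
\[
|\mathcal{I}^{\rm True}-\mathcal{I}^{\rm Linear}|\le |\mathcal{I}^{\rm True}(\bz_i,\bz_{\new})|+|\mathcal{I}^{\rm Linear}(\bz_i,\bz_{\new})|.
\]
Proposition~\ref{theo:I_true} already gives $\sup_i|\mathcal{I}^{\rm True}(\bz_i,\bz_{\new})|\le C\|\bbeta^*\|/n^{1-\eps}$. It then suffices to show the same order for $\sup_i|\bg_{\new}^\top(\brbli-\brb)|$. This is the regime where both quantities are individually small, so no delicate cancellation is needed.

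First I will check that $\brb=\bl$. The linearized objective \eqref{eq:brb} has gradient $\sum_j \ld(y_j,\bg_j^\top\bbeta+\bb_j)\bg_j$, and at $\bbeta=\bl$ this equals $\sum_j\ld(y_j,f(\bx_j,\bl))\nabla f(\bx_j,\bl)=0$ by the first-order condition of \eqref{eq:bl}. Strong convexity (Assumption~\ref{as5}) together with the conditioning assumption on the Hessian $\bG^\top\diag[\bm\ldd]\bG$ (Assumption~\ref{as4}) makes this the unique minimizer.

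Next I will control $\brbli-\brb$. The linearized problem is a GLM in the features $\bg_j$. Removing observation $i$ and applying the mean-value theorem to the first-order conditions gives
\[
\brbli-\brb=\Bigl(\textstyle\sum_{j\ne i}\ldd(\xi_j)\bg_j\bg_j^\top\Bigr)^{-1}\ld_i(\brb)\,\bg_i .
\]
Here the $\xi_j$ are intermediate points and the inverse is an averaged Hessian $\tilde{\bm H}_{/i}^{-1}$. Its smallest eigenvalue is of order $n/\|\bbeta^*\|^2$ by Assumptions~\ref{as4} and (A1), once a crude bound $\|\brbli-\brb\|=o(1)$ is established so that the intermediate points lie in the well-conditioned neighborhood. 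I will get that crude bound from strong convexity plus the Lipschitz gradient.

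Thus $\mathcal{I}^{\rm Linear}=\ld_i\,\bg_{\new}^\top\tilde{\bm H}_{/i}^{-1}\bg_i$. I will bound $\sup_i|\ld_i|$ by $\poly(\log n)$ using the sub-Gaussian tails of the responses and a union bound.

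The main obstacle is the bilinear form $\bg_{\new}^\top\tilde{\bm H}_{/i}^{-1}\bg_i$. Cauchy--Schwarz would lose the independence of $\bz_{\new}$ and give only the correlated rate $\|\bbeta^*\|^2/n$. To avoid this, I will condition on $\mathcal{D}$. Since $\bz_{\new}$ is independent of the training data, $\bg_{\new}=\nabla f(\bx_{\new},\bl)$ is, conditionally, a sub-Gaussian vector (Assumption~\ref{ass:nsg_grad}) whose second moment is controlled. A sub-Gaussian tail bound for the fixed direction $\bm a_i=\tilde{\bm H}_{/i}^{-1}\bg_i$ should then give $|\bg_{\new}^\top\bm a_i|\lesssim\sqrt{\log n}\,\|\bm a_i\|/\|\bbeta^*\|$ uniformly over $i$ by a union bound. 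This is the same mechanism as in Proposition~\ref{theo:I_true}, whose argument I would reuse.

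Using $\|\bm a_i\|\le \|\bbeta^*\|^2 n^{-1}\|\bg_i\|$ and $\|\bg_i\|\lesssim\sqrt{d}/\|\bbeta^*\|\cdot\poly(\log n)$, the condition $\|\bbeta^*\|^2\le n^{1-\eps}$ lets me absorb the logarithmic and dimensional factors into $n^{\eps}$. The result is $|\mathcal{I}^{\rm Linear}|\le C\|\bbeta^*\|/n^{1-\eps}$.

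One technical point needs care: $\bg_{\new}$ is evaluated at $\bl$, which is data-dependent. I will handle this by comparing $\nabla f(\bx_{\new},\bl)$ with $\nabla f(\bx_{\new},\bbeta^*)$ through the Lipschitz/Hessian regularity in Assumptions~\ref{as2}--\ref{as3}. The leftover term is bounded crudely, and it is small because $\|\bl-\bbeta^*\|$ is small.
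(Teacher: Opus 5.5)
Your proposal matches the paper's proof in Appendix~\ref{sec:thm1-proof} essentially step for step. Both use $\brb=\bl$, the triangle inequality, and Proposition~\ref{theo:I_true} for $\mathcal{I}^{\rm True}$. For $\mathcal{I}^{\rm Linear}$, both split $\bg_{\new}=\bg_{\new,*}+(\bg_{\new}-\bg_{\new,*})$: a sub-Gaussian tail bound, using independence of $\bz_{\new}$ from $\mathcal{D}$, handles the first piece, and Lemma~\ref{lem:bound_beta} with Cauchy--Schwarz handles the second. Some bookkeeping differs:
\begin{itemize}
\item The paper gets $\brb=\bl$ from Assumption~\ref{ass:unique_minimizer}.
\item The Gram/Hessian conditioning you invoke is Assumption~\ref{as3}, not Assumption~\ref{as4} or (A1).
\item The bound $\sup_i|\ld_i|\le{\poly(\log n)}$ comes from Assumption~\ref{as5}, not from response tails, which are not assumed.
\end{itemize}

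One step does not work as you justify it. With $\|\bg_i\|\lesssim\sqrt{d}\,{\poly(\log n)}/\|\bbeta^*\|$, your chain gives $|\bg_{\new,*}^\top\bm{a}_i|\lesssim\sqrt{d}\,{\poly(\log n)}/n$. The hypothesis $\|\bbeta^*\|^2\le n^{1-\eps}$ says nothing about $d$ relative to $\|\bbeta^*\|^2$, so it cannot absorb the factor $\sqrt{d}/\|\bbeta^*\|$.

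What removes that factor is Assumption~\ref{as2}, which bounds $\|\nabla f(\bx_i,\bl)\|$ by $C_1{\poly(\log n)}$ directly. This is consistent with $d\asymp\|\bbeta^*\|^2$ in both of the paper's examples. Then $\|\bm{a}_i\|\lesssim\|\bbeta^*\|^2{\poly(\log n)}/n$, and only polylogarithmic factors need to go into $n^{\eps}$.

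The hypothesis $\|\bbeta^*\|^2\le n^{1-\eps}$ is actually needed for your leftover term. That term is $\|\bg_{\new}-\bg_{\new,*}\|\,\|\brbli-\brb\|\lesssim\|\bbeta^*\|^2{\poly(\log n)}/n^{3/2-\eps}$. It is this condition, essentially $\|\bbeta^*\|\lesssim n^{1/2}$, that makes it $O(\|\bbeta^*\|/n^{1-\eps})$, exactly as in the paper's final step.

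The smallness of that term also does not come from $\|\bl-\bbeta^*\|$ alone. It comes from $\|\bg_{\new}-\bg_{\new,*}\|=O(\sqrt{\log n}\,n^{\eps-1/2})$, which relies on the $\|\bbeta^*\|^{-2}$ scaling of $\nabla^2 f$ in Assumption~\ref{as4}.
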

The proof of Theorem~\ref{theo:step1} is shown in Appendix ~\ref{sec:thm1-proof}.

\begin{remark}\label{remark:lin:neg2}
Again, by comparing the above theorem with Theorem~\ref{theo:I_true}, we observe that the error
\[
\big| \mathcal{I}^{\mathrm{True}}(\bz_i,\bz_{\new}) - \mathcal{I}^{\mathrm{Linear}}(\bz_i,\bz_{\new}) \big|
\]
is of the same order as $\mathcal{I}^{\mathrm{True}}(\bz_i,\bz_{\new})$ itself. Consequently, this approximation does not yield quantitatively accurate estimates of $\mathcal{I}^{\mathrm{True}}(\bz_i,\bz_{\new})$. This observation is consistent with the negative result highlighted in the introduction.
\end{remark}

\begin{remark}
Despite the negative results discussed in Remarks~\ref{remark:negatvie:1} and~\ref{remark:lin:neg2}, the TRAK approximation nevertheless exhibits an important positive property, as established by the second part of Theorem~\ref{theo:step1_corr}. Specifically, when $\bz_{\rm new}$ is strongly correlated with $\bz_i$, the quantity $\mathcal{I}^{\rm Linear}(\bz_i,\bz_{\rm new})$ remains large. In contrast, when a data point $\bz_j$ is independent of $\bz_{\rm new}$, Theorem~\ref{theo:step1} implies that $\mathcal{I}^{\rm Linear}(\bz_j,\bz_{\rm new})$ is of much smaller order. As a result, even though the TRAK approximation may incur substantial absolute error, it can still reliably distinguish data points with large true influence $\mathcal{I}^{\rm True}(\bz_i,\bz_{\rm new})$ from those with negligible influence. This is consistent with the ``positive result" mentioned in the introduction. 
\end{remark}

\subsubsection{ALO step}\label{ssec:ALO}

In this section, we characterize the error introduced by the ALO approximation. If we ignore the projection step of the TRAK method, the influence function produced by TRAK reduces to
\begin{equation}
    \mathcal{I}^{\mathrm{ALO}}(\bz_i,\bz_{\new}) \triangleq \frac{
        \ld_i(\brb)\, \bg_{\mathrm{new}}^\top 
        \mathbf{H}^{-1} \bg_{i}
    }{
        1 - \ldd_i(\brb)\, \bg_{i}^\top 
        \mathbf{H}^{-1} \bg_{i}
    },
    \label{eq::inf-alo}
\end{equation}
where $\mathbf{H}=\bG^\top \diag[\bm\ldd(\brb)]\bG$, $\ld_i(\bbeta) = \ld(y_i,\bg_i^\top \bbeta +\bb_i)$ and $\ldd_i(\bbeta) = \ldd(y_i,\bg_i^\top \bbeta +\bb_i)$. In this section we would like to characterize the difference between $\mathcal{I}^{\rm ALO} (\bz_i , \bz_{\rm new})-\mathcal{I}^{\rm Linear} (\bz_i , \bz_{\rm new}) $.  Our first theorem shows that in case $\bz_{\rm new}$ has strong correlation with $\bz_i$, the error is much smaller than $\mathcal{I}^{\rm Linear} (\bz_i , \bz_{\rm new})$. Hence, the error of this step is negligible.  

\begin{theorem}
% \textcolor{red}{Han will add this theorem later. }
   Under Assumptions~\ref{ass:main_assumptions} (formally, Assumptions~\ref{as1}--\ref{ass:nsg_grad} in the Appendix), for any $\eps > 0$ such that $\|\bbeta^*\|^2 \le n^{1-\eps}$, there exists an absolute constant $\tilde C^{\rm{ALO}}$ such that, with probability tending to $1$, the ALO step error satisfies:
    \begin{equation*}
    \begin{split}
        \sup_{i \in [n]}\biggl| &
            \mathcal{I}^{\mathrm{Linear}}(\bz_i,\bz_i)
            -  \mathcal{I}^{\mathrm{ALO}}  (\bz_i,\bz_i)
        \biggr| \leq \tilde C^{\rm ALO} \frac{\|\bbeta^*\|^2}{n^{1.5-\eps}}.
    \end{split}
    \end{equation*}
    \label{thm:g_base_ALO_ii}
\end{theorem}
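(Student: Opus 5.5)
We compare the exact leave-one-out solution $\brbli$ of the linearized problem \eqref{def:brb_b_i} with its one-step Newton approximation from $\brb$, evaluated along the direction $\bg_i$. The first observation is that the ALO formula \eqref{eq::inf-alo} is exactly $\bg_i^\top(\tbli-\brb)$, where $\tbli$ is the one-step Newton update for the leave-$i$-out linearized objective started at $\brb$:
\[
\tbli-\brb=\frac{\ld_i(\brb)\,\mathbf{H}^{-1}\bg_i}{1-\ldd_i(\brb)\bg_i^\top\mathbf{H}^{-1}\bg_i}.
\]
This follows from the first-order condition $\sum_j \ld_j(\brb)\bg_j=0$ and the Sherman--Morrison formula applied to $\mathbf{H}_{/i}=\mathbf{H}-\ldd_i(\brb)\bg_i\bg_i^\top$. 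Hence the quantity to bound is $|\bg_i^\top(\brbli-\tbli)|\le\|\bg_i\|\,\|\brbli-\tbli\|$. We take a supremum over $i$ using the sub-Gaussian norm assumption (Assumptions~\ref{as2} and \ref{ass:nsg_grad}), which gives $\sup_i\|\bg_i\|=O_p(\poly(\log n))$ on the appropriate scale.

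Next we bound $\|\brbli-\tbli\|$ by the standard ALO argument of \citet{rad2018scalable,auddy24a}. Write the leave-$i$-out gradient at $\brb$ as $\ld_i(\brb)\bg_i$, since the full gradient vanishes. Expanding the first-order condition for $\brbli$ around $\brb$ gives
\[
\brbli-\brb=\Big(\int_0^1\nabla^2 L_{/i}(\brb+t(\brbli-\brb))\,dt\Big)^{-1}\ld_i(\brb)\bg_i .
\]
Subtracting the Newton step leaves the difference of two inverses. By the resolvent identity, this difference is controlled by $\|\mathbf{H}_{/i}^{-1}\|^2\,\|\nabla^2L_{/i}(\cdot)-\mathbf{H}_{/i}\|\,|\ld_i|\,\|\bg_i\|$.

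The Hessian perturbation comes only through $\ldd_j$, since the features $\bg_j$ are fixed in the linearized model. Using the Lipschitz continuity of $\ldd$ (Assumption~\ref{as5}), it equals
\[
\Big\|\sum_{j\ne i}\big(\ldd_j(\brb+t\Delta)-\ldd_j(\brb)\big)\bg_j\bg_j^\top\Big\|\lesssim \max_j|\bg_j^\top\Delta|\,\|\bG^\top\bG\|,
\]
where $\Delta=\brbli-\brb$. Assumptions~\ref{as3} and \ref{as4} give $\|\mathbf{H}_{/i}^{-1}\|\lesssim \|\bbeta^*\|^2/n$ and $\|\bG^\top\bG\|\lesssim n/\|\bbeta^*\|^2$ in the scaling of (A1). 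From the first display, $\|\Delta\|\lesssim \|\bbeta^*\|^2\poly(\log n)/n$ times $\|\bg_i\|$, uniformly in $i$.

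The main obstacle is obtaining the extra factor $n^{-1/2+\eps}$ rather than a crude bound of the same order as $\mathcal{I}^{\rm Linear}$. That factor must come from $\max_{j\ne i}|\bg_j^\top\Delta|$, which should be small because $\Delta\approx\ld_i\mathbf{H}_{/i}^{-1}\bg_i$ is nearly independent of $\bg_j$ for $j\neq i$. To show this, we replace $\mathbf{H}_{/i}$ by the leave-two-out Hessian $\mathbf{H}_{/ij}$ and apply Sherman--Morrison once more. Conditional sub-Gaussianity then gives $|\bg_j^\top\mathbf{H}_{/ij}^{-1}\bg_i|\lesssim \|\mathbf{H}_{/ij}^{-1}\bg_i\|\poly(\log n)/\|\bbeta^*\|$, uniformly over pairs by a union bound with $n^{\eps}$ slack. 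This is $O(\|\bbeta^*\|\,n^{-1+\eps})$ with high probability.

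A caveat is that $\bg_j=\nabla f(\bx_j,\bl)$ depends on all the data through $\bl$. We handle this as in the proofs of Theorems~\ref{theo:step1} and \ref{theo:step1_corr}, comparing with gradients at $\bbeta^*$ or leave-out estimates and using the assumption $\|\bbeta^*\|^2\le n^{1-\eps}$ to absorb the correction.

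Combining the bounds, the Hessian perturbation is $O(n^{-1/2+\eps})$ relative to $\mathbf{H}_{/i}$, and hence
\[
\|\bg_i\|\,\|\brbli-\tbli\|\lesssim \frac{\|\bbeta^*\|^2}{n}\cdot n^{-1/2+\eps}.
\]
This is the claimed bound $\tilde C^{\rm ALO}\|\bbeta^*\|^2/n^{1.5-\eps}$, after absorbing $\poly(\log n)$ factors into $n^{\eps}$.
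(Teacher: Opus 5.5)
Your skeleton is the paper's. You identify $\mathcal{I}^{\rm ALO}(\bz_i,\bz_i)$ with $\bg_i^\top\bm{\hat{\Delta}}_{/i}$, where $\bm{\hat{\Delta}}_{/i}=\ld_i(\brb)\mathbf{H}_{/i}^{-1}\bg_i$ and $\mathbf{H}_{/i}=\GI^\top\diag[\bm\ldd_{/i}(\brb)]\GI$ (the paper's $A_i$), and reduce to $\|\bg_i\|\,\|\bm{\breve{\Delta}}_{/i}-\bm{\hat{\Delta}}_{/i}\|$. The routes split at where the extra $n^{-1/2}$ comes from.

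For the main term, the paper does not take an operator norm of the Hessian perturbation. After Lemma~\ref{lem:neumann} it applies the perturbation to a vector and uses H\"older: $\|A_i^{-1}\GI^\top\diag(\bm\gamma)\GI A_i^{-1}\bg_{i,*}\|\le\|A_i^{-1}\GI^\top\|\,\|\bm\gamma\|_2\,\|\GI A_i^{-1}\bg_{i,*}\|_\infty$. It takes $\|\bm\gamma\|_2\lesssim\|\bbeta^*\|\poly(\log n)/\sqrt{n}$ from the aggregate condition in Assumption~\ref{as5}. It bounds the sup norm using the randomness of the left-out gradient, which is (approximately) independent of $A_i$ and $\GI$.

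You instead bound $\|\GI^\top\diag(\bm\gamma)\GI\|\le\max_j|\gamma_j|\,\|\bG^\top\bG\|$ and place the smallness on $\max_{j\neq i}|\bg_j^\top\bm{\breve{\Delta}}_{/i}|$. Your key estimate, on $\max_{j\neq i}|\bg_j^\top\mathbf{H}_{/i}^{-1}\bg_i|$, is the same quantity as the paper's sup norm (up to $\bg_i$ versus $\bg_{i,*}$). You reach it through the randomness of $\bg_j$, which is what forces the leave-two-out step. Conditioning on $\bz_i$ instead makes that step unnecessary.

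As written, two steps do not go through.

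\textbf{First gap.} The bound $|\gamma_j|\lesssim|\bg_j^\top\bm{\breve{\Delta}}_{/i}|$ requires pointwise Lipschitz continuity of $\ldd(y,\cdot)$, i.e.\ a third-derivative bound. Assumption~\ref{as5} does not provide this; it only controls $\|\bm\gamma\|_2\lesssim\frac{\sqrt{n}}{\|\bbeta^*\|}\|\bm{\breve{\Delta}}_{/i}\|\poly(\log n)$. With only $\max_j|\gamma_j|\le\|\bm\gamma\|_2$, your operator-norm bound gives $\|\bbeta^*\|^3\poly(\log n)/n^{1.5}$, roughly a factor $\|\bbeta^*\|$ too large.

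\textbf{Second gap.} Controlling $\max_j|\bg_j^\top\bm{\breve{\Delta}}_{/i}|$ through $\bm{\breve{\Delta}}_{/i}\approx\bm{\hat{\Delta}}_{/i}$ is circular: the error of that approximation is exactly what you are bounding. The a priori estimate $\|\bm{\breve{\Delta}}_{/i}-\bm{\hat{\Delta}}_{/i}\|\lesssim\|\bbeta^*\|^2\poly(\log n)/n$ contributes a term of that size. This is not $O(\|\bbeta^*\|/n^{1-\eps})$ once $\|\bbeta^*\|\gg n^{\eps}$.

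\textbf{Repairs.} If you grant a pointwise third-derivative bound (true up to $\poly(\log n)$ for the logistic, squared and softplus-Poisson losses), the second gap closes by self-bounding. Setting $e=\|\bm{\breve{\Delta}}_{/i}-\bm{\hat{\Delta}}_{/i}\|$, you get $e\lesssim\frac{\|\bbeta^*\|^2\poly(\log n)}{n}\bigl(\frac{\|\bbeta^*\|}{n^{1-\eps}}+e\bigr)$, and the prefactor tends to $0$.

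The cleaner repair needs only Assumption~\ref{as5}. Apply the exact resolvent identity to the vector:
\[
\bm{\breve{\Delta}}_{/i}-\bm{\hat{\Delta}}_{/i}=-\ld_i(\brb)\,\tilde{\mathbf{H}}_{/i}^{-1}\GI^\top\diag(\bar{\bm\gamma})\GI\,\mathbf{H}_{/i}^{-1}\bg_i,
\]
where $\tilde{\mathbf{H}}_{/i}$ is the Hessian averaged along the segment from $\brb$ to $\brbli$ and $\bar{\bm\gamma}$ is the averaged weight change. Then use H\"older. The sup norm falls on the fixed vector $\GI\mathbf{H}_{/i}^{-1}\bg_i$, which your estimate bounds by $\|\bbeta^*\|\poly(\log n)/n^{1-\eps}$. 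Only $\|\bar{\bm\gamma}\|_2$ is needed from Assumption~\ref{as5}, and nothing is circular. This yields $\|\bbeta^*\|^3\poly(\log n)/n^{2-\eps}$, which implies the stated bound under $\|\bbeta^*\|^2\le n^{1-\eps}$. It is essentially the paper's argument.
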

The proof of Theorem~\ref{thm:g_base_ALO_ii} is postponed to Appendix~\ref{sec:thm2-proof-ii}.

It turns out that the error introduced by the ALO step is negligible even in the cases where ${\bm z}_{\rm new}$ is independent of $\bm{z}_{i}$. 

\begin{theorem}
    Let $\bz_{\rm new}$ be independent of the dataset $\bz_i$. Under Assumptions~\ref{ass:main_assumptions} (formally, Assumptions~\ref{as1}--\ref{ass:nsg_grad} in the Appendix), for any $\eps > 0$ such that $\|\bbeta^*\|^2 \le n^{1-\eps}$, there exists an absolute constant $C^{\rm{ALO}}$ such that, with probability tending to $1$, the ALO step error satisfies:
    \begin{equation*}
    \begin{split}
        \sup_{i \in [n]}\biggl| &
            \mathcal{I}^{\mathrm{Linear}}(\bz_i,\bz_{\new})
            -  \mathcal{I}^{\mathrm{ALO}}  (\bz_i,\bz_{\new})
        \biggr| \leq C^{\rm ALO} \frac{\|\bbeta^*\|}{n^{1.5-\eps}}.
    \end{split}
    \end{equation*}
    \label{thm:g_base_ALO}
\end{theorem}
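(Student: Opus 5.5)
We have to bound $\bg_{\new}^\top(\brbli-\brb) - \mathcal{I}^{\rm ALO}(\bz_i,\bz_{\new})$ uniformly in $i$. Following the standard ALO analysis \citep{rad2018scalable,auddy24a}, we first write the leave-one-out difference exactly. Let $\mathbf{H}_{/i} = \sum_{j\neq i}\ldd_j(\brb)\bg_j\bg_j^\top$.

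\textbf{Step 1: one Newton step and its remainder.} Since $\brb$ is stationary for the full linearized objective, the leave-$i$-out gradient at $\brb$ equals $-\ld_i(\brb)\bg_i$. Taylor expansion of the leave-$i$-out stationarity condition around $\brb$, with an integral-form Hessian $\tilde{\mathbf{H}}_{/i}=\int_0^1\sum_{j\ne i}\ldd_j(\brb+t(\brbli-\brb))\bg_j\bg_j^\top dt$, gives
\[
\brbli-\brb = \tilde{\mathbf{H}}_{/i}^{-1}\bg_i\,\ld_i(\brb).
\]
By Sherman--Morrison, $\mathbf{H}_{/i}^{-1}\bg_i = \mathbf{H}^{-1}\bg_i/(1-\ldd_i\bg_i^\top\mathbf{H}^{-1}\bg_i)$. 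Therefore $\mathcal{I}^{\rm ALO}=\ld_i\,\bg_{\new}^\top\mathbf{H}_{/i}^{-1}\bg_i$, and the error is
\[
\ld_i(\brb)\,\bg_{\new}^\top\bigl(\tilde{\mathbf{H}}_{/i}^{-1}-\mathbf{H}_{/i}^{-1}\bigr)\bg_i
= \ld_i(\brb)\,\bg_{\new}^\top\tilde{\mathbf{H}}_{/i}^{-1}\bigl(\mathbf{H}_{/i}-\tilde{\mathbf{H}}_{/i}\bigr)\mathbf{H}_{/i}^{-1}\bg_i .
\]

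\textbf{Step 2: controlling the pieces.} We use the appendix assumptions as follows. The Lipschitz property of $\ldd$ (Assumption \ref{as5}) gives
\[
\|\mathbf{H}_{/i}-\tilde{\mathbf{H}}_{/i}\| \lesssim \sum_{j\ne i}|\bg_j^\top(\brbli-\brb)|\,\bg_j\bg_j^\top .
\]
The leave-one-out perturbations $|\bg_j^\top(\brbli-\brb)|$ are bounded uniformly in $j$ by a $\poly(\log n)$ multiple of $\|\bbeta^*\|/\sqrt n$ times the relevant norm scalings. This comes from the bound $\|\brbli-\brb\|\lesssim \|\mathbf{H}_{/i}^{-1}\|\,|\ld_i|\,\|\bg_i\|$, combined with sub-Gaussian tails of $\bg_j$ (Assumption \ref{ass:nsg_grad}) and a union bound over $n$ points, which costs $n^{\eps}$. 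The well-conditioning assumptions (\ref{as3}, \ref{as4}) give $\|\tilde{\mathbf{H}}_{/i}^{-1}\|,\|\mathbf{H}_{/i}^{-1}\|\lesssim \|\bbeta^*\|^2/n$ in the natural scaling, uniformly near $\bl$; recall $\brb=\bl$. Finally, $|\ld_i|$ is bounded by $\poly(\log n)$.

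\textbf{Step 3: exploiting independence of $\bz_{\new}$.} Independence is what yields the improved factor $\|\bbeta^*\|$ instead of $\|\bbeta^*\|^2$ (compare Theorem~\ref{thm:g_base_ALO_ii}). The vector $\bv_i=\tilde{\mathbf{H}}_{/i}^{-1}(\mathbf{H}_{/i}-\tilde{\mathbf{H}}_{/i})\mathbf{H}_{/i}^{-1}\bg_i$ depends only on $\mathcal{D}$. Since $\bg_{\new}=\nabla f(\bx_{\new},\bl)$, conditionally on $\mathcal{D}$ the quantity $\bg_{\new}^\top\bv_i$ is a sub-Gaussian linear form. Hence, with a union bound over $i$,
\[
\sup_i|\bg_{\new}^\top\bv_i|\lesssim \sqrt{\log n}\,\lambda_{\max}^{1/2}\,\sup_i\|\bv_i\| .
\]
Here $\lambda_{\max}$ is the covariance scale of $\bg_{\new}$, of order $1/\|\bbeta^*\|^2$ by (A1). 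Combining with Step 2, $\|\bv_i\|\lesssim (\|\bbeta^*\|^2/n)^2\cdot n\cdot(\|\bbeta^*\|/\sqrt n)\cdot\|\bg_i\|/\|\bbeta^*\|^{2}$ in the appropriate normalization. Collecting powers should give $\|\bbeta^*\|/n^{1.5-\eps}$, with logarithms absorbed into $n^{\eps}$.

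\textbf{Main obstacle.} The delicate point is the uniform bound on $\max_j|\bg_j^\top(\brbli-\brb)|$ and the resulting operator-norm bound on $\mathbf{H}_{/i}-\tilde{\mathbf{H}}_{/i}$. A crude bound $\sum_j|\bg_j^\top\Delta|\,\bg_j\bg_j^\top\preceq \max_j|\bg_j^\top\Delta|\cdot\bG^\top\bG$ loses nothing in order, but it requires $\|\bG^\top\bG\|\lesssim n/\|\bbeta^*\|^2$ up to constants. I would obtain this from a sub-Gaussian matrix concentration bound under Assumption \ref{ass:nsg_grad}, using $\|\bbeta^*\|^2\le n^{1-\eps}$ so that $d$-dependence is controlled. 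I would also need to verify that the segment between $\brb$ and $\brbli$ stays in the neighborhood where Assumption \ref{as4} applies, which follows since $\|\brbli-\brb\|\to0$ at the rate above.
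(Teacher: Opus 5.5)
Your Step~1 identity is correct, and the integral-form Hessian is cleaner than a pointwise mean-value step. Step~3's use of the independence of $\bz_{\new}$ is the same device the paper uses. The gap is the size of your vector $\bv_i$.

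\textbf{What the bound on $\bv_i$ requires.} After paying the factor $\|\bbeta^*\|^{-1}$ from independence, you need $\sup_i\|\bv_i\|\lesssim\|\bbeta^*\|^2 n^{-3/2+\eps}$. Write $M_i=\max_{j\neq i}|\bg_j^\top(\brbli-\brb)|$. With your operator-norm estimate $\|\mathbf{H}_{/i}-\tilde{\mathbf{H}}_{/i}\|\lesssim M_i\, n/\|\bbeta^*\|^2$, this forces $M_i\lesssim n^{-1/2+\eps}$.

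The bound you state, $M_i\lesssim\|\bbeta^*\|/\sqrt{n}$, is too weak. That quantity is the size of the whole increment vector $\|\tilde{\bm{\gamma}}\|_2$ delivered by Assumption~\ref{as5}, not the per-coordinate size you need. Your own product evaluates to $\|\bv_i\|\lesssim\|\bbeta^*\|^3\|\bg_i\|/n^{3/2}$. This gives a final error of order $\|\bbeta^*\|^2/n^{3/2}$, a factor $\|\bbeta^*\|$ larger than claimed, because $\|\bg_i\|\asymp 1$ in the paper's scaling, not $\|\bbeta^*\|^{-1}$. So ``collecting powers'' does not give the theorem. Plain Cauchy--Schwarz, $M_i\le\max_j\|\bg_j\|\,\|\brbli-\brb\|\lesssim\|\bbeta^*\|^2\mathrm{poly}(\log n)/n$, does better but only yields $\|\bbeta^*\|^3/n^2$. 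That is within the claimed rate only when $\|\bbeta^*\|^2\lesssim n^{1/2+\eps}$, not on the full range $\|\bbeta^*\|^2\le n^{1-\eps}$.

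\textbf{The missing idea: where the fresh randomness sits.} Your mechanism, sub-Gaussian tails of $\bg_j$ tested against $\brbli-\brb$, is not available. The vectors $\bg_j=\nabla f(\bx_j,\bl)$, $\brb$ and $\brbli$ all depend on $\bz_j$. The randomness to exploit is that of $\bg_{i,*}$. By your Step~1, $\bv_i=-\tilde{\mathbf{H}}_{/i}^{-1}\GI^\top\diag[\tilde{\bm{\gamma}}]\,\GI\mathbf{H}_{/i}^{-1}\bg_i$, where $\tilde{\gamma}_j=\int_0^1[\ldd_j(\brb+t(\brbli-\brb))-\ldd_j(\brb)]\,dt$. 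Assumption~\ref{as5} gives $\|\tilde{\bm{\gamma}}\|_2\lesssim\|\bbeta^*\|\mathrm{poly}(\log n)/\sqrt{n}$. Hence $\|\bv_i\|\le\|\tilde{\mathbf{H}}_{/i}^{-1}\GI^\top\|\,\|\tilde{\bm{\gamma}}\|_2\,\|\GI\mathbf{H}_{/i}^{-1}\bg_i\|_\infty$, and what you actually need is $\|\GI\mathbf{H}_{/i}^{-1}\bg_i\|_\infty\lesssim n^{-1/2}\mathrm{poly}(\log n)$. You get this by splitting $\bg_i=\bg_{i,*}+(\bg_i-\bg_{i,*})$. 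Apply the sub-Gaussianity of $\bg_{i,*}$ against the (essentially leave-$i$-out) matrix $\GI\mathbf{H}_{/i}^{-1}$, as the paper does, and use Lemma~\ref{lem:bound_beta} for the remainder.

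This is exactly the paper's route. In the proof of Theorem~\ref{thm:g_base_ALO_ii} it never bounds $\mathbf{H}_{/i}-\tilde{\mathbf{H}}_{/i}$ in operator norm. It uses this $\ell_2$--$\ell_\infty$ split, with $A_i=\mathbf{H}_{/i}$, to obtain $\|\brbli-\brb-\bm{\hat{\Delta}}_{/i}\|\lesssim\|\bbeta^*\|^2/n^{3/2-\eps}$ in \eqref{eq:brbdelta-hatdelta}. Only then does it multiply by $\|\bbeta^*\|^{-1}$ from the independence of $\bz_{\new}$.

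\textbf{Two smaller points.} First, your Step~2 needs a coordinatewise Lipschitz bound on $\ldd$, which Assumption~\ref{as5} does not provide; the split above avoids it. Second, $\bg_{\new}=\nabla f(\bx_{\new},\bl)$ is covered neither by Assumption~\ref{ass:nsg_grad} nor by (A1). You need the paper's decomposition $\bg_{\new}=\bg_{\new,*}+(\bg_{\new}-\bg_{\new,*})$, with Lemma~\ref{lem:bound_beta} controlling the second piece.
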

The proof of Theorem \ref{thm:g_base_ALO} is postponed to Appendix \ref{sec:thm2-proof}.

Combining the above two theorems we can conclude that in both cases, we have
\[
\frac{\mathcal{I}^{\mathrm{ALO}}  (\bz_i,\bz_{\new})}{\mathcal{I}^{\mathrm{Linear}}  (\bz_i,\bz_{\new})} = 1+o_p(1. )
\]
Hence, the error introduced by the ALO step is much smaller than the magnitude of the influence, and is therefore negligible.

%\begin{remark}
%    The resulting order can be understood as follows. Suppose that all the coefficients are at the same order and we have $0< b \leq |\beta_i| \leq B$, where $B,b$ are two constants larger than zero. We then have the upper bound in Theorem~\ref{theo:step1} becomes $O(\frac{\sqrt{d} }{n^{1-\eps}})$, and the error in Theorem~\ref{thm:g_base_ALO} becomes $O(\frac{\sqrt{d} }{n^{1.5-\eps}})$. The conclusion here is that the error of the linearization step is much larger than the error of the ALO part of the approximation in TRAK.   
%\end{remark}
%\begin{remark}
%This bound may be further sharpened under stronger assumptions. Empirically, our simulations suggest that this error term can be much smaller, on the order of $\|\bbeta^*\|^2 / n$. Nevertheless, even without such refinement, the current bound already suffices to demonstrate that the error introduced by the ALO step is negligible for our purposes.
%\end{remark}

\subsubsection{Projection step}\label{ssec:projection}

Finally, in this section, we study the impact of the projection step on the accuracy of the TRAK algorithm.. 

\begin{theorem}
\label{theo:magnitude_projection}
Under Assumptions~\ref{ass:main_assumptions} (formally, Assumptions~\ref{as1}--\ref{ass:nsg_grad} in the Appendix), if $\bz_{\new}$ and $\bz_i$ are independent, then for any $\eps > 0$ such that $\|\bbeta^*\|^2 \le n^{1-\eps}$, there exists absolute constant $C^{\rm TRAK}$ such that, with probability tending to $1$, the projection error satisfies
\[
\bigl| \mathcal{I}^{\mathrm{TRAK}}(\bz_{\new},\bz_i;k) \bigr|
\;\le\;
C^{\rm TRAK} \frac{\|\bbeta^*\| \sqrt{k}}{n^{1-\eps} \sqrt{d}}.
\]
Furthermore, for $\bz_{\new}=\bz_i$, there is an absolute constant $c^{\rm TRAK}$ such that with probability tending to $1$,
\[
\biggl| \frac{\mathcal{I}^{\mathrm{TRAK}}(\bz_i,\bz_i;k)}{\mathcal{I}^{\mathrm{ALO}}(\bz_i,\bz_i;k)} \biggr|
\;\ge\;
c^{\rm TRAK} \frac{k}{d}.
\]
\end{theorem}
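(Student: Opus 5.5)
We treat the projected ALO formula \eqref{trak:alo} as the unprojected ALO formula \eqref{eq::inf-alo} with the Hessian inverse replaced by
\[
\mathbf{P}_k \triangleq \bS\bigl(\bS^\top \mathbf{H}\bS\bigr)^{-1}\bS^\top ,
\]
since $\Phi = \bG\bS$ and therefore $\Phi^\top\diag[\bm\ldd(\brb)]\Phi = \bS^\top\mathbf{H}\bS$. The plan is to compare $\mathbf{P}_k$ with $\mathbf{H}^{-1}$ spectrally. Write $\mathbf{H}^{1/2}\mathbf{P}_k\mathbf{H}^{1/2} = \Pi$, where $\Pi$ is the orthogonal projector onto the $k$-dimensional subspace $\mathbf{H}^{1/2}\bS$. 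Then
\[
\mathbf{P}_k=\mathbf{H}^{-1/2}\Pi\,\mathbf{H}^{-1/2}\preceq \mathbf{H}^{-1}.
\]
For a Gaussian (or sub-Gaussian, rotationally well-spread) $\bS$, the subspace $\mathbf{H}^{1/2}\bS$ is a "random" $k$-dimensional subspace up to the conditioning of $\mathbf{H}$, which is controlled by Assumptions~\ref{as3}--\ref{as4}. Hence, for fixed vectors $\bu,\bv$, we expect $\bu^\top \Pi \bv \approx (k/d)\,\bu^\top\bv$ with fluctuations of order $\sqrt{k}/d\cdot\|\bu\|\|\bv\|$, and $\bu^\top\Pi\bu\ge c\,(k/d)\|\bu\|^2$ with high probability. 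The second estimate follows from $\chi^2_k$-type lower tail bounds applied to $\|\Pi\bu\|^2$.

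\textbf{Denominator.} First, we show that the denominator of \eqref{trak:alo} stays bounded away from zero. Because $\Pi\preceq I$,
\[
\ldd_i\,\bphi_i^\top(\bS^\top\mathbf{H}\bS)^{-1}\bphi_i \;\le\; \ldd_i\,\bg_i^\top\mathbf{H}^{-1}\bg_i .
\]
The latter is the leverage term already controlled in the proofs of Theorems~\ref{thm:g_base_ALO_ii} and \ref{thm:g_base_ALO}, where it is $O_p(\|\bbeta^*\|^2\poly(\log n)/n)=o_p(1)$ uniformly in $i$. So both denominators lie in $[1-o_p(1),1]$.

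\textbf{Independent case.} Set $\bu=\mathbf{H}^{-1/2}\bg_{\rm new}$ and $\bv=\mathbf{H}^{-1/2}\bg_i$, and condition on everything except $\bS$. The numerator equals $\ld_i\,\bu^\top\Pi\bv$, which we decompose as
\[
\bu^\top\Pi\bv=\tfrac{k}{d}\,\bu^\top\bv+\bigl(\bu^\top\Pi\bv-\tfrac{k}{d}\bu^\top\bv\bigr).
\]
The first term is $(k/d)$ times the unprojected ALO numerator, which by Theorems~\ref{thm:g_base_ALO} and \ref{theo:step1} together with Proposition~\ref{theo:I_true} is $O(\|\bbeta^*\|/n^{1-\eps})$. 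For the fluctuation term, we use Hanson--Wright or Gaussian concentration for bilinear forms of a random projector. This gives a bound of order $\sqrt{k}/d\cdot\|\bu\|\|\bv\|\cdot\poly(\log n)$. From the sub-Gaussian gradient assumption and $\lambda_{\min}(\mathbf{H})\gtrsim n/\|\bbeta^*\|^2$, we get
\[
\|\bu\|\,\|\bv\|\lesssim \frac{\|\bbeta^*\|^2}{n}\,\|\bg_{\rm new}\|\,\|\bg_i\| .
\]
The norms $\|\bg\|$ are $O(\sqrt{d}/\|\bbeta^*\|)$ up to logs, by Assumption~\ref{as2} and the scaling in (A1). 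So the fluctuation is $O(\sqrt{k}\,\|\bbeta^*\|^{0}/n\cdot\poly\log)$, and after absorbing logs into $n^{\eps}$ and using $\|\bbeta^*\|\ge 1$ this is within $\|\bbeta^*\|\sqrt{k}/(n^{1-\eps}\sqrt d)$. Since $k/d\le\sqrt{k/d}$, the first term also fits the bound. A union bound over $i\in[n]$ costs only logarithmic factors.

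\textbf{Correlated case.} Here $\bu=\bv=\mathbf{H}^{-1/2}\bg_i$, and the numerator ratio is $\bv^\top\Pi\bv/\|\bv\|^2$. By the $\chi^2$ lower tail for a random $k$-dimensional projection, this ratio is at least $c\,k/d$ with probability tending to one. The spread of eigenvalues of $\mathbf{H}$ changes only the constant, because $\mathbf{H}^{1/2}\bS$ has a column-space distribution comparable to uniform when $\mathbf{H}$ is well-conditioned. Combined with the denominator control, this gives the ratio bound $c^{\rm TRAK}k/d$.

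\textbf{Main obstacle.} The main obstacle is rigorously justifying the "random subspace" behavior of $\Pi$ when $\mathbf{H}$ is not a multiple of the identity, and doing so uniformly in $i$. To handle this, we would write
\[
\Pi=\mathbf{H}^{1/2}\bS(\bS^\top\mathbf{H}\bS)^{-1}\bS^\top\mathbf{H}^{1/2}
\]
and bound $\bS^\top\mathbf{H}\bS$ between $\lambda_{\min}(\mathbf{H})\bS^\top\bS$ and $\lambda_{\max}(\mathbf{H})\bS^\top\bS$. Then $\bS^\top\bS\approx d\,I_k$ by standard Gaussian matrix concentration, and the quadratic forms $\|\bS^\top\mathbf{H}^{1/2}\bv\|^2$ concentrate around $k\,\bv^\top\mathbf{H}\bv$. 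This sandwich loses only condition-number constants, which are absolute under Assumption~\ref{as4}.
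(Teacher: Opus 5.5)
\textbf{The independent case has a genuine gap.} Your denominator control and your treatment of $\bz_\new=\bz_i$ are fine. In the correlated case the numerator is the quadratic form $\ld_i\,\bg_i^\top\mathbf P_k\bg_i$, and your sandwich $\lambda_{\min}(\mathbf H)\bS^\top\bS\preceq\bS^\top\mathbf H\bS\preceq\lambda_{\max}(\mathbf H)\bS^\top\bS$ reproduces Lemma~\ref{eq::norm-pres}. Also $\bg_i^\top\mathbf P_k\bg_i\le\bg_i^\top\mathbf H^{-1}\bg_i$ makes the ratio of denominators $1-o_p(1)$. This is cleaner than the paper's split into $\mathcal I_*$ and $\mathcal D_*$ terms.

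The problem is your decomposition $\bu^\top\Pi\bv=\frac kd\bu^\top\bv+(\bu^\top\Pi\bv-\frac kd\bu^\top\bv)$, with a fluctuation of order $\frac{\sqrt k}{d}\|\bu\|\|\bv\|$. This presumes $\mathbb E_{\bS}[\Pi]=\frac kd I$, which holds only when $\mathbf H\propto I$. For anisotropic $\mathbf H$ the column space of $\mathbf H^{1/2}\bS$ is not uniformly distributed. Already for $d=2$, $k=1$, $\mathbf H=\diag(1,\lambda)$, one has $\mathbb E\,\Pi_{11}\to0$ as $\lambda\to\infty$.

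So your second term contains the bias $\bu^\top(\mathbb E_{\bS}[\Pi]-\frac kd I)\bv$. Your argument bounds it only by $\|\mathbb E_{\bS}[\Pi]-\frac kdI\|\,\|\bu\|\|\bv\|\asymp\frac kd\|\bu\|\|\bv\|$, which with your own norm estimates is of order $k/n$. In the natural regime $d\asymp\|\bbeta^*\|^2$ the target is $\|\bbeta^*\|\sqrt k/(n^{1-\eps}\sqrt d)\approx\sqrt k/n^{1-\eps}$, so the bound fails once $k\gg n^{2\eps}$. The remedy in your last paragraph does not close this. A Loewner sandwich on $\bS^\top\mathbf H\bS$ controls quadratic forms, not the sign-cancelling bilinear form $\bu^\top\Pi\bv$ with $\bu\ne\bv$.

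\textbf{The fix is to get the smallness from $\bg_{\new,*}$ rather than from $\bS$, as the paper does.} Condition on the training data and $\bS$. Then $\mathbf P_k\bg_i$ is fixed, while $\bg_{\new,*}$ is independent with covariance of order $\|\bbeta^*\|^{-2}$. Hence $|\bg_{\new,*}^\top\mathbf P_k\bg_i|\lesssim\|\mathbf P_k\bg_i\|\,\mathrm{poly}(\log n)/\|\bbeta^*\|$.

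Now $\|\mathbf P_k\bg_i\|^2=\bg_i^\top\mathbf H^{-1/2}\Pi\mathbf H^{-1}\Pi\mathbf H^{-1/2}\bg_i\le\lambda_{\min}(\mathbf H)^{-1}\,\bg_i^\top\mathbf P_k\bg_i$ is a quadratic form. Your own sandwich therefore gives $\|\mathbf P_k\bg_i\|\lesssim\sqrt{k/d}\,\|\bbeta^*\|^2\mathrm{poly}(\log n)/n$, which yields exactly the claimed rate. The remainder $(\bg_\new-\bg_{\new,*})^\top\mathbf P_k\bg_i$ is handled with Lemma~\ref{lem:bound_beta}. With this route no bilinear concentration over $\bS$ is needed.

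\textbf{Two smaller points.} First, the last comparison in your fluctuation estimate is justified not by $\|\bbeta^*\|\ge1$ but by $d\lesssim\|\bbeta^*\|^2\mathrm{poly}(\log n)$. This follows by taking the trace of $\frac1n\bG^\top\bG$ and using Assumptions~\ref{as2} and~\ref{as3}. Second, the condition number of $\mathbf H$ is governed by Assumptions~\ref{as3} and~\ref{as5}, not~\ref{as4}. It is only $O_p(\mathrm{poly}(\log n))$, so the constant in the correlated case is not truly absolute; the paper's proof has the same looseness.
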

The proof of this result is presented in Section \ref{proof:projection} of the Appendix.

\begin{remark}
Note that according to \cref{theo:I_true_corr} $\mathcal{I}^{\mathrm{ALO}}(\bz_i,\bz_i;k) \geq c\,\frac{\|\bbeta^*\|^2}{n}$ for some absolute constant $c>0$. Consequently,
\[
\mathcal{I}^{\mathrm{TRAK}}(\bz_i,\bz_i;k) \;\ge\; c\,\frac{\|\bbeta^*\|^2\,k}{n d}.
\]
The key question is whether this influence is larger than $\mathcal{I}^{\mathrm{TRAK}}(\bz_i,\bz_{\rm new};k)$ in the case where $\bz_{\rm new}$ is independent of $\bz_i$. If so, this separation allows us to reliably distinguish data points that are strongly dependent on $\bz_{\rm new}$, and therefore meaningfully affect the model’s prediction at $\bx_{\rm new}$, from those that are effectively independent of $\bz_{\rm new}$.

 In Appendix~\Cref{fig:projection_order}, we further show that the obtained order is sharp. Thus according to Theorem~\ref{theo:magnitude_projection}, if $\|\bbeta^*\|\sqrt{k/d} \ll 1$, then $\mathcal{I}^{\mathrm{TRAK}}(\bz_i,\bz_i;k)$ can become smaller than $\mathcal{I}^{\mathrm{TRAK}}(\bz_{\rm new},\bz_i;k)$ for an independent $\bz_{\rm new}$. Hence, the TRAK method may fail to reliably detect influential data points. This observation implies that, in order for TRAK to successfully identify such points, the number of projections $k$ must satisfy
$k \;\gg\; \frac{d}{\|\bbeta^*\|^2}$.

\end{remark}

Hence, Theorem \ref{theo:magnitude_projection} demonstrates that the projection step reduces the separation between $\mathcal{I}^{\rm TRAK}$ values for data points that are highly dependent on the test point and those that are independent of it.

% \begin{theorem}
%    Under Assumptions~\ref{ass:main_assumptions} (formally, Assumptions~\ref{as1}--\ref{ass:nsg_grad} in the Appendix), for any $\eps > 0$ such that $\|\bbeta^*\|^2 \le n^{1-\eps}$, 
%    % and \(\log n \ll k = c_{\rm TRAK}d\) for an absolute constant \(c_{\rm TRAK}, 0 < c_{\rm TRAK} < 1 \) ; recall \(d\) is the dimension of the gradients \(g_i\), 
%    there exist an absolute constant $C^{\rm{TRAK}}>1$ such that, with probability tending to $1$, the projection step error satisfies:
%     \begin{equation*}
%     \begin{split}
%        c_{k,d}|\mathcal{I}^{\mathrm{ALO}}(\bz_i,\bz_{\new})|\leq |\mathcal{I}^{\mathrm{TRAK}}(\bz_i,\bz_{\new}) - \mathcal{I}^{\mathrm{ALO}}(\bz_i,\bz_{\new})| 
%     \\
%     \le C_{k,d}|\mathcal{I}^{\mathrm{ALO}}(\bz_i,\bz_{\new})| .
%     \end{split}
%     \end{equation*}
%     where $c_{k,d}=\left(1-\kappa'{\poly(\log n)}(1+\eps)\frac{k}{d}\right)^+$ and $C_{ k,d}=\left(1 -\frac{1}{\kappa'}(1-\eps) \frac{k}{d}\right)$ and $\kappa'\geq 1$ is a constant.
%     \label{theo:magnitude_projection}
% \end{theorem}
% Theorem~\ref{theo:magnitude_projection} provides our magnitude-side conclusion. It indicates that random projections incur an error of the same order as \(\mathcal{I}^{\rm True}\) unless $k$ us super close to $d$. Or even if $k\asymp d$, we can not make sure to preserve magnitude after projection. This is intuitive because, in the regime where \(n > d\), all \(d\) dimensions of the linearized coefficients are important. Thus, projection to a \(k\)-dimensional subspace introduces non-negligible error.

\section{Simulation results}
\label{sec:simulation}

In this section, we present experiments on simulated data to evaluate the validity of our theoretical results. For linear models $f$, including binary classification and Poisson regression, the behavior of TRAK is considerably simpler; in particular, the linearization step incurs no error. We therefore defer simulation results for these linear settings to Appendix~\ref{sec:sim_linear_main} and instead focus on a multi-class classification problem, which provides a more challenging and informative testbed for analyzing TRAK. 

As discussed in Appendix~\ref{sec:multi-class}, in the multi-class setting the model $f$ is inherently nonlinear. Consequently, all three sources of approximation error in TRAK, namely, the linearization error, the ALO error, and the projection error, are simultaneously present. Our objective in these experiments is to empirically examine how each approximation step affects the resulting influence estimates and to compare the simulation results with the predictions of our theoretical analysis. %Additionally, in the dependent case, specifically when $\bz_i=\bz_{\new}$, our simulations exhibit very similar behavior, and the corresponding results are deferred to Appendix~\ref{sec:dependent}.

\begin{figure}[t]
    \centering
    \includegraphics[width=0.48\linewidth]{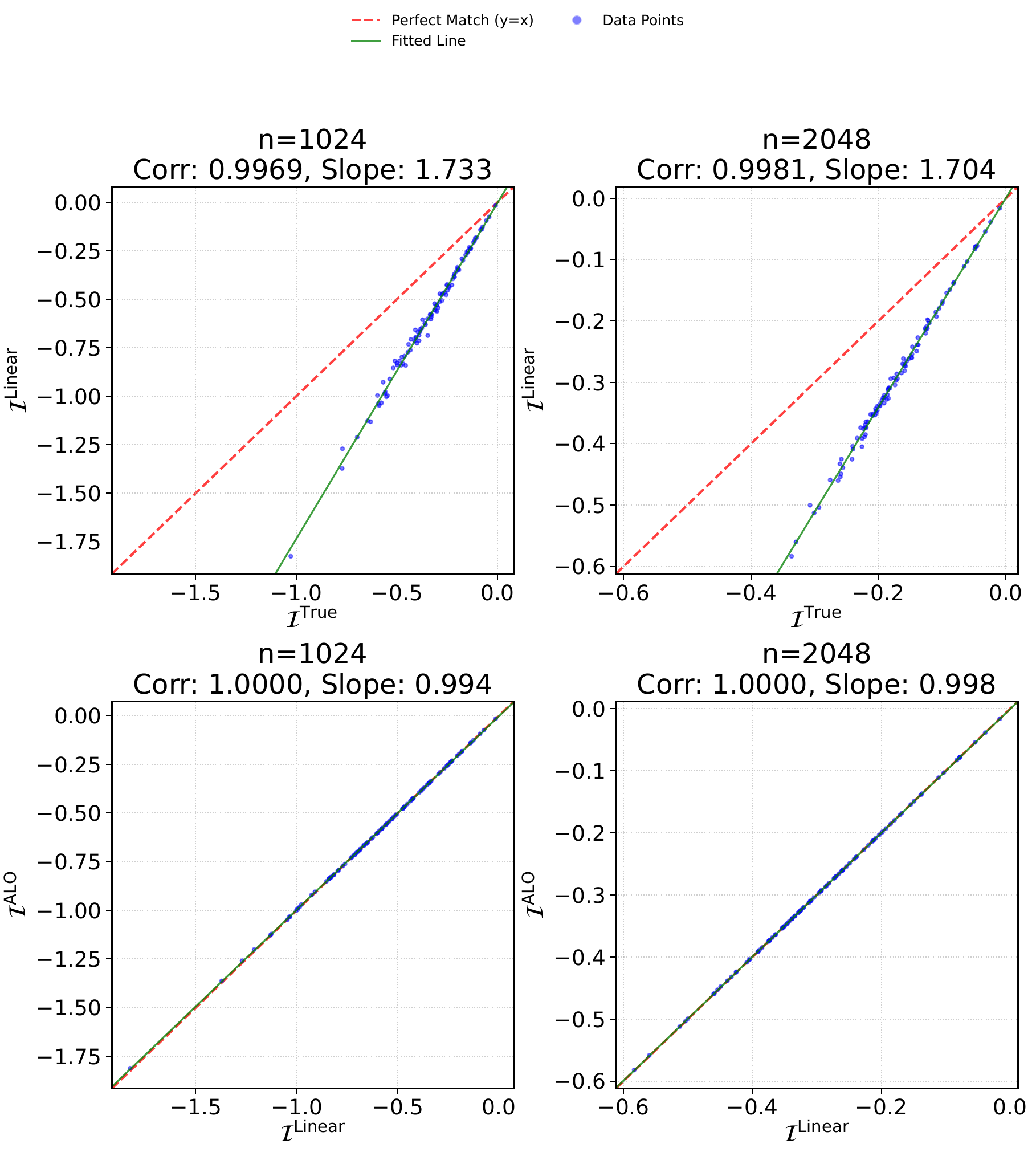}
    \includegraphics[width=0.48\linewidth]{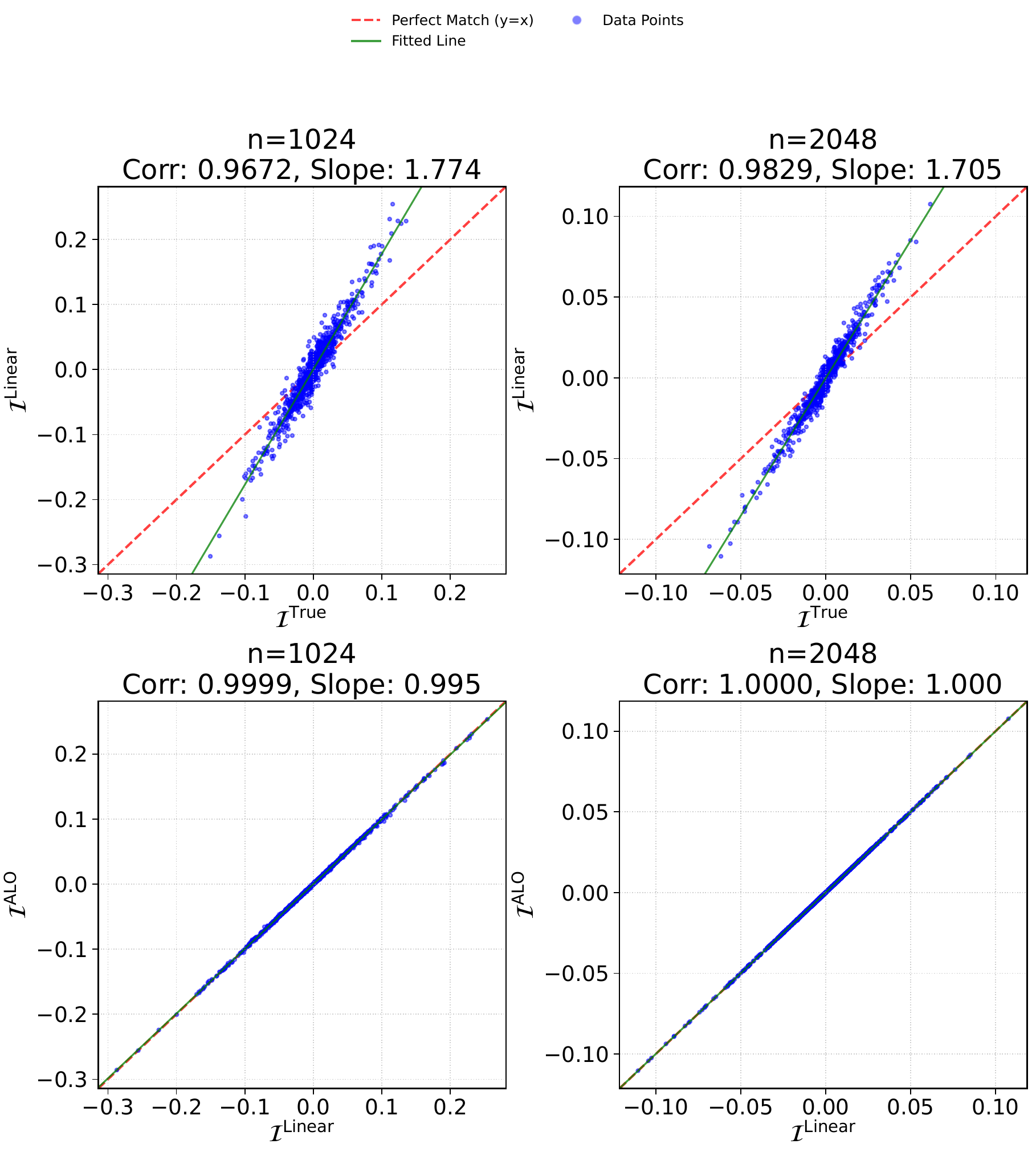}
    \caption{
Experimental results for $3$-class classification with $p=100$.
\textbf{Left two panels:} Results for the dependent case, where $\bz_i=\bz_{\new}$.
\textbf{Right two panels:} Results for the independent case, where $\bz_i$ and $\bz_{\new}$ are independent.
\textbf{Top row:} Displays $\mathcal{I}^{\rm Linear}$ versus $\mathcal{I}^{\rm True}$.
Although the deviation from the red $y=x$ line suggests large errors between
$\mathcal{I}^{\rm Linear}$ and $\mathcal{I}^{\rm True}$, the two quantities still exhibit strong correlation.
\textbf{Bottom row:} Displays $\mathcal{I}^{\rm ALO}$ versus $\mathcal{I}^{\rm Linear}$.
}
    \label{fig:multi_alo}
\end{figure}
\paragraph{Setting of our simulation}
The feature vectors $\bx_1^\top,\ldots,\bx_n^\top$ are sampled independently from
$\mathcal{N}(\mathbf{0},\bSigma)$, where $\bSigma$ is a Toeplitz covariance matrix
with $\mathrm{cor}(X_{ij},X_{ij'})=0.1^{|j-j'|}$.
We consider multiclass classification problems with $K>2$. Results for the case $K=3$ are presented here, while results for $K=5$ are deferred to Appendix~\ref{sec:K5_corr}. The simulation outcomes are qualitatively consistent across both settings though.

We rescale $\bbeta^*$ and $\bSigma$ such that $\|\bbeta^*\|^2=p$ and
$\|\bSigma\|=\|\bbeta^*\|^{-1}$.
For each trial $t$, we generate $(\bX^{(t)},y^{(t)})$ with
$y^{(t)}\sim\mathcal{M}(\mathrm{softmax}(\bX^{(t)}\bW^{*(t)\top}))$, where $\mathcal{M}$ is the multinomial distribution and $\mathrm{softmax}$ denotes the softmax function. To ensure identifiability, we set $\bW_K^{*(t)}=\mathbf{0}_p$. To write the parameters of the model in the format that we adopted in this paper, we define $\bbeta^*$ as
\[
\bbeta^* \triangleq \mathrm{vec}(\bW^{*(t)}_{[1:(K-1),:]}).
\]
We randomly remove $100$ training points and independently sample $10$ new test points $\bz_{\new}$.
This procedure yields $1{,}000$ realizations of
$\mathcal{I}^{\rm True}$, $\mathcal{I}^{\rm Linear}$, and
$\mathcal{I}^{\rm TRAK}(k)$ per trial, which are aggregated in our plots. Recall that $k$ denotes the projection dimension. To evaluate our theorems for the case of highly dependent test data, we consider $\bz_i=\bz_{\new}$. This means that in these cases we only have $100$ realizations.

\paragraph{Linearization step}
The top row of Figure~\ref{fig:multi_alo} plots
$\mathcal{I}^{\rm Linear}=\bg^\top(\brbli-\brb)$ (y-axis) against
$\mathcal{I}^{\rm True}= f(\bz,\bli)-f(\bz,\bl)$ for the dependent and independent cases, respectively.
The observed behavior is consistent with the predictions of
Proposition~\ref{theo:I_true_corr} and Theorem~\ref{theo:step1_corr}, as well as
Proposition~\ref{theo:I_true} and Theorem~\ref{theo:step1}.
In particular, although the exact ordering of influence values differs between
$\mathcal{I}^{\rm Linear}$ and $\mathcal{I}^{\rm True}$, the two quantities exhibit a strong correlation
(exceeding $0.95$ in both experiments).
This is expected, since data points with large $\mathcal{I}^{\rm True}$ tend to also have large
$\mathcal{I}^{\rm Linear}$, while those with small $\mathcal{I}^{\rm True}$ correspondingly have small
$\mathcal{I}^{\rm Linear}$.

Moreover, when compared with the red $y=x$ reference line, the plots display a substantial vertical spread,
indicating a large discrepancy between $\mathcal{I}^{\rm Linear}$ and $\mathcal{I}^{\rm True}$.
This discrepancy is consistent with the predictions of
Theorems~\ref{theo:step1_corr} and~\ref{theo:step1}.

Furthermore, the scaling range in the dependent case is much larger than that in the independent case;
that is, the absolute values of the influence functions are larger when $\bz_i=\bz_{\new}$.
This observation is also consistent with the order difference in
Proposition~\ref{theo:I_true_corr} and Proposition~\ref{theo:I_true}, as well as Theorem~\ref{theo:step1_corr} 
and Theorem~\ref{theo:step1}.

\paragraph{ALO Step}
The bottom row of Figure~\ref{fig:multi_alo}  evaluates the accuracy of Theorems~\ref{thm:g_base_ALO_ii} and~\ref{thm:g_base_ALO} by plotting $\mathcal{I}^{\rm ALO}$ (y-axis) against $\mathcal{I}^{\rm Linear}$ (x-axis). As evident from the figure, the two quantities are nearly indistinguishable, indicating that the ALO step introduces only a negligible amount of error. This empirical observation is fully consistent with the theoretical guarantees provided by these theorems.

\paragraph{Projection Step}
In the next set of simulations, we study the accuracy of the projection step and empirically evaluate the predictions of Theorem~\ref{theo:magnitude_projection}. Figure~\ref{fig:multi_proj} plots $\mathcal{I}^{\rm TRAK}$ against $\mathcal{I}^{\rm True}$, where each row corresponds to a different number of projections. Specifically, the first, second, and third rows use $k = 0.75d$, $0.50d$, and $0.25d$, respectively. Throughout these experiments, we fix $p = 100$ and $d=(K-1)p=200$.

The observed behavior is consistent with our theoretical predictions. First, the slope of the relationship between $\mathcal{I}^{\rm TRAK}$ and $\mathcal{I}^{\rm True}$ decreases as $k$ decreases, in accordance with Theorem~\ref{theo:magnitude_projection}. Second, as the number of projections is reduced, data points with smaller true influence values exhibit larger relative errors. Consequently, the correlation between $\mathcal{I}^{\rm TRAK}$ and $\mathcal{I}^{\rm True}$ decreases as $k$ decreases. 

Additionally, as $k/d$ becomes smaller, we observe that $\mathcal{I}^{\rm TRAK}$ shrinks in both the dependent and independent cases, which is consistent with Theorem~\ref{theo:magnitude_projection}.
Moreover, when $k/d=0.25$ (the bottom-row setting), the difference between the left (dependent) and right (independent) cases becomes quite small.
This suggests that when $k$ is relatively small, the projection may no longer preserve a substantial discrepancy between the dependent and independent influence values.

\begin{figure}[ht]
    \centering
    \includegraphics[width=0.48\linewidth]{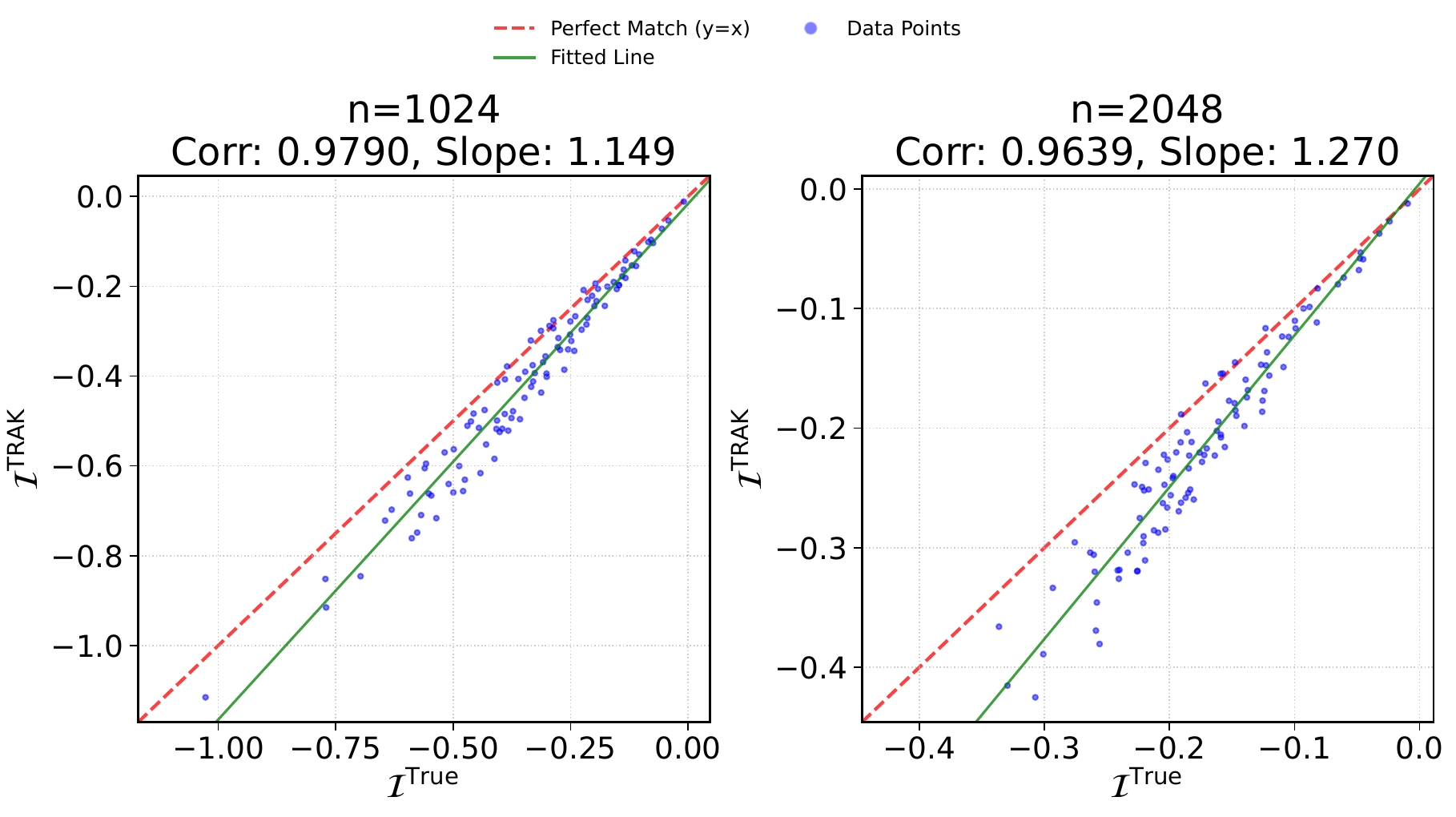}
    \includegraphics[width=0.48\linewidth]{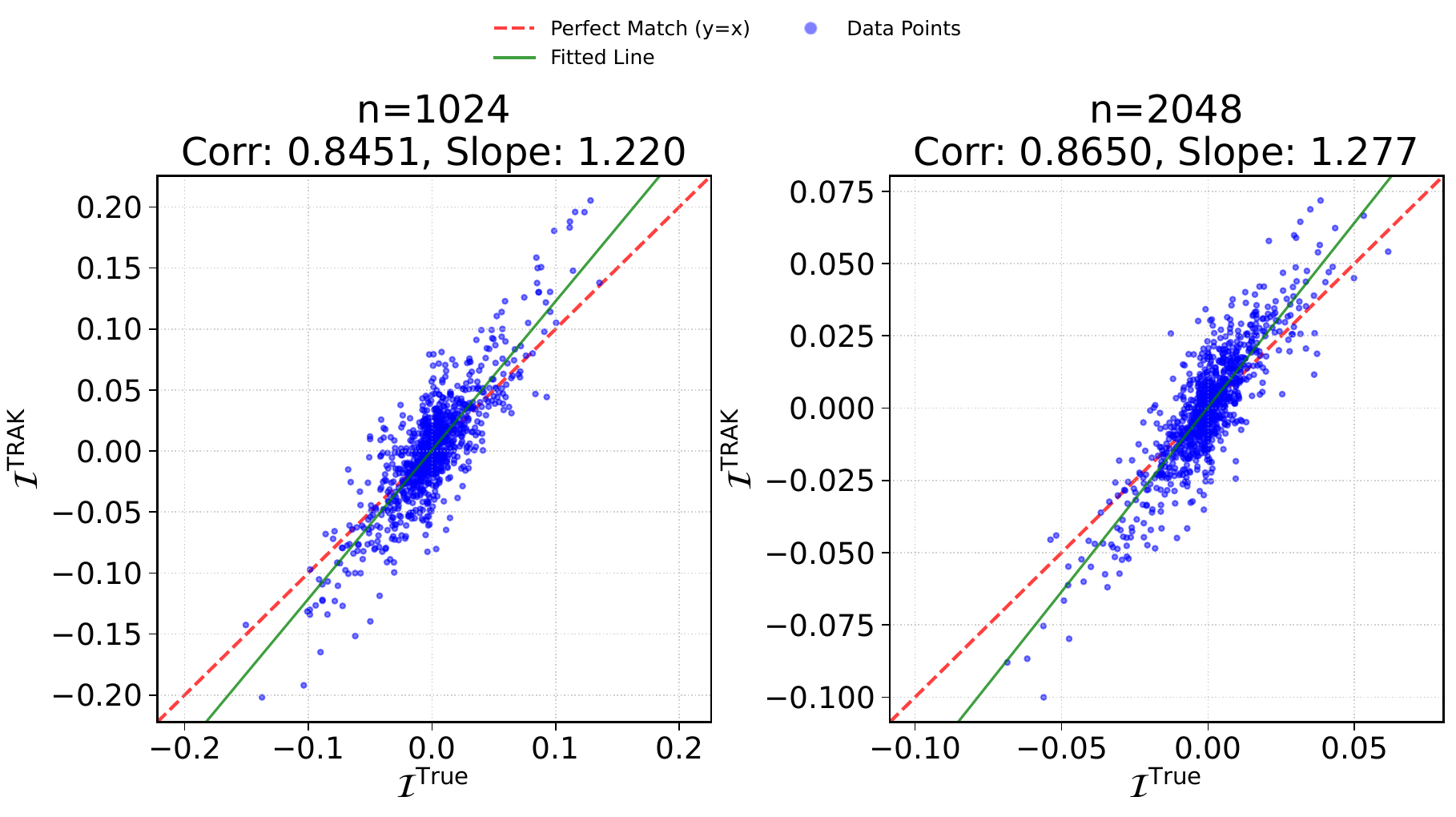}
    \includegraphics[width=0.48\linewidth]{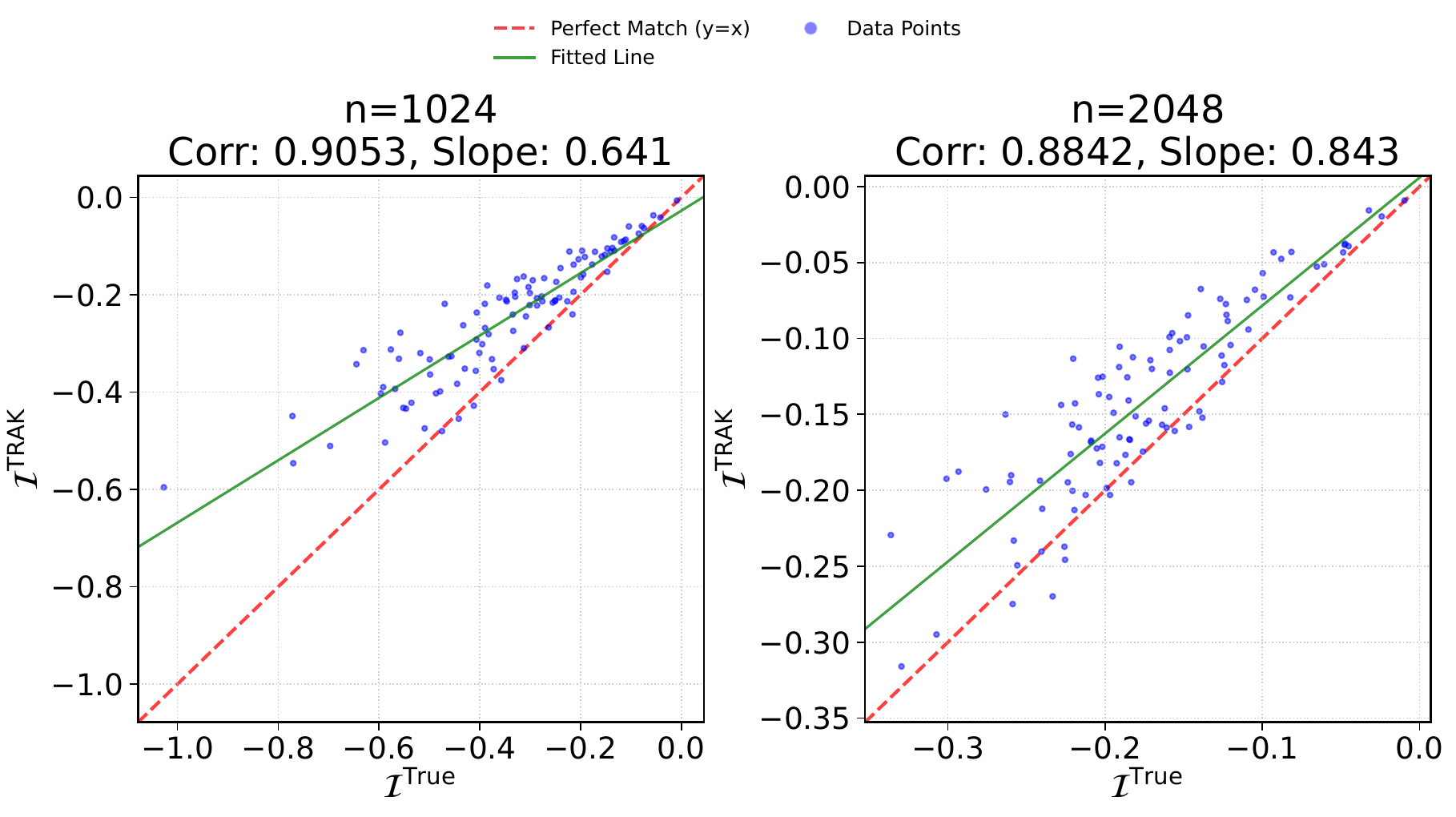}
    \includegraphics[width=0.48\linewidth]{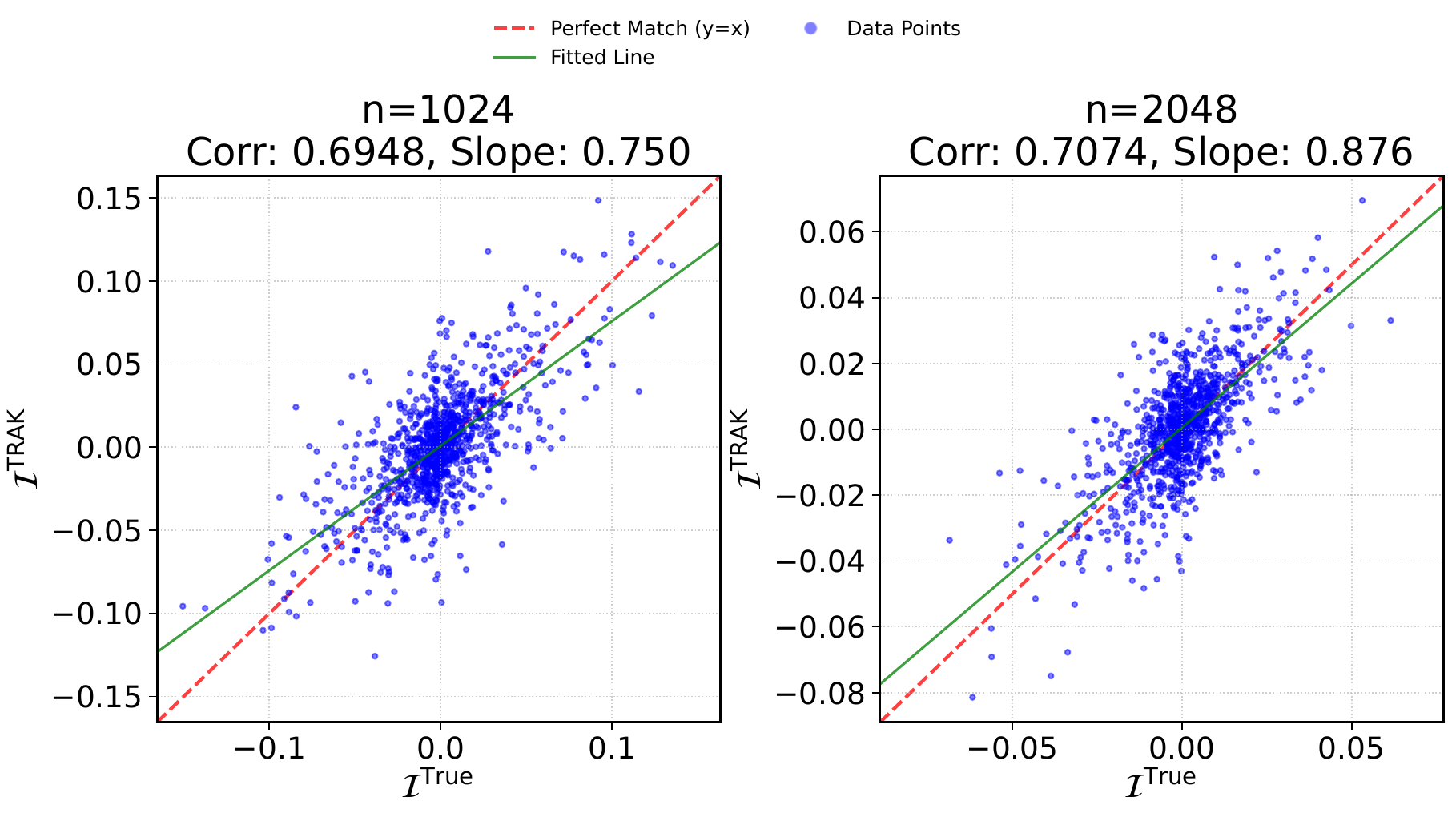}
    \includegraphics[width=0.48\linewidth]{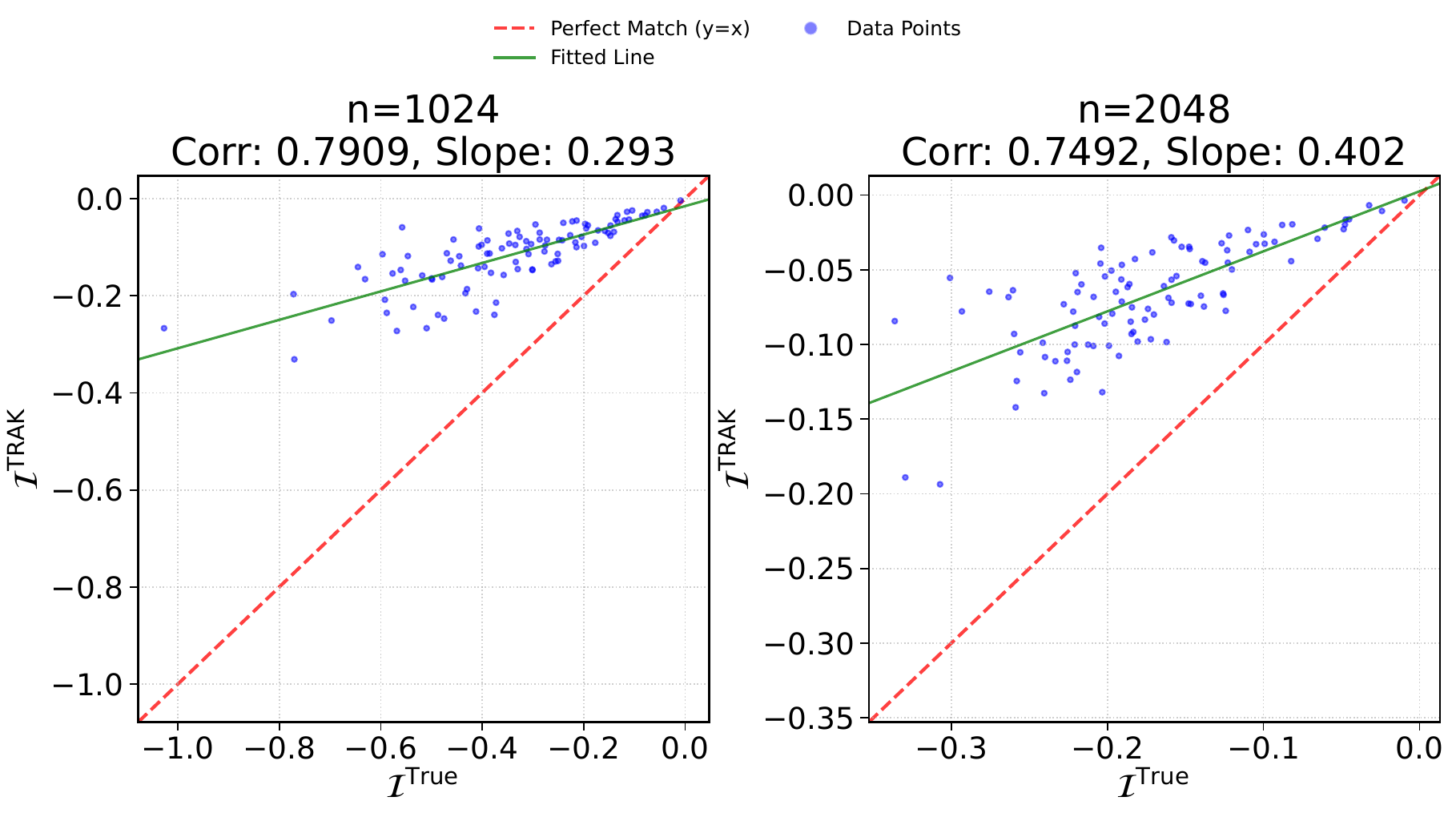}
    \includegraphics[width=0.48\linewidth]{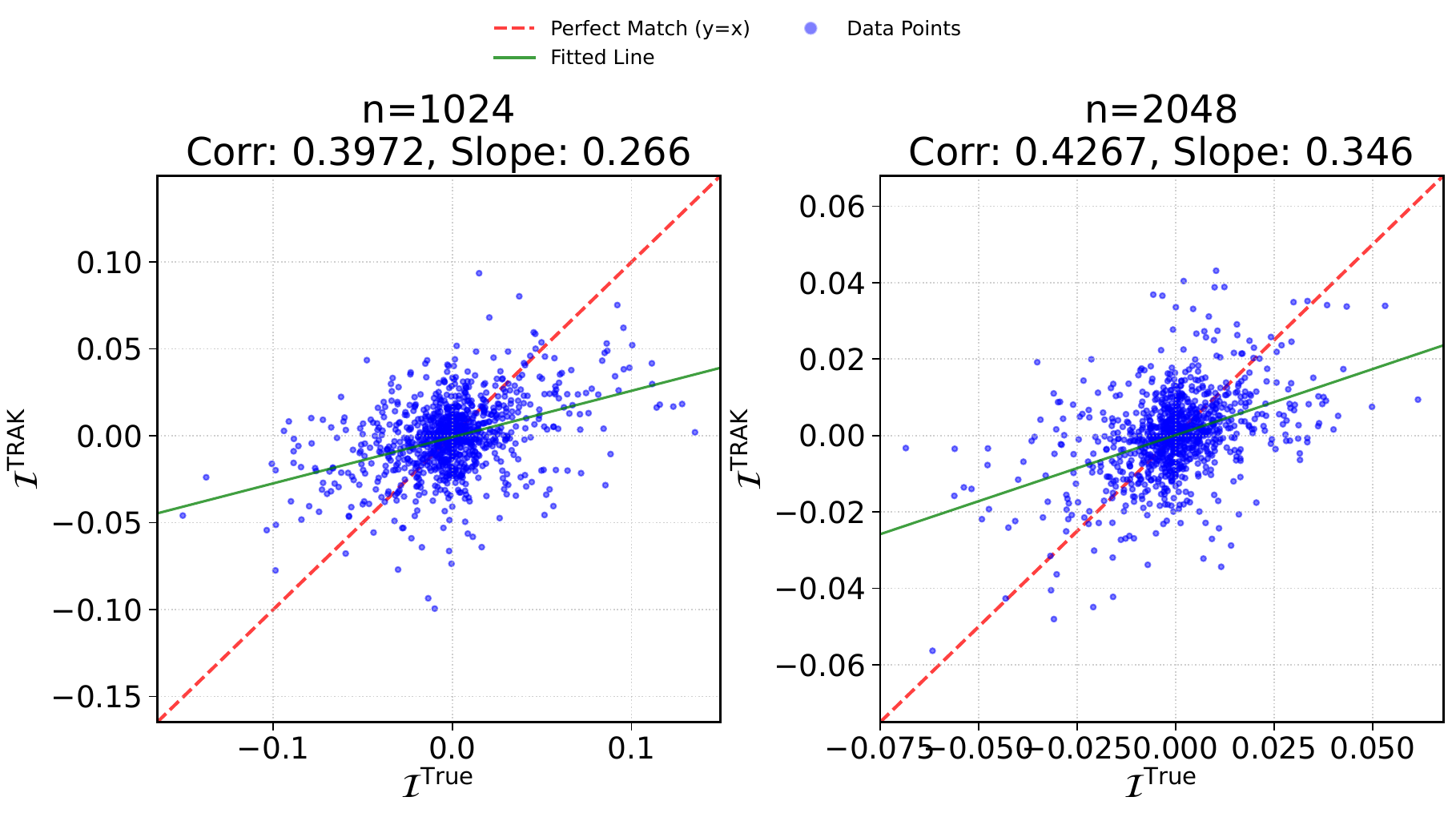}
    \caption{
    Experimental results for a three-class classification problem with $p = 100$ and $d=200$. The x-axis shows $\mathcal{I}^{\rm True}$, while the y-axis shows $\mathcal{I}^{\rm TRAK}$ after projection. \textbf{Left two panels:} Results for the dependent case, where $\bz_i=\bz_{\new}$.
\textbf{Right two panels:} Results for the independent case, where $\bz_i$ and $\bz_{\new}$ are independent. From top to bottom, the projection dimension is $k = 150$, $k = 100$, and $k = 50$, respectively. As is clear, as the number of projections decreases the correlation becomes smaller. 
    }
    \label{fig:multi_proj}
\end{figure}

\section{Empirical Studies}
\label{sec:cifar10}
In this section, we conduct experiments on real-world datasets that may not satisfy our data-generation assumptions. The goal is to empirically assess the robustness of our theoretical conclusions to such deviations.
Following \citet{park2023trak}, we adopt the CIFAR-10 dataset and study a multi-class classification problem. All input samples are preprocessed via channel-wise standardization to zero mean and unit variance, using statistics derived from the training partition. 

We additionally construct a binary classification dataset by restricting CIFAR-10 to the \emph{airplane} and \emph{automobile} classes, which we refer to as CIFAR-2. Due to space constraints, we present results only for the multiclass classification setting and defer the CIFAR-2 experiments to Appendix~\ref{sec:cifar2}.

\paragraph{Experimental setup.}
We use the full CIFAR-10 dataset ($10$ classes, $n=50{,}000$ training images).
To make the calculations of $\mathcal{I}^{\rm True}(\cdot, \cdot)$ feasible, we downsample each
$32\times 32\times 3$ image to $8\times 8\times 3$ using $4\times 4$ average
pooling, yielding feature dimension $p=192$. We train a multinomial logistic
regression model (with the standard $K-1$ parameterization), so the parameter
dimension is $d=p\times (K-1)=192\times 9=1728$.
The trained model achieves a training accuracy of $42.1\%$ and a test accuracy of $40.9\%$. Although this performance is lower than that of modern deep learning baselines, it is typical for models that operate directly on raw pixel inputs and lack the depth required to learn complex features. Importantly, the primary objective of this section is not to maximize predictive accuracy, but rather to assess whether our theoretical conclusions remain robust under deviations from the assumed data-generation mechanism. 

\paragraph{Correlation analysis.}
We quantify approximation quality by sampling $100$ test points and $100$ training
points (for a total of $10{,}000$ pairs). For each pair we calculate $\mathcal{I}^{\rm True}$, $\mathcal{I}^{\rm Linear}$, $\mathcal{I}^{\rm ALO}$. The results of our simulations are reported in Figure~\ref{fig:cifar-10}. Again our results are consistent with our main results: (i) the linearization introduces large errors; however $\mathcal{I}^{\rm Linear}$ exhibits strong correlation with $\mathcal{I}^{\rm True}$ ($\rho=0.916$). (ii) The error between $\mathcal{I}^{\rm ALO}$ and $\mathcal{I}^{\rm Linear}$ is negligible, and hence the correlation beween the two quantities is very large $\rho=0.999$. 
Hence, the results are consistent with the what we observed on simulated data in
Section~\ref{sec:simulation}.

\paragraph{Rank alignment.}
Beyond correlation, we assess retrieval quality by comparing the top-$k$ data points selected by $\mathcal{I}^{\rm True}$, $\mathcal{I}^{\rm Linear}$, and $\mathcal{I}^{\rm ALO}$. We evaluate both proponents (Top-$k$) and opponents (Bottom-$k$) using two ranking-based metrics. For each test point, we compute: (i) \textbf{Exact Match Count}, defined as the number of test points (out of 100) for which the retrieved top-$k$ list exactly matches the corresponding leave-one-out (LOO) list, including the ordering; and (ii) \textbf{Overlap Ratio}, defined as the fraction of overlap between the retrieved and exact top-$k$ sets, averaged across test points.

The results in Table~\ref{tab:influence-overlap-cifar10} indicate that exact matches become increasingly rare as $k$ grows, reflecting the higher dimensionality and increased class complexity of the problem. Nevertheless, the overlap ratio remains stable and consistently exceeds $70\%$ across all values of $k$, demonstrating that the approximate influence methods largely preserve the most influential data points even when exact rankings differ.
%\paragraph{Qualitative visualization.}
%Figure~\ref{fig:cifar-10-influ} visualizes the top-5 and bottom-5 influential training examples for selected test images. Consistent with Table~\ref{tab:influence-overlap-cifar10}, which indicates that the Top-$5$ proponents and opponents exact match rate can be bigger than $70\%$, we see that the retrieved examples identified by $\mathcal{I}^{\rm Linear}$ are quite similar to those identified by $\mathcal{I}^{\rm True}$, although the influence values are quite different, indicating that Linearization approximation captures ranking properties.

\begin{figure}[t]
    \centering
    \includegraphics[width=0.48\linewidth]{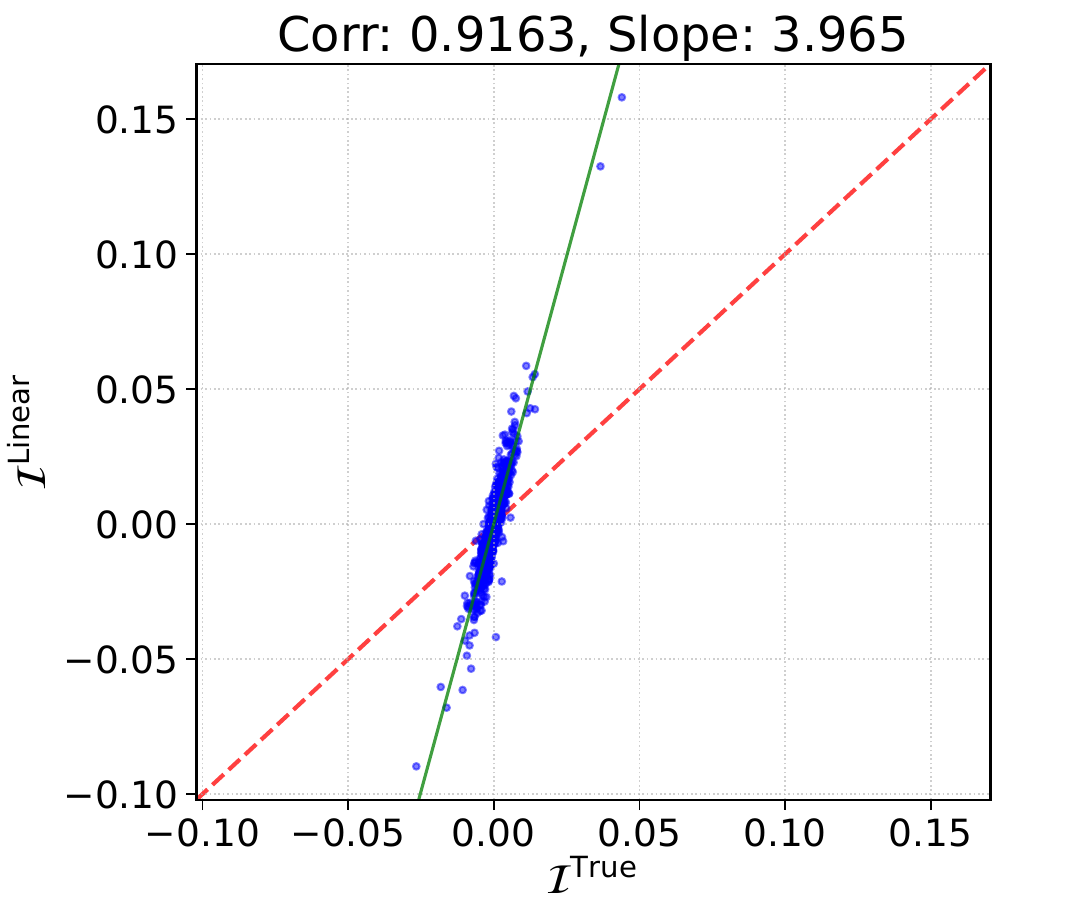}
    \includegraphics[width=0.48\linewidth]{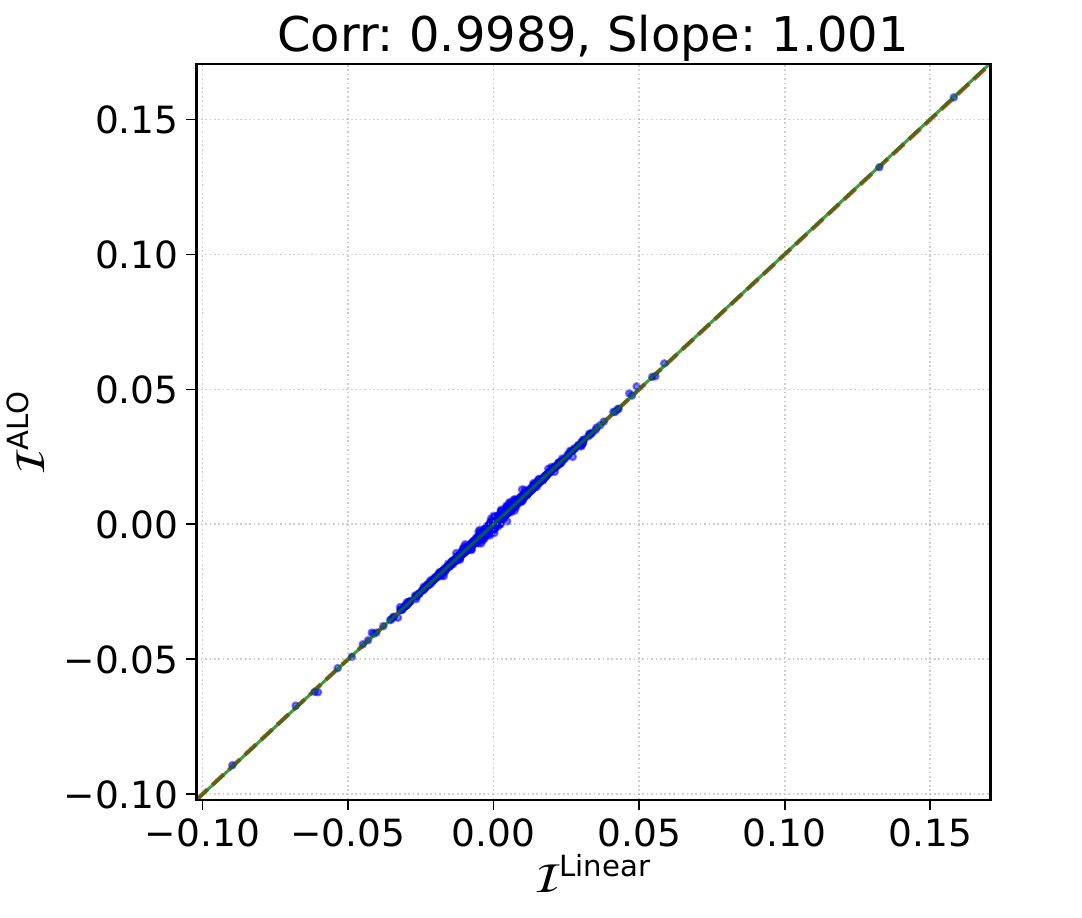}
    \caption{
    CIFAR-10: Correlation between $\mathcal{I}^{\rm True}$ and $\mathcal{I}^{\rm Linear}$, and between $\mathcal{I}^{\rm Linear}$ and $\mathcal{I}^{\rm ALO}$.
    Results are aggregated over $100$ held-out test points and $100$ training points ($10{,}000$ pairs).
    }
    \label{fig:cifar-10}
\end{figure}

\begin{table}[t]
\caption{Rank alignment between exact and approximate influence rankings on {CIFAR-10}. We compare $\mathcal{I}^{\rm True}$ and $\mathcal{I}^{\rm Linear}$; $\mathcal{I}^{\rm True}$ and $\mathcal{I}^{\rm ALO}$ separately, Exact Matches count the number of test points (out of 100) with identical rankings up to size-$k$, while Overlap Ratio is the average set overlap.}
\label{tab:influence-overlap-cifar10}
\centering
\resizebox{0.5\textwidth}{!}{%
\begin{tabular}{lcccccc}
\toprule
\textbf{Metric} & \multicolumn{6}{c}{\textbf{Size ($k$)}} \\
\cmidrule(lr){2-7}
& \textbf{1} & \textbf{3} & \textbf{5} & \textbf{10} & \textbf{20} & \textbf{50} \\
\midrule
\multicolumn{7}{l}{\textit{Proponents (Top-$k$ Positive Influence)}} \\
 $\mathcal{I}^{\rm Linear}$ Exact Matches & 68 & 15 & 0 & 0 & 0 & 0 \\
 $\mathcal{I}^{\rm Linear}$ Overlap Ratio & 0.680 & 0.737 & 0.702 & 0.736 & 0.723 & 0.784 \\
\cmidrule(lr){1-7}
 $\mathcal{I}^{\rm ALO}$ Exact Matches & 67 & 15 & 0 & 0 & 0 & 0 \\
 $\mathcal{I}^{\rm ALO}$ Overlap Ratio & 0.670 & 0.737 & 0.706 & 0.733 & 0.726 & 0.787 \\
\midrule
\multicolumn{7}{l}{\textit{Opponents (Bottom-$k$ Negative Influence)}} \\
 $\mathcal{I}^{\rm Linear}$ Exact Matches & 69 & 18 & 1 & 0 & 0 & 0 \\
$\mathcal{I}^{\rm Linear}$ Overlap Ratio & 0.690 & 0.740 & 0.740 & 0.743 & 0.730 & 0.784 \\
\cmidrule(lr){1-7}
 $\mathcal{I}^{\rm ALO}$ Exact Matches & 70 & 16 & 1 & 0 & 0 & 0 \\
 $\mathcal{I}^{\rm ALO}$ Overlap Ratio & 0.700 & 0.737 & 0.750 & 0.741 & 0.726 & 0.787 \\
\bottomrule
\end{tabular}
}%
\end{table}

\section{Conclusion}
\label{sec:conclusion}

TRAK is a widely used data attribution method that attains scalability through three approximations: linearization, approximate leave-one-out (ALO), and projection. In this work, we theoretically analyze the errors introduced by each step. We show that the ALO approximation incurs negligible error, while the linearization step introduces substantial error but still preserves the ability to distinguish highly influential data points from uninfluential ones. In contrast, the projection step can introduce significant additional error, and using too few projections may prevent TRAK from reliably identifying influential points. Overall, our results clarify both the strengths and limitations of TRAK and provide guidance for its effective use at scale.

% \section*{Accessibility}

% Authors are kindly asked to make their submissions as accessible as possible
% for everyone including people with disabilities and sensory or neurological
% differences. Tips of how to achieve this and what to pay attention to will be
% provided on the conference website \url{http://icml.cc/}.

% \section*{Software and Data}

% We will include a URL in the camera-ready copy. 

% Acknowledgements should only appear in the accepted version.
% \section*{Acknowledgements}
% We will include this later. 

\section*{Impact Statement}
This work provides a theoretical analysis of the TRAK algorithm, a widely used method for data attribution in modern machine learning systems. By clarifying when TRAK produces reliable influence estimates and identifying regimes in which its approximations may break down, our results help practitioners use data attribution tools more responsibly and interpret their outputs with appropriate caution. Improved understanding of data attribution methods can support transparency, debugging, and accountability in machine learning pipelines. At the same time, the techniques studied here do not introduce new capabilities for misuse, and we do not anticipate direct negative societal impacts beyond those already associated with data attribution and interpretability methods.

\bibliography{ref-ICML}

@article{beirami2017optimal,
  title={On optimal generalizability in parametric learning},
  author={Beirami, Ahmad and Razaviyayn, Meisam and Shahrampour, Shahin and Tarokh, Vahid},
  journal={Advances in Neural Information Processing Systems},
  volume={30},
  year={2017}
}

@InProceedings{koh2017understanding,
  title     = {Understanding Black-box Predictions via Influence Functions},
  author    = {Koh, Pang Wei and Liang, Percy},
  booktitle = {Proceedings of the 34th International Conference on Machine Learning},
  pages     = {1885--1894},
  year      = {2017},
  editor    = {Precup, Doina and Teh, Yee Whye},
  volume    = {70},
  series    = {Proceedings of Machine Learning Research},
  publisher = {PMLR}
}

@article{xia2024less,
  title={Less: Selecting influential data for targeted instruction tuning},
  author={Xia, Mengzhou and Malladi, Sadhika and Gururangan, Suchin and Arora, Sanjeev and Chen, Danqi},
  journal={arXiv preprint arXiv:2402.04333},
  year={2024}
}

@article{hammoudeh2024training,
  title={Training data influence analysis and estimation: A survey},
  author={Hammoudeh, Zayd and Lowd, Daniel},
  journal={Machine Learning},
  volume={113},
  number={5},
  pages={2351--2403},
  year={2024},
  publisher={Springer}
}

@article{ye2023cognitive,
  title={Cognitive mirage: A review of hallucinations in large language models},
  author={Ye, Hongbin and Liu, Tong and Zhang, Aijia and Hua, Wei and Jia, Weiqiang},
  journal={arXiv preprint arXiv:2309.06794},
  year={2023}
}

@article{jin2019short,
  title   = {A short note on concentration inequalities for random vectors with subgaussian norm},
  author  = {C. Jin and P. Netrapalli and R. Ge and S. M. Kakade and M. I. Jordan},
  journal = {arXiv preprint arXiv:1902.03736},
  year    = {2019}
}

@inproceedings{park2023trak,
  title        = {{TRAK}: Attributing Model Behavior at Scale},
  author       = {S. M. Park and K. Georgiev and A. Ilyas and G. Leclerc and A. Madry},
  booktitle    = {Proceedings of the 40th International Conference on Machine Learning (ICML)},
  pages        = {27074--27113},
  year         = {2023},
  publisher    = {PMLR}
}

@article{rad2018scalable,
  title   = {A scalable estimate of the extra-sample prediction error via approximate leave-one-out},
  author  = {K. R. Rad and A. Maleki},
  journal = {Journal of the Royal Statistical Society Series B: Statistical Methodology},
  volume  = {82},
  number  = {4},
  pages   = {965--996},
  year    = {2020},
  publisher={Oxford University Press}
}

@inproceedings{sagun2017empirical,
  title     = {Empirical Analysis of the {H}essian of Over-Parametrized Neural Networks},
  author    = {L. Sagun and U. Evci and V. U. G{\"u}ney and Y. N. Dauphin and L. Bottou},
  booktitle = {Proceedings of the 6th International Conference on Learning Representations (ICLR) Workshop Track},
  year      = {2018}
}

@article{han2020explaining,
  title={Explaining black box predictions and unveiling data artifacts through influence functions},
  author={Han, Xiaochuang and Wallace, Byron C and Tsvetkov, Yulia},
  journal={arXiv preprint arXiv:2005.06676},
  year={2020}
}

@article{yeh2019fidelity,
  title={On the (in)fidelity and sensitivity of explanations},
  author={Yeh, Chih-Kuan and Hsieh, Cheng-Yu and Suggala, Arun and Inouye, David I and Ravikumar, Pradeep K},
  journal={Advances in Neural Information Processing Systems},
  volume={32},
  year={2019}
}

@inproceedings{ghorbani2019data,
  title={Data shapley: Equitable valuation of data for machine learning},
  author={Ghorbani, Amirata and Zou, James},
  booktitle={International conference on machine learning},
  pages={2242--2251},
  year={2019},
  organization={PMLR}
}

@inproceedings{jia2019towards,
  title={Towards efficient data valuation based on the shapley value},
  author={Jia, Ruoxi and Dao, David and Wang, Boxin and Hubis, Frances Ann and Hynes, Nick and G{\"u}rel, Nezihe Merve and Li, Bo and Zhang, Ce and Song, Dawn and Spanos, Costas J},
  booktitle={The 22nd International Conference on Artificial Intelligence and Statistics},
  pages={1167--1176},
  year={2019},
  organization={PMLR}
}

@article{wang2024data,
  title={Data shapley in one training run},
  author={Wang, Jiachen T and Mittal, Prateek and Song, Dawn and Jia, Ruoxi},
  journal={arXiv preprint arXiv:2406.11011},
  year={2024}
}

@article{hampel1974influence,
  title={The influence curve and its role in robust estimation},
  author={Hampel, Frank R},
  journal={Journal of the American Statistical Association},
  volume={69},
  number={346},
  pages={383--393},
  year={1974},
  publisher={Taylor \& Francis}
}

@inproceedings{ilyas2022datamodels,
  title={Datamodels: Predicting Predictions from Training Data},
  author={Ilyas, Andrew and Park, Sung Min and Engstrom, Logan and Leclerc, Guillaume and Madry, Aleksander},
  booktitle={Proceedings of the 39th International Conference on Machine Learning},
  year={2022}
}

@article{guu2023simfluence,
  title={Simfluence: Modeling the influence of individual training examples by simulating training runs},
  author={Guu, Kelvin and Webson, Albert and Pavlick, Ellie and Dixon, Lucas and Tenney, Ian and Bolukbasi, Tolga},
  journal={arXiv preprint arXiv:2303.08114},
  year={2023}
}

@inproceedings{hammoudeh22identifying,
  title={Identifying a training-set attack's target using renormalized influence estimation},
  author={Hammoudeh, Zayd and Lowd, Daniel},
  booktitle={Proceedings of the 2022 ACM SIGSAC Conference on Computer and Communications Security},
  pages={1367--1381},
  year={2022}
}

@article{auddy24a,
  title = 	 {Approximate Leave-one-out Cross Validation for Regression with L1 Regularizers},
  author =       {Auddy, Arnab and Zou, Haolin and Rahnama Rad, Kamiar and Maleki, Arian},
  journal = {IEEE Transactions on Information Theory},
  pages = 	 {8040--8071},
  year = 	 {2024},
  volume = 	 {70},
  number = {11}
}

@book{anderson1958introduction,
  title     = {An Introduction to Multivariate Statistical Analysis},
  author    = {Anderson, Theodore W.},
  year      = {1958},
  publisher = {Wiley}
}

@misc{vershynin2009high,
  title={High-dimensional probability},
  author={Vershynin, Roman},
  year={2009},
  publisher={Cambridge University Press Cambridge, UK}
}
\bibliographystyle{icml2026}

%%%%%%%%%%%%%%%%%%%%%%%%%%%%%%%%%%%%%%%%%%%%%%%%%%%%%%%%%%%%%%%%%%%%%%%%%%%%%%%
%%%%%%%%%%%%%%%%%%%%%%%%%%%%%%%%%%%%%%%%%%%%%%%%%%%%%%%%%%%%%%%%%%%%%%%%%%%%%%%
% APPENDIX
%%%%%%%%%%%%%%%%%%%%%%%%%%%%%%%%%%%%%%%%%%%%%%%%%%%%%%%%%%%%%%%%%%%%%%%%%%%%%%%
%%%%%%%%%%%%%%%%%%%%%%%%%%%%%%%%%%%%%%%%%%%%%%%%%%%%%%%%%%%%%%%%%%%%%%%%%%%%%%%
\newpage

\appendix
\onecolumn
\section*{Appendix}
\addcontentsline{toc}{section}{Appendix}
\subsection*{Organization of the Appendix}
\begin{itemize}
    \item \textbf{Section~\ref{sec:method_details}} summarizes additional methodological details.
    In particular, in \Cref{sec:multi-class} we discuss how to transform a multi-class classification problem into a nonlinear scalar-output function $f(\cdot,\cdot)$.

    \item \textbf{Section~\ref{sec:all_technical}} summarizes all theoretical and technical details of the paper.
    {Section~\ref{sec:def+ass}} collects all assumptions used throughout the paper.
    We further verify that these assumptions hold for common GLM loss functions in Section~\ref{sec:corder_check},
    and show that they are satisfied by linear models and one-layer neural networks in Section~\ref{sec:output}.
    All proofs are provided in Section~\ref{sec:proofs}, and all technical lemmas are collected in Section~\ref{sec:lemma}.
    A roadmap of our main theorems is summarized in Table~\ref{tab:appendix_roadmap}.

    \item \textbf{Section~\ref{sec:results}} presents additional simulation results that further verify our theoretical findings.
    Sections~\ref{sec:corr-ind} and~\ref{sec:magnitude_check} summarize results for the independent case:
    Section~\ref{sec:corr-ind} focuses on correlation-based analyses, while Section~\ref{sec:magnitude_check} focuses on magnitude-based analyses.
    Section~\ref{sec:dependent} reports simulation results for the dependent case $\bz_i=\bz_{\new}$.
    Finally, Section~\ref{sec:empirical} presents results on real-world empirical datasets.
\end{itemize}

\begin{table}[ht]
\centering
\caption{Roadmap of theoretical results in the Appendix.}
\label{tab:appendix_roadmap}
\begin{tabular}{c cccc}
\toprule
 & \multicolumn{4}{c}{\textbf{Influence Function Approximation}} \\
\cmidrule(lr){2-5}
 & \textbf{True Influence}
 & \textbf{Linearization Step}
 & \textbf{ALO Step}
 & \textbf{Projection Step} \\
\midrule

\multirow{2}{*}{\textbf{Theorem}}
\;\;$\bz_i=\bz_{\new}$
&
\Cref{theo:I_true_corr}
&
\Cref{theo:step1_corr}
&
\Cref{thm:g_base_ALO_ii}
&
\multirow{2}{*}{\Cref{theo:magnitude_projection}}
\\
\;\;\;\;\;\;\;\;\;\;\;\;\;\;\;\;\;Independent
&
\Cref{theo:I_true}
&
\Cref{theo:step1}
&
\Cref{thm:g_base_ALO}
&
\\

\midrule

\multirow{2}{*}{\textbf{Proof}}
\;\;\;\;\;\;\;$\bz_i=\bz_{\new}$
&
Appendix~\ref{proof:upper:inf:corr}
&
Appendix~\ref{proof:step1_corr}
&
Appendix~\ref{sec:thm2-proof-ii}
&
\multirow{2}{*}{Appendix~\ref{proof:projection}}
\\
\;\;\;\;\;\;\;\;\;\;\;\;\;\;\;\;\;Independent
&
Appendix~\ref{sec:prop-proof}
&
Appendix~\ref{sec:thm1-proof}
&
Appendix~\ref{sec:thm2-proof}
&
\\

\bottomrule
\end{tabular}
\end{table}

\newpage
\section{More Details}
\label{sec:method_details}
\subsection{Multi-class Classification Extension}
\label{sec:multi-class}
The goal of this section is to extend our analysis to the multiclass classification setting. We consider a dataset with $K$ classes and employ the standard cross-entropy loss,
\[
\mathcal{L}_{\mathrm{CE}}(\bz, \bbeta) = -\log p_y(\bz, \bbeta),
\]
where $p_y(\bz, \bbeta)$ denotes the model’s predicted probability for the true class label $y$ associated with the data point $\bz$.
Since we consider a multiclass classification problem, the model must produce class probabilities for all $K$ classes. Even in the simplest setting, the model parameters therefore take the form
\[
\bW = (\bW_1, \ldots, \bW_{K-1}, \bW_K)^\top,
\]
where each $\bW_k \in \mathbb{R}^p$ corresponds to the weight vector for class $k$. For identifiability, we fix $\bW_K = \mathbf{0}_p$. We then define
\[
\bbeta := \mathrm{vec}\!\big(\bW_{[1:(K-1),:]}\big) \in \mathbb{R}^d,
\]
where the parameter dimension is $d = (K-1)p > p$; thus, this model is no longer a simple linear model, unlike the binary classification case.
% \textcolor{red}{Unclear which complexity you are addressing. Can you clarify? }
To address this complexity, we employ a method similar to that used in \cite{park2023trak}. Specifically, we define:
\begin{equation*}
    f(\bz, \bbeta) = \log\left(\frac{p(\bz, \bbeta)}{1-p(\bz, \bbeta)}\right), \ \ p(\bz, \bbeta) = \frac{e^{\bW_y^\top\bx}}{\sum_{j=1}^K e^{\bW_j^\top\bx}},
\end{equation*}
where we set $\bW_K = \mathbf{0}_p$. The loss function can then be expressed as:
\begin{equation*}
    \mathcal{L}_{CE}(\bz, \bbeta) = \log(1+e^{-f(\bz, \bbeta)}) = l(f(\bz, \bbeta)),
\end{equation*}
where $l(s) = \log(1+e^{-s})$.
We can further express the gradient as:
\begin{equation*}
    \bg = \nabla_{\bbeta} f(\bz, \bl) = \frac{(\one_y - \mathbf{p}(\bz, \bl))}{1-p(\bz,\bl)} \otimes \bx,
\end{equation*}
where $\one_y$ is a vector of length $K-1$ with entries ${\one_y}_{[k]} = 1$ if $y=k$ and $0$ otherwise (note that if $y=K$, $\one_y$ is a zero vector). Similarly, $\mathbf{p}(\bz, \bbeta) = \left(\frac{\exp(\bW_1^\top\bx)}{\sum_{j=1}^K \exp(\bW_j^\top\bx)}, \dots, \frac{\exp(\bW_{K-1}^\top\bx)}{\sum_{j=1}^K \exp(\bW_j^\top\bx)}\right)^\top$ is also a vector of length $K-1$. Finally, we obtain $\bg \in \mathbb{R}^{d}$ with $d=(K-1)p$.

\section{Proofs}
\label{sec:all_technical}
\subsection{Notations and Assumptions}
\label{sec:def+ass}
\subsubsection{Notations and Definitions}
Let $\sigma_{\min}(A)$ and $\sigma_{\max}(A)$ denote the minimum and maximum eigenvalues of a matrix $A$, respectively. A random vector $\bx \in \RR^d$ is called norm-subGaussian with parameter $\sigma$, denoted as $nSG(\sigma)$, if there exists $\sigma > 0$ such that
\[
\mathbb{P}\big(\|\bx - \mathbb{E} \bx\| \geq t\big) \leq 2e^{-\frac{t^2}{2\sigma^2}} \quad \text{for all } t \geq 0,
\]
as defined in \citet{jin2019short}. 
\subsubsection{Detailed Modeling Assumptions}
\label{app:detailed_assumptions}

In this section, we provide the formal statements of the assumptions summarized in Section~\ref{sec:main_assumptions}.

\begin{assumption}[Sub-Gaussian design]\label{as1}
The rows of $\bm{X}_n \in \mathbb{R}^{n \times p}$ are independent zero-mean sub-Gaussian random vectors with covariance matrix $\bm{\Sigma}_p$. We assume $\bm{\Sigma}_p$ is well conditioned, i.e., $\rho_{\max} \asymp \rho_{\min} \asymp \Theta(1/\|\bbeta^*\|^2)$, where $\rho_{\max}$ and $\rho_{\min}$ denote the extremal eigenvalues.
\end{assumption}

\paragraph{Justification of Scaling.}
When $d > p$ and $\bbeta^* = \mathrm{vec}(\bW)$ with $\bW \in \mathbb{R}^{h \times p}$, we typically have $f(\bx, \bbeta^*) \asymp \|\bW\bx\| = O_p(1)$. Assuming that both $\bW^\top\bW$ and $\bSigma$ are well-conditioned, we obtain
\[
\|\bW\bx\|^2 \asymp \frac{\|\bW\|^2_F}{p} \|\bx\|^2 \asymp \frac{\|\bW\|^2_F}{p} \cdot p\|\bSigma\| = \|\bW\|^2_F \|\bSigma\|,
\]
which implies $\|\bSigma\| \asymp 1/\|\bbeta^*\|^2$.

\begin{assumption}[Regularity conditions on $\nabla f$]\label{as2} 
Define $\bbeta_1 = t_1\brb + t_2\brbli + (1-t_1-t_2)\bbeta^*$ and $\bbeta_2 = t_1\bl + t_2\bli + (1-t_1-t_2)\bbeta^*$. For $j = 1, 2$ and $\bx$ ranging over all $\bx_{\text{new}}, \bx_1, \dots, \bx_n$, there exists a constant $C_1 = O(1)$ such that with probability at least $1 - q_n$ (where $q_n \to 0$):
\[
\sup_{i \in [n]} \sup_{\substack{t_1, t_2 \in [0,1] \\ t_1 + t_2 \leq 1}} \|\nabla f(\bx, \bbeta_j)\| \le C_1 \cdot \mathrm{poly}(\log n).
\]
\end{assumption}

\begin{assumption}[Regularity conditions on $\bG$ and empirical Hessian]\label{as3}
Recall $\bG(\bbeta)$ from Eq.~\eqref{eq:Gdef_main}. Define $\bbeta_1 = t_1\brb + t_2\brbli + (1-t_1-t_2)\bbeta^*$ and $\bbeta_2 = t_1\bl + t_2\bli + (1-t_1-t_2)\bbeta^*$. For $j = 1, 2$, there exist constants $C_2, c_1 = \Theta(1)$ such that with probability at least $1 - q_n$:
\begin{align*}
\frac{1}{n} \, \sigma_{\max}\!\left(\bG(\bl)^\top \bG(\bl)\right) &\le C_2 \|\bbeta^*\|^{-2},\\
\inf_{i \in [n]} \inf_{\substack{t_1, t_2 \in [0,1] \\ t_1 + t_2 \leq 1}} 
\frac{1}{n} \, \sigma_{\min}\!\left(\bG(\bbeta_j)^\top \bG(\bbeta_j)\right) &\ge c_1 \|\bbeta^*\|^{-2},\\
\inf_{i \in [n]} \inf_{\substack{t_1, t_2 \in [0,1] \\ t_1 + t_2 \leq 1}} 
\frac{1}{n} \, \sigma_{\min}\!\left(\sum_{i=1}^n\nabla^2 \ell(y_i, f(\mathbf{x}_i,\bbeta_j))\right) &\ge c_1 \|\bbeta^*\|^{-2}, \\
\sup_{i \in [n]} \sup_{\substack{t_1, t_2 \in [0,1] \\ t_1 + t_2 \leq 1}} 
\frac{1}{n} \, \sigma_{\max}\!\left(\sum_{i=1}^n\nabla^2 \ell(y_i, f(\mathbf{x}_i,\bbeta_j))\right) &\le \tilde{c}_1 \|\bbeta^*\|^{-2}.
\end{align*}
\end{assumption}

\begin{assumption}[Regularity conditions on $\nabla^2 f$]\label{as4}
There exists a constant $C_2 = O(1)$ such that with probability at least $1 - q_n$:
\[
\sup_{i \in [n]} \sup_{\substack{t_1, t_2 \in [0,1] \\ t_1 + t_2 \leq 1}} 
\|\nabla^2 f(\bx_{\new}, t_1\bli + t_2\bl + (1-t_1-t_2)\bbeta^*)\| \le C_2 \|\bbeta^*\|^{-2}.
\]
\end{assumption}

\begin{assumption}[Regularity conditions on $\ell$]\label{as5}
There exist constants $\tilde C_1, \tilde C_2 = O(1)$, and $\mu > 0$ such that with probability at least $1 - q_n$:
\begin{align*}
\sup_{i \in [n]} \sup_{t \in [0,1]} |\ld(y_i, f(\bx_i, \bli))| &\le \tilde C_1 \cdot \mathrm{poly}(\log n),\\
\sup_{i \in [n]} \sup_{t \in [0,1]} |\ldd(y_i, f(\bx_i, \bbeta^*))| &\le \tilde C_1 \cdot \mathrm{poly}(\log n),\\
\sup_{i \in [n]} \sup_{t \in [0,1]} |\ldd(y_i, f(\bx_i, \bl))| &\le \tilde C_1 \cdot \mathrm{poly}(\log n),\\
% \inf_{i \in [n]} \inf_{t \in [0,1]} |\ldd(y_i, f(\bx_i, \bl))| &\ge C_3 \cdot \mathrm{poly}(\log n),\\
\inf_{i \in [n]} \inf_{t \in [0,1]} \ldd(y_i, f(\bx_i, (1-t)\bli + t\bl)) &\ge \mu.
\end{align*}
Additionally, the Hessian of the loss satisfies the following Lipschitz-like conditions. Define $\bbeta_1 = t_1\brb + t_2\brbli + (1-t_1-t_2)\bbeta^*$ and $\bbeta_2 = t_1\bl + t_2\bli + (1-t_1-t_2)\bbeta^*$. For $j = 1, 2$ and $\bx$ ranging over all $\bx_{\text{new}}, \bx_1, \dots, \bx_n$:
\begin{align*}
\sup_{i \in [n]} \sup_{t \in [0,1]} \frac{\| \bm{\ldd}_{/i}((1-t)\brbli + t\brb) - \bm{\ldd}_{/i}(\brb) \|_2}{\| \brb - \brbli \|_2} &\le \tilde C_2 \frac{\sqrt{n}}{\|\bbeta^*\|} \sqrt{\mathrm{poly}(\log n)},\\
\sup_{i \in [n]} \sup_{t \in [0,1]} \frac{\| \bm{\ldd}_{/i}((1-t)\brbli + t\brb) - \bm{\ldd}_{/i}(\bbeta^*) \|_2}{\| \brb - \bbeta^* \|_2} &\le \tilde C_2 \frac{\sqrt{n}}{\|\bbeta^*\|} \sqrt{\mathrm{poly}(\log n)},\\
\sup_{i \in [n]} \sup_{t \in [0,1]} \frac{\| \bm{\ldd}_{/i}(y_j, f(\bx_j, (1-t)\bli + t\bl)) - \bm{\ldd}_{/i}(y_j, f(\bx_j, \bbeta^*)) \|_2}{\| \bl - \bbeta^* \|_2} &\le \tilde C_2 \frac{\sqrt{n}}{\|\bbeta^*\|} \sqrt{\mathrm{poly}(\log n)}.
\end{align*}
\end{assumption}

\begin{assumption}[Unique minimizer]\label{ass:unique_minimizer}
For $\bg_i = \nabla f(\bx_i, \bl)$, the function $\sum_{j=1}^n \ell\big(y_j, \bg_j^\top \bm{\beta} + \bb_j\big)\bg_j$ has a unique minimizer.
\end{assumption}

\begin{assumption}[Normed-Subgaussian Gradients]\label{ass:nsg_grad}
Let $\bg_{i,*} = \nabla f(\bx_i, \bbeta^*)$. We assume that for $i \in [n]$, there exists $\sigma > 0$ such that $\bg_{i,*} \overset{\text{iid}}{\sim} \mathrm{nSG}(\sigma)$. Moreover, denote $\bSigma_g^* = \mathbb{E}[\bg_{i,*}\bg_{i,*}^\top]$, $\lambda_{\max}(\bSigma_g^*) \asymp \lambda_{\min}(\bSigma_g^*)  = O(\|\bbeta^*\|^{-2})$.
\end{assumption}
% The following assumption is only relevant if we want to prove Theorem~\ref{thm:ranking_linear}.
% \begin{assumption}[Second-order approximation]
% \label{ass:second_order_approx}
% For each $i\in[n]$ and $t\in[0,1]$, define the interpolation
% \(
% \bbeta_i(t) \triangleq t\bl + (1-t)\bli .
% \)
% With probability tending to one, there exists a constant $C_f>0$ such that
% \begin{equation*} \sup_{i\in[n]}\sup_{t\in[0,1]}\|\frac{1}{n}\sum_{j\neq i} \ld(y_j,f(x_j,\bbeta_{i}(t))) \nabla^2 f(\bx_j,\bbeta_{i}(t)))\\- \ld(y_j,f(\bx_j,\bbeta^\ast))\nabla^2 f(\bx_j,\bbeta^\ast)\| =\frac{1}{\sqrt{n}} C_f\frac{\|\bl - \bbeta^*\|}{\|\bbeta^*\|^2} \end{equation*} \end{assumption}
The following assumption is only relevant if we want to prove the lower bound in Proposition~\ref{theo:I_true_corr} and \ref{theo:step1_corr}.
\begin{assumption}[Uniform Lower Bounds]\label{as9}
There exists a constant $D_1$ such that
\[
\|\nabla f(\bx_i; \bbeta^*)\|_2^2 \geq D_1,
\]
and
\[
|\ld(y_i, f(\bx_i, \bbeta^*))| \geq D_1. 
\]
\end{assumption}

\subsection{Proofs of Theorems}
\label{sec:proofs}
\subsubsection{Proof of Proposition \ref{theo:I_true_corr}} \label{proof:upper:inf:corr}

By using the mean value theorem we have
\begin{equation}\label{eq:exactinf:linearized}
\mathcal{I}^{\mathrm{True}}(\bz_i, \bz_{\mathrm{new}})
\triangleq
f(\bx_{\mathrm{new}}; \bli)
-
f(\bx_{\mathrm{new}}; \bl)= \nabla^\top f(\bx_{\mathrm{new}}, \tilde{\bbeta})(\bli- \bl),
\end{equation}
where $\tilde{\bbeta}$ is a point on the line segment that connects $\bli$ and $\bl$. Using the first order optimality conditions for $\bl$ and $\bli$ we have 
\begin{eqnarray}
 0 &=& \frac{1}{n} \sum_{j\neq i} \nabla l (y_j,f(\bx_j, \bli)) \nonumber \\
  &=&\frac{1}{n} \sum_{j\neq i} \nabla l (y_j,f(\bx_j, \bl))+\frac{1}{n}\nabla l(y_i,f(\bx_i, \bl)).
\end{eqnarray}
Hence, by using the integral form of the mean value theorem we have
\begin{eqnarray}\label{eq:mvt:int:bli:bl}
(\bli-\bl) = \Big( \int_{t=0}^1 \frac{1}{n} \sum_{j\neq i} \nabla^2 l (y_j,f(\bx_j, t \bli +(1-t) \bl) ) dt \Big)^{-1} \left( \frac{1}{n} \nabla l (y_i, f(\bx_i, \bl)) \right).
\end{eqnarray}
By combing \eqref{eq:exactinf:linearized} and \eqref{eq:mvt:int:bli:bl} we obtain
\begin{equation}\label{eq:influence1:order}
\mathcal{I}^{\mathrm{True}}(\bz_i, \bz_{\mathrm{new}}) = \nabla^\top f(\bx_{\mathrm{new}}, \tilde{\bbeta}) \Big( \int_{t=0}^1 \frac{1}{n} \sum_{j\neq i} \nabla^2 l (y_j,f(\bx_j, t \bli +(1-t) \bl) ) dt \Big)^{-1} \left( \frac{1}{n} \nabla l (y_i, f(\bx_i, \bl)) \right).
\end{equation}
Therefore we can use the Cauchy-Schwartz inequality and Assumption \ref{as3} to prove
\begin{eqnarray}
|\mathcal{I}^{\mathrm{True}}(\bz_i, \bz_{\mathrm{new}})| \leq\frac{\|\bbeta^*\|^2}{c_1n} \|\nabla f(\bx_{\mathrm{new}}, \tilde{\bbeta})\| \|\nabla l (y_i, f(\bx_i, \bl)) \| = O_{\rm P}\Big(\frac{\mathrm{poly}(\log n) \|\bbeta^*\|^2}{n}\Big),
\end{eqnarray}
where to obtaint the last equality we have used Assumptions \ref{as2} and \ref{as5}. 

To prove the second part of the theorem, i.e. \eqref{eq:inf:exact:x_i}, again we use \eqref{eq:influence1:order}. Note that
\begin{eqnarray}\label{eq:influence:simp1}
\lefteqn{\mathcal{I}^{\mathrm{True}}(\bz_i, \bz_i) = \nabla^\top f(\bx_{\mathrm{new}}, \tilde{\bbeta}) \Big( \int_{t=0}^1 \frac{1}{n} \sum_{j\neq i} \nabla^2 l (y_j,f(\bx_j, t \bli +(1-t) \bl) ) dt \Big)^{-1} \left( \frac{1}{n} \nabla l (y_i, f(\bx_i, \bl)) \right)} \nonumber \\ &=& \nabla^\top f(\bx_{\mathrm{new}}, \bbeta^*) \Big( \int_{t=0}^1 \frac{1}{n} \sum_{j\neq i} \nabla^2 l (y_j,f(\bx_j, t \bli +(1-t) \bl) ) dt \Big)^{-1} \frac{1}{n} \nabla l (y_i, f(\bx_i, \bl))  \nonumber \\
&+ & \Big(\nabla^\top f(\bx_{\mathrm{new}}, \bl)- \nabla^\top f(\bx_{\mathrm{new}}, \bbeta^*) \Big)\Big( \int_{t=0}^1 \frac{1}{n} \sum_{j\neq i} \nabla^2 l (y_j,f(\bx_j, t \bli +(1-t) \bl) ) dt \Big)^{-1} \frac{1}{n} \nabla l (y_i, f(\bx_i, \bl))  \nonumber \\
&=&  \nabla^\top f(\bx_{\mathrm{new}}, \bbeta^*) \Big( \int_{t=0}^1 \frac{1}{n} \sum_{j\neq i} \nabla^2 l (y_j,f(\bx_j, t \bli +(1-t) \bl) ) dt \Big)^{-1} \frac{1}{n} \nabla l (y_i, f(\bx_i, \bl))  + O_{\rm P}\left(\frac{\|\bbeta^*\|^2}{n^{1.5- \epsilon}}\right),
\end{eqnarray}
where to obtain the last inequality we have used the Cauchy-Schwartz inequality and Lemma \ref{lem:bound_beta}. With a similar reasoning we can further simplify \eqref{eq:influence:simp1} in the following way:
\begin{eqnarray}\label{eq:influence:simp2}
\mathcal{I}^{\mathrm{True}}(\bz_i, \bz_i) &=&  \nabla^\top f(\bx_{\mathrm{new}}, \bbeta^*) \Big( \int_{t=0}^1 \frac{1}{n} \sum_{j\neq i} \nabla^2 l (y_j,f(\bx_j, t \bli +(1-t) \bl) ) dt \Big)^{-1} \frac{1}{n} \ld(y_i, f(\bx_i, \bbeta^*) \nabla f(\bx_i, \bbeta^*)  \nonumber \\
&+& O_{\rm P}\left(\frac{\|\bbeta^*\|^2}{n^{1.5- \epsilon}}\right).
\end{eqnarray}
Furthermore, using Assumption \ref{as3} we have 
\begin{eqnarray}\label{eq:bounderror:beta:betastar}
|\mathcal{I}^{\mathrm{True}}(\bz_i, \bz_i) | \geq \frac{\|\bbeta^*\|^2}{\tilde{c}_2n} |\nabla^\top f(\bx_i, \bbeta^*)\nabla f(\bx_i, \bbeta^*)||\ld (y_i, f(\bx_i, \bbeta^*))| -  O_{\rm P}\left(\frac{\|\bbeta^*\|^2}{n^{1.5- \epsilon}}\right). 
\end{eqnarray}
Assumption~\ref{as9} then implies \eqref{eq:inf:exact:x_i}.

\subsubsection{Proof of Proposition~\ref{theo:I_true}}
\label{sec:prop-proof}
First note that
\begin{eqnarray*}  
    &&|f(\bx_\new, \bli) - f(\bx_\new, \bl)|
    \leq \sup_{t \in [0,1]} \|\bg_{\new}^\top(\bli - \bl)+(\bli - \bl)^\top\nabla^2 f(\bx_{\text{new}}, (1-t)\bli + t\bl)(\bli - \bl)\| \\
    &\leq& \sup_{t \in [0,1]} \|\nabla^2 f(\bx_{\text{new}}, (1-t)\bli + t\bl)\|\|\bli - \bl\|^2 +\|\bg_{\new,*}^\top(\bli - \bl)\| 
    + \|(\bg_{\new}-\bg_{\new,*})(\bli - \bl)\| \\
    &\leq& C_2 \|\bbeta^*\|^{-2} \|\bli - \bl\|^2 +\|\bg_{\new,*}^\top(\bli - \bl)\| + \|\bg_{\new}-\bg_{\new,*}\|\|\bli - \bl\|.
\end{eqnarray*}
The first inequality follows from the mean value theorem. The second inequality uses the triangle inequality. The third inequality follows from Assumption~\ref{as4} and the Cauchy–Schwarz inequality. Thus, it suffices to bound the three terms in the last expression.

\noindent\textbf{1. Bounding $\|\bli-\bl\|$.}
Using the mean value theorem and the first-order optimality conditions in \eqref{eq:bl} and \eqref{eq:optimization_lo}, we have:
\begin{eqnarray*}
    0 &=& \frac{1}{n} \sum_{j=1}^n \nabla l (y_j,f(\bx_j, \bli)) -\frac{1}{n}\nabla l(y_i,f(\bx_i, \bli)) \\
    &=& \frac{1}{n} \sum_{j=1}^n \nabla l (y_j,f(\bx_j, \bl)) +\frac{1}{n}\sum_{j=1}^n \nabla^2 l (y_j,f(\bx_j, \tbli)) (\bli - \bl) -\frac{1}{n}\nabla l(y_i,f(\bx_i, \bli)) \\
    &=& \frac{1}{n}\sum_{j=1}^n \nabla^2 l (y_j,f(\bx_j, \tbli)) (\bli - \bl) -\frac{1}{n}\nabla l(y_i,f(\bx_i, \bli)),
\end{eqnarray*}
where $\tbli = (1-t_i)\bli + t_i\bl$ for some $t_i \in [0,1]$. This implies:
\begin{eqnarray}
    \sup_{i \in [n]}\|\bli - \bl\| &=& \sup_{i \in [n]}\Big\| \Big(\frac{1}{n}\sum_{j=1}^n \nabla^2 l (y_j,f(\bx_j, \tbli)) \Big)^{-1} \frac{1}{n}\nabla l(y_i,f(\bx_i, \bli))\Big\| \nonumber\\
    &\leq& \sup_{i \in [n]} \sigma_{\max } \Big(\frac{1}{n}\sum_{j=1}^n \nabla^2 l (y_j,f(\bx_j, \tbli)) \Big)^{-1} \Big) \sup_{i \in [n]} \Big\| \frac{1}{n}\nabla l(y_i,f(\bx_i, \bli))\Big\|  \nonumber \\
    &\leq& \frac{\tilde C_1\|\bbeta^*\|^2}{c_1} \frac{\poly(\log n)}{n}, \qquad \wp \geq 1-q_n,
    \label{eq:bli-bl}
\end{eqnarray}
where to obtain the last inequality we use Assumption~\ref{as2} and \ref{as5}.

\noindent\textbf{2. Bounding $\|\bg_{\new} - \bg_{\new, *}\|$.}
From Lemma~\ref{lem:bound_beta}, for any $\eps>0$, with probability at least $1-q_n - n^{-\eps}$, there exists an absolute constant $s_g^\ast$ such that 
\begin{equation}
   \|\bg_{\new} - \bg_{\new, *}\| \leq  s_g^\ast(\log n)^{0.5}n^{\eps-0.5}.
   \label{eq:bg-bgast}
\end{equation}

\noindent\textbf{3. Bounding $\|\bg_{\new,*}^\top(\bli - \bl)\|$.}
By Assumption~\ref{ass:nsg_grad}, $\bg_{\new,*}$ is sub-Gaussian. Using \eqref{eq:bli-bl} and the independence of $\bg_{\new,*}$ from $\bl$, $\bli$, $\brb$, and $\brbli$, there exists an absolute constant $s_g$ such that $\bg_{\new,*}^\top(\bli - \bl)$ is a $s_g\frac{\|\bbeta^*\|\poly(\log n)}{n}$-sub-Gaussian random variable. Hence, for any $\eps>0$, with probability at least $1 - 2n^{-\eps}$, there exists an absolute constant $\hat s_g$ such that
\begin{equation}
  \|\bg_{\new,*}^\top(\bli - \bl)\| \leq \hat s_g \frac{\|\bbeta^*\|}{n^{1-\eps}}.
  \label{eq:bgast_bli-bi}
\end{equation}

\noindent\textbf{Final bound.}
Combining the bounds above, there exists a constant $s_1$ such that, for any $\eps>0$,
\[
|f(\bx_\new, \bli) - f(\bx_\new, \bl)| \leq s_1 \Big( \frac{\|\bbeta^*\|^2}{n^2}{\poly(\log n)} + \frac{\|\bbeta^*\|^2}{n^{1.5-\eps}}+ \frac{\|\bbeta^*\|}{n^{1-\eps}}\Big),
\]
with probability at least $1 - 3(q_n + n^{-\eps})$. The final result follows from $\|\bbeta^*\|^2\ll n^{1-\eps}$.

\subsubsection{Proof of Theorem \ref{theo:step1_corr}} \label{proof:step1_corr}

By the first-order optimality conditions, we have:
\begin{eqnarray*}
    \sum_{j=1}^n \ld(y_j,\bg_j^\top \hat{\bm{\beta}} + \bb_j)\bg_j &=& 0,\\
    \sum_{j=1}^n \ld(y_j,\bg_j^\top \brb + \bb_j)\bg_j &=& 0.
\end{eqnarray*}
Under Assumption~\ref{ass:unique_minimizer}, $\sum_{j=1}^n \ell(y_j,\bg_j^\top {\bm{\beta}} + \bb_j)\bg_j$ has a unique minimizer. Therefore, we conclude that $\bl = \brb$.

Next, we bound the difference:
\begin{eqnarray*}  
    |f(\bx_i, \bli) - (\bg_i^\top \brbli+ \bb_i)| &=& |f(\bx_i, \bli) - f(\bx_i, \bl)| + |\bg_i^\top (\bl - \brb)| + |\bg_i^\top (\brb - \brbli)| \\
    &\leq& \sup_{t \in [0,1]} \|\bg_i^\top(\bli - \bl)+(\bli - \bl)^\top  \Big( \int_{t=0}^1 \nabla^2 f(\bx_i, (1-t)\bli + t\bl)dt \Big) (\bli - \bl)\| \\
    &&+ |\bg_i^\top (\bl - \brb)| + |\bg_i^\top (\brb - \brbli)| \\
    &\leq& \sup_{t \in [0,1]} \|\nabla^2 f(\bx_{i}, (1-t)\bli + t\bl)\|\|\bli - \bl\|^2 \\
    &&+\|\bg_{i,*}^\top(\bli - \bl)\| + \|\bg_{i,*}^\top(\brb - \brbli)\| \\
    &&+ \|(\bg_{i}-\bg_{i,*})(\bli - \bl)\| + \|(\bg_{i}-\bg_{i,*})(\brb - \brbli)\| \\
    &\leq& C_2 \|\bbeta^*\|^{-2} \|\bli - \bl\|^2 \\
    &&+\|\bg_{i,*}^\top(\bli - \bl)\| + \|\bg_{i,*}^\top(\brb - \brbli)\| \\
    &&+ \|\bg_{i}-\bg_{i,*}\|\|\bli - \bl\| + \|\bg_{i}-\bg_{i,*}\|\|\brb - \brbli\|.
\end{eqnarray*}
The first inequality follows from the mean value theorem and the fact that $\brb = \bl$. The second inequality uses the triangle and Cauchy-Schwarz inequalities. The third inequality follows from Assumption~\ref{as4} and Cauchy-Schwarz. Thus, it suffices to bound the three terms in the last three lines.
Using the same method as the one used in deriving \eqref{eq:bli-bl} we have
\begin{eqnarray*}
    \sup_{i \in [n]}\|\brbli - \brb\| &=&O_{\rm P}\Big(   \frac{\|\bbeta^*\|^2\poly(\log n)}{n}\Big).
       \end{eqnarray*}
   Furthermore, using the mean value theorem we have
   \begin{eqnarray}
   \bg_{i,*}^\top(\brbli - \brb) &=& \ld_i(\brb) \bg_{i,*}^\top 
\Bigl(\int_{t=0}^1 \GI^\top \diag[\bm{\ldd}_{/i}(\brbli - t\bm{\breve{\Delta}}_{/i})] \GI dt \Bigr)^{-1}  \bg_{i} \nonumber \\
 &=& \ld_i(\brb) \bg_{i,*}^\top 
\Bigl(\int_{t=0}^1 \GI^\top \diag[\bm{\ldd}_{/i}(\brbli - t\bm{\breve{\Delta}}_{/i})] \GI dt \Bigr)^{-1}  \bg_{i, *} +O_{\rm P}\left(\frac{\|\bbeta^*\|^2}{n^{1.5- \epsilon}}\right),
   \end{eqnarray}
   where to obtain the last equality we used an argument similar to the one we used for deriving \eqref{eq:bounderror:beta:betastar}. Hence, we have
  \begin{eqnarray*}
   |\bg_{i,*}^\top(\brbli - \brb)|  &\leq & \frac{\|\bbeta^*\|^2}{C_2 n} 
 |\ld_i(\brb)| \bg_{i,*}^\top  \bg_{i, *} + O_{\rm P}\left(\frac{\|\bbeta^*\|^2}{n^{1.5- \epsilon}}\right) = O_{\rm P} \left(\frac{\poly(\log n) \|\bbeta^*\|^2}{n}\right). 
   \end{eqnarray*}
Similarly, we have
\begin{equation*}
|\bg_{i,*}^\top(\brb - \brbli)|=  O_{\rm P} \left(\frac{\poly(\log n) \|\bbeta^*\|^2}{n}\right).
\end{equation*}
   Now we aim to bound the term $\|\bg_{i}-\bg_{i,*}\|\|\bli - \bl\|$. According to  \eqref{eq:bg-bgast} we have with high probability,
\begin{equation}
   \|\bg_i - \bg_{i, *}\| \leq  s_g^\ast(\log n)^{0.5}n^{\eps-0.5}.
\end{equation}
Therefore, 
\begin{equation}
\|\bg_{i}-\bg_{i,*}\|\|\bli - \bl\| = O_{\rm P}\left(\frac{\|\bbeta^*\|^2}{n^{1.5- \epsilon}}\right).
\end{equation}
Combining all the calculation above we can concluse that 
\[
|f(\bx_\new, \bli) - (\bg_\new^\top \brbli+ \bb_\new)| = O_{\rm P} \left(\frac{\|\bbeta^*\|^2 \poly(\log n)}{n}\right). 
\]
Now we turn our attention to proving \eqref{eq:lowerbound:inf:correlated}. First note that using the same argument as the one presented above we have 
\begin{eqnarray}
\lefteqn{\mathcal{I}^{\rm Linear}(\bz_i,\bz_i)
=
\nabla f(\bz_i,\bl)^\top(\brbli-\brb) = \nabla f(\bz_i,\bbeta^*)^\top(\brbli-\brb) + O_{\rm P} \left(\frac{\|\bbeta^*\|^2 \poly(\log n)}{n^{1.5 -\epsilon}}\right)} \nonumber \\
&=& \ld_i(\brb) \bg_{i,*}^\top 
\Bigl(\int_{t=0}^1 \GI^\top \diag[\bm{\ldd}_{/i}(\brbli - t\bm{\breve{\Delta}}_{/i})] \GI dt \Bigr)^{-1}  \bg_{i} + O_{\rm P} \left(\frac{\|\bbeta^*\|^2 \poly(\log n)}{n^{1.5 -\epsilon}}\right)\nonumber \\
 &=& \ld_i(\brb) \bg_{i,*}^\top 
\Bigl(\int_{t=0}^1 \GI^\top \diag[\bm{\ldd}_{/i}(\brbli - t\bm{\breve{\Delta}}_{/i})] \GI dt \Bigr)^{-1}  \bg_{i, *} +O_{\rm P}\left(\frac{\|\bbeta^*\|^2}{n^{1.5- \epsilon}}\right),
\end{eqnarray}
Using \eqref{eq:upperbound:inv} we can conclude that with high probability we have that  
\begin{eqnarray}
|\ld_i(\brb) \bg_{i,*}^\top 
\Bigl(\int_{t=0}^1 \GI^\top \diag[\bm{\ldd}_{/i}(\brbli - t\bm{\breve{\Delta}}_{/i})] \GI dt \Bigr)^{-1}  \bg_{i, *}| \geq |\ld_i(\brb)| \bg_{i,*}^\top 
  \bg_{i, *} \frac{2 \|\bbeta^*\|^2}{c_1 \mu n}. 
\end{eqnarray}
Combing this results with Assumption \ref{as9} establishes \eqref{eq:lowerbound:inf:correlated}.

\subsubsection{Proof of Theorem~\ref{theo:step1}}
\label{sec:thm1-proof}
By the first-order optimality conditions for $\bl$ and $\brb$, we have:
\begin{eqnarray*}
    \sum_{j=1}^n \ld(y_j,\bg_j^\top \hat{\bm{\beta}} + \bb_j)\bg_j &=& 0,\\
    \sum_{j=1}^n \ld(y_j,\bg_j^\top \brb + \bb_j)\bg_j &=& 0.
\end{eqnarray*}
Under Assumption~\ref{ass:unique_minimizer}, $\sum_{j=1}^n \ell(y_j,\bg_j^\top {\bm{\beta}} + \bb_j)\bg_j$ has a unique minimizer; therefore, we conclude that $\bl = \brb$.

Next, we bound the difference:
\begin{eqnarray*}  
    |f(\bx_\new, \bli) - (\bg_\new^\top \brbli+ \bb_\new)| &=& |f(\bx_\new, \bli) - f(\bx_\new, \bl)| + |\bg_\new^\top (\bl - \brb)| + |\bg_\new^\top (\brb - \brbli)| \\
    &\leq& \sup_{t \in [0,1]} \|\bg_{\new}^\top(\bli - \bl)+(\bli - \bl)^\top\nabla^2 f(\bx_{\text{new}}, (1-t)\bli + t\bl)(\bli - \bl)\| \\
    &&+ |\bg_\new^\top (\bl - \brb)| + |\bg_\new^\top (\brb - \brbli)| \\
    &\leq& \sup_{t \in [0,1]} \|\nabla^2 f(\bx_{\text{new}}, (1-t)\bli + t\bl)\|\|\bli - \bl\|^2 \\
    &&+\|\bg_{\new,*}^\top(\bli - \bl)\| + \|\bg_{\new,*}^\top(\brb - \brbli)\| \\
    &&+ \|(\bg_{\new}-\bg_{\new,*})(\bli - \bl)\| + \|(\bg_{\new}-\bg_{\new,*})(\brb - \brbli)\| \\
    &\leq& C_2 \|\bbeta^*\|^{-2} \|\bli - \bl\|^2 \\
    &&+\|\bg_{\new,*}^\top(\bli - \bl)\| + \|\bg_{\new,*}^\top(\brb - \brbli)\| \\
    &&+ \|\bg_{\new}-\bg_{\new,*}\|\|\bli - \bl\| + \|\bg_{\new}-\bg_{\new,*}\|\|\brb - \brbli\|.
\end{eqnarray*}
The first inequality follows from the mean value theorem and the fact that $\brb = \bl$. The second inequality uses the triangle and Cauchy-Schwarz inequalities. The third inequality follows from Assumption~\ref{as4} and Cauchy-Schwarz. Thus, it suffices to bound the three terms in the last three lines.
Using the same method as \eqref{eq:bli-bl} and \eqref{eq:bgast_bli-bi},
we can get the bounds
\begin{eqnarray}
    \sup_{i \in [n]}\|\brbli - \brb\| &=&O(   \frac{\|\bbeta^*\|^2\poly(\log n)}{n}), \wp \geq 1-q_n,
    \label{eq:brbli-brb}\\
    \|\bg_{\new,*}^\top(\brbli - \brb)\| &=& O( \frac{\|\bbeta^*\|}{n^{1-\eps}}), \wp \geq 1-2n^{-\eps}.
  \label{eq:bgast_brbli-brb}
\end{eqnarray}
Combining 
\eqref{eq:bg-bgast}, we conclude that there exists a constant $s_1$ such that, for any $\eps>0$,
\[
|f(\bx_\new, \bli) - (\bg_\new^\top \brbli+ \bb_\new)| \leq s_1 \Big(  \frac{\|\bbeta^*\|^2}{n^2}{\poly(\log n)} + \frac{\|\bbeta^*\|^2}{n^{1.5-\eps}}+\frac{\|\bbeta^*\|}{n^{1-\eps}}\Big),
\]
with probability at least $1 - 3(q_n + n^{-\eps})$. The final result follows from $\|\bbeta^*\|^2\ll n^{1-\eps}$. 

\subsubsection{Proof of Theorem~\ref{thm:g_base_ALO_ii}}
\label{sec:thm2-proof-ii}
First, define $\bm{\breve{\Delta}}_{/i} := \brbli - \brb$ and the Jacobian compactly as
\begin{equation*}
\bm{J}_{/i}(\bm{\theta}) = \GI^\top \diag(\bm{\ldd}_{/i}(\bm{\theta})) \GI.
\end{equation*}
Also define the estimated and exact leave-$i$-out perturbation vectors as
\begin{equation*}
\begin{aligned}
    \bm{\hat{\Delta}}_{/i} & \triangleq \ld_i(\brb) \bigl[ \bm{J}_{/i} (\brbli - \bm{\breve{\Delta}}_{/i}) \bigr]^{-1} \bg_{i}. 
\end{aligned}
\end{equation*}

Applying the Woodbury matrix identity yields
\begin{align*}
\bg_{i}^\top \bm{\hat{\Delta}}_{/i} 
&= \ld_i(\brb)\, \bg_{i}^\top
\Bigl(\GI^\top \diag[\bm{\ldd}_{/i}(\brbli - \bm{\breve{\Delta}}_{/i})] \GI\Bigr)^{-1} \bg_{i} \\
&= \frac{
            \ld_i(\brb)\,
            \bg_{i}^\top 
            \bigl(\bG^\top \diag[\bm\ldd(\brb)] \bG\bigr)^{-1}
            \bg_{i}
        }{
            1 - 
            \ldd_i(\brb)\,
            \bg_{i}^\top 
            \bigl(\bG^\top \diag[\bm\ldd(\brb)] \bG\bigr)^{-1}
            \bg_{i}
          }.
\end{align*}
% Using mean-value theorem, we know there exists $\tbli = (1-t_i)\brbli + t_i\brb$ for some $t_i \in [0,1]$ such that:
% \begin{eqnarray}
%     \inf_{i \in [n]}\|\brbli - \brb\| &=& \inf_{i \in [n]}\Big\| \Big(\frac{1}{n}\sum_{j=1}^n  \ldd (y_j,\bg_j^\top \tbli+\bb_j) \Big)^{-1} \frac{1}{n}\nabla l(y_i,\bg_i^\top \tbli+\bb_j)\Big\| \nonumber\\
%     &\geq& \inf_{i \in [n]} \sigma_{\min } \Big(\Big(\frac{1}{n}\sum_{j=1}^n  \ldd (y_j,\bg_j^\top \tbli+\bb_j) \bg_j\bg_j^\top\Big)^{-1} \Big) \inf_{i \in [n]} \Big\| \frac{1}{n}\nabla l(y_i,\bg_i^\top \tbli+\bb_j)\Big\|  \nonumber \\
%     &\geq& \frac{\|\bbeta^*\|^2}{\mu C_2} \frac{\poly(\log n)}{n} {\|\bg_i\|}|\ld(y_j,\bg_i^\top \tbli+\bb_j)|, \qquad \wp \geq 1-q_n,
%     \label{eq:brbli-brb-lower}
% \end{eqnarray}
% where we use Assumptions~\ref{as3} and \ref{as5}.

By the multivariate mean-value theorem, there exists $t \in [0,1]$ such that 
\begin{equation*}
\bm{\breve{\Delta}}_{/i} 
= \ld_i(\brb) 
\left( \bm{J}_{/i}\bigl(\brbli - (1-t)\bm{\breve{\Delta}}_{/i}\bigr) \right)^{-1}\bg_{i}.
\end{equation*}

Introduce
\begin{equation*}
\bm{\gamma}_{\bm{\delta}/i}(\bm{\theta}) 
\triangleq \bm{\ldd}_{/i}(\bm{\theta} + \bm{\delta}) - \bm{\ldd}_{/i}(\bm{\theta}),
\end{equation*}
so that
\begin{equation*}
\bm{J}_{/i}(\bm{\theta} + \bm{\delta}) 
= \bm{J}_{/i}(\bm{\theta}) 
+ \GI^\top \diag\!\left[\bm{\gamma}_{\bm{\delta}/i}(\bm{\theta})\right]\GI. 
\end{equation*}
Note that $\bm{J}_{/i}(\bm{\theta} + \bm{\delta})$ remains positive definite for all $t \in [0,1]$, with $\bm{\theta} = \brbli$ and $\bm{\delta} = -(1-t)\bm{\breve{\Delta}}_{/i}$.

Using the notation above, we now bound the error. With high probability, we obtain
\begin{eqnarray}
    &&|\left|\bg_{i}^\top \bm{\breve{\Delta}}_{/i} - \bg_{i}^\top \bm{\hat{\Delta}}_{/i}\right| -\left|\bg_{i*}^\top \bm{\breve{\Delta}}_{/i} - \bg_{i*}^\top \bm{\hat{\Delta}}_{/i}\right|| \nonumber\\
&\leq&   |\bg_{i}^\top \bm{\breve{\Delta}}_{/i} - \bg_{i}^\top \bm{\hat{\Delta}}_{/i} -(\bg_{i*}^\top \bm{\breve{\Delta}}_{/i} - \bg_{i*}^\top \bm{\hat{\Delta}}_{/i})| \nonumber\\
&\leq& \|\bg_{i}-\bg_{i*}\|\|\bm{\breve{\Delta}}_{/i}-\bm{\hat{\Delta}}_{/i}\| \leq n^{\eps-0.5}\| \bm{\breve{\Delta}}_{/i} -\bm{\hat{\Delta}}_{/i}\| \poly(\log n),
\label{eq:conclu-lower}
\end{eqnarray}
where we use Lemma~\ref{lem:bound_beta} and tiangle ineuqlaity.

Also, since we know that
$$\left|\bg_{i*}^\top \bm{\breve{\Delta}}_{/i} - \bg_{i*}^\top \bm{\hat{\Delta}}_{/i}\right| \leq \|\bg_{i*}\|\|\bm{\breve{\Delta}}_{/i} -\bm{\hat{\Delta}}_{/i}\|,$$
then we know that
\begin{equation}
    \left|\bg_{i}^\top \bm{\breve{\Delta}}_{/i} - \bg_{i}^\top \bm{\hat{\Delta}}_{/i}\right|  \leq O_p( \|\bm{\breve{\Delta}}_{/i} -\bm{\hat{\Delta}}_{/i}\|\poly(\log n)),
    \label{eq:end-alo-ii}
\end{equation}
where we use \eqref{eq:bg_norm}.
Next, we consider $\| \bm{\breve{\Delta}}_{/i} -\bm{\hat{\Delta}}_{/i}\|$. Using Assumption~\ref{as4}, we decompose it as
\[
\| \bm{\breve{\Delta}}_{/i} -\bm{\hat{\Delta}}_{/i}\| \leq \tilde C_1 {\poly(\log n)}\bigl(\|((A_i+{\Delta_i})^{-1}-A_i^{-1}) \bg_{i,*}\| + \|((A_i+{\Delta_i})^{-1}-A_i^{-1}) (\bg_{i}-\bg_{i,*})\|\bigr),
\]
where $A_i =\GI^\top\bm{D }\GI$, ${\Delta_i} =  \GI^\top \bm\Gamma \GI$, $\bm{\Gamma} = \diag\!\left[\bm{\gamma}_{\bm{t\bm{\hat{\Delta}}_{/i}}/i}(\brb)\right]$, and $\bm{D} = \diag(\bm{\ldd}_{/i}(\brb))$.

\paragraph{Part 1: $\|((A_i+{\Delta_i})^{-1}-A_i^{-1}) \bg_{i,*}\|$}
First, we bound $\|((A_i+{\Delta_i})^{-1}-A_i^{-1}) \bg_{i,*}\|$. Denote $\bm S = (A_i+{\Delta_i})^{-1}-A_i^{-1}$. Using Lemma~\ref{lem:neumann}, we know that there exists a constant $M_0$ such that 
\[
    \| \bm{S}\bg_{i,*}\| \leq M_0 \|A_i^{-1}{\Delta_i} A_i^{-1} \bg_{i,*}\|.
\]
 Note that
$$\|A_i^{-1}{\Delta_i} A_i^{-1}\bg_{i,*}\| \leq \|A_i^{-1}\GI\| \|\bm{\Gamma}\|_{F}\|\GI A_i^{-1}\bg_{i,*}\|_{\infty}.$$
Note that By independence of $\bg_{i,*}$ and $A_i^{-1}{\Delta_i} A_i^{-1}$, for a subgaussian vector $\GI A_i^{-1}\bg_{i,*}$, we can get that
$$\|\GI A_i^{-1}\bg_{i,*}\|_{\infty} \leq \frac{1}{\|\bbeta^*\|}\|\GI\|\|A_i^{-1}\|,$$
then
$$\|A_i^{-1}{\Delta_i} A_i^{-1}\bg_{i,*}\| \leq \|\GI\|^2\|A_i^{-1}\|^2 \|\bm{\Gamma}\|_F  \frac{1}{\|\bbeta^*\|}$$

% and using Assumption~\ref{as5} and Hanson-wright ineuqlaity, we have 

% $$\mathbb P(|\bg_{i,*}^\top TT^\top \bg_{i,*} - \tr(TT^\top \bSigma_g)| \geq t) \leq 2\exp(-c\min(\frac{t^2\|\bbeta^*\|^4}{\|TT^\top\|^2_F},\frac{t\|\bbeta^*\|^2}{\|TT^\top\|_2})).$$
% First we consider $\|TT^\top\|^2_F$. Since
% \begin{eqnarray*}
%    \|TT^\top\|^2_F =  \tr(TT^\top TT^\top) = \|A_i^{-1}{\Delta_i} A_i^{-2}{\Delta_i} A_i^{-1}\|_F^2 \leq (\|A_i^{-1}\|^4\|{\Delta_i}^2\|_F)^2 
%     \leq \|A_i^{-1}\|^8\|\GI\|^8\|\gamma\|_4^4,
% \end{eqnarray*}
we next bound $\|A_i^{-1}\| = \|(\GI^\top \bm{D}\GI)^{-1}\|$ and $\|\GI\GI^\top\|$.
By Assumption~\ref{ass:nsg_grad} and Lemma~2(1) of \cite{jin2019short}, we have
\begin{equation}
\sup_{i\in [n]}\|\bg_{i,*}\|^2 \leq M\log n,
\quad
\wp \geq 1-2n^{-M}, \ \forall M>0.
\label{eq:bg_norm}
\end{equation}
Combining Lemma~\ref{lem:bound_beta} and Assumption~\ref{as3}, for any $\eps>0$ such that $n^{1-\eps} \gg \|\bbeta^*\|^2$, and setting $M=\eps$, we obtain that with probability at least $1-q_n-2n^{-\eps}$ and for sufficiently large $n$,
\begin{equation}
\label{eq:good_bound_GI}
\begin{aligned}
\sigma_{\max}(\GI^\top \GI)
&\le \sigma_{\max}(\bG^\top \bG) + \bg_i^\top \bg_i 
\le \frac{C_2 n}{\|\bbeta^*\|^2} 
 + 2\Bigl(C_\bg \sqrt{\tfrac{\log n}{n^{1-2\eps}}}\Bigr)^2
 + \eps \log n
\le \frac{2C_2 n}{\|\bbeta^*\|^2}, \\
\sigma_{\min}(\GI^\top \GI)
&\ge \sigma_{\min}(\bG^\top \bG) - \bg_i^\top \bg_i 
\ge \frac{c_1 n}{\|\bbeta^*\|^2} 
 - 2\Bigl(C_\bg \sqrt{\tfrac{\log n}{n^{1-2\eps}}}\Bigr)^2
 - \eps\log n
\ge \frac{c_1 n}{2\|\bbeta^*\|^2}.
\end{aligned}
\end{equation}

Combining this with Assumption~\ref{as5}, we obtain
\begin{equation}\label{eq:upperbound:inv}
\|A_i^{-1}\|=\|(\GI^\top \bm{D}\GI)^{-1}\|
\leq \frac{2\|\bbeta^*\|^2}{c_1 \mu n},
\quad \wp \geq 1-q_n-2n^{-\eps}.
\end{equation}

Next, we bound $\|\bm{\Gamma}\|_F$, simplify notation as $\bm{\Gamma} :=\diag(\gamma)$. By Assumption~\ref{as5} and \eqref{eq:brbli-brb}, with probability at least $1-q_n$, we have
\begin{equation}
 \|\gamma\|_2\leq 
O\!\left(\|\brbli-\brb\| \frac{n^{1/2}}{\|\bbeta^*\|} \poly(\log n)\right)
= O\!\left(\frac{\|\bbeta^*\|}{n^{1/2}} \poly(\log n)\right).
\label{eq:gamma_term}
\end{equation}

% Combining \eqref{eq:good_bound_GI}, and \eqref{eq:gamma_term}, we conclude that with high probability,
% \[
%  \|TT^\top\|^2_F = O\!\left(\frac{\|\bbeta^*\|^{12}}{n^{6}} \poly(\log n)\right).
% \]
% Also, we know
% $$\|T\| \leq \|A_i^{-1}\|^2\|{\Delta_i}\| =O_p(\frac{\|\bbeta^*\|^2}{n}\|\gamma\|) = O_p(\frac{\|\bbeta^*\|^3}{n^{3/2}}),$$
% and 
% $$\tr(TT^\top \bSigma_g) =O_p(\frac{\tr(TT^\top)}{\|\bbeta^*\|^2})=O_p(\frac{\|\bbeta^*\|^4}{n^3}),$$
And hence combining these terms, we get
\[
\|\bm{S}\bg_{i,*}\|
\leq \frac{\|\bbeta^*\|^{2}}{n^{3/2-\eps}}.
\]

\paragraph{Part 2: $\|((A_i+{\Delta_i})^{-1}-A_i^{-1}) (\bg_{i}-\bg_{i,*})\|$.}
Denote $T_i = A_i^{-1}{\Delta_i} A_i^{-1}$
, then using Lemma~\ref{lem:neumann}:
\begin{eqnarray*}
     \|((A_i+{\Delta_i})^{-1}-A_i^{-1}) (\bg_{i}-\bg_{i,*})\|
     &\leq& M_0\|T_iT_i^\top\|\,\| \bg_i -  \bg_{i,*} \|\\
     &\preceq&  \frac{\|\bbeta^*\|^6}{n^3} \cdot n^{\eps -1/2}\poly(\log n) \\
     &\asymp& \frac{\|\bbeta^*\|^2}{n^{3/2-\eps}} \cdot \frac{\|\bbeta^*\|^4}{n^2}
     \preceq \|((A_i+{\Delta_i})^{-1}-A_i^{-1}) \bg_{i,*}\|,
     \end{eqnarray*}
where the seond inequality uses the fact that $\|\bm\Gamma\| \leq \|\bm\Gamma\|_F$ and \eqref{eq:gamma_term}.
Therefore,
\begin{eqnarray}
    \| \bm{\breve{\Delta}}_{/i} -\bm{\hat{\Delta}}_{/i}\|
\preceq  \frac{\|\bbeta^*\|^2}{n^{3/2-\eps}}.
\label{eq:brbdelta-hatdelta}
\end{eqnarray}
Returning to \eqref{eq:end-alo-ii}, we conclude that with high probability,
\[
\left|\bg_{i}^\top \bm{\breve{\Delta}}_{/i} - \bg_{i}^\top \bm{\hat{\Delta}}_{/i}\right|
= O\!\left(\frac{\|\bbeta^*\|^2}{n^{3/2-\eps}}\right).
\]
This completes the proof.

\subsubsection{Proof of Theorem~\ref{thm:g_base_ALO}}
\label{sec:thm2-proof}

First, define $\bm{\breve{\Delta}}_{/i} := \brbli - \brb$ and the Jacobian compactly as
\begin{equation*}
\bm{J}_{/i}(\bm{\theta}) = \GI^\top \diag(\bm{\ldd}_{/i}(\bm{\theta})) \GI.
\end{equation*}
Also define the estimated and exact leave-$i$-out perturbation vectors as
\begin{equation*}
\begin{aligned}
    \bm{\hat{\Delta}}_{/i} & \triangleq \ld_i(\brb) \bigl[ \bm{J}_{/i} (\brbli - \bm{\breve{\Delta}}_{/i}) \bigr]^{-1} \bg_{i}. 
\end{aligned}
\end{equation*}

Applying the Woodbury matrix identity yields
\begin{align*}
\bg_{\mathrm{new}}^\top \bm{\hat{\Delta}}_{/i} 
&= \ld_i(\brb)\, \bg_{\mathrm{new}}^\top
\Bigl(\GI^\top \diag[\bm{\ldd}_{/i}(\brbli - \bm{\breve{\Delta}}_{/i})] \GI\Bigr)^{-1} \bg_{i} \\
&= \frac{
            \ld_i(\brb)\,
            \bg_{\mathrm{new}}^\top 
            \bigl(\bG^\top \diag[\bm\ldd(\brb)] \bG\bigr)^{-1}
            \bg_{i}
        }{
            1 - 
            \ldd_i(\brb)\,
            \bg_{i}^\top 
            \bigl(\bG^\top \diag[\bm\ldd(\brb)] \bG\bigr)^{-1}
            \bg_{i}
          }.
\end{align*}

From \eqref{eq:brbli-brb}, we know that there exists an absolute constant $s_\Delta^\ast$ such that
\begin{equation}
\left\|\bm{\breve{\Delta}}_{/i}\right\|_2 
\leq  s_\Delta^\ast\frac{\|\bbeta^*\|^2 \poly(\log n)}{n}, 
\quad \wp \geq 1-q_n. 
\label{eq:delta_i_star}
\end{equation}

By the multivariate mean-value theorem, there exists $t \in [0,1]$ such that 
\begin{equation*}
\bm{\breve{\Delta}}_{/i} 
= \ld_i(\brb) 
\left( \bm{J}_{/i}\bigl(\brbli - (1-t)\bm{\breve{\Delta}}_{/i}\bigr) \right)^{-1}\bg_{i}.
\end{equation*}

Introduce
\begin{equation*}
\bm{\gamma}_{\bm{\delta}/i}(\bm{\theta}) 
\triangleq \bm{\ldd}_{/i}(\bm{\theta} + \bm{\delta}) - \bm{\ldd}_{/i}(\bm{\theta}),
\end{equation*}
so that
\begin{equation*}
\bm{J}_{/i}(\bm{\theta} + \bm{\delta}) 
= \bm{J}_{/i}(\bm{\theta}) 
+ \GI^\top \diag\!\left[\bm{\gamma}_{\bm{\delta}/i}(\bm{\theta})\right]\GI. 
\end{equation*}
Note that $\bm{J}_{/i}(\bm{\theta} + \bm{\delta})$ remains positive definite for all $t \in [0,1]$, with $\bm{\theta} = \brbli$ and $\bm{\delta} = -(1-t)\bm{\breve{\Delta}}_{/i}$.

Using the notation above, we now bound the error. With high probability, we obtain
\begin{eqnarray}
    &&\left|\bg_{\mathrm{new}}^\top \bm{\breve{\Delta}}_{/i} - \bg_{\mathrm{new}}^\top \bm{\hat{\Delta}}_{/i}\right| \nonumber\\
&\leq&   \left|\bg_{\mathrm{new}*}^\top (\bm{\breve{\Delta}}_{/i} -\bm{\hat{\Delta}}_{/i})\right| +\|\bg_{\new,*}-\bg_i\|\|\bm{\breve{\Delta}}_{/i} -\bm{\hat{\Delta}}_{/i}\| \nonumber\\
&\leq& (\|\bbeta^*\|^{-1} + n^{\eps-0.5})\| \bm{\breve{\Delta}}_{/i} -\bm{\hat{\Delta}}_{/i}\| \poly(\log n),
\label{eq:conclu}
\end{eqnarray}
where we use that $\bg_{\new,*}$ is independent of $\bm{\breve{\Delta}}_{/i} -\bm{\hat{\Delta}}_{/i}$, along with Assumption~\ref{ass:nsg_grad} and Lemma~\ref{lem:bound_beta}.

Next, we consider $\| \bm{\breve{\Delta}}_{/i} -\bm{\hat{\Delta}}_{/i}\|$. As we have get in \eqref{eq:brbdelta-hatdelta},
\[
\| \bm{\breve{\Delta}}_{/i} -\bm{\hat{\Delta}}_{/i}\|
\preceq  \frac{\|\bbeta^*\|^2}{n^{3/2-\eps}}.
\]

Returning to \eqref{eq:conclu}, we conclude that with high probability,
\[
\left|\bg_{\mathrm{new}}^\top \bm{\breve{\Delta}}_{/i} - \bg_{\mathrm{new}}^\top \bm{\hat{\Delta}}_{/i}\right|
= O\!\left(\frac{\|\bbeta^*\|}{n^{3/2-\eps}}\right).
\]
This completes the proof.

\subsubsection{Proof of~\Cref{theo:magnitude_projection}}\label{proof:projection}

By~\Cref{eq::inf-alo}, we have
\[
\mathcal{I}^{\mathrm{ALO}}(\bz_i,\bz_{\new}) = \frac{
        \ld_i(\brb)\, \bg_{\mathrm{new}}^\top
        (\bG^\top \diag[\bm\ldd(\brb)]\bG)^{-1} \bg_{i}
    }{
        1 - \ldd_i(\brb)\, \bg_{i}^\top
        (\bG^\top \diag[\bm\ldd(\brb)]\bG)^{-1} \bg_{i}
    }.
\]
Using the Woodbury identity, we obtain
\[
\mathcal{I}^{\mathrm{ALO}}(\bz_i,\bz_{\new}) =
\ld_i(\brb)\, \bg_{\mathrm{new}}^\top
(\GI^\top \diag[\bm\ldd_{/i}(\brb)]\GI)^{-1} \bg_{i}.
\]
Similarly, from~\Cref{trak:alo}, we have
\[
\mathcal{I}^{\mathrm{TRAK}}(\bz_i,\bz_{\new}) =
\ld_i(\brb)\, \bphi_{\mathrm{new}}^\top
(\Phi_{/i}^\top \diag[\bm\ldd_{/i}(\brb)]\Phi_{/i})^{-1} \bphi_{i}.
\]
We ignore the $k$ in $\mathcal{I}^{\mathrm{TRAK}}(\bz_i,\bz_{\new};k)$ for simplicity.
Recall that for the projection matrix \(\bS \in \RR^{k \times d}\), TRAK defines the feature map \(\bphi(\bx) = \bS^\top \bg(\bx)\), where \(\bg(\bx) = \nabla f(\bx, \bl)\). Accordingly, we define \(\Phi = (\bphi_1,\ldots,\bphi_n)^\top \in \RR^{n\times k}\) with \(\bphi_i = \bS^\top \bg_i\) and \(\bphi_{\new} = \bS^\top \bg_{\new}\). The entries \(\bS_{ij}\) for \(i \in [k], j \in [d]\) are i.i.d. \(\mathcal{N}(0,1)\).

By~\Cref{eq:good_bound_GI} and~\Cref{as5}, with probability at least \(1-q_n-2n^{-\eps}\), %for sufficiently large \(n\) and absolute constants \(C_{\max}\) and \(c_{\min}\),
\begin{align}
\frac{n}{\|\bbeta^*\|^2} \preceq \sigma_{\min}\bigl(\GI^\top \diag[\bm\ldd_{/i}(\brb)]\GI\bigr)
\preceq \sigma_{\max}\bigl(\GI^\top \diag[\bm\ldd_{/i}(\brb)]\GI\bigr)
\preceq \frac{n{\poly(\log n)}}{\|\bbeta^*\|^2}.
\label{eq::condno}
\end{align}
We denote \(\kappa\) as the condition number of $\GI^\top \diag[\bm\ldd_{/i}(\brb)]\GI$ and  \(\kappa = O_p({\poly(\log n)})\) by Assumption~\ref{as5}.
Then we split $\mathcal{I}^{\mathrm{ALO}}(\bz_i,\bz_{\new})$ and $\mathcal{I}^{\mathrm{TRAK}}(\bz_i,\bz_{\new})$ as
\[
\mathcal{I}^{\mathrm{ALO}}(\bz_i,\bz_{\new}) =
\ld_i(\brb)\, \bg_{\mathrm{new},*}^\top
(\GI^\top \diag[\bm\ldd_{/i}(\brb)]\GI)^{-1} \bg_{i}+\ld_i(\brb)\, (\bg_{\mathrm{new}}-\bg_{\mathrm{new},*})^\top
(\GI^\top \diag[\bm\ldd_{/i}(\brb)]\GI)^{-1} \bg_{i}.
\]
\[
\mathcal{I}^{\mathrm{TRAK}}(\bz_i,\bz_{\new}) =
\ld_i(\brb)\, \bphi_{\mathrm{new},*}^\top
(\Phi_{/i}^\top \diag[\bm\ldd_{/i}(\brb)]\Phi_{/i})^{-1} \bphi_{i}+\ld_i(\brb)\, (\bphi_{\mathrm{new}}-\bphi_{\mathrm{new},*})^\top
(\Phi_{/i}^\top \diag[\bm\ldd_{/i}(\brb)]\Phi_{/i})^{-1} \bphi_{i}.
\]
We denote the first terms in $\mathcal{I}^{\mathrm{ALO}}(\bz_i,\bz_{\new})$ and $\mathcal{I}^{\mathrm{TRAK}}(\bz_i,\bz_{\new})$ as $\mathcal{I}_*^{\mathrm{ALO}}(\bz_i,\bz_{\new})$ and $\mathcal{I}_*^{\mathrm{TRAK}}(\bz_i,\bz_{\new})$, and the last terms as $\mathcal{D}_*^{\mathrm{ALO}}(\bz_i,\bz_{\new})$ and $\mathcal{D}_*^{\mathrm{TRAK}}(\bz_i,\bz_{\new})$.

First, we consider the case when $\bz_{i} = \bz_{\new}$.
By our \Cref{thm:g_base_ALO} and \Cref{lem:bound_beta}, we get that
$$\mathcal{I}_*^{\mathrm{ALO}}(\bz_i,\bz_i) \geq O_p\Bigl(\frac{\|\bbeta^*\|^2}{n}\Bigr), \quad \mathcal{D}_*^{\mathrm{ALO}}(\bz_i,\bz_i) \leq O_p\Bigl(\frac{\|\bbeta^*\|^3}{n^{1.5-\eps}}\Bigr).$$
Then, by the assumption $\|\bbeta^*\| \leq n^{0.5-\eps}$, we obtain that with high probability, for sufficiently large $n$ and $p$,
$$\frac{\mathcal{D}_*^{\mathrm{ALO}}(\bz_i,\bz_i)}{\mathcal{I}_*^{\mathrm{ALO}}(\bz_i,\bz_i)} \leq \frac{1}{2}.$$
Thus,
\begin{eqnarray*}
     &&\frac{\mathcal{I}^{\mathrm{TRAK}}(\bz_i,\bz_i)}{\mathcal{I}^{\mathrm{ALO}}(\bz_i,\bz_i)} = \frac{\mathcal{I}_*^{\mathrm{TRAK}}(\bz_i,\bz_i)+\mathcal{D}_*^{\mathrm{TRAK}}(\bz_i,\bz_i)}{\mathcal{I}_*^{\mathrm{ALO}}(\bz_i,\bz_i)+\mathcal{D}_*^{\mathrm{ALO}}(\bz_i,\bz_i)} \\
    &\geq& \frac{\mathcal{I}_*^{\mathrm{TRAK}}(\bz_i,\bz_i)}{\mathcal{I}_*^{\mathrm{ALO}}(\bz_i,\bz_i)+\mathcal{D}_*^{\mathrm{ALO}}(\bz_i,\bz_i)} \geq \frac{\mathcal{I}_*^{\mathrm{TRAK}}(\bz_i,\bz_i)}{\mathcal{I}_*^{\mathrm{ALO}}(\bz_i,\bz_i)} \frac{1}{1+\frac{\mathcal{D}_*^{\mathrm{ALO}}(\bz_i,\bz_i)}{\mathcal{I}_*^{\mathrm{ALO}}(\bz_i,\bz_i)}} .
\end{eqnarray*}

We now focus on the ratio $\frac{\mathcal{I}_*^{\mathrm{TRAK}}(\bz_i,\bz_{i}) }{\mathcal{I}_*^{\mathrm{ALO}}(\bz_i,\bz_{i}) }$.
From~\Cref{eq::norm-pres}, we get that with probability greater than \(1 - C\exp(-c\epsilon^2k)\),
\begin{equation}
  \frac{1}{\kappa} (1-\eps)\frac{k}{d}\leq \frac{\mathcal{I}_*^{\mathrm{TRAK}}(\bz_i,\bz_{i}) }{\mathcal{I}_*^{\mathrm{ALO}}(\bz_i,\bz_{i}) }\leq  \min\Bigl(\kappa (1+\eps)\frac{k}{d},1\Bigr).
  \label{eq:control_ab}
\end{equation}
Hence, there exists a constant $c^{\rm TRAK}>0$ such that with high probability,
$$\frac{\mathcal{I}^{\mathrm{TRAK}}(\bz_i,\bz_i)}{\mathcal{I}^{\mathrm{ALO}}(\bz_i,\bz_i)}  \geq c^{\rm TRAK} \frac{k}{d}.$$

Next, we consider the case when $\bz_i$ and $\bz_{\new}$ are independent. Define
\[
\bu_i \coloneqq (\GI^\top \diag[\bm\ldd_{/i}(\brb)]\GI)^{-1}\bg_i
\quad\text{and}\quad
\bu_{i*} \coloneqq (\GI^\top \diag[\bm\ldd_{/i}(\brb)]\GI)^{-1}\bg_{i,*}.
\]
\[
\bv_i \coloneqq S^\top (S\GI^\top \diag[\bm\ldd_{/i}(\brb)]\GI S^\top)^{-1}S\bg_i
\quad\text{and}\quad
\bv_{i*} \coloneqq  S^\top(S\GI^\top \diag[\bm\ldd_{/i}(\brb)]\GI S^\top)^{-1} S\bg_{i,*}.
\]
Note that our goal is to calculate an upper bound for $\bg_{\rm new}^{\top}\bv_i$. Using Assumption \ref{ass:nsg_grad} and the independence of $\bg_{\new,*}$ and $\bv_{i}$ we have $\bg_{\new,*}^\top \bv_{i}$ is \(\frac{\|\bv_{i*}\|\text{poly}(\log(n))}{\|\bbeta^*\|}\)-subGaussian. Thus,
\begin{equation}
    |\bg_{\new,*}^\top \bv_{i}| = O_{\rm P}\Bigl(\frac{\|\bv_{i*}\|\text{poly}(\log(n))}{\|\bbeta^*\|}\Bigr).
    \label{eq:1_corr}
\end{equation}
Now, by~\Cref{lem:bound_beta}, for any \(\epsilon > 0\),
\(
\bg_i = \bg_{i,*} + (\bg_i - \bg_{i,*}) = \bg_{i,*} + O_{\text{P}}\Bigl(\sqrt{\frac{\log n}{n^{1-2\epsilon}}}\Bigr).
\) Consequently,
\begin{equation}
    |(\bg_{\new}-\bg_{\new,*})^\top \bv_{i}| =O_{\rm P}\Bigl({\frac{\poly(\log n)}{n^{0.5-\eps}}} \|\bv_{i*}\|\Bigr)=O_{\rm P}\Bigl(\frac{\|\bbeta^*\|}{n^{0.5-\eps}} \frac{\|\bv_{i*}\|\text{poly}(\log(n))}{\|\bbeta^*\|}\Bigr).
    \label{eq:2_corr}
\end{equation}
Hence we have
\begin{eqnarray}\label{eq:imp:up:linear}
|\bg_{\rm new}^{\top} v_i| = |\bg_{\rm new,*}^{\top} v_i|+ O_{\rm P}\Bigl(\frac{\|\bbeta^*\|}{n^{0.5-\eps}} \frac{\|\bv_{i*}\|\text{poly}(\log(n))}{\|\bbeta^*\|}\Bigr)= O_{\rm P}\Bigl(\frac{\|\bv_{i*}\|\text{poly}(\log(n))}{\|\bbeta^*\|}\Bigr),
\end{eqnarray}
where to get the last equality we have used \eqref{eq:1_corr}. To bound $\|\bv_{i*}\|^2$ note that

\begin{eqnarray*}
  &&\sigma_{\min}\bigl((S H_{i} S^\top)^{-0.5}SS^\top(S H_{i} S^\top)^{-0.5}\bigr)\bg_i^\top S^\top (S H_{i}S^\top)^{-1} S \bg_i \leq\|\bv_{i*}\|^2 \\
    &\leq& \sigma_{\max}\bigl((S H_{i} S^\top)^{-0.5}SS^\top(S H_{i} S^\top)^{-0.5}\bigr)\bg_i^\top S^\top (S H_{i}S^\top)^{-1} S \bg_i
\end{eqnarray*}
where we denote $H_{i}=\GI^\top \diag[\bm\ldd_{/i}(\brb)]\GI$. Observe that $ \sigma_{\max}((S H_{i} S^\top)^{-0.5}SS^\top(S H_{i} S^\top)^{-0.5}) \leq \sigma_{\min}(H_i)^{-1}$ and $\sigma_{\min}((S H_{i} S^\top)^{-0.5}SS^\top(S H_{i} S^\top)^{-0.5}) \geq \sigma_{\max}(H_i)^{-1}$. Hence,
\begin{eqnarray*}
 \sigma_{\max}\bigl(H^{-1}\bigr)\bg_i^\top S^\top (S H_{i}S^\top)^{-1} S \bg_i \leq\|\bv_{i*}\|^2 \leq \sigma_{\min}\bigl(H^{-1}\bigr)\bg_i^\top S^\top (S H_{i}S^\top)^{-1} S \bg_i
\end{eqnarray*}
Further we can show that
\begin{eqnarray}
  &&\sigma_{\min}(H_{i}^{-1})\bg_i^\top H_{i}^{-1}\bg_i \leq\|\bu_{i*}\|^2 \leq  \sigma_{\max}(H_{i}^{-1})\bg_i^\top H_{i}^{-1}\bg_i
\end{eqnarray}
 Combining these two equations with \Cref{eq::norm-pres}, we finally get with high probability,
$$\frac{1}{\kappa^2}(1-\epsilon)\frac{k}{d} \leq \frac{\|\bv_{i*}\|^2}{\|\bu_{i*}\|^2} \leq \min\Bigl(\kappa(1+\epsilon)\frac{k}{d},1\Bigr) \kappa. $$
Combining this bound with \eqref{eq:imp:up:linear} leads to
\[|\bg_{\new}^\top \bv_{i}| = O_{\rm P}\Bigl(\frac{\|\bu_{i*}\|\text{poly}(\log(n))}{\|\bbeta^*\| \sqrt{k}} \sqrt{\frac{k}{d} }\Bigr).\]
Since
$$\|\bu_{i*}\| = O_{\rm P}\Bigl(\frac{\|\bbeta^*\|^2}{n^{1-\eps}}\Bigr),$$
and with \(\kappa = O_{\rm P}({\poly(\log n)})\), we obtain
\[|\bg_{\new}^\top \bv_{i}| = O_{\rm P}\Bigl( {\ \frac{\|\bbeta^*\| \sqrt{k}}{n^{1-\eps}\sqrt{d}}\text{poly}(\log(n))}\Bigr).\]

\subsection{Technical Lemmas}
\label{sec:lemma}
\begin{lemma} \label{lem:neumann}
Assume that $\bm{G}^\top (\bm{D + \Gamma}) \bm{G}$ and $\bm{G}^\top \bm{D G}$ are positive definite, and both $\bm{D}$ and $\bm{\Gamma}$ are diagonal matrices. Let $\bg$ be a vector independent of $\bG$, $\bm{\Gamma}$, and $\bm{D}$. If $\|(\bG^\top\bm{D}\bG)^{-1}(\bG^\top\bm{\Gamma}\bG)\| < 1$, then there exists an absolute constant $M_0$ such that
\begin{equation*}
\left\| \left( (\bm{G}^\top (\bm{D}+\bm{\Gamma}) \bm{G})^{-1} - (\bm{G}^\top\bm{D G})^{-1} \right)\bg \right\| 
\leq M_0 \left\| (\bm{G}^\top\bm{D G})^{-1} \bG^\top \bm\Gamma \bG (\bm{G}^\top\bm{D G})^{-1} \bg \right\|.
\end{equation*}
\end{lemma}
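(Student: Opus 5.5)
We prove Lemma~\ref{lem:neumann} with a Neumann series expansion. Write $A=\bG^\top\bm D\bG$ and $\Delta=\bG^\top\bm\Gamma\bG$, so the left-hand side involves $(A+\Delta)^{-1}-A^{-1}$. Set $K=A^{-1}\Delta$; by hypothesis $\|K\|<1$. Then
\[
(A+\Delta)^{-1}=(I+K)^{-1}A^{-1}=\sum_{m\ge 0}(-K)^m A^{-1},
\]
and the series converges in operator norm. Subtracting the $m=0$ term gives
\[
(A+\Delta)^{-1}-A^{-1}=-(I+K)^{-1}K A^{-1} = -(I+K)^{-1}A^{-1}\Delta A^{-1}.
\]
This is an exact identity, the resolvent identity $(A+\Delta)^{-1}-A^{-1}=-(A+\Delta)^{-1}\Delta A^{-1}$ rewritten using $(A+\Delta)^{-1}=(I+K)^{-1}A^{-1}$.

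Applying the identity to $\bg$ and taking norms,
\[
\bigl\|\bigl((A+\Delta)^{-1}-A^{-1}\bigr)\bg\bigr\|\le \|(I+K)^{-1}\|\,\|A^{-1}\Delta A^{-1}\bg\|\le \frac{1}{1-\|K\|}\,\|A^{-1}\Delta A^{-1}\bg\|.
\]
We therefore take $M_0=(1-\|K\|)^{-1}$. The independence of $\bg$ from $\bG,\bm D,\bm\Gamma$ plays no role in this deterministic inequality. It matters only in the applications, where it lets $\|A^{-1}\Delta A^{-1}\bg\|$ be bounded by sub-Gaussian arguments.

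The main obstacle is that $M_0$ is claimed to be an absolute constant, whereas $(1-\|K\|)^{-1}$ is finite but could blow up as $\|K\|\to1$. I would handle this in one of two ways. The first is to strengthen the hypothesis to $\|K\|\le 1-\delta$ for a fixed $\delta$ and take $M_0=1/\delta$. The second is to verify a uniform margin in the regime where the lemma is used: there, \eqref{eq:upperbound:inv} gives $\|A^{-1}\|\lesssim\|\bbeta^*\|^2/n$, \eqref{eq:good_bound_GI} gives $\|\bG\|^2\lesssim n/\|\bbeta^*\|^2$, and \eqref{eq:gamma_term} gives $\|\bm\Gamma\|\le\|\bm\Gamma\|_F=O(\|\bbeta^*\|n^{-1/2}\poly(\log n))$. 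Together these yield $\|K\|=O(\|\bbeta^*\|n^{-1/2}\poly(\log n))=o(1)$ under $\|\bbeta^*\|^2\le n^{1-\eps}$. Hence, with probability tending to one, $\|K\|\le 1/2$ and $M_0=2$ works.
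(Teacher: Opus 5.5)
Your proof is correct and is essentially the paper's argument: the paper expands $(A+\Delta)^{-1}$ as a Neumann series and bounds each term $(A^{-1}\Delta)^{m-1}A^{-1}\Delta A^{-1}\bg$ by $\|A^{-1}\Delta\|^{m-1}\|A^{-1}\Delta A^{-1}\bg\|$, arriving at the same $M_0=(1-\|A^{-1}\Delta\|)^{-1}$ that your factorization $-(I+K)^{-1}A^{-1}\Delta A^{-1}$ gives in one step. Your caveat that this $M_0$ is not truly absolute is well taken, since the paper does not address it, and checking that $\|K\|=O(\|\bbeta^*\|n^{-1/2}\poly(\log n))=o(1)$ in the applications is the right way to justify taking $M_0=2$ there.
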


\begin{proof}[Proof of Lemma~\ref{lem:neumann}]
Let $A = \bm{G}^\top \bm{D} \bm{G}$ and $\Delta = \bm{G}^\top \bm{\Gamma} \bm{G}$. By the Neumann series expansion, since $\|A^{-1}\Delta\| < 1$,
\[
(A + \Delta)^{-1} = A^{-1} - A^{-1}\Delta A^{-1} + A^{-1}\Delta A^{-1}\Delta A^{-1} - \cdots.
\]
Thus,
\[
(A + \Delta)^{-1} - A^{-1} = -A^{-1}\Delta A^{-1} + A^{-1}\Delta A^{-1}\Delta A^{-1} - \cdots.
\]
Multiplying by $\bg$ and taking norms,
\begin{align*}
\|((A + \Delta)^{-1} - A^{-1})\bg\|
&\leq \|A^{-1}\Delta A^{-1}\bg\| + \|A^{-1}\Delta A^{-1}\Delta A^{-1}\bg\| + \cdots \\
&\leq \|A^{-1}\Delta A^{-1}\bg\| \left(1 + \|A^{-1}\Delta\| + \|A^{-1}\Delta\|^2 + \cdots \right) \\
&= \|A^{-1}\Delta A^{-1}\bg\| \cdot \frac{1}{1 - \|A^{-1}\Delta\|}.
\end{align*}
Since $\|A^{-1}\Delta\| < 1$ by assumption, the geometric series converges. Taking $M_0 = \frac{1}{1 - \|A^{-1}\Delta\|}$ completes the proof.
\end{proof}
\begin{lemma}
\label{lem:bound_beta}
Suppose Assumptions~\ref{ass:nsg_grad} hold. Then for all $\eps>0$, with probability at least $1-q_n-n^{-\eps}$, there exists a constant $C_{\bbeta}, C_\bg > 0$, independent of $n$, $p$ and $d$, and
\[
\|\bl_n-\bm\beta^*\| \leq C_{\bbeta} \|\bbeta^*\|^2\sqrt{\frac{\log (n)}{n^{1-2\eps}}}\ ,
\]
and 
\[
\|\bg_i - \bg_{i,*}\| \leq C_\bg \sqrt{\frac{\log (n)}{ n^{1-2\eps}}}\ ,
\]
\end{lemma}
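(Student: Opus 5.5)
The plan is to prove the two bounds in sequence: first the estimation error of $\bl$ (written $\bl_n$ in the statement), then the gradient perturbation, which follows from the first by a Taylor expansion of $\nabla f$.

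\textbf{Estimation error.} I start from the first-order optimality condition $\sum_{j=1}^n \nabla \ell(y_j, f(\bx_j,\bl)) = 0$. I expand it around $\bbeta^*$ with the integral form of the mean value theorem, as in \eqref{eq:mvt:int:bli:bl}:
\[
0 = \sum_{j=1}^n \ld(y_j, f(\bx_j,\bbeta^*))\,\bg_{j,*} + \Bigl(\int_0^1 \sum_{j=1}^n \nabla^2 \ell\bigl(y_j, f(\bx_j, t\bl + (1-t)\bbeta^*)\bigr)\,dt\Bigr)(\bl - \bbeta^*).
\]
The intermediate points are of the form $\bbeta_2$ with $t_2=0$. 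Assumption~\ref{as3} therefore gives a lower bound $c_1 n\|\bbeta^*\|^{-2}$ on the smallest eigenvalue of the integrated Hessian, on an event of probability at least $1-q_n$. Inverting yields
\[
\|\bl - \bbeta^*\| \le \frac{\|\bbeta^*\|^2}{c_1 n}\,\Bigl\|\sum_{j=1}^n \bm{s}_j\Bigr\|, \qquad \bm{s}_j := \ld(y_j, f(\bx_j,\bbeta^*))\,\bg_{j,*}.
\]

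\textbf{Controlling the score sum.} The $\bm{s}_j$ are i.i.d. Since $\ell$ is the negative log-likelihood and the data follow \eqref{def:q}, $\mathbb{E}[\ld(y_j, f(\bx_j,\bbeta^*)) \mid \bx_j] = 0$, so each $\bm{s}_j$ has mean zero. Rather than a vector Bernstein bound, which would bring in a $\log d$ factor I cannot control, I use the second moment:
\[
\mathbb{E}\Bigl\|\sum_j \bm{s}_j\Bigr\|^2 = n\,\mathbb{E}\bigl[\ld_j^2\,\|\bg_{j,*}\|^2\bigr].
\]
To show this is $O(n)$, I condition on $\bx_j$. The conditional variance of the score is the Fisher information $\mathbb{E}[\ldd \mid \bx_j]$, which is $O(\mathrm{poly}(\log n))$ in the relevant range by Assumption~\ref{as5}. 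The remaining factor $\mathbb{E}\|\bg_{j,*}\|^2$ is $O(1)$ because $\bg_{j,*}$ is $\mathrm{nSG}(\sigma)$ by Assumption~\ref{ass:nsg_grad}. Markov's inequality applied to $\|\sum_j \bm{s}_j\|^2$ then gives $\|\sum_j \bm{s}_j\| \le \sqrt{n}\, n^{\eps}$ with probability at least $1 - O(n^{-2\eps})$, after absorbing polylogarithmic factors into $n^\eps$. Combining this with the eigenvalue bound gives $\|\bl - \bbeta^*\| \le C_{\bbeta}\|\bbeta^*\|^2\sqrt{\log n / n^{1-2\eps}}$. The $\sqrt{\log n}$ factor is slack, and the stated failure probability is obtained after relabelling $\eps$.

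\textbf{Gradient perturbation.} I write
\[
\bg_i - \bg_{i,*} = \Bigl(\int_0^1 \nabla^2 f\bigl(\bx_i, t\bl + (1-t)\bbeta^*\bigr)\,dt\Bigr)(\bl - \bbeta^*),
\]
and bound the operator norm of the Hessian by $C_2\|\bbeta^*\|^{-2}$, as in Assumption~\ref{as4}. The factor $\|\bbeta^*\|^{-2}$ cancels the $\|\bbeta^*\|^2$ in the first bound, giving $\|\bg_i - \bg_{i,*}\| \le C_{\bg}\sqrt{\log n / n^{1-2\eps}}$. For a bound uniform in $i$, the Hessian bound must hold simultaneously over all $\bx_i$, which is the form Assumption~\ref{as4} takes for $\bx_{\new}$.

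I expect the main obstacle to be justifying the two regularity inputs at exactly the points needed. Assumption~\ref{as4} is stated only for $\bx_{\new}$ and for segments involving $\bli$, so I would need to extend it, or read it as covering all $\bx_j$ with $t_2=0$, in parallel with Assumption~\ref{as2}. The bounded conditional score variance at $\bbeta^*$ must also be extracted from the likelihood structure together with Assumption~\ref{as5}. If a uniform moment bound fails, the fallback is to truncate $\ld$ at $\mathrm{poly}(\log n)$ using the high-probability bounds in Assumption~\ref{as5} and apply the Markov argument to the truncated sum.
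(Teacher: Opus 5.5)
Your proposal is correct and follows essentially the same route as the paper. The paper expands the first-order condition around $\bbeta^*$ and uses the eigenvalue lower bound from Assumption~\ref{as3}, then bounds the mean-zero i.i.d. score sum $\sum_j \ld(y_j,f(\bx_j,\bbeta^*))\,\bg_{j,*}$ with a second-moment (Markov/Chebyshev) argument, and finally transfers the parameter error to $\|\bg_i-\bg_{i,*}\|$ via Assumption~\ref{as4}. Your version is slightly more careful in two places: you control $\mathbb{E}[\ld_j^2\mid\bx_j]$ through the Fisher identity, where the paper plugs a high-probability bound on $|\ld_{i*}|$ directly into an expectation, and you use the integral form of the mean value theorem. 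The gaps you flag, including applying Assumption~\ref{as4} at $\bx_i$ rather than $\bx_{\new}$, are exactly the points the paper passes over without comment.
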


\begin{proof}[Proof of Lemma~\ref{lem:bound_beta}]
Denote $\ld_{i*} = \ld\big(y_i, f(\bx_i, \bbeta^*)\big)$. By Assumption~\ref{as5}, we know that $\max_{i\in [n] }|\ld_{i*}| \leq \tilde C_1 \log n$. Also, using Lemma 2(1) in \citet{jin2019short}, we know that
$\mathbb E \|\bg_{i,*}\|^2 \leq 2\sigma^2$, then using $\mathbb E [\ld_{i*} \bg_{i,*}] = 0$ and $\{\ld_{i*} \bg_{i,*}\}_{i\in [n]}$ are independent, we get that
$$\mathbb E\|\frac{1}{n} \sum_{i=1}^n \ld_{i*} \bg_{i,*}\|^2 = \frac{1}{n^2}\mathbb E\sum_{i=1}^n \ld_{i*}^2\bg_{i,*}^\top \bg_{i,*}  \leq \frac{2\tilde C_1 \sigma^2\log n}{n}.$$
Then using moment inequality, we get that
$$\mathbb P(\|\frac{1}{n} \sum_{i=1}^n \ld_{i*} \bg_{i,*}\|^2 \geq t) \leq \frac{2 \tilde C_1 \sigma^2 \log n}{n t^2}.$$
Thus we can say that for all $\epsilon>0$, $t = \frac{\sqrt{\log n}}{n^{\frac{1}{2}-\eps}}$, when $n$ big enough, then with probability $\geq 1-n^{-\eps}$, there exists a constant $C_g$ such that
\begin{equation}
\label{eq:lg_star_bound}
    \|\frac{1}{n}\sum_{i=1}^n \ld_{i*} \bg_{i,*}\| \leq C_g \frac{{\log^{\frac{1}{2}} (n)}}{n^{\frac{1}{2}-\eps}}.
\end{equation}
Using Mean-Value Theorem, denote $\ld_{i} = \ld\big(y_i, f(\bx_i, \bl)\big)$, we know that there exists $\bar\bbeta = t\bbeta^*+(1-t)\bl$ where $t\in [0,1]$, such that
$$0=\frac{1}{n}\sum_{i=1}^n \ld_{i} \bg_{i} = \frac{1}{n}\sum_{i=1}^n \ld_{i*} \bg_{i,*} +\frac{1}{n}\sum_{i=1}^n \nabla^2\ell(y_i, f(x_i,\bar\bbeta))(\hat\bbeta - \bbeta^*),$$
then 
$$\|\hat\bbeta - \bbeta^*\|\leq \|(\frac{1}{n}\sum_{i=1}^n \nabla^2\ell(y_i, f(x_i,\bar\bbeta)))^{-1}\|\|\frac{1}{n}\sum_{i=1}^n \ld_{i*} \bg_{i,*}\|,$$
by Assumption~\ref{as3} , we know that$\wp \ge 1-q_n$,
$$\sigma_{\min}(\frac{1}{n}\sum_{i=1}^n \nabla^2\ell(y_i, f(x_i,\bar\bbeta)))\geq  c_1\|\bbeta^*\|^{-2},$$
combining \eqref{eq:lg_star_bound}, we know that
$$\|\hat\bbeta - \bbeta^*\|\leq {\frac{C_g}{c_1}} \frac{{\log^{\frac{1}{2}} (n) }\|\bbeta^*\|^2}{n^{\frac{1}{2}-\eps}}:=C_{\bbeta} \frac{{\log^{\frac{1}{2}} (n) }\|\bbeta^*\|^2}{n^{\frac{1}{2}-\eps}}$$
Finally, using Assumption~\ref{as4}, note that there also $\bar\bbeta' = t'\bbeta^*+(1-t')\bl$ where $t'\in [0,1]$, such that 
$$\|\bg_i-\bg_{i,*}\|\leq \|\nabla^2 f(\bx_i,\bar\bbeta)\|\|\bbeta^*-\bl\| \leq C_2 \|\bbeta^*\|^{-2}\|\bbeta^*-\bl\|\leq {C_2C_{\bbeta}} \frac{{\log^{\frac{1}{2}} (n) }}{n^{\frac{1}{2}-\eps}}:=C_\bg\frac{{\log^{\frac{1}{2}} (n) }}{n^{\frac{1}{2}-\eps}} .$$
\end{proof}

\begin{lemma} \label{lemma10}
Denote $\bm{X} = (\bm{x}_1,...,\bm{x}_n)^\top \in \RR^{n\times p}$, denote
\begin{eqnarray}
\omega_{\max} &\triangleq& \sigma_{\max}\left( \bm{X   X}^\top\right), \label{lemma10-1}   \\  
\nu_{\min} &\triangleq& \sigma_{\min}\left( \bm{J}  \right)\label{lemma10-2},
\end{eqnarray}
where $\bm{x}$ is independent of the symmetric matrix $\bm{J} \in \mathbb{R}^{p \times p}$ and $\bm{X} \in \mathbb{R}^{n \times p}$, and $\bm{x}$ a mean-zero sub-Gaussian random vectors with covariance $\bm{\Sigma}$, and $\rho_{\max}:=\sigma_{\max}(\bSigma)$, then 
\begin{eqnarray}
\Pr \left [  \left \|  \bm{X J}^{-1} \bm{x} \right \|_4^2  \ge  2(1+c) \left(\frac{\rho_{\max}}{  \nu_{\min}^2 } \omega_{\max}\right)  \sqrt{n} \log n \right]&<&\frac{2}{n^c}.\label{eq:lemma10}
\end{eqnarray}
\end{lemma}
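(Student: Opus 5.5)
The bound follows by writing $\bX\bm{J}^{-1}\bx$ coordinatewise, bounding each coordinate with a sub-Gaussian tail estimate conditionally on $(\bX,\bm{J})$, taking a union bound over the $n$ coordinates, and summing fourth powers. Set $\bm{a}_j \triangleq \bm{J}^{-1}\bx_j$, so that the $j$-th entry of $\bX\bm{J}^{-1}\bx$ is $\bm{a}_j^\top \bx$. Condition on $(\bX,\bm{J})$; since $\bx$ is independent of both, each $\bm{a}_j^\top\bx$ is a mean-zero sub-Gaussian scalar with variance proxy of order $\bm{a}_j^\top\bSigma\bm{a}_j \le \rho_{\max}\|\bm{a}_j\|^2$. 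We also have
\[
\|\bm{a}_j\|^2 \le \nu_{\min}^{-2}\|\bx_j\|^2 \le \nu_{\min}^{-2}\,\omega_{\max},
\]
because $\|\bx_j\|^2 = (\bX\bX^\top)_{jj} \le \sigma_{\max}(\bX\bX^\top)$. Every coordinate therefore has variance proxy at most $\sigma^2 \triangleq \rho_{\max}\omega_{\max}/\nu_{\min}^2$.

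Next, apply the sub-Gaussian tail bound $\Pr(|\bm{a}_j^\top\bx| \ge t \mid \bX,\bm{J}) \le 2\exp(-t^2/(2\sigma^2))$ with $t^2 = 2(1+c)\sigma^2\log n$. Each coordinate then exceeds $t$ with probability at most $2n^{-(1+c)}$. A union bound over the $n$ coordinates shows that, with conditional probability at least $1-2n^{-c}$,
\[
\max_j |\bm{a}_j^\top\bx|^2 \le 2(1+c)\sigma^2\log n.
\]
The bound is uniform in $(\bX,\bm{J})$, so integrating out the conditioning preserves it.

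On this event,
\[
\|\bX\bm{J}^{-1}\bx\|_4^2 = \Bigl(\sum_{j=1}^n (\bm{a}_j^\top\bx)^4\Bigr)^{1/2} \le \sqrt{n}\,\max_j (\bm{a}_j^\top\bx)^2 \le 2(1+c)\frac{\rho_{\max}}{\nu_{\min}^2}\,\omega_{\max}\sqrt{n}\log n,
\]
which is \eqref{eq:lemma10}.

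The only delicate point is the constant in the sub-Gaussian tail. The argument needs the normalization in which $\bm{a}^\top\bx$ has tail $2e^{-t^2/(2\bm{a}^\top\bSigma\bm{a})}$, that is, a sub-Gaussian parameter comparable to the covariance. If the design's sub-Gaussian norm is only equivalent to $\bSigma$ up to an absolute constant, that constant is absorbed into the factor $2(1+c)$, or into a redefinition of $c$. I would state this normalization explicitly, in line with Assumption~\ref{as1}.
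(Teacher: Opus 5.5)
Your proof is correct and follows essentially the same route as the paper: a sub-Gaussian tail bound on each coordinate of $\bX\bm{J}^{-1}\bx$ with variance proxy at most $\rho_{\max}\omega_{\max}/\nu_{\min}^2$, then a union bound over the $n$ coordinates, then $\|\bm{u}\|_4^2 \le \sqrt{n}\,\|\bm{u}\|_\infty^2$. Your version is slightly cleaner because it conditions on $(\bX,\bm{J})$ rather than asserting that $\bX\bm{J}^{-1}\bx$ is Gaussian; the paper's argument also relies implicitly on the tail normalization you flag, and on $\|\bm{J}^{-1}\| \le 1/\nu_{\min}$, which requires $\bm{J}$ to be positive definite.
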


\begin{proof}  [Proof of Lemma \ref{lemma10}]
First, we prove  
\begin{eqnarray}
\Pr\left [ \| \bm{x} \|_{\infty} >  \rho_{\max} \sqrt{2 (1+c)   \log p}   \right] &\leq& \frac{2}{p^c}
\end{eqnarray}
where $\rho_{\max} = \sigma_{\max} (\bSigma)$. The proof strategy below is taken from \cite{rad2018scalable}.
\begin{eqnarray}
\Pr\left [ \| \bm{x} \|_{\infty} > t   \right] \leq \sum_{i=1}^p\Pr\left [ | x_i | > t  \right]  
\leq  2 \sum_{i=1}^p e^{-\frac{ t^2}{2 \Sigma_{ii}}} \leq  2 p e^{-\frac{ t^2}{2 \max_{i=1,\ldots,p}\Sigma_{ii}}}
\leq  2 e^{\log p -\frac{  t^2}{2 \rho_{\max}   }}  , \nonumber
\end{eqnarray}
where $t= \rho_{\max} \sqrt{2 (1+c)   \log p}$ and $\max_{i=1,\ldots,p}\Sigma_{ii}  \leq \rho_{\max}$. 
Second, we prove  
\begin{eqnarray}
\Pr\left [ \| \bm{x} \|_4^2 >  2 (1+c)  \rho_{\max} \sqrt{p} \log p   \right] &\leq& \frac{2}{p^c}
\label{eq:sub_gau}
\end{eqnarray}
in the following way:
\begin{eqnarray}
\Pr \left [ (\sum_{i=1}^p {x_i}^4 )^{\frac{1}{4}} > t \right] &=&\Pr \left [  \sum_{i=1}^p {x_i}^4  > t^4 \right] \leq \Pr \left [  p \max_{i=1,\ldots,p} {x_i}^4  > t^4 \right]  \nonumber 
\\
&\leq& \Pr \left [  \left \|\bm{x}  \right \|_{\infty}  > \left ( \frac{t^4}{p} \right)^{1/4} \right]  \leq  2 e^{\log p -\frac{t^2}{2  \rho_{\max} \sqrt{p}}  },
\end{eqnarray}
where $t^2= 2 (1+c)  \rho_{\max} \sqrt{p} \log p $ yields the desired result. 

Let $\bm{u} \triangleq \bm{X J}^{-1} {\bm x}$, then $\bm{u}$ is zero mean Gaussian with covariance $ \bm{\Sigma_u} = \bm{X J}^{-1}\bm {\Sigma J}^{-1} \bm{X}^{\top}$ by independence, leading to
\begin{eqnarray}
\sigma_{\max} \left(  \bm{\Sigma_u}  \right) &= & \sigma_{\max} \left( \bm{X J}^{-1} \bm{\Sigma J}^{-1} \bm{X}^\top\right)  \leq \frac{\rho_{\max}}{  \nu_{\min}^2 }\omega_{\max}.
\end{eqnarray}
Therefore, using \eqref{eq:sub_gau}, we have
\begin{eqnarray}
\lefteqn{\Pr \left [  \left \|  \bm{ X J}^{-1}\bm{x} \right \|_4^2  >  2(1+c) \left(\frac{\rho_{\max}}{  \nu_{\min}^2 } \omega_{\max} \right)  \sqrt{n} \log n \right] } \nonumber \\
&\leq& \Pr \left [ \left \| \bm{X  J}^{-1} \bm{x} \right \|_4^2 >  2(1+c) \sigma_{\max} \left( \bm{ \Sigma_u}  \right)  \sqrt{n} \log n \right] \leq \frac{2}{n^c}. \nonumber
\end{eqnarray}
\end{proof}

\begin{lemma}
    \label{eq::norm-pres}
    Let \(S \in \mathbb{R}^{k \times d}\) be such that \(\forall i \in [k],j\in [d]\), \(S_{ij} \overset{i.i.d.}{\sim} N(0,1)\). Let \(A\) in \(\mathbb{R}^{d \times d}\) be a symmetric positive definite matrix with condition number \(\left ( \frac{\lambda_{\max}(A)}{\lambda_{\min}(A)}\right )\) upper bounded by \(\kappa > 0\). Here \(\lambda_{\max}(A)\) and \(\lambda_{\min}(A)\) are the extremal eigenvalues of \(A\). Let \(u\in \mathbb{R}^d\). Then with probability greater than \(1 - C\exp(-c\epsilon^2k)\), where \(C, c > 0\) are absolute constants,
    \begin{equation}
    \frac{1}{\kappa}(1-\epsilon)\frac{k}{d} \le \frac{u^TS^\top(SAS^\top)^{-1}Su}{u^TA^{-1}u} \le \min(\kappa(1+\epsilon)\frac{k}{d},1)
    \end{equation}
\end{lemma}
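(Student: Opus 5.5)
The plan is to reduce the ratio to the squared length of a vector projected onto a random $k$-dimensional subspace, and then use concentration of that squared length around $k/d$. Throughout, $u$ and $A$ are treated as fixed, i.e.\ independent of $S$. With probability one $S$ has full row rank $k$, and $SAS^\top$ is invertible, so the numerator is well defined.

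\textbf{Step 1: the bound by $1$.} Set $v = A^{-1/2}u$ and $B = A^{1/2}S^\top \in \RR^{d\times k}$. Then
\[
S^\top (SAS^\top)^{-1}S = A^{-1/2}\, B(B^\top B)^{-1}B^\top\, A^{-1/2} = A^{-1/2} P_B A^{-1/2},
\]
where $P_B$ is the orthogonal projection onto the column space of $B$. Hence the ratio equals $v^\top P_B v / v^\top v \le 1$, deterministically.

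\textbf{Step 2: sandwich by the Euclidean projection.} Since $\lambda_{\min}(A)\, SS^\top \preceq SAS^\top \preceq \lambda_{\max}(A)\, SS^\top$, inverting reverses the order. Let $Q = S^\top(SS^\top)^{-1}S$ be the orthogonal projection onto the row space of $S$. Then
\[
\frac{\|Qu\|^2}{\lambda_{\max}(A)} \le u^\top S^\top(SAS^\top)^{-1}Su \le \frac{\|Qu\|^2}{\lambda_{\min}(A)}.
\]
For the denominator we have $\|u\|^2/\lambda_{\max}(A) \le u^\top A^{-1}u \le \|u\|^2/\lambda_{\min}(A)$. Dividing these bounds gives
\[
\frac{1}{\kappa}\,\frac{\|Qu\|^2}{\|u\|^2} \;\le\; \text{ratio} \;\le\; \kappa\,\frac{\|Qu\|^2}{\|u\|^2}.
\]
Combined with Step 1, it therefore suffices to show that $(1-\epsilon)k/d \le \|Qu\|^2/\|u\|^2 \le (1+\epsilon)k/d$ with the stated probability.

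\textbf{Step 3: concentration (the main step).} The Gaussian matrix $S$ is invariant under right rotations, so the row space of $S$ is a uniformly (Haar) distributed $k$-dimensional subspace. By rotating, we may take $u/\|u\|=e_1$ fixed and instead project a uniform point on the sphere onto the first $k$ coordinates. This gives the representation
\[
\frac{\|Qu\|^2}{\|u\|^2} \overset{d}{=} \frac{\chi^2_k}{\chi^2_k+\chi^2_{d-k}} = \frac{\|g_{1:k}\|^2}{\|g\|^2}, \qquad g\sim N(0,I_d).
\]
Standard $\chi^2$ (Bernstein or Laurent--Massart) bounds give $\|g_{1:k}\|^2\in[(1-\epsilon/3)k,(1+\epsilon/3)k]$ with probability at least $1-2e^{-c\epsilon^2 k}$. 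They also give $\|g\|^2\in[(1-\epsilon/3)d,(1+\epsilon/3)d]$ with probability at least $1-2e^{-c\epsilon^2 d}\ge 1-2e^{-c\epsilon^2 k}$, using $k\le d$. On the intersection of these events the ratio lies in $[(1-\epsilon)k/d,(1+\epsilon)k/d]$ for $\epsilon$ small, after adjusting constants. A union bound then yields probability at least $1-Ce^{-c\epsilon^2 k}$.

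The only delicate point is keeping the failure probability in terms of $k$ rather than $d$. This is handled by the observation $e^{-c\epsilon^2 d}\le e^{-c\epsilon^2 k}$. Alternatively one can use a Johnson--Lindenstrauss type bound applied directly to $\|Su\|^2$ together with $\sigma(SS^\top)\approx d$, but the $\chi^2$ ratio route avoids estimating the singular values of $S$.
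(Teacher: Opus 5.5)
Your proposal is correct and follows essentially the same route as the paper. The paper also writes the ratio as $\|Pw\|^2/\|w\|^2$ for an orthogonal projection $P$, which gives the bound by $1$. It then uses the Loewner sandwich $\lambda_{\min}(A)\,SS^\top\preceq SAS^\top\preceq\lambda_{\max}(A)\,SS^\top$ (carried out in eigen-coordinates of $A$) to reduce, at the cost of a factor $\kappa$, to the Euclidean projection of a fixed vector onto the random row space of $S$, and finishes with the $\mathrm{Beta}(k/2,(d-k)/2)$ law of that squared norm. Your $\chi^2$-ratio concentration gives the multiplicative $(1\pm\epsilon)k/d$ form stated in the lemma more directly than the paper's additive bound $\bigl|\,\cdot-k/d\bigr|>\epsilon$, and you make explicit the implicit restriction to small $\epsilon$.
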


\begin{proof}[Proof of Lemma~\ref{eq::norm-pres}]
    
    Define, 
    \begin{align*}
        P &\coloneqq A^{1/2}S^\top(SAS^\top)^{-1}SA^{1/2}\\
        \hat{A}^{-1} &\coloneqq S^\top(SAS^\top)^{-1}S\\
        w &\coloneqq A^{-1/2}u
    \end{align*}
See that \(P\) is an orthogonal projection matrix, i.e. for any \(l \in \mathbb{R}^d\), 
\begin{align}
    \|l\|^2 = \|Pl\|^2 + \|(I-P)l\|^2 \label{eq::orthproj}
\end{align}
Also since \(P^2 = P\), \(\|Pw\|^2 = w^TPw\). See also now, 
\begin{align*}
    u^TS^\top(SAS^\top)^{-1}Su &= u^\top\hat{A}^{-1}u = w^TPw = \|Pw\|^2\\
    u^TA^{-1}u &= \|w\|^2
\end{align*}
Thus, in~\Cref{eq::norm-pres}, we are now trying to derive a concentration for \(\frac{\|Pw\|^2}{\|w\|^2}\).
Consider the eigenvalue decomposition of \(A\), \(A = Q\Lambda Q^\top\), \(\Lambda = \text{diag}(\lambda_1, \lambda_2, \dots, \lambda_d)\), where \(\lambda_1 >  \lambda_2 > \dots > \lambda_d > 0\).
Define \(V \coloneqq Q^TS^\top\). Since \(Q\) is orthogonal, we again have \(\forall i \in [d], j \in [k], V_{ij} \overset{i.i.d.}{\sim} N(0,1)\), and \(V^TV\) is almost surely positive definite. See, 
\begin{align*}
    P &= A^{1/2}S^\top(SAS^\top)^{-1}SA^{1/2}\\
    &= Q\Lambda^{1/2}V(V^\top\Lambda V)^{-1}V^\top\Lambda^{1/2}Q^\top = QP_VQ^\top,
\end{align*}
where \(P_V \coloneqq \Lambda^{1/2}V(V^\top\Lambda V)^{-1}V^\top\Lambda^{1/2}\). It is easy to see that \(P_V\) is an orthogonal projection matrix satisfying~\Cref{eq::orthproj} if we replace \(P\) by \(P_V\). Define,
\begin{align*}
    \Pi \coloneqq V(V^TV)^{-1}V^\top; \qquad h \coloneqq  \Lambda^{1/2}w
\end{align*}
We will first show that, almost surely, 
\begin{align}
    \frac{1}{\kappa}\frac{\|\Pi h\|^2}{\|h\|^2} \le \frac{\|P_Vw\|^2}{\|w\|^2} \le \kappa\frac{\|\Pi h\|^2}{\|h\|^2}
    \label{eq::proj-ratio-bd}
\end{align}
By definition of \(\Lambda\), we have, 
\begin{align*}
    &\lambda_d I \preceq \Lambda \preceq \lambda_1 I\\
    &\overset{(i)}{\implies} \lambda_d V^TV \preceq V^\top\Lambda V \preceq \lambda_1 V^TV\\
    &\implies \frac{1}{\lambda_1}(V^TV)^{-1} \preceq (V^\top\Lambda V)^{-1} \preceq \frac{1}{\lambda_d} (V^TV)^{-1},
\end{align*}
where \((i)\) is since for \(B, C, D \in \mathbb{R}^{d \times d}\) and any \(j \in \mathbb{R}^d\), \(B \preceq C \implies j^\top(C - B)j \ge 0 \implies j^TD^\top(C-B)Dj \ge 0\). Thus we have almost surely,
\begin{align}
    \frac{1}{\lambda_1}\|\Pi h\|^2 \le \|P_Vw\|^2 \le \frac{1}{\lambda_d}\|\Pi h\|^2
    \label{eq::proj-norm-bd}
\end{align}
Also it is clear that we have the following bounds for \(\|h\|^2 = w^\top\Lambda w\), \(\lambda_d\|w\|^2 \le \|h\|^2 \le \lambda_1\|w\|^2\). Combining with~\Cref{eq::proj-norm-bd}, we have shown~\Cref{eq::proj-ratio-bd}
\end{proof}
Using the fact that the quadratic forms of a Gaussian vector projected onto an independent random subspace follow a Beta distribution, Chapter 7~\cite{anderson1958introduction}, we have \(\frac{\|\Pi h\|^2}{\|h\|^2} \sim \text{Beta}\left (k/2, (d-k)/2 \right )\). By concentration of Beta distribution (using standard concentration inequalities from~\cite{vershynin2009high}), we have, 
\begin{align}
    P\left (\left \vert \frac{\|\Pi h\|^2}{\|h\|^2} - \frac{k}{d}\right \vert > \epsilon\right ) \le C\exp(-ck\epsilon^2)
    \label{eq::prob-beta}
\end{align}
By~\Cref{eq::proj-ratio-bd}, we have almost surely \(\frac{1}{\kappa}\frac{\|\Pi Q^Th\|^2}{\|Q^Th\|^2} \le \frac{\|P_VQ^Tw\|^2}{\|Q^Tw\|^2} \le \kappa\frac{\|\Pi Q^Th\|^2}{\|Q^Th\|^2}\). Thus since \(P = QP_VQ^\top\) and by~\Cref{eq::prob-beta}, we  have shown~\Cref{eq::norm-pres}.

\subsection{Confirming our assumptions on the loss function}
\label{sec:corder_check}

In this section, we examine Assumption~\ref{as5} in the context of logistic regression, linear regression, and Poisson regression. Our goal is to justify that the orders are reasonable.

\subsubsection{Logistic Regression}
Since
\[
\dot{\ell}_i\!\left(\bm \beta\right) = -y_i + \frac{e^{\bg_i^\top \bm \beta}}{1 + e^{\bg_i^\top \bm \beta}}, 
\quad 
\ddot{\ell}_i\!\left(\bm \beta\right) = \frac{e^{\bg_i^\top \bm \beta}}{\left(1 + e^{\bg_i^\top \bm \beta}\right)^2}, 
\quad 
\dddot{\ell}_i\!\left(\bm \beta\right) = \frac{e^{\bg_i^\top \bm \beta}\left(1 - e^{\bg_i^\top \bm \beta}\right)}{\left(1 + e^{\bg_i^\top \bm \beta}\right)^3},
\]
using simple algebra, it is straightforward to show that for any $\bm \beta$, we have
\[
\|\dot{\bm \ell}\!\left(\bm \beta\right)\|_\infty \leq 1, 
\quad 
\|\ddot{\bm \ell}\!\left(\bm \beta\right)\|_\infty \leq \tfrac{1}{4}, 
\quad 
\|\dddot{\bm \ell}\!\left(\bm \beta\right)\|_\infty \leq \tfrac{1}{10}.
\]
Hence, the first  inequality of Assumption~\ref{as5} holds.
Note that we can assume $\bg_i^\top \bbeta = O_p(1)$, so the second inequality of Assumption~\ref{as5} also holds. 
oreover,
\[
\|\bm\ldd_{ / i}\!\left(\bm \beta + \delta\right) - \bm \ldd_{ / i}\!\left(\bm \beta\right)\|_2
\leq \|\ddot{\bm \ell}\!\left(\bm \beta + \delta\right) - \ddot{\bm \ell}\!\left(\bm \beta\right)\|_2
= \sqrt{\sum_i \left(\ddot{\ell}_i\!\left(\bm \beta + \delta\right) - \ddot{\ell}_i\!\left(\bm \beta\right)\right)^2}.
\]
Using the mean-value theorem, where $\epsilon_i \in [0,\delta_i]$, we have
\[
= \sqrt{\sum_i \dddot{\ell}_i\!\left(\bm \beta + \epsilon_i\right)^2 \left(\bg_i^\top \delta\right)^2}
\leq \sqrt{\delta^\top \bG^\top \bG \delta}
\leq \sqrt{\sigma_{\max}\!\left(\bG^\top \bG\right)} \, \|\delta\|_2.
\]
Based on the inequality above, we obtain 
\[
\sup_{t \in [0,1]}
\frac{\|\bm{\ldd}_{/i}\!\left((1-t)\hat{\bm \beta}_{/i} + t\hat{\bm \beta}\right) - \bm{\ldd}_{/i}\!\left(\hat{\bm \beta}\right)\|_2}
     {\|\hat{\bm \beta}_{/i} - \hat{\bm \beta}\|_2}
\;\leq\;
\sup_{t \in [0,1]}
\frac{(1-t)\|\hat{\bm \beta}_{/i} - \hat{\bm \beta}\|_2 \sqrt{\sigma_{\max}\!\left(\bG^\top \bG\right)}}
     {\|\hat{\bm \beta}_{/i} - \hat{\bm \beta}\|_2}
\;\leq\;
\sqrt{\sigma_{\max}\!\left(\bG^\top \bG\right)}.
\]
By Assumption~\ref{as3}, we know that $\sigma_{\max}\!\left(\bG^\top \bG\right) = O_p \!\left({n / \|\bbeta^*\|^2}\right)$, thus the third inequality in Assumption~\ref{as5} holds.

\subsubsection{Linear Regression}
Considering $\ell\!\left(y \mid \bm x^\top \bm\beta\right) = \tfrac{1}{2}\!\left(y - \bm x^\top \bm\beta\right)^2$, we have $\dot{\ell}_i\!\left(\bm{\beta}\right) = -\!\left(y_i - \bm \bg_i^\top \bm\beta\right)$ and $\ddot{\ell}_i\!\left(\bm{\beta}\right) = 1$. Therefore, the second and third inequalities in Assumption~\ref{as5} hold directly.
Moreover, since $y \sim \mathcal{N}\!\left(\bm x^\top \bm\beta^*, \sigma_\epsilon^2\right)$, by properties of sub-Gaussian random variables we obtain
\[
\max_{i \in [n]} \big|\dot{\ell}_i\!\left(\bm{\beta}\right)\big| \preceq \sqrt{\log n}.
\]
Hence, the first inequality in Assumption~\ref{as5} holds.

\subsubsection{Poisson Regression}
Consider
\[
\ell\!\left(y \mid \bm x^\top \bm\beta\right) =  h\!\left(\bm x^\top \bm\beta\right) - y \log h\!\left(\bm x^\top \bm\beta\right) + \log y!,
\]
with the conditional mean $\mathbb{E}\!\left[y \mid \bm x, \bm\beta\right] = h\!\left(\bm x^\top \bm\beta\right)$ where 
$h(s) = \log\!\left(1+e^s\right)$. Then the first and second derivatives of the loss with respect to $s:=\bx^\top \bbeta$ are
\[
\dot{\ell}(y,s) 
= h'(s)\Bigl(1-\frac{y}{h(s)}\Bigr)
= \sigma(s)\Bigl(1-\frac{y}{h(s)}\Bigr),
\]
and
\[
\ddot{\ell}(y,s)
= \sigma(s)\bigl(1-\sigma(s)\bigr)\Bigl(1-\frac{y}{h(s)}\Bigr)
+ y\frac{\sigma(s)^2}{h(s)^2}.
\]
% \[\dddot{\ell}(y,s) = 
% \sigma(s)(1-\sigma(s))(1-2\sigma(s))\Bigl(1-\frac{y}{h(s)}\Bigr)
% + 3y\frac{\sigma(s)^2(1-\sigma(s))}{h(s)^2}
% - 2y\frac{\sigma(s)^3}{h(s)^3}\]

By Lemma 8 of \citet{rad2018scalable},
we have
\[
\sup_{t \in [0,1]} 
\frac{\|\bm \ldd_{/i}\!\left((1-t)\bli + t\bl\right) - \bm \ldd_{/i}\!\left(\bl\right)\|_2}
{\|\bli - \bl\|_2}
\;\;\leq\;\;
\left(1 + 6\|y\|_\infty\right)\sqrt{\sigma_{\max}\!\left(\bm \bG^\top \bm \bG\right)},
\]
and
\[
\|\bm\ld\!\left(\bm\beta\right)\|_\infty \;\;\leq\;\; 1 + \|y\|_\infty.
\]
% By Lemma 8 of \citet{rad2018scalable},
% we have
% \[
% \sup_{t \in [0,1]} 
% \frac{\|\bm \ldd_{/i}\!\left((1-t)\bli + t\bl\right) - \bm \ldd_{/i}\!\left(\bl\right)\|_2}
% {\|\bli - \bl\|_2}
% \;\;\leq\;\;
% \left(1 + 6\|y\|_\infty\right)\sqrt{\sigma_{\max}\!\left(\bm \bG^\top \bm \bG\right)},
% \]
% and
% \[
% \|\bm\ld\!\left(\bm\beta\right)\|_\infty \;\;\leq\;\; 1 + \|y\|_\infty.
% \]

where $\sigma$ is sigmoid function. Since $\bm{x}_i^\top \bm{\beta} \sim \mathcal{N}\!\left(0, \bm{\beta}^\top \bm{\Sigma} \bm{\beta}\right)$, we have 
$\max_{i \in [n]} \left|\bm{x}_i^\top \bm{\beta}\right| = O\!\left(\log n\right)$ and 
$\|y\|_\infty = O\!\left(\mathrm{poly}\log n\right)$, both holding with high probability. 
Therefore, using Assumption~\ref{as3}, the three inequalities in Assumption~\ref{as5} hold. 

\subsection{Confirming our assumptions on  the nonlinear function $f(\cdot, \cdot)$}
\label{sec:output}
\subsubsection{Verification of Example~\ref{Ex:lin}.}
Consider the GLM where $f(\bx, \bbeta) = \bx^\top \bbeta$. We verify Assumptions~\ref{as2}, \ref{as3}, \ref{as4}, and \ref{ass:nsg_grad}. 

First, note that $p = d$ and $\nabla f(\bx, \bbeta) = \bx$. Since we assume $|\beta_i^*| = O(1)$, we have $p \asymp \|\bbeta^*\|^2$. Combined with Assumption~\ref{as1}, this implies that Assumptions~\ref{as2} and \ref{ass:nsg_grad} hold, due to standard properties of sub-Gaussian vectors. Next, because $\nabla^2 f = 0$, Assumption~\ref{as4} holds automatically.

Furthermore, since
\[
\bG(\bbeta)^\top \bG(\bbeta) = \bX^\top \bX,
\]
and $\sigma_{\max}(\bSigma) \asymp \sigma_{\min}(\bSigma) \asymp O_p(1/p)$ with $p \asymp \|\bbeta^*\|^2$, and the vectors $\bx_i,\ i\in[n]$, are independent sub-Gaussian, we obtain
\[
\sigma_{\max}(\bX^\top \bX) \asymp \sigma_{\min}(\bX^\top \bX) \asymp 
O_p\!\left(\frac{1}{\|\bbeta^*\|^2}\right),
\]
which verifies Assumption~\ref{as3}.

\subsubsection{Verification of Example~\ref{Ex:NN}.}
\textbf{Setting and Assumptions.}
We verify Assumptions~\ref{as2}, \ref{as3}, \ref{as4}, and \ref{ass:nsg_grad} in a linear-layer setting.
Let the parameter vector be 
\[
\bbeta = ( \vecop(\bW)^\top, \bv^\top )^\top,
\]
where $\bW \in \RR^{h\times p}$ satisfies 
\[
\sigma_{\min}(\bW^\top\bW)\asymp \sigma_{\max}(\bW^\top\bW).
\]
We further assume $|v_l| = O(1)$; that is, there exist constants $v, V>0$ such that 
\[
v \le |v_l| \le V, \qquad \forall l\in[h].
\]
We also assume the initialization satisfies 
\[
p\|\bv\|^2 \asymp \|\bW\|_F^2,
\]
which holds for common isotropic initializations.

Let $\sigma$ be an activation function with
\[
0\le \sigma'(x)\le E.
\]
For $\bx\in\RR^p$, define $\sigma(\bx)$ element-wise.
Define the function
\[
f(\bx,\bW,\bv)=\bv^\top \sigma(\bW\bx).
\]
The gradients are given by
\[
\nabla_{\bv} f(\bx,\bW,\bv)=\sigma(\bW\bx)\in\RR^h,
\]
\[
\nabla_{\bW} f(\bx,\bW,\bv)
= \diag\big(\sigma'(\bW\bx)\big)\,\bv\,\bx^\top \in\RR^{h\times p}.
\]

For notational convenience, for $l\in[h]$ and $i\in B\subset[n]$, define
\[
A^{B}_{li}=\sigma'(\bW_l^\top\bx_i), \qquad 
D^{B}_{li}=v_l \sigma'(\bW_l^\top\bx_i). % Corrected notation here
\]
Then the relationship between the Gram matrices is
\[
D^B (D^B)^\top
 = \diag(\bv)\, A^B (A^B)^\top \diag(\bv).
\]

Assume that for any $B\subset[n]$ with $|B|\gg h$, with high probability we can find constants $\underline{\alpha},\bar{\alpha}>0$ such that 
\[
\bar{\alpha}|B|
 \;\ge\;
\lambda_{\max}(A^BA^{B,\top})
 \;\ge\;
\lambda_{\min}(A^BA^{B,\top})
 \;\ge\;
\underline{\alpha}|B|.
\]
Intuitively, if for all $l\in[h]$, 
\[
\sum_{i\in B}\sigma'^2(\bW_l^\top\bx_i)\asymp |B|,
\]
then
\[
\tr(A^BA^{B,\top})
 =\sum_{l=1}^h \sum_{i\in B}\sigma'^2(\bW_l^\top\bx_i)
 \asymp h|B|.
\]
If $A^BA^{B,\top}$ is well-conditioned, this implies the eigenvalue bounds above.
Consequently, utilizing the bounds on $v_l$,
\[
V^2\,\bar{\alpha}\,|B|
 \;\ge\;
\lambda_{\max}(D^BD^{B,\top})
 \;\ge\;
\lambda_{\min}(D^BD^{B,\top})
 \;\ge\;
v^2\,\underline{\alpha}\,|B|.
\]
We denote $A^{[n]} =A$ and $D^{[n]} =D$ for simplicity.

We further assume $n> d$ and $n\gg h^2$. 
Intuitively, if $n<d$ where $d$ is the parameter dimension, the empirical Fisher 
\[
\bG^\top \bG=\sum_{i=1}^n \bg_i\bg_i^\top
\]
cannot be invertible. 
Similarly, in very wide networks, the sample size must dominate the width to obtain stable approximations.

\paragraph{Verification of Assumption~\ref{as2}.}
First, we verify Assumption~\ref{as2}. Note that
\[
\|\nabla_\bv f(\bx, \bW, \bv)\| = \|\sigma(\bW \bx)\| \leq E\|\bW\bx\| = O_{\HP}(\sqrt{\tr(\bW\bSigma \bW^\top )}) = O_{\HP}(\sqrt{\|\bW\|^2_F/\|\bbeta^*\|^2}) = O_{\HP}(1).
\]
\[
\|\nabla_\bW f(\bx, \bW, \bv)\|_F^2 \leq E^2\|\bv\|^2\|\bx\|^2 = O_{\HP}\left({\frac{p\|\bv\|^2}{\|\bbeta^*\|^2}}\right) = O_{\HP}(1),
\]
using the scaling assumption $p\|\bv\|^2 \asymp \|\bW\|_F^2$.

\paragraph{Verification of Assumption~\ref{as3}.}
Next, we verify Assumption~\ref{as3}. For simplicity, we consider 
\[
\bg_i = \vecop(\nabla_\bW f(\bx_i, \bW, \bv)) = c_i\otimes\bx_i,
\]
where $c_i= D_{:,i}$ is the $i$-th column of $D$. Denoting $\bG = (\bg_1,\dots,\bg_n)$, we know that
\[
M:=\frac{1}{n} \bG^\top \bG = \frac{1}{n}\sum_{i=1}^n \bg_i \bg_i^\top = \frac{1}{n} \sum_{i=1}^n c_ic_i^\top \otimes \bx_i\bx_i^\top = \frac{1}{n} \sum_{i=1}^n(c_i\otimes \bx_i)(c_i\otimes \bx_i)^\top.
\]
To analyze the eigenvalues of $M$, consider any $q\in \RR^{ph}$ such that $q = \vecop(Q)$ and $\|q\|=\|Q\|_F = 1$. We only need to bound 
\[
q^\top Mq = q^\top \left(\frac{1}{n} \sum_{i=1}^n(c_i\otimes \bx_i)(c_i\otimes \bx_i)^\top\right) q = \frac{1}{n}\sum_{i=1}^n (\bx_i^\top Qc_i)^2.
\]

Note that $Q\in \RR^{p\times h}$. We consider two cases separately: $\text{rank}(Q) = 1$ and $\text{rank}(Q)>1$.

First, consider the case when $\text{rank}(Q) = 1$. There exist $a\in \RR^p$ and $b \in \RR^h$ such that $Q = a b^\top$. WLOG, assume $\|a\|^2 = \|b\|^2= 1$. Then we have:
\begin{align}
q^\top M q
&= a^\top \Big[\tfrac{1}{n}(\bx_1,\ldots,\bx_n)
    \diag\big((b^\top c_1)^2,\ldots,(b^\top c_n)^2\big)
    (\bx_1,\ldots,\bx_n)^\top\Big] a \notag \\
\label{eq:goal_ex2}
&\ge \|a\|^2\, 
   \lambda_{\min}\!\left(\tfrac{1}{n}(\bx_1,\ldots,\bx_n)
   \diag\big((b^\top c_1)^2,\ldots,(b^\top c_n)^2\big)
   (\bx_1,\ldots,\bx_n)^\top\right).
\end{align}

For any $b \in \RR^h$ with $\|b\| = 1$, denote $S_b = \{i:\ (b^\top c_i)^2 \geq \frac{\underline{\alpha} v^2}{2}\}$ and let $S_b^c = [n]\setminus S_b$. We know that 
\begin{align*}
    \|b\|^2\underline{\alpha} n v^2 &\leq b^\top DD^\top b = \sum_{i=1}^n(b^\top c_i)^2 = \sum_{i\in S_b} (b^\top c_i)^2+\sum_{i\notin S_b} (b^\top c_i)^2 \\
    &\leq \|b\|^2 E^2V^2 h |S_b| + (n-|S_b|)\frac{\underline{\alpha} v^2}{2}\\
    &= \left(E^2V^2 h - \frac{\underline{\alpha} v^2}{2}\right)|S_b|+\frac{\underline{\alpha} v^2}{2} n,
\end{align*}
where in the second inequality we used $|c_i|^2 \leq E^2V^2 h$.
Thus, we get 
\[
|S_b| \geq \frac{\underline{\alpha} v^2}{2E^2V^2h - \underline{\alpha} v^2}n \gg h,
\]
using $n\gg h^2$. We can further refine the inequality:
\begin{align*}
    \|b\|^2\underline{\alpha} n v^2 &\leq b^\top DD^\top b = \sum_{i=1}^n(b^\top c_i)^2 = \sum_{i\in S_b} (b^\top c_i)^2+\sum_{i\notin S_b} (b^\top c_i)^2 \\
    &\leq \|b\|^2 \bar{\alpha}V^2 |S_b| + (n-|S_b|)\frac{\underline{\alpha} v^2}{2}\\
    &= \frac{\underline{\alpha} v^2}{2} n + |S_b|\left(\bar{\alpha}V^2 - \frac{\underline{\alpha} v^2}{2}\right),
\end{align*}
where we used $b^\top D^{S_b}D^{S_b,\top} b \leq \|b\|^2|S_b| \bar{\alpha}V^2$ for $|S_b| \gg h$. Then we obtain
\[
|S_b| \geq \frac{\underline{\alpha} v^2}{2\bar{\alpha} V^2 - \underline{\alpha} v^2} n := C(\underline{\alpha},\bar{\alpha}, v,V)n.
\]
Returning to \eqref{eq:goal_ex2}, we have
\begin{align*}
    q^\top M q &\geq \lambda_{\min}\!\left(\tfrac{1}{n}(\bx_1,\ldots,\bx_n)
   \diag\big((b^\top c_1)^2,\ldots,(b^\top c_n)^2\big)
   (\bx_1,\ldots,\bx_n)^\top\right) \\
   &\geq \lambda_{\min}\left(\frac{1}{n}\sum_{i\in S_b} \bx_i\bx_i^\top (b^\top c_i)^2\right)\geq \lambda_{\min}\left(\frac{1}{n}\sum_{i\in S_b} \bx_i\bx_i^\top \frac{\underline{\alpha }v^2}{2}\right) \\
   &= \frac{\underline{\alpha }v^2}{2} \frac{|S_b|}{n} \lambda_{\min}\left(\frac{1}{|S_b|}\sum_{i\in S_b} \bx_i\bx_i^\top\right).
\end{align*}
By the sub-Gaussianity of i.i.d $\bx_i$, we have
\[
\lambda_{\min}\left(\frac{1}{|S_b|}\sum_{i\in S_b} \bx_i\bx_i^\top\right) - \lambda_{\min}(\bSigma) \left(1-\sqrt{\frac{p}{|S_b|}}\right)^2 \to 0, \quad \as,
\]
where $\lambda_{\min}(\bSigma) (1-\sqrt{\frac{p}{|S_b|}})^2\asymp \|\bbeta^*\|^{-2}$. 
Finally, we get 
\[
q^\top M q \succeq \|\bbeta^*\|^{-2},\quad \as.
\]

Next, we consider the upper bound in the rank-1 case. Similar to \eqref{eq:goal_ex2}, we get
\begin{align}
\noindent
q^\top M q
&= a^\top \Big[\tfrac{1}{n}(\bx_1,\ldots,\bx_n)
    \diag\big((b^\top c_1)^2,\ldots,(b^\top c_n)^2\big)
    (\bx_1,\ldots,\bx_n)^\top\Big] a \notag \\
\label{eq:goal_ex2_2}
&\le \|a\|^2\, 
   \lambda_{\max}\!\left(\tfrac{1}{n}(\bx_1,\ldots,\bx_n)
   \diag\big((b^\top c_1)^2,\ldots,(b^\top c_n)^2\big)
   (\bx_1,\ldots,\bx_n)^\top\right).
\end{align}
Define $T_b = \{i: (b^\top c_i)^2 \geq ph\}$. We know that 
\[
    \|b\|^2\bar{\alpha} n V^2 \geq b^\top DD^\top b = \sum_{i=1}^n(b^\top c_i)^2 = \sum_{i\in T_b} (b^\top c_i)^2+\sum_{i\notin T_b} (b^\top c_i)^2 \geq |T_b| ph.
\]
So we get $|T_b|\leq \frac{\bar{\alpha} V^2}{ph}n$. Note that
\begin{align*}
   &\lambda_{\max}\!\left(\tfrac{1}{n}(\bx_1,\ldots,\bx_n)
   \diag\big((b^\top c_1)^2,\ldots,(b^\top c_n)^2\big)
   (\bx_1,\ldots,\bx_n)^\top\right) \\
   &\leq \lambda_{\max}\left(\tfrac{1}{n}\sum_{i\in T_b}\bx_i\bx_i^\top (b^\top c_i)^2\right)+ \lambda_{\max}\left(\tfrac{1}{n}\sum_{i\notin T_b}\bx_i\bx_i^\top (b^\top c_i)^2\right).
\end{align*}
We consider these two parts. For the first part:
\[
    \lambda_{\max}\left(\tfrac{1}{n}\sum_{i\in T_b}\bx_i\bx_i^\top (b^\top c_i)^2\right) \preceq \frac{|T_b|}{n} \frac{p}{\|\bbeta^*\|^2} E^2V^2 h\|b\|^2 \leq \frac{\bar{\alpha} V^2}{ph} \frac{ph E^2V^2}{\|\bbeta\|^2} \asymp \|\bbeta^*\|^{-2},
\]
with high probability. For the second part:
\begin{align*}
    \lambda_{\max}\left(\tfrac{1}{n}\sum_{i\notin T_b}\bx_i\bx_i^\top (b^\top c_i)^2\right) &\leq \frac{n-|T_b|}{n}\lambda_{\max}\left(\frac{1}{n-|T_b|} \sum_{i\notin T_b}\bx_i\bx_i^\top (b^\top c_i)^2\right) \\
     &\leq \lambda_{\max} (\bSigma)\left(1+\sqrt{\frac{p}{n-|T_b|}}\right)^2 \asymp \|\bbeta^*\|^{-2},
\end{align*}
with high probability, where we used that $\|\bx_i\bx_i^\top\|_2 = \bx_i^\top \bx_i = O_{\HP} (\frac{p}{\|\bbeta^*\|^2})$ and $|T_b|\leq \frac{\bar{\alpha} V^2}{ph}n$.
Thus, 
\[
q^\top M q \preceq \|\bbeta^*\|^{-2}, \quad \text{with high probability}.
\]
This concludes the rank-1 case.

Next, we consider the case when $\text{rank}(Q)=R>1$. We can write $Q=\sum_{r=1}^R a_r b_r^\top$, where $\{a_r\}_{r\in [R]}$ and $\{b_r\}_{r\in [R]}$ are orthogonal sets respectively. WLOG, assume $\|b_r\| = 1$ for $r=1,\dots,R$. Since $\|Q\|_F^2 = 1$, we have $\sum_{r=1}^R \|a_r\|^2=1$. Note that 
\[
    q^\top M q \approx \sum_{r=1}^R a_r^\top \left[\tfrac{1}{n}(\bx_1,\ldots,\bx_n)
   \diag\big((b_r^\top c_1)^2,\ldots,(b_r^\top c_n)^2\big)
   (\bx_1,\ldots,\bx_n)^\top\right]a_r.
\]
Using results from Case 1, we already know that for each $r$:
\begin{align*}
   &\lambda_{\min}(\tfrac{1}{n}(\bx_1,\ldots,\bx_n)
   \diag\big((b_r^\top c_1)^2,\ldots,(b_r^\top c_n)^2\big)
   (\bx_1,\ldots,\bx_n)^\top) \\\asymp &\lambda_{\max}\left(\tfrac{1}{n}(\bx_1,\ldots,\bx_n)
   \diag\big((b_r^\top c_1)^2,\ldots,(b_r^\top c_n)^2\big)
   (\bx_1,\ldots,\bx_n)^\top\right) \asymp \|\bbeta^*\|^{-2}.
\end{align*}
We then obtain
\[
q^\top M q \asymp \sum_{r=1}^R \|a_r\|^2 \|\bbeta^*\|^{-2} \asymp \|\bbeta^*\|^{-2}.
\]

\paragraph{Verification of Assumption~\ref{ass:nsg_grad}.}
For simplicity, we consider $\sigma(x)=x$.
We have
\[
\bg_{i,*}
=
\begin{pmatrix}
\bW\\[3pt]
\bv\otimes\II_p
\end{pmatrix}
\bx_i,
\]
and
\[
\left\|
\begin{pmatrix}
\bW\\[3pt]
\bv\otimes\II_p
\end{pmatrix}
\right\|
=
\sqrt{
  \lambda_{\max}(\bW^\top\bW)+\|\bv\|^2
}
\leq\sqrt{\frac{2\|\bW\|^2_F}{p}} \asymp \frac{\|\bbeta^*\|}{\sqrt{p}},
\]
where for the first inequality we use the assumption that $\lambda_{\min}(\bW^\top\bW)\asymp \lambda_{\max}(\bW^\top\bW)$.

Using $\bx_i\sim \text{zero-mean sub-Gaussian with covariance } \bSigma$, where $\|\bSigma\| \asymp \|\bbeta^*\|^{-2}$, we know that $\frac{\|\bbeta^*\|}{\sqrt{p}}\bx_i$ is zero-mean sub-Gaussian with covariance $\bSigma'$ such that
$\|\bSigma'\|\asymp \frac{1}{p}$. Thus $\frac{\|\bbeta^*\|}{\sqrt{p}}\bx_i \sim \text{nSG}(\sigma)$ for some $\sigma>0$. 
Then
\begin{align*}
\mathbb{P}\!\left(
\left\|
\begin{pmatrix}
\bW\\[3pt]
\bv\otimes\II_p
\end{pmatrix}\bx_i
- 
\mathbb{E}
\begin{pmatrix}
\bW\\[3pt]
\bv\otimes\II_p
\end{pmatrix}\bx_i
\right\|
\ge t
\right)
&\le \mathbb{P}\left(\left\|\begin{pmatrix}
\bW\\[3pt]
\bv\otimes\II_p
\end{pmatrix}\right\|\|\bx_i-\mathbb{E}\bx_i\|\geq t\right) \\
&\asymp \mathbb{P}\left(\frac{\|\bbeta^*\|}{\sqrt{p}} \|\bx_i-\mathbb{E}\bx_i\| \geq t\right) \\
&\leq 2\exp\!\left(-\frac{t^2}{2\sigma^2}\right).
\end{align*}
Therefore,
\[
\bg_{i,*} \overset{i.i.d.}{\sim} \text{nSG}(\sigma).
\]

\section{More empirical results}
\label{sec:results}
\subsection{Correlation-Based Simulations}
\label{sec:corr-ind}
\subsubsection{Vanilla Linear Models}
\label{sec:sim_linear_main}

We first consider the linear case where \( f(\bx,\bbeta^*) = \bx^\top \bbeta^* \) and focus on binary classification and Poisson regression as representative examples.

In this setting, the linearization error is exactly zero; thus, only the ALO step and the projection step need to be evaluated.

We generate the rows \( \bx_1^\top,\dots,\bx_n^\top \) of the design matrix \( \bX \) independently from a mean-zero multivariate normal distribution with covariance matrix \( \bSigma \). Here, \( \bSigma \) is a Toeplitz matrix satisfying
\[
\mathrm{corr}(X_{ij}, X_{ij'}) = 0.1^{|j-j'|}.
\]
We rescale \( \bbeta^* \) and \( \bSigma \) such that \( \|\bbeta^*\|^2 = p \) and \( \bbeta^{*\top} \bSigma \bbeta^* = 1 \), ensuring that our theoretical assumptions are satisfied.

\paragraph{Binary Classification}
For binary classification, 
% we adopt the logistic loss
% \[
% \ell(y,s) = -ys + \log(1+e^{s}), \qquad s = \bx^\top \bbeta,
% \]
% with first- and second-order derivatives
% \[
% \dot{\ell}_i(\bbeta) = \sigma(\bx_i^\top \bbeta) - y_i, \qquad
% \ddot{\ell}_i(\bbeta) = \sigma(\bx_i^\top \bbeta)\bigl(1-\sigma(\bx_i^\top \bbeta)\bigr),
% \]
% where \( \sigma(\cdot) \) denotes the sigmoid function. The response variable is generated as
% \[
% y_i \sim \mathrm{Bernoulli}\bigl(\sigma(\bx_i^\top \bbeta^*)\bigr).
% \]
we randomly select \(100\) training points to remove. To reduce variability, we independently sample \(10\) new test points \( \bz_{\new} \) from the same distribution. For each test point, we compute its influence with respect to each removed training point. This procedure yields \(1{,}000\) realizations of \( \mathcal{I}^{\rm True} \), \( \mathcal{I}^{\rm ALO} \), and \( \mathcal{I}^{\rm TRAK}(k) \), where \(k\) denotes the projection dimension.

\textbf{ALO Step}\quad
Figure~\ref{fig:binary_alo} validates Theorem~\ref{thm:g_base_ALO}. As shown, \( \mathcal{I}^{\rm True} \) and \( \mathcal{I}^{\rm ALO} \) are nearly identical, with fitted slopes close to one. This confirms that the ALO step introduces only negligible magnitude error, consistent with our theoretical guarantee.

\begin{figure}[ht]
    \centering
    \includegraphics[width=0.45\linewidth]{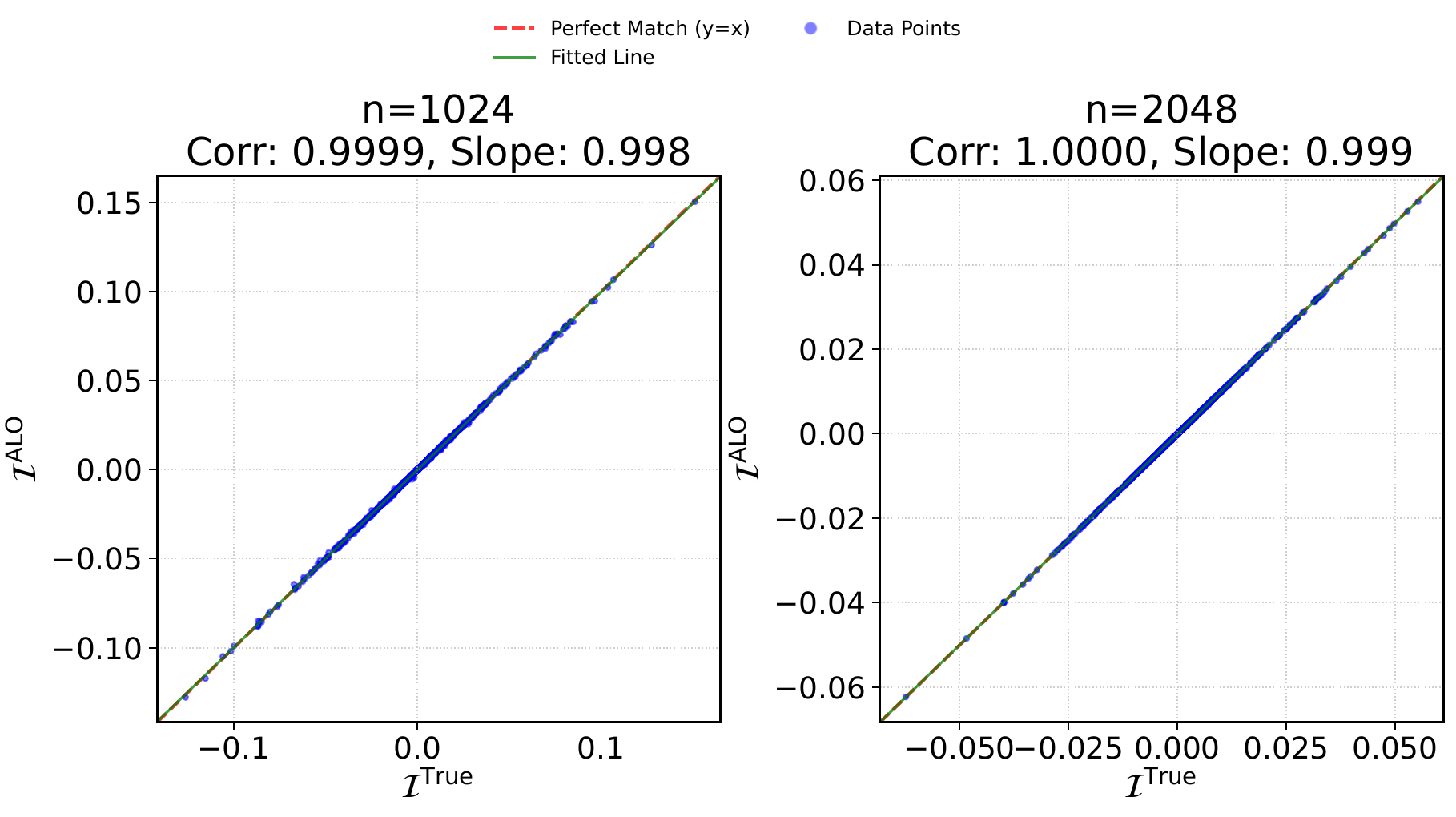}
    \includegraphics[width=0.45\linewidth]{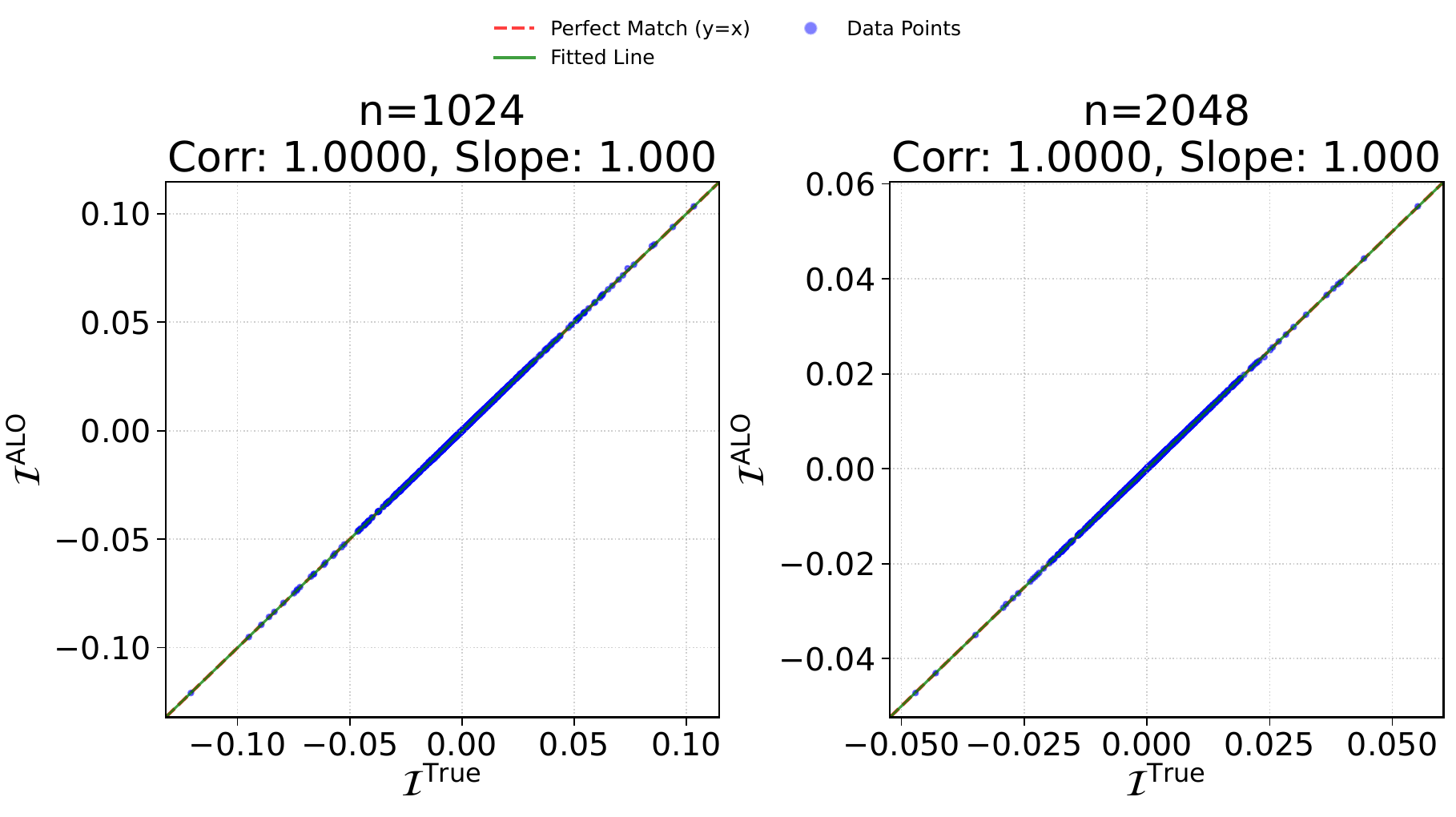}
    \caption{
        \textbf{Left two panels:} Binary logistic regression with \(p=100\), \(n=1024\) and \(n=2048\). The x-axis shows \( \mathcal{I}^{\rm True} \); the y-axis shows \( \mathcal{I}^{\rm ALO} \).
        \textbf{Right two panels:} Poisson regression under the same \(p\) and \(n\) settings.
    }
    \label{fig:binary_alo}
\end{figure}

\textbf{Projection Step}\quad
Figure~\ref{fig:binary_proj} shows the correlation between $\mathcal{I}^{\rm True}$ and $\mathcal{I}^{\rm TRAK}$ for $k=0.75d$, $0.50d$, and $0.25d$, where $d=p=100$. Two observations stand out: first, the slope is not 1 and decreases as $k$ decreases, validating Theorem~\ref{theo:magnitude_projection}; second, as $k$ decreases, the correlation also decreases, validating Theorem~\ref{theo:magnitude_projection} that the probability of ranking failure decreases as $k$ increases. Nevertheless, the correlation remains noisy, and precise rankings are not preserved.

\begin{figure}[ht]
    \centering
    \includegraphics[width=0.3\linewidth]{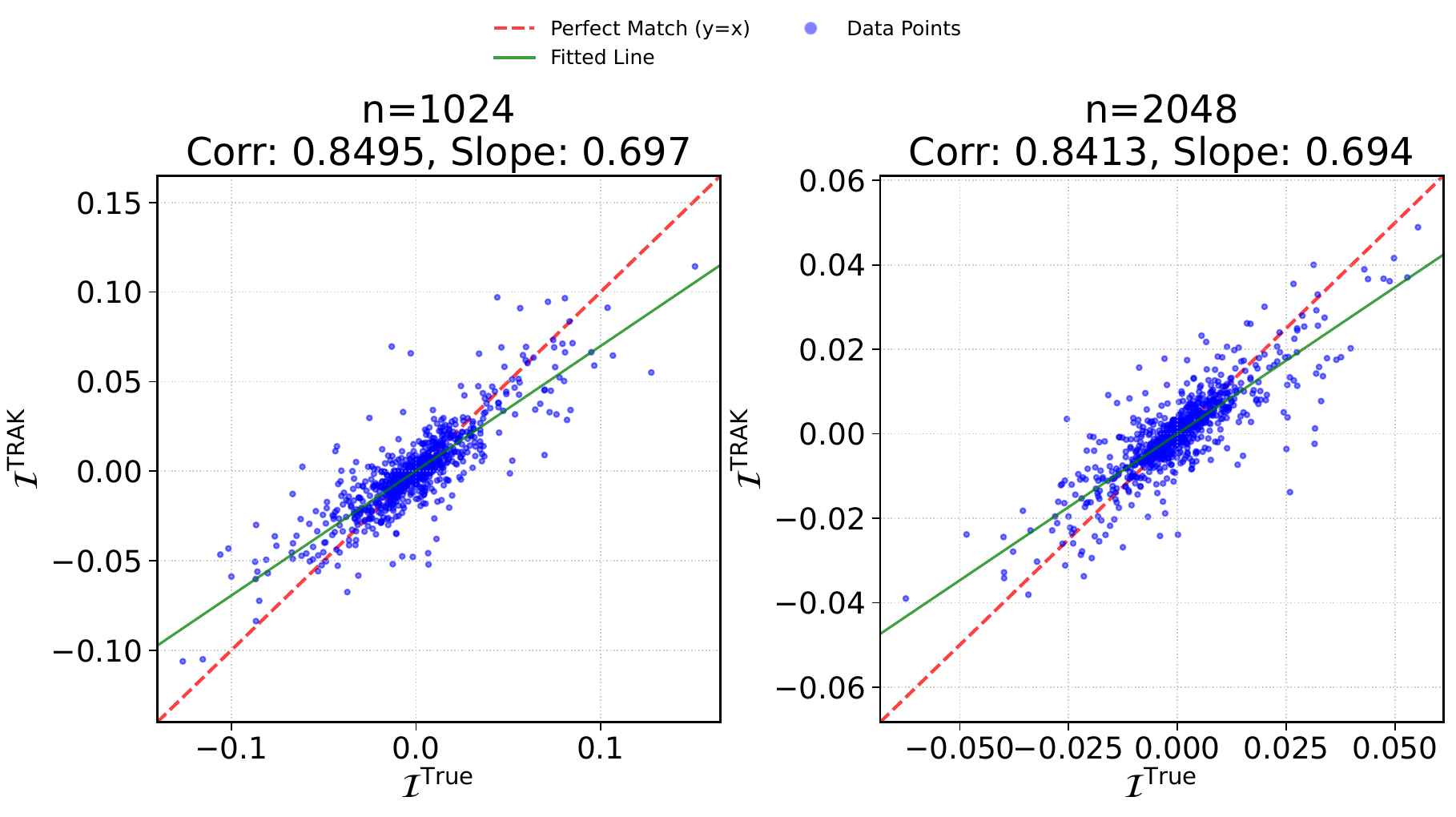}
    \includegraphics[width=0.3\linewidth]{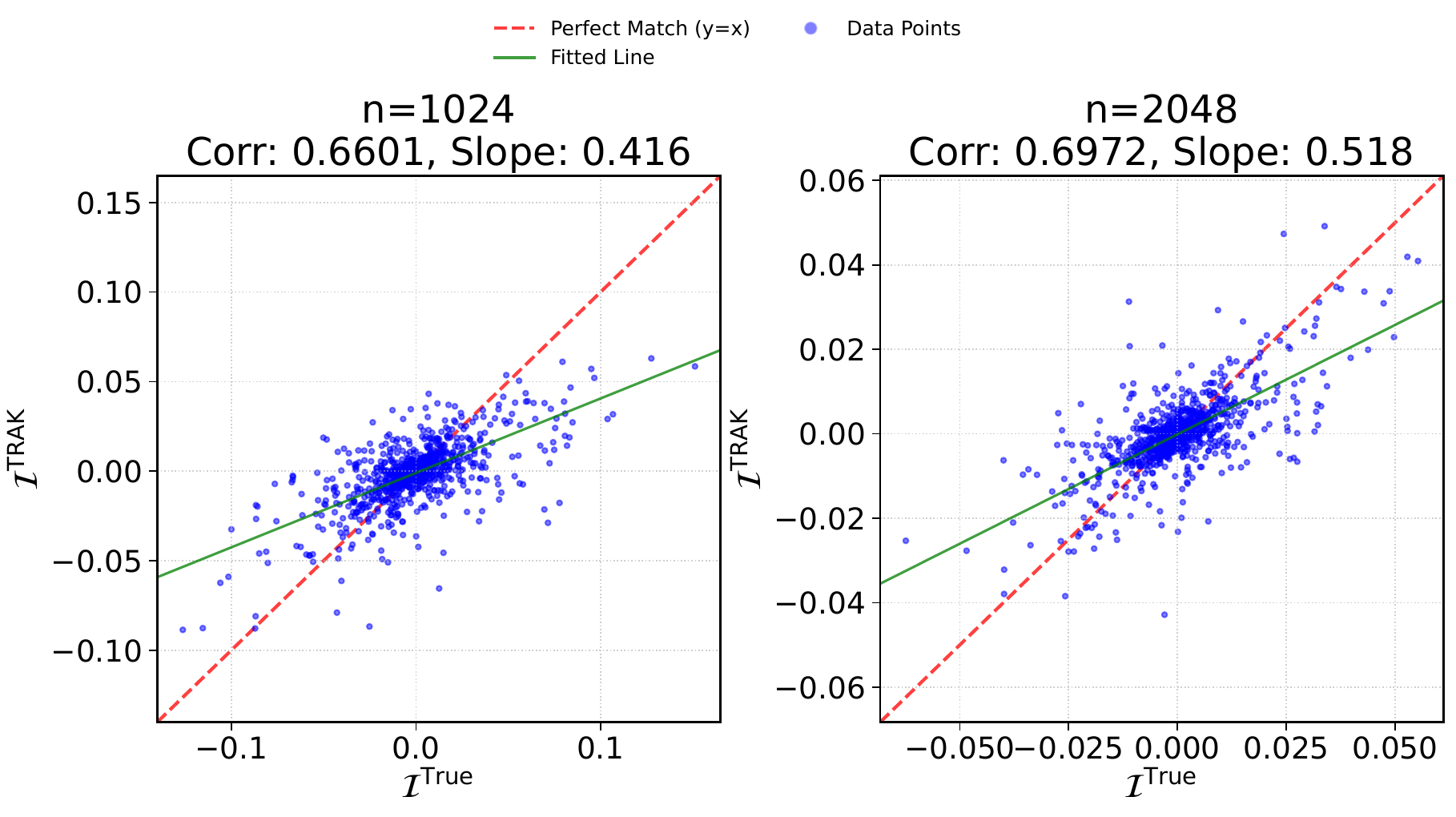}
    \includegraphics[width=0.3\linewidth]{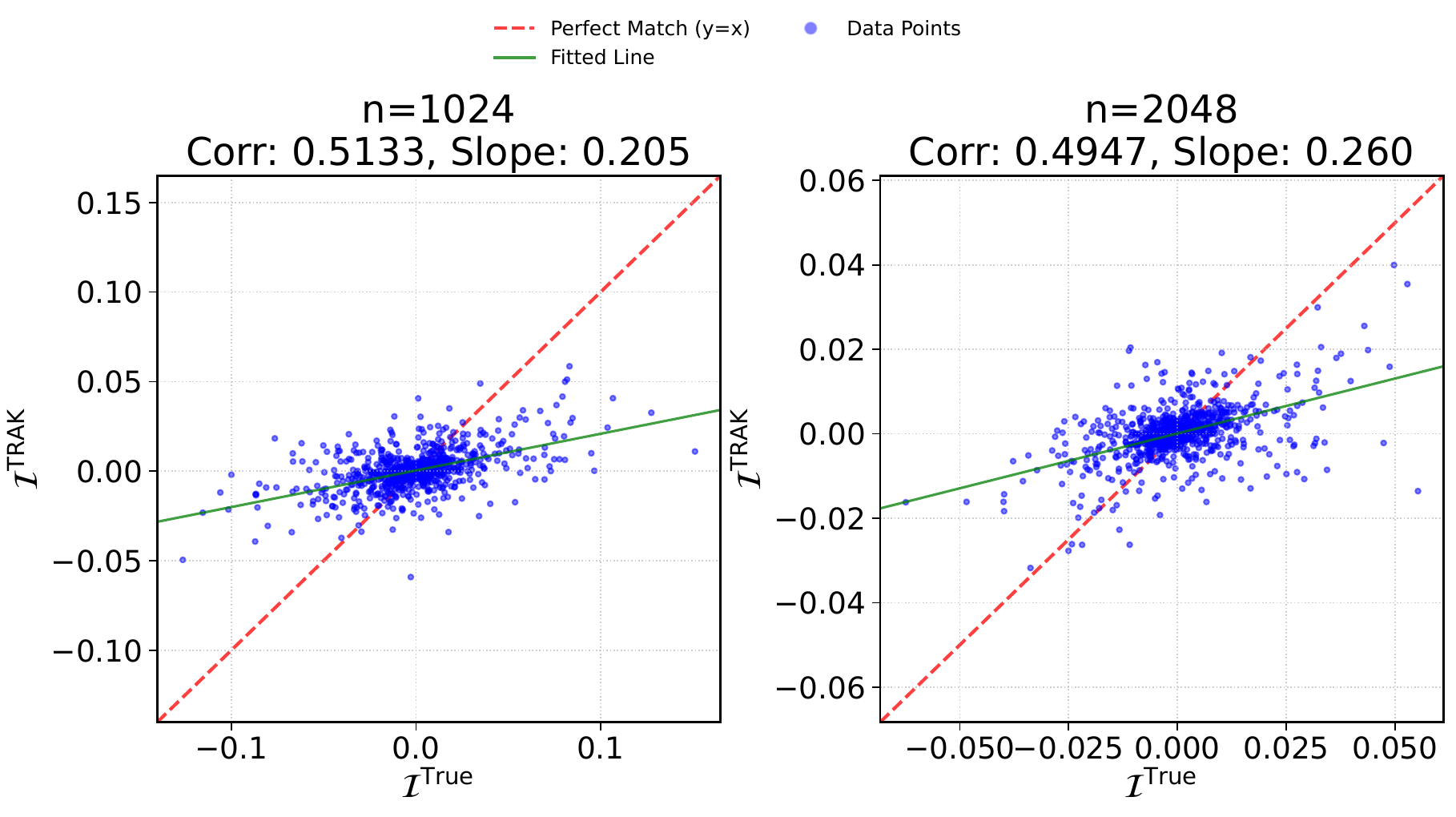}
    \caption{
        Binary logistic regression with projection (\(p=100\), \(n=1024\) and \(n=2048\)). The x-axis is \( \mathcal{I}^{\rm True} \); the y-axis is \( \mathcal{I}^{\rm TRAK} \). From left to right, projection dimensions are \(k = 75\), \(50\), and \(25\).
    }
    \label{fig:binary_proj}
\end{figure}

\begin{figure}[ht]
    \centering
    \includegraphics[width=0.3\linewidth]{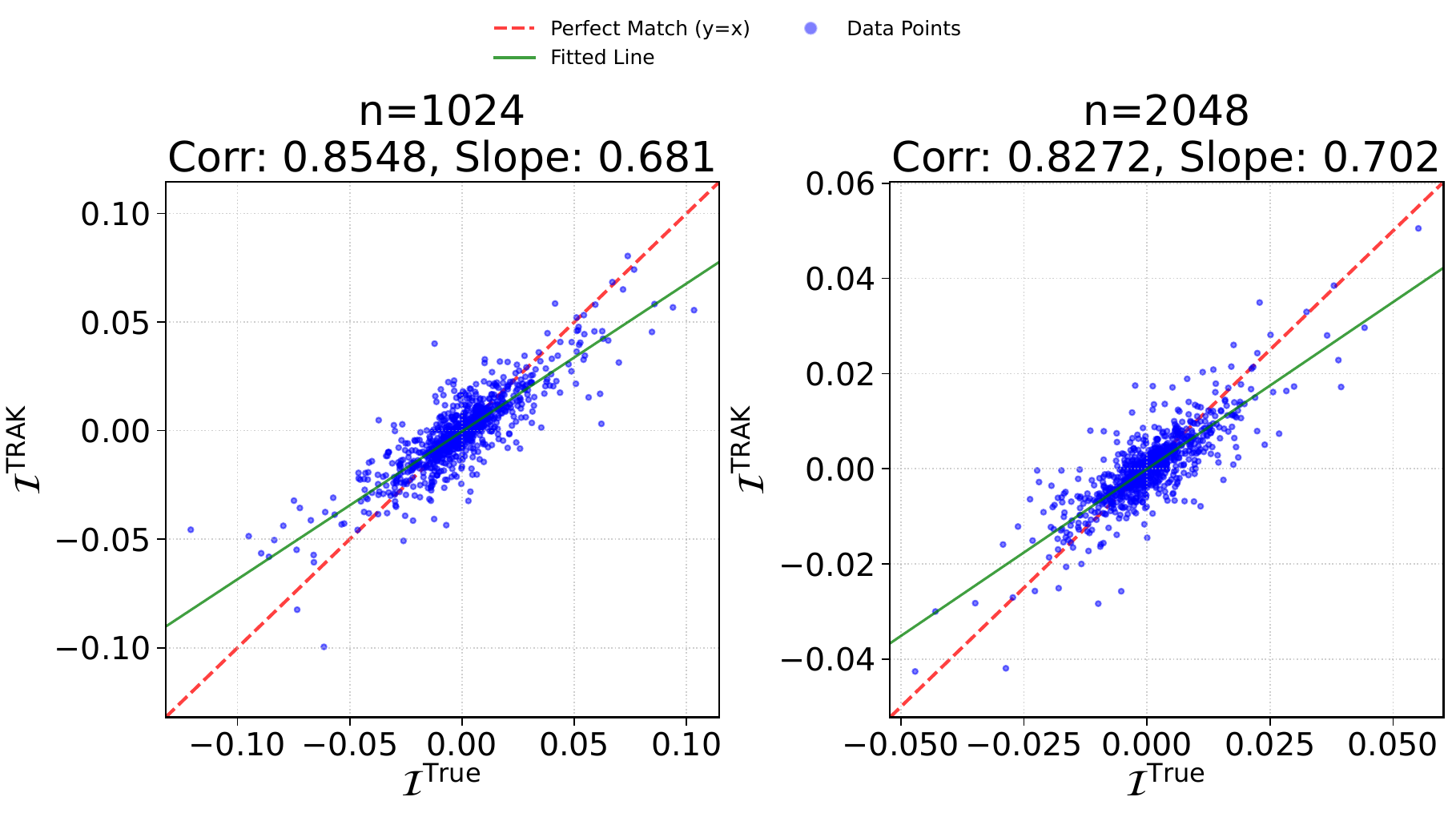}
    \includegraphics[width=0.3\linewidth]{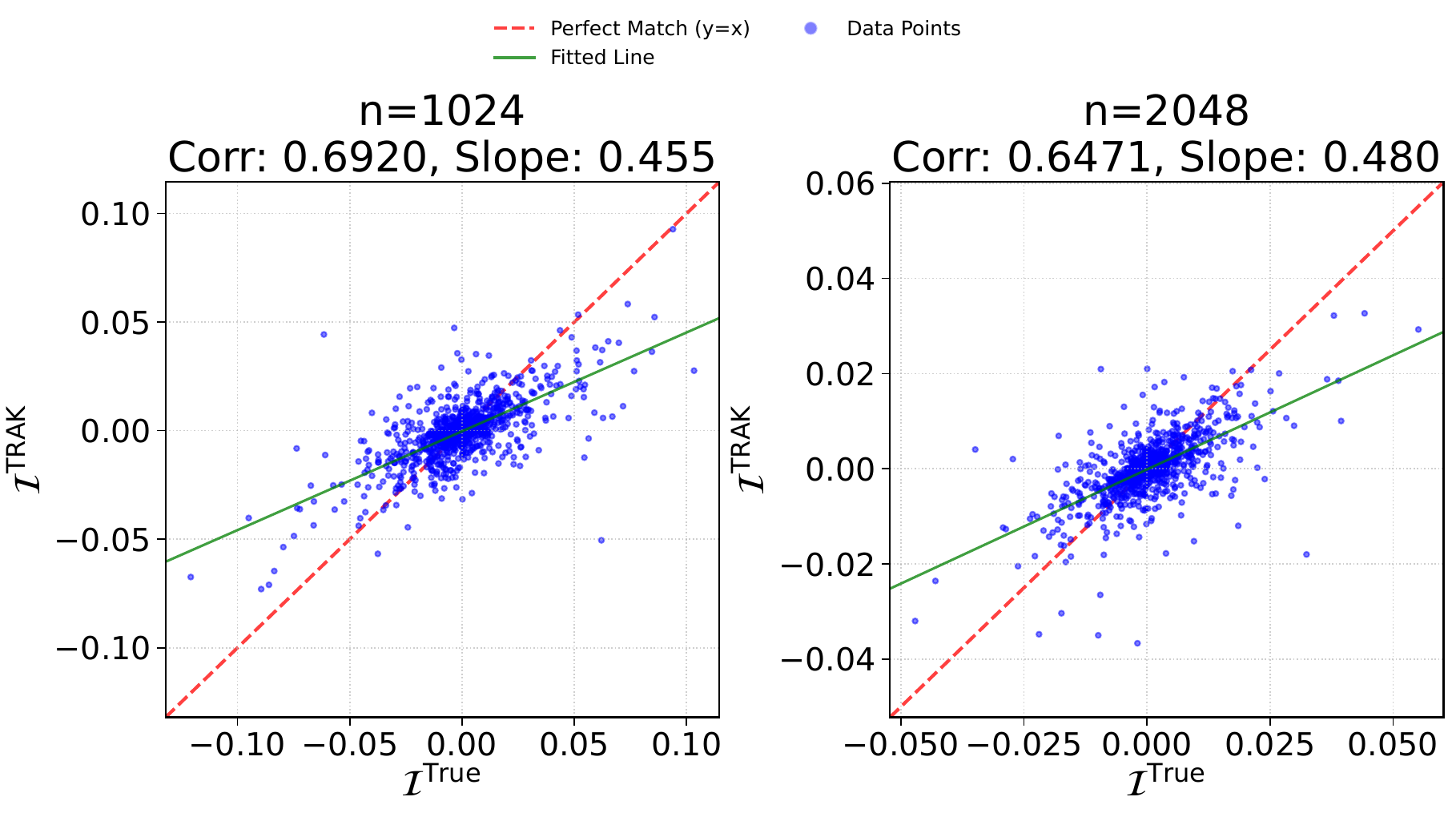}
    \includegraphics[width=0.3\linewidth]{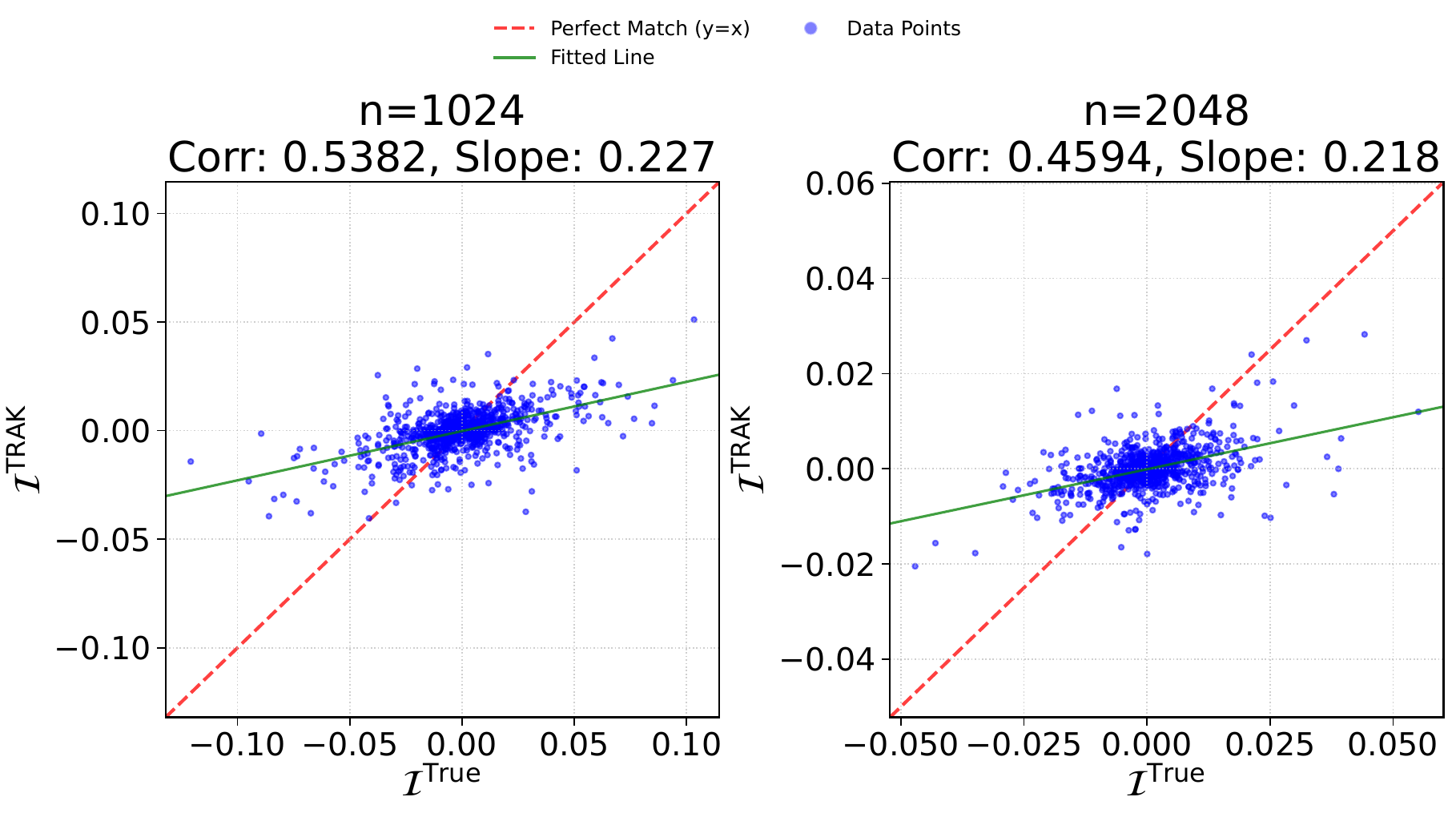}
    \caption{
        Poisson regression with projection. Settings are identical to Figure~\ref{fig:binary_proj}.
    }
    \label{fig:poisson_proj}
\end{figure}

\paragraph{Poisson Regression}
We next consider Poisson regression with a softplus mean function. 
% Let
% \[
% h(s) \triangleq \log(1+e^s), \qquad s=\bx^\top \bbeta,
% \]
% and define the negative log-likelihood (up to an additive constant) as
% \[
% \ell(y,s) = h(s) - y \log h(s).
% \]
% The first- and second-order derivatives with respect to \(s\) are
% \[
% \dot{\ell}(y,s) = \sigma(s)\Bigl(1-\frac{y}{h(s)}\Bigr),
% \]
% \[
% \ddot{\ell}(y,s) = \sigma(s)\bigl(1-\sigma(s)\bigr)\Bigl(1-\frac{y}{h(s)}\Bigr) + y\frac{\sigma(s)^2}{h(s)^2}.
% \]
% The response is generated as
% \[
% y_i \sim \mathrm{Poisson}\bigl(h(\bx_i^\top \bbeta^*)\bigr).
% \]
As shown in Figure~\ref{fig:binary_alo} and Figure~\ref{fig:poisson_proj}, we observe trends similar to those in binary classification, further validating Theorems~\ref{thm:g_base_ALO} and \ref{theo:magnitude_projection}

\subsubsection{Nonlinear Case}
\label{sec:K5_corr}
Here we present results for $5$-class classification, which can be compared with Figure~\ref{fig:multi_alo} and Figure~\ref{fig:multi_proj} in the main text.

\begin{figure}[H]
    \centering
    \includegraphics[width=0.48\linewidth]{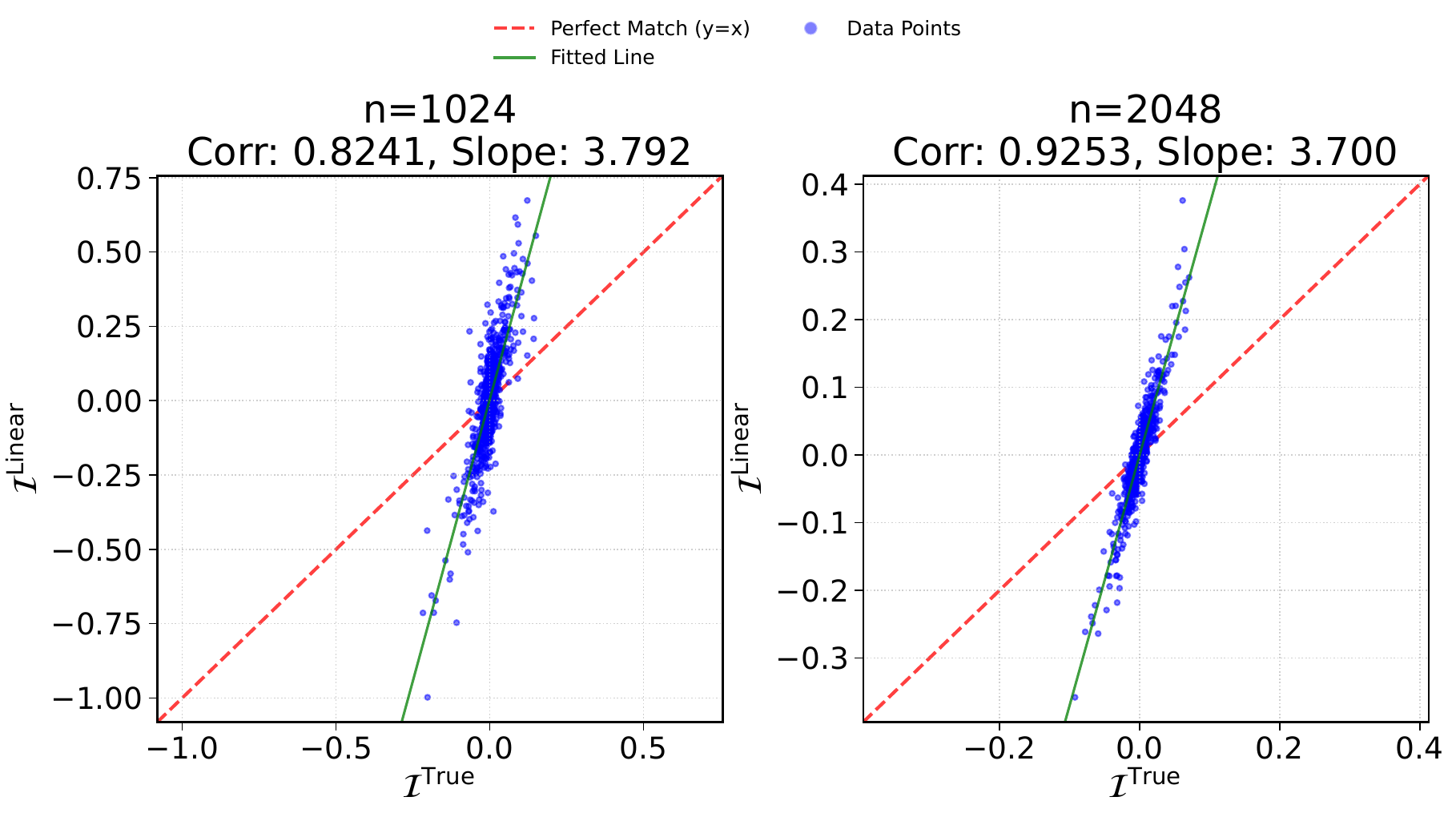}
    \includegraphics[width=0.48\linewidth]{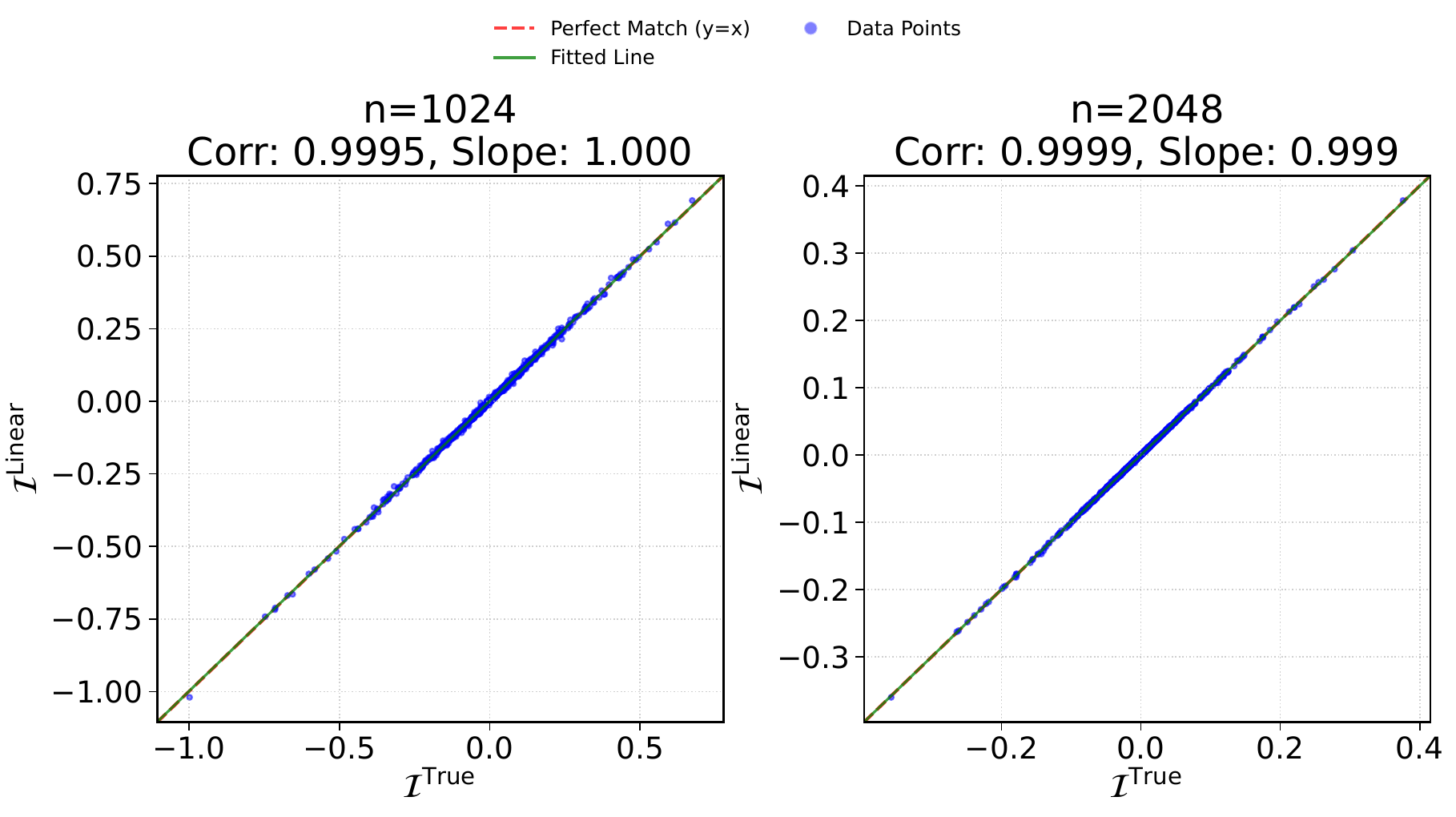}
    \caption{
        Correlation analysis for $5$-class classification with \(p=100\).
        \textbf{Left two panels:} Linearization step: x-axis is \(\mathcal{I}^{\rm True}\), y-axis is \(\mathcal{I}^{\rm Linear}\).
        \textbf{Right two panels:} ALO step: x-axis is \(\mathcal{I}^{\rm True}\), y-axis is \(\mathcal{I}^{\rm ALO}\).
    }
    \label{fig:multi_alo_K5}
\end{figure}

\begin{figure}[H]
    \centering
    \includegraphics[width=0.3\linewidth]{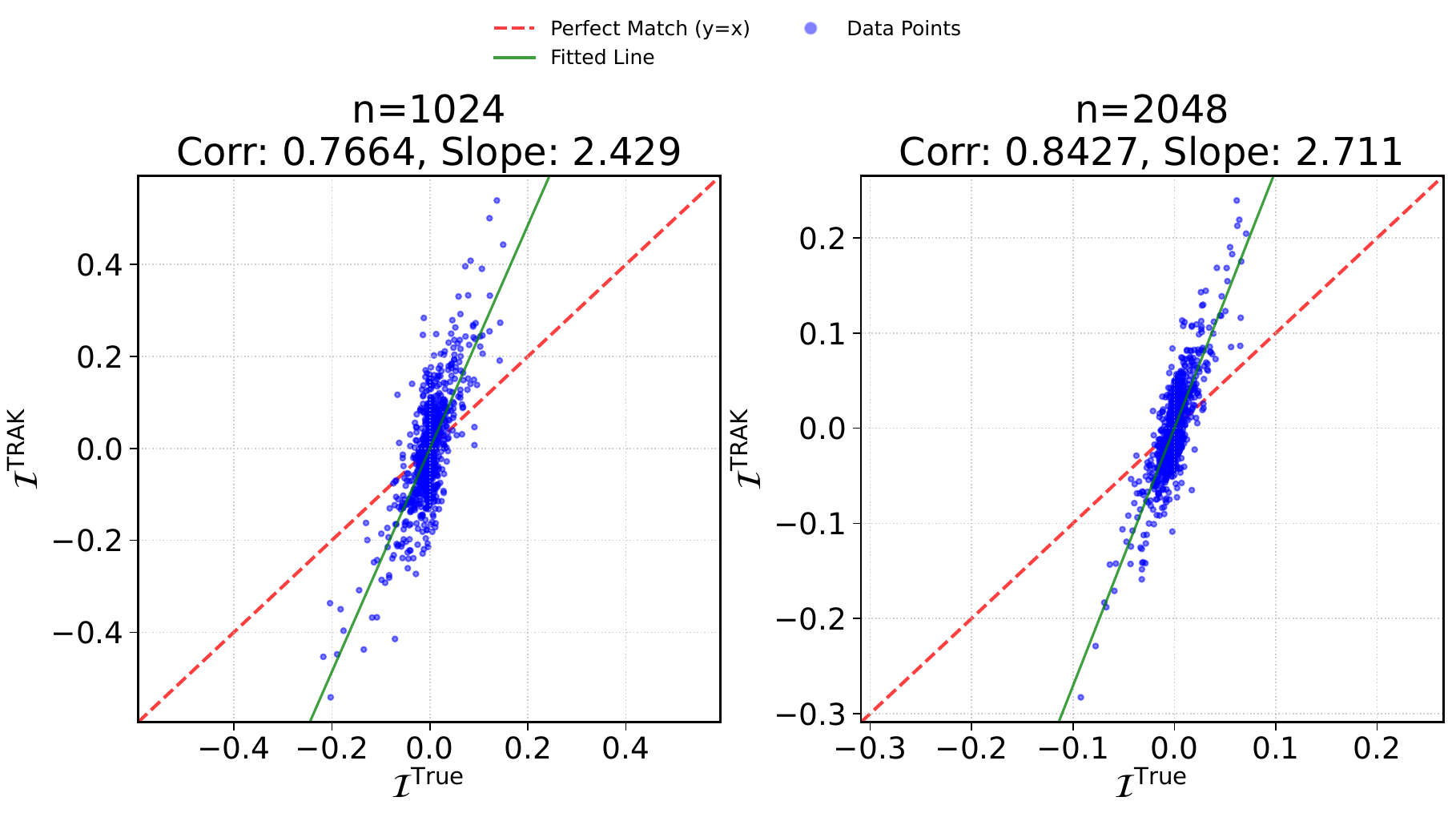}
    \includegraphics[width=0.3\linewidth]{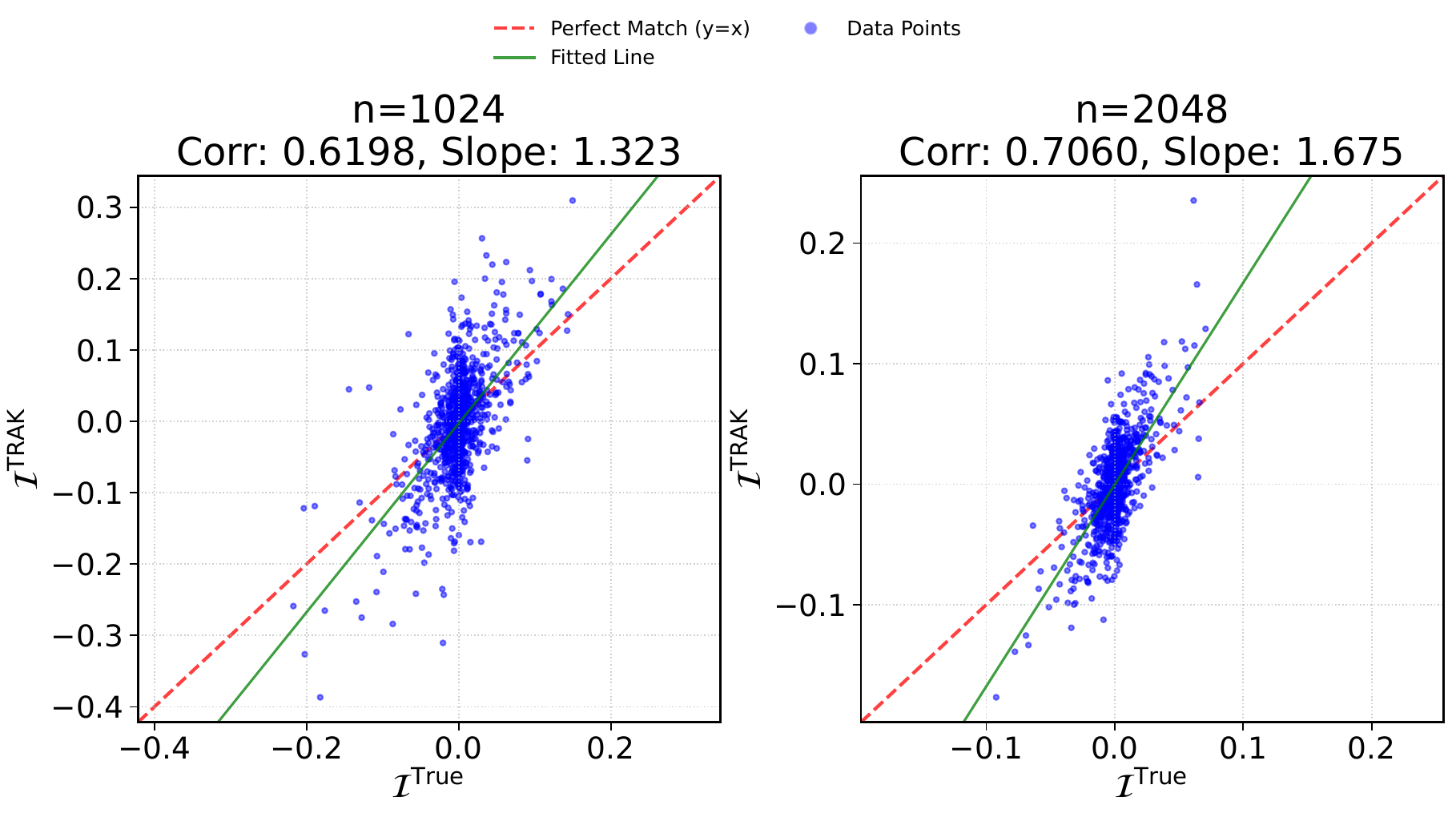}
    \includegraphics[width=0.3\linewidth]{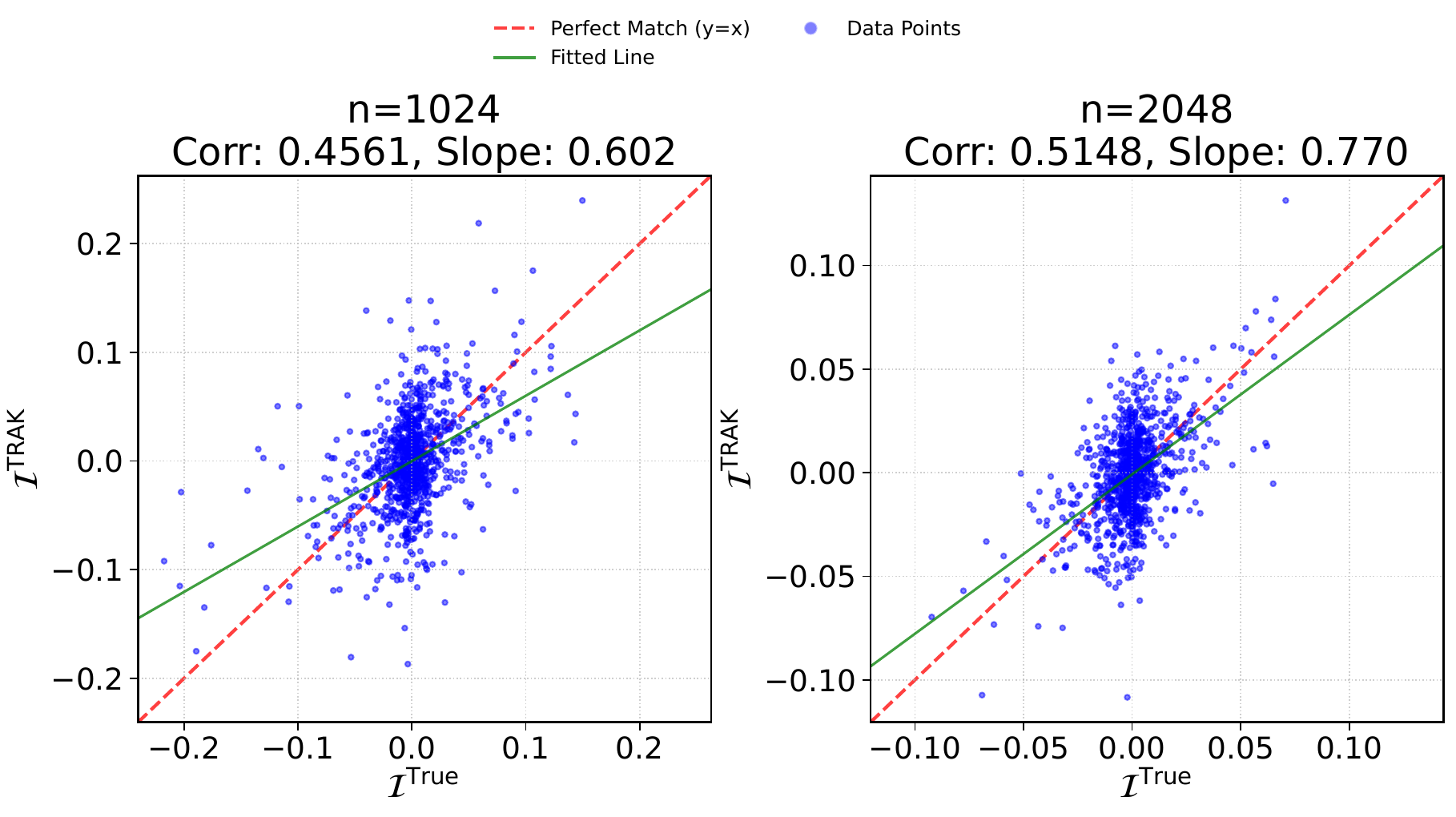}
    \caption{
        $5$-class classification with projection (\(p=100\), \(n=1024\) and \(n=2048\)). The x-axis is \( \mathcal{I}^{\rm True} \); the y-axis is \( \mathcal{I}^{\rm TRAK} \). From left to right, projection dimensions are \(k = 75\), \(50\), and \(25\).
    }
    \label{fig:multi_proj_K5}
\end{figure}

\subsection{Magnitude-Based Simulations}
\label{sec:magnitude_check}
\subsubsection{Simulations for Vanilla Linear Models}
\label{sec:linear_order}

We first consider the linear model \(f(\bx,\bbeta^*)=\bx^\top\bbeta^*\). Here, the linearization error is identically zero, so only the ALO approximation error must be examined.

We generate the rows \(\bx_1^\top,\ldots,\bx_n^\top\) of \(\bX\) independently from a mean-zero Gaussian distribution with covariance \(\bSigma\), where \(\bSigma\) is Toeplitz with \(\mathrm{cor}(X_{ij},X_{ij'})=0.1^{|j-j'|}\). We rescale \(\bbeta^*\) and \(\bSigma\) such that \(\|\bbeta^*\|^2=p\) and \(\bbeta^{*\top}\bSigma\bbeta^*=1\), satisfying the assumptions of Section~\ref{sec:linear_order}.

For each trial \(t\), we generate a dataset \(\bX^{(t)}\in\mathbb{R}^{n\times p}\) with responses \(y^{(t)}\in\mathbb{R}^n\). We repeat for \(100\) independent trials. In each trial, we independently sample \(10\) new test points and randomly remove \(10\) training points. Applying Theorem~\ref{thm:g_base_ALO}, this produces \(100\) error realizations per trial, yielding \(10{,}000\) realizations overall, which we use to construct empirical confidence intervals and median errors.

\paragraph{Binary Logistic Regression}
We first consider binary classification.

\begin{figure}[ht]
    \centering
    \includegraphics[width=0.23\linewidth]{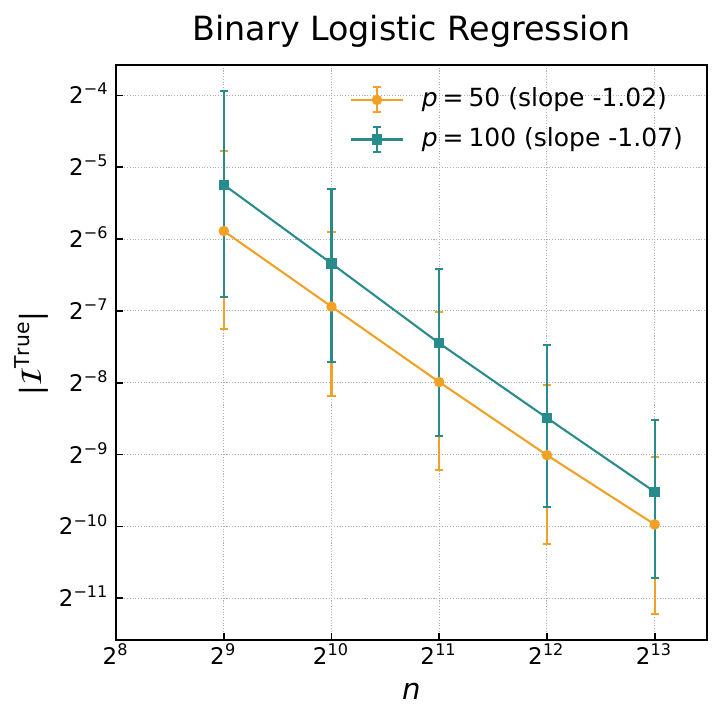}
    \includegraphics[width=0.23\linewidth]{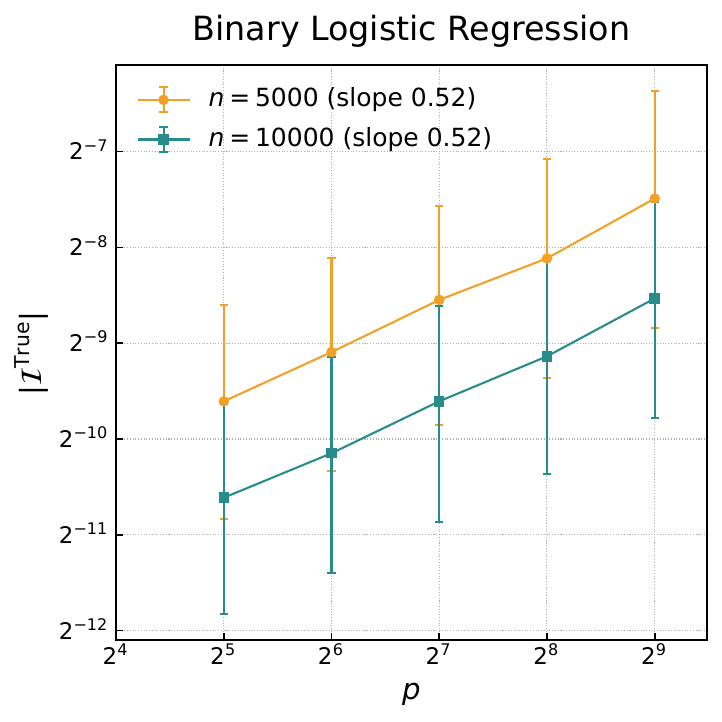}
    \includegraphics[width=0.23\linewidth]{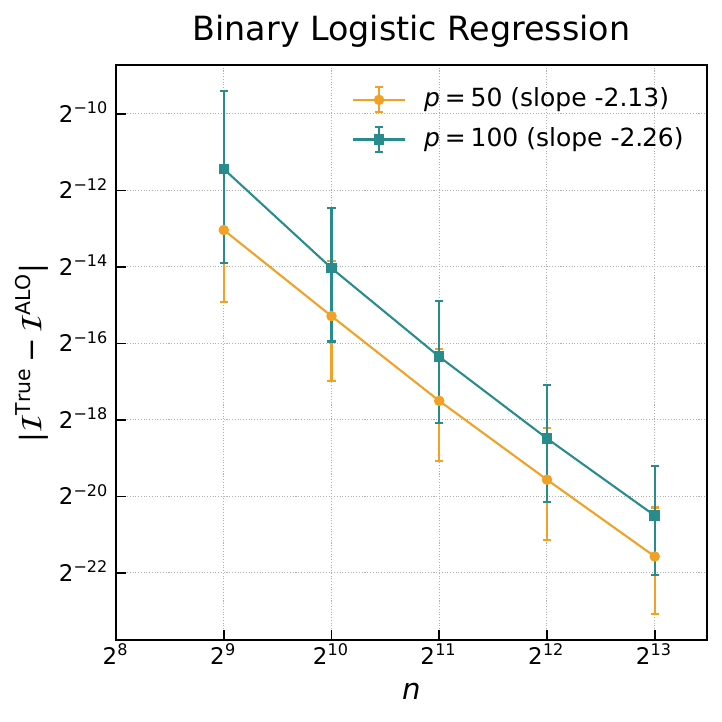}
    \includegraphics[width=0.23\linewidth]{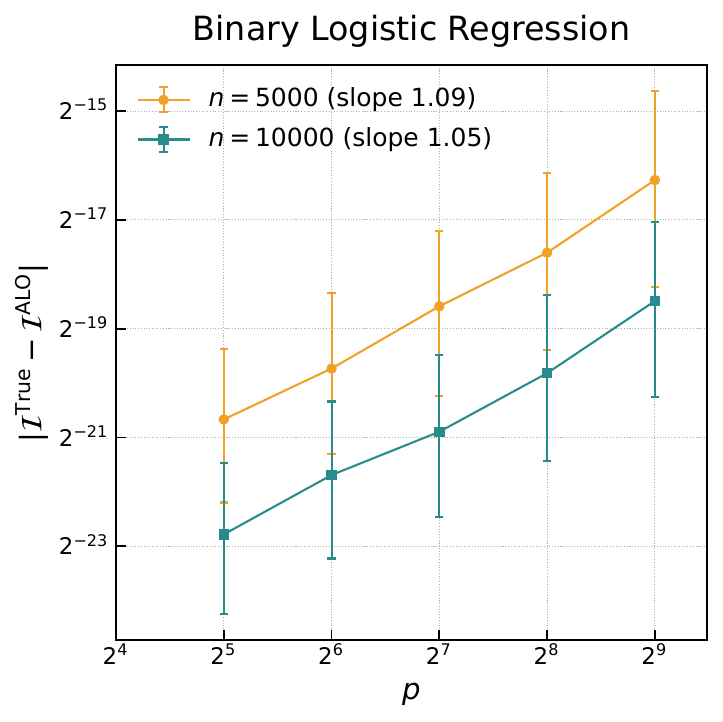}
    \caption{
        Binary logistic regression.
        \textbf{Left two panels:} \(|\mathcal{I}^{\rm True}|\).
        \textbf{Right two panels:} \(|\mathcal{I}^{\rm True} - \mathcal{I}^{\rm ALO}| \).
        Both axes are log-transformed. Medians are plotted; error bars show first and third quartiles.
    }
    \label{fig:binary}
\end{figure}

Figure~\ref{fig:binary} reports the magnitude of \(|\mathcal{I}^{\rm True}|\) and the approximation error \(|\mathcal{I}^{\rm True}-\mathcal{I}^{\rm ALO}|\) for varying \(n\) and \(p\) (log scale). The left panels show that \(|\mathcal{I}^{\rm True}|\) scales approximately as \(p^{1/2}/n\), matching Proposition~\ref{theo:I_true} and demonstrating that the bound is essentially sharp under our normalization (\(\|\bbeta^*\|^2=p\)).

The right panels show that \(|\mathcal{I}^{\rm True}-\mathcal{I}^{\rm ALO}|\) decays roughly as \(p/n^2\) when \(p\ll n\), which is smaller than the general upper bound \(p^{1/2}/n^{3/2}\) from Theorem~\ref{thm:g_base_ALO}. When \(p\asymp n\), the two rates coincide. This suggests the bound in Theorem~\ref{thm:g_base_ALO} could be sharpened under stronger assumptions; nevertheless, it suffices to conclude that the ALO error is negligible in magnitude.

\paragraph{Poisson Regression}
We next consider Poisson regression with a softplus mean function. Figure~\ref{fig:poisson} presents the corresponding results. The scaling of both \(|\mathcal{I}^{\rm True}|\) and \(|\mathcal{I}^{\rm True}-\mathcal{I}^{\rm ALO}|\) closely mirrors the binary logistic case, further confirming Proposition~\ref{theo:I_true} and Theorem~\ref{thm:g_base_ALO} for linear models beyond logistic regression.
\begin{figure}[t]
    \centering
    \includegraphics[width=0.23\linewidth]{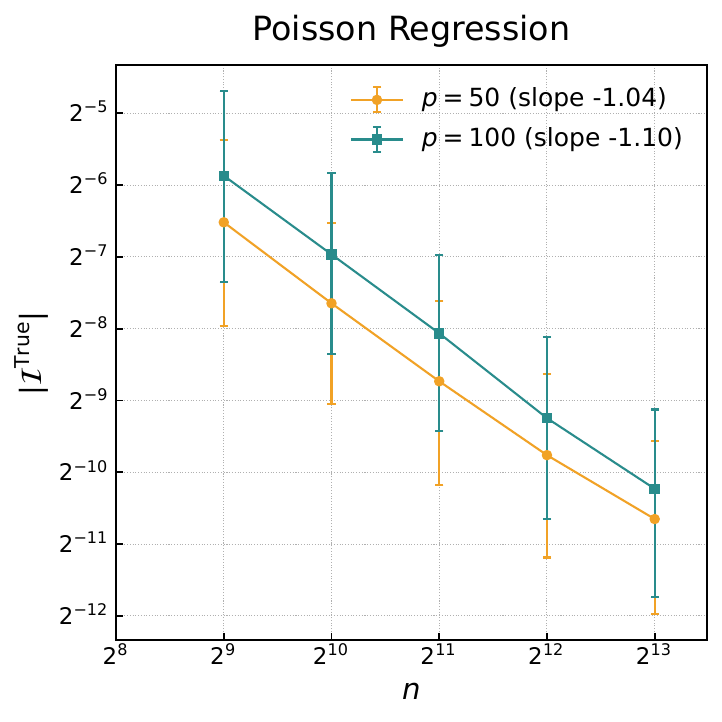}
    \includegraphics[width=0.23\linewidth]{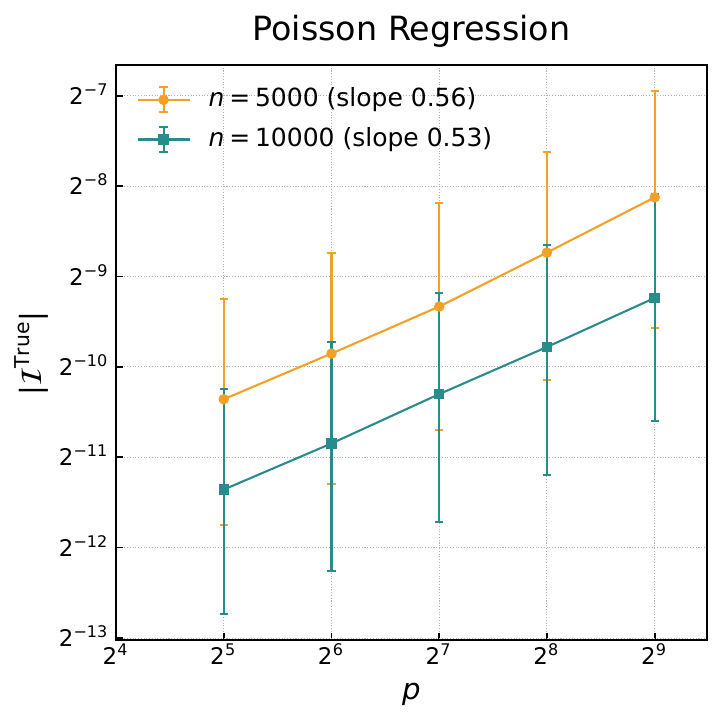}
    \includegraphics[width=0.23\linewidth]{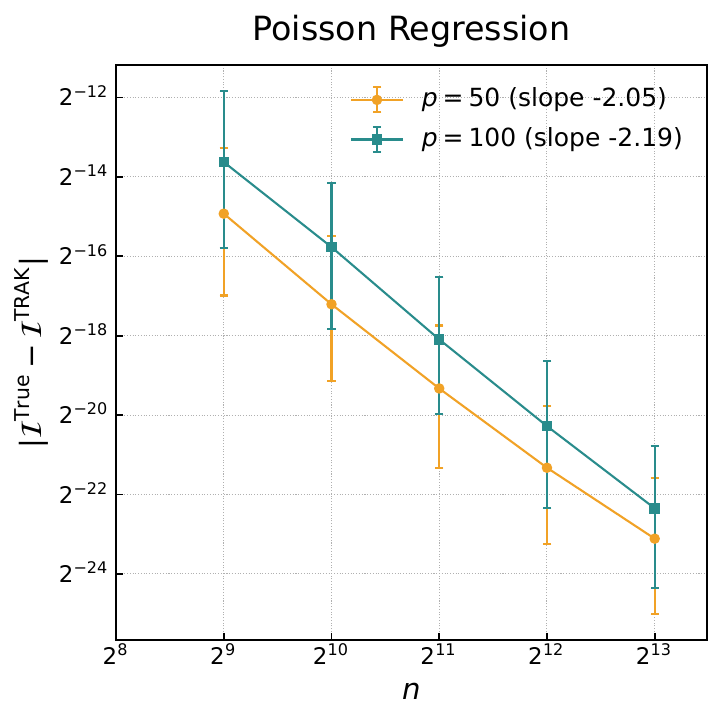}
    \includegraphics[width=0.23\linewidth]{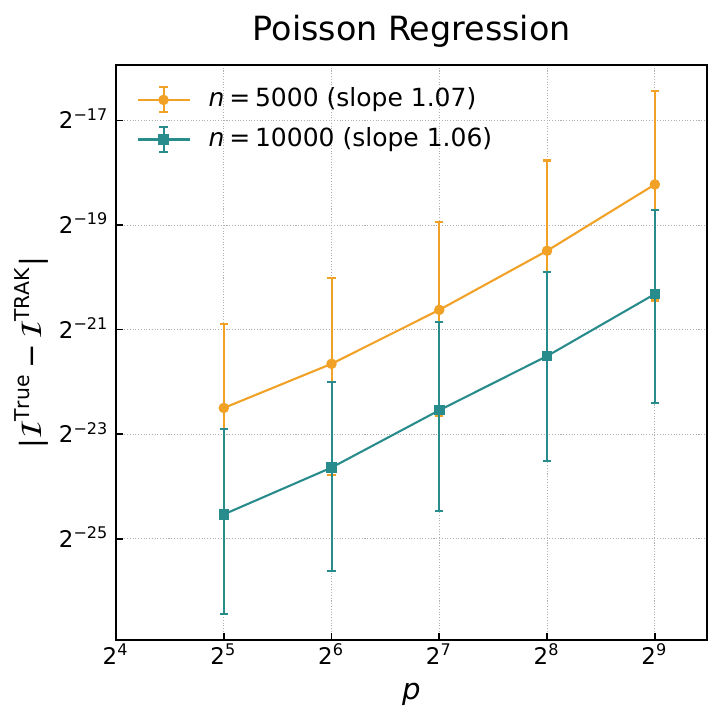}
    \caption{
        Poisson regression.
        \textbf{Left two panels:} \(|\mathcal{I}^{\rm True}|\).
        \textbf{Right two panels:} \(|\mathcal{I}^{\rm True} - \mathcal{I}^{\rm ALO}| \).
        Both axes are log-transformed. Medians are plotted; error bars show first and third quartiles.
    }
    \label{fig:poisson}
\end{figure}
\subsubsection{Simulations for Multi-Class Classification}
\label{sec:non-linear_order}
We now consider nonlinear functions, using multi-class classification (\(K=3\) and \(K=5\)) as an example. In short, we obtain similar scaling for \(\mathcal{I}^{\rm True}\) and \(|\mathcal{I}^{\rm Linear} - \mathcal{I}^{\rm ALO}|\) as in the linear case, and we further validate the order of \(|\mathcal{I}^{\rm True} - \mathcal{I}^{\rm Linear}|\). The experimental setup matches Section~\ref{sec:linear_order}.
\begin{figure}[ht]
    \centering
    \includegraphics[width=0.16\linewidth]{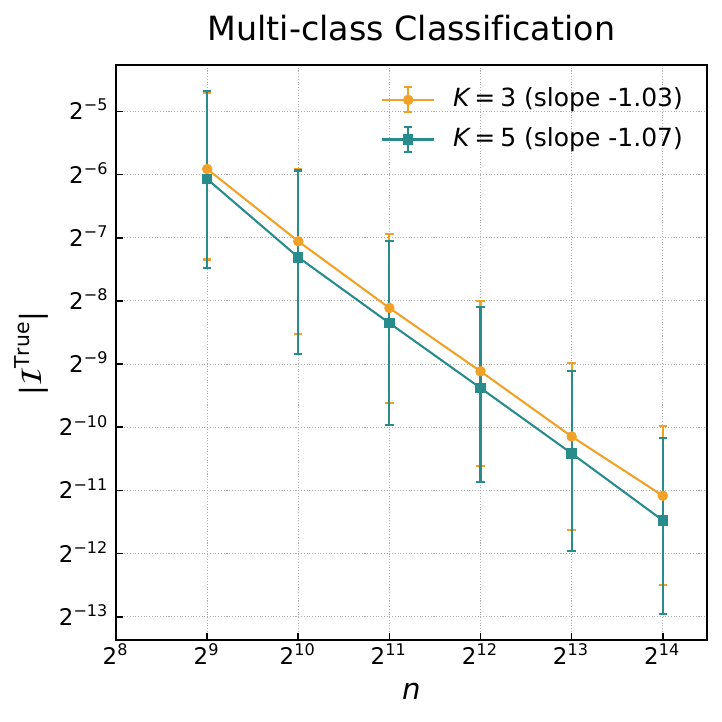}
    \includegraphics[width=0.16\linewidth]{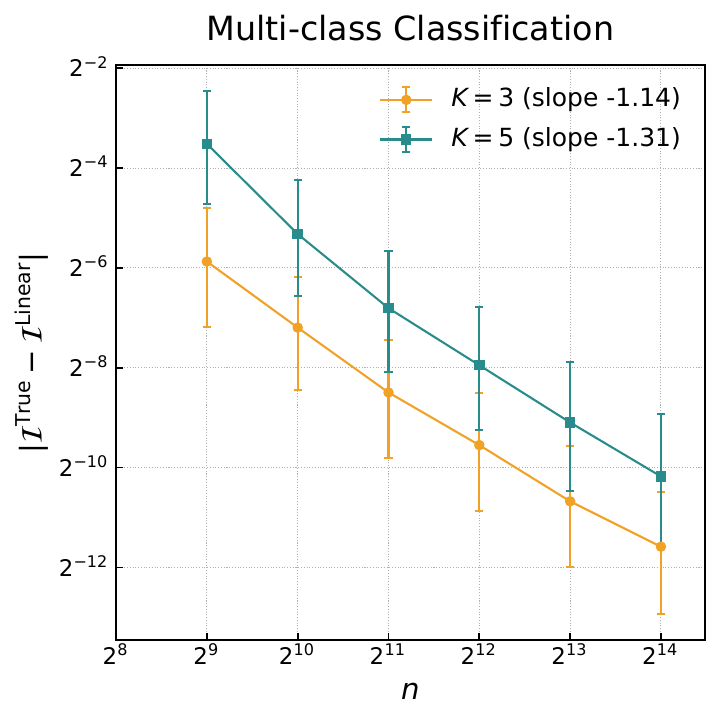}
    \includegraphics[width=0.16\linewidth]{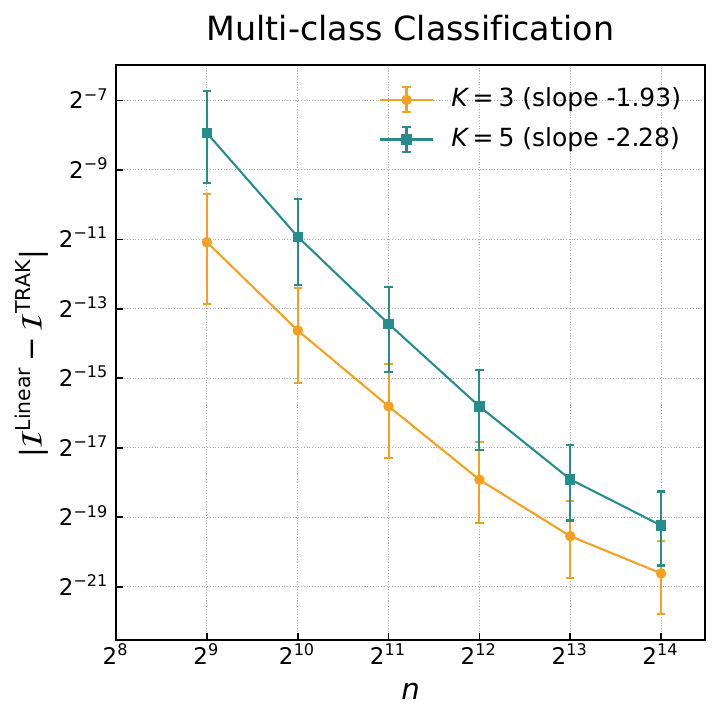}
    \includegraphics[width=0.16\linewidth]{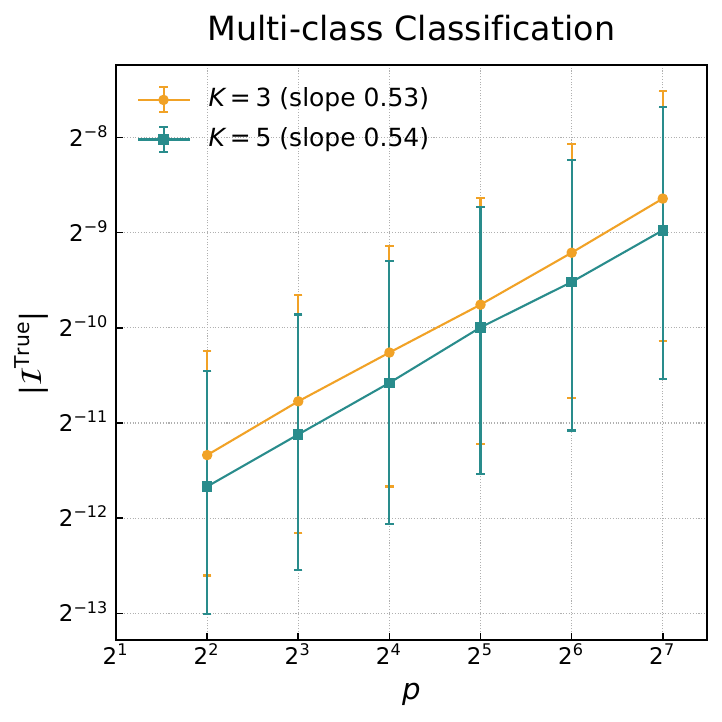}
    \includegraphics[width=0.16\linewidth]{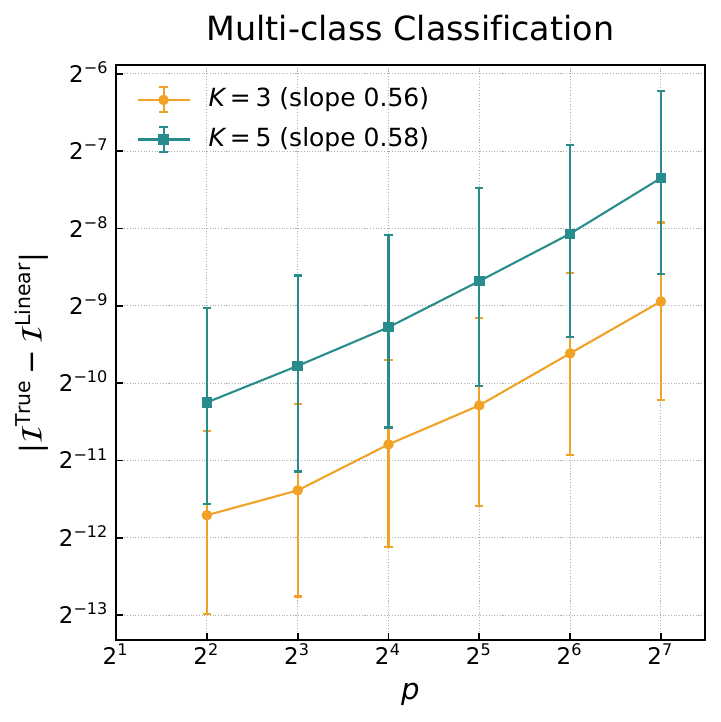}
    \includegraphics[width=0.16\linewidth]{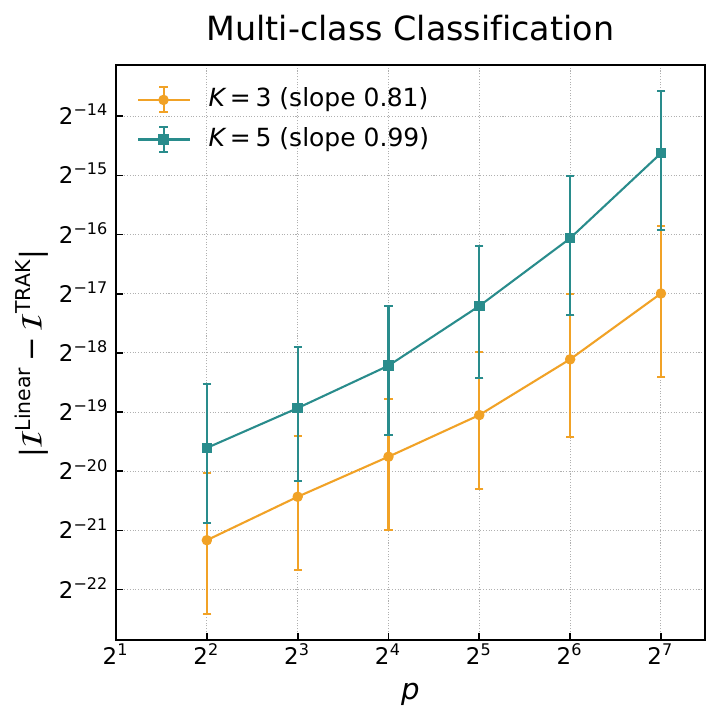}
    \caption{
        Multi-class classification.
        \textbf{Left:} \(|\mathcal{I}^{\rm True}|\).
        \textbf{Middle:} \(|\mathcal{I}^{\rm True}-\mathcal{I}^{\rm Linear}|\).
        \textbf{Right:} \(|\mathcal{I}^{\rm Linear} - \mathcal{I}^{\rm ALO}| \).
        Both axes are log-transformed. Medians are plotted; error bars show first and third quartiles. 
    }
    \label{fig:multi-order}
\end{figure}
As shown in Figure~\ref{fig:multi-order}, the orders in \(n\) and \(p\) of
\(|\mathcal{I}^{\rm True}|\) and
\(|\mathcal{I}^{\rm Linear} - \mathcal{I}^{\rm ALO}|\)
are consistent with those described in Section~\ref{sec:linear_order}.
For \(|\mathcal{I}^{\rm True} - \mathcal{I}^{\rm Linear}|\), the order is approximately
\(\sqrt{p}/n\), validating both the correctness and sharpness of
Theorem~\ref{theo:step1}.

Finally, we examine the order of the projection step in Figure~\ref{fig:projection_order}.
In the top row of Figure~\ref{fig:projection_order}, we present the independent case, where $\bz_i$ and $\bz_{\new}$ are independent.
The first panel fixes $p=100$ and varies $n$, while the second panel fixes $n=5{,}000$ and varies $p$.
In both cases, the magnitude of $\mathcal{I}^{\rm TRAK}$ scales as $\propto n^{-1} p^{1/2}$, which matches the order of $\mathcal{I}^{\rm ALO}$ in the independent setting.
Moreover, the second and fourth panels show that the magnitude of $\mathcal{I}^{\rm TRAK}$ scales as $\propto k^{1/2}$, which validates the sharpness of the bound in Theorem~\ref{theo:magnitude_projection} for the independent case.

\begin{figure}[ht]
    \centering
    \includegraphics[width=0.24\linewidth]{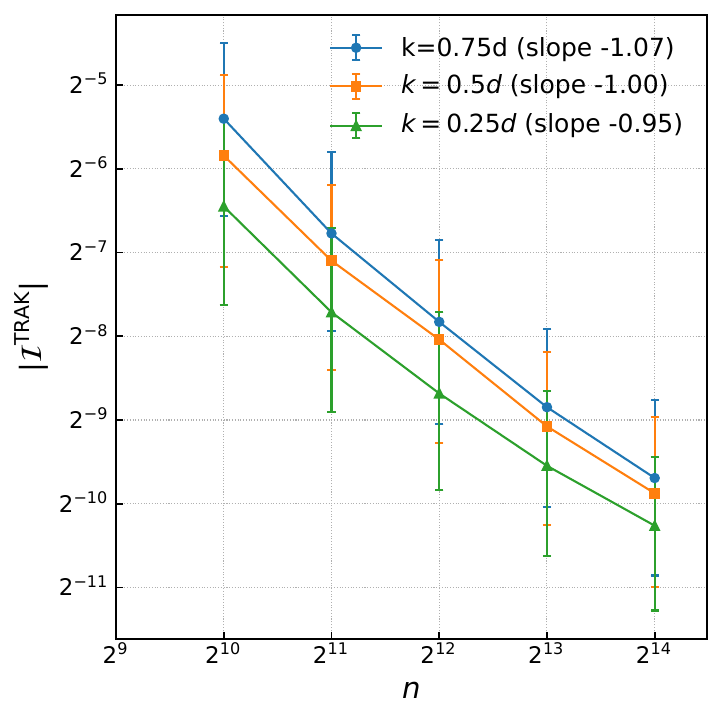}
    \includegraphics[width=0.24\linewidth]{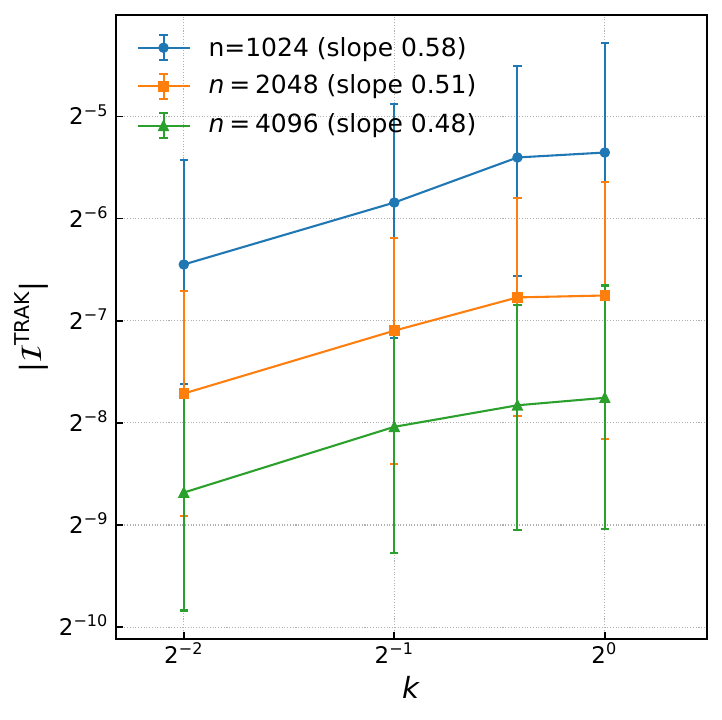}
    \includegraphics[width=0.24\linewidth]{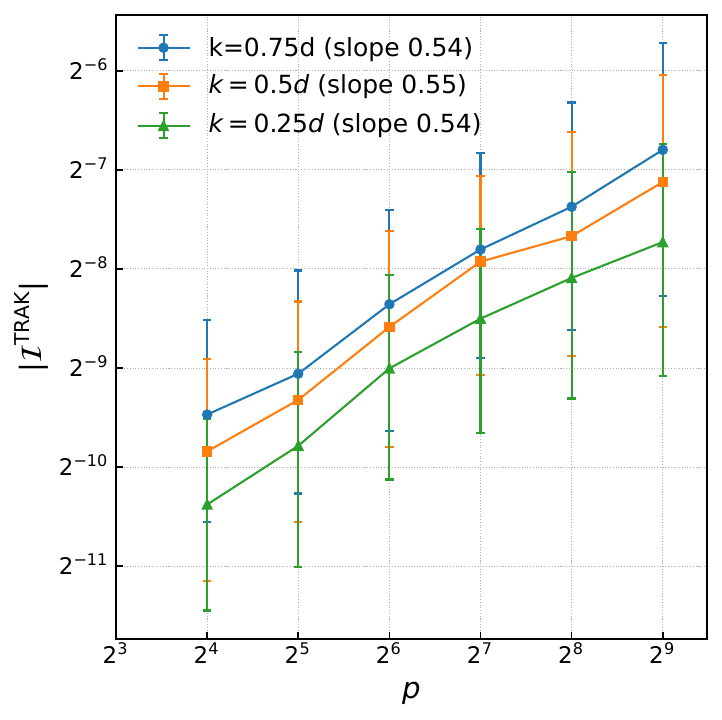}
    \includegraphics[width=0.24\linewidth]{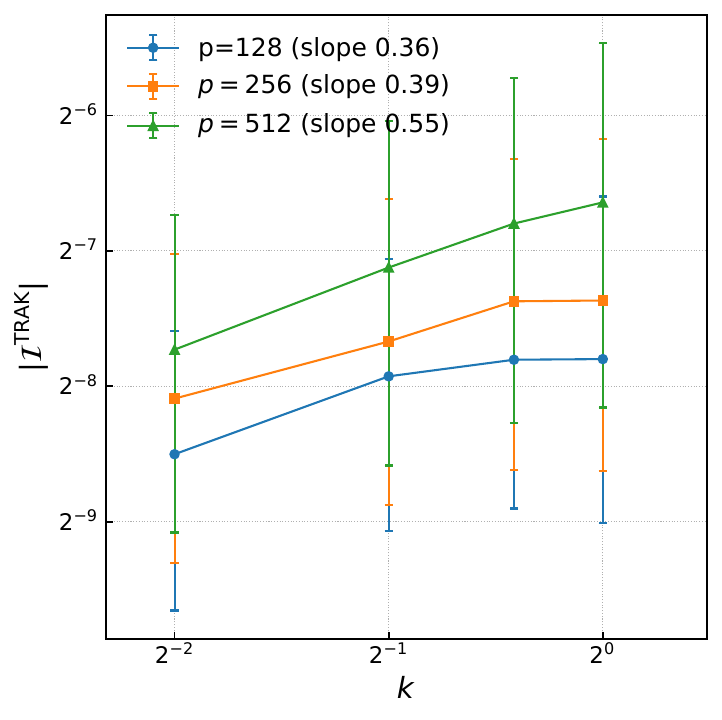}
    \includegraphics[width=0.24\linewidth]{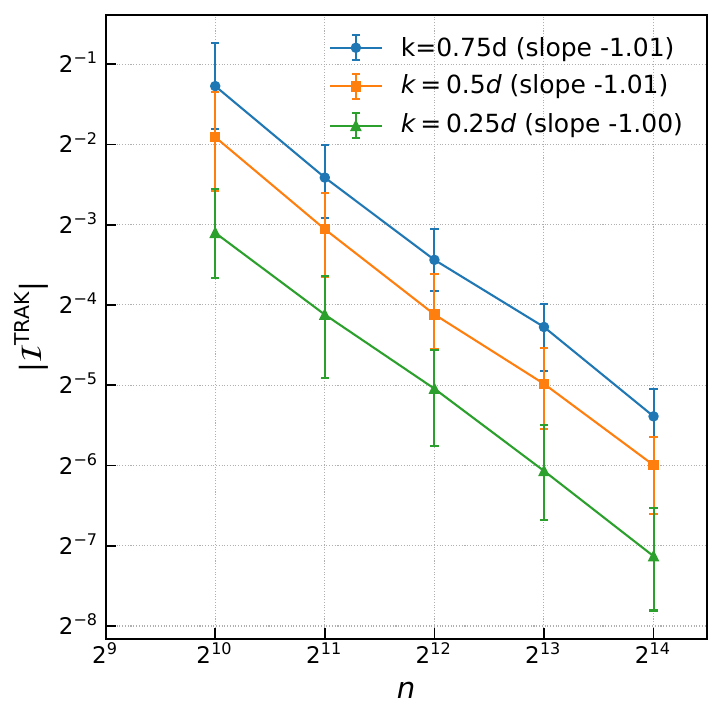}
    \includegraphics[width=0.24\linewidth]{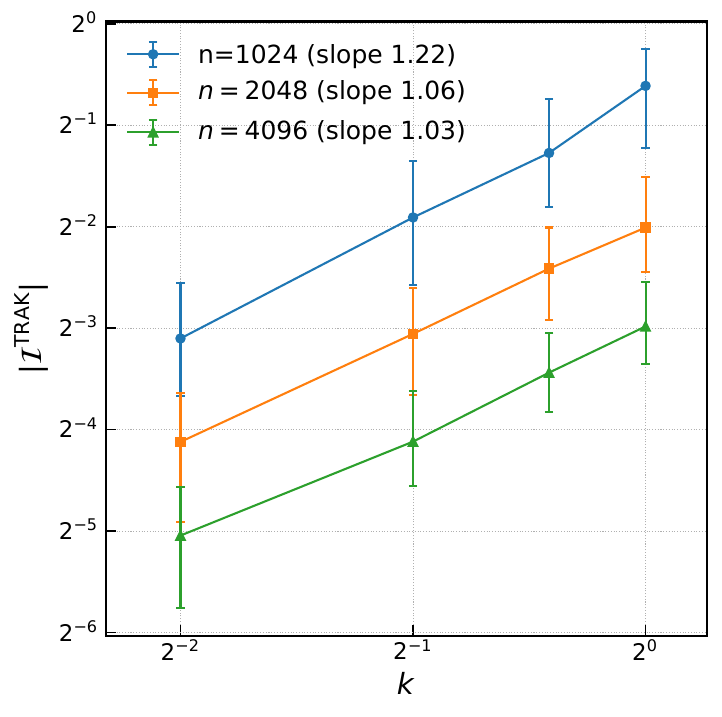}
    \includegraphics[width=0.24\linewidth]{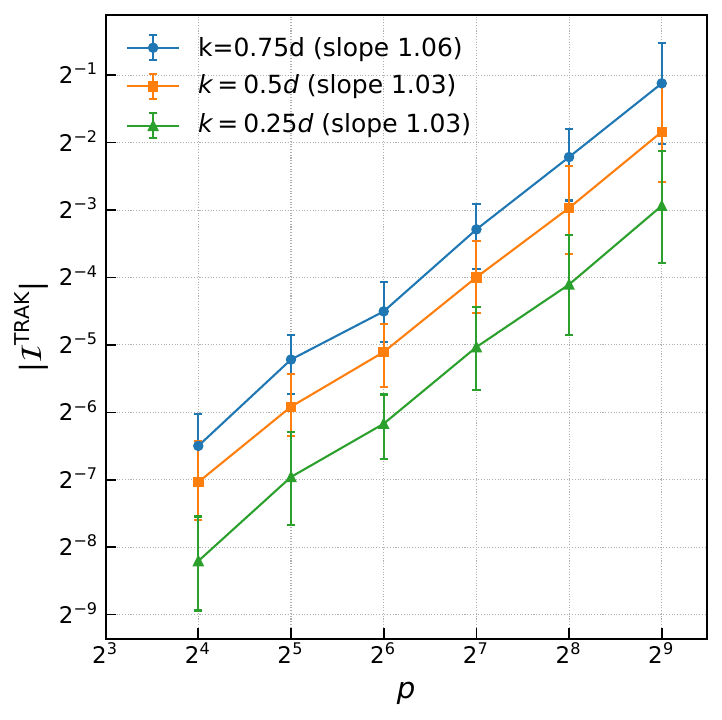}
    \includegraphics[width=0.24\linewidth]{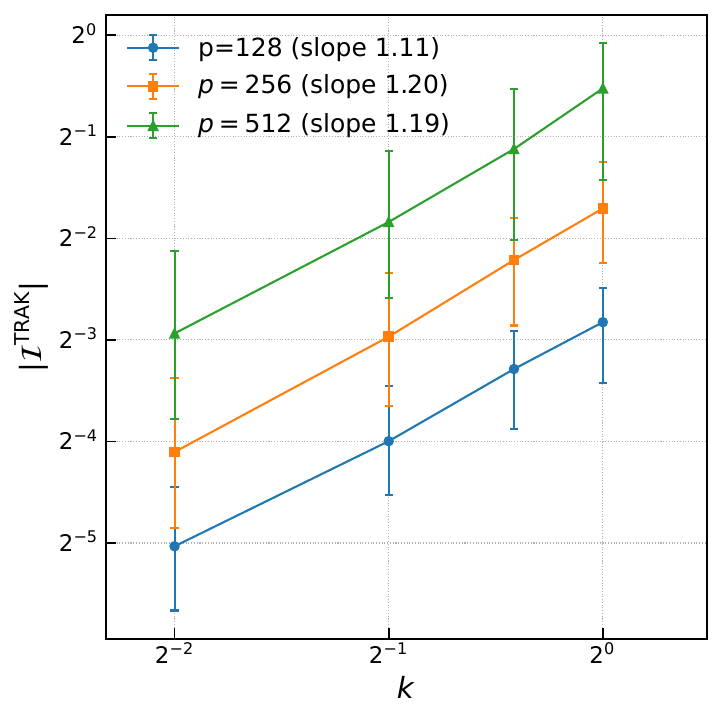}
    \caption{$3$-class classification: projection magnitude order check.
    \textbf{Left two panels:} $p$ is fixed at $100$;
    \textbf{Right two panels:} $n$ is fixed at $5{,}000$; \textbf{Top row:} the independent case where $\bz_i$ and $\bz_{\new}$ are independent;
    \textbf{Bottom row:} the dependent case $\bz_i=\bz_{\new}$.}
    \label{fig:projection_order}
\end{figure}

\subsection{From Independence to Dependence}
\label{sec:dependent}
In this section, we present results for the case where the test point $\bz_{\new}$ is dependent on the training point $\bz_i$. 
Specifically, we study the magnitude and correlation properties of
$\mathcal{I}^{\rm True}(\bz_i,\bz_i)$,
$\mathcal{I}^{\rm Linear}(\bz_i,\bz_i)$,
and $\mathcal{I}^{\rm ALO}(\bz_i,\bz_i)$.

\subsubsection{Magnitude-Based Results}
\label{sec:zizi-mag}
\begin{figure}[ht]
    \centering
    \includegraphics[width=0.45\linewidth]{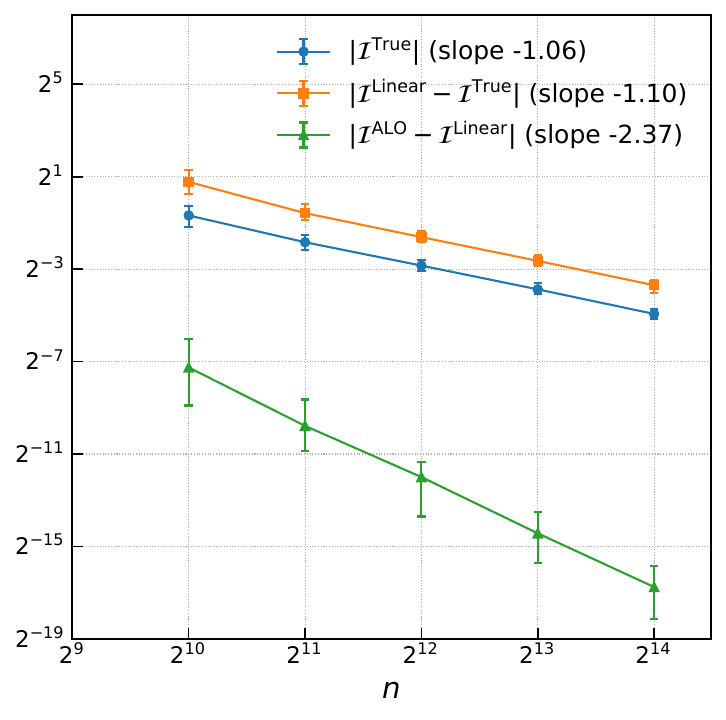}
    \includegraphics[width=0.45\linewidth]{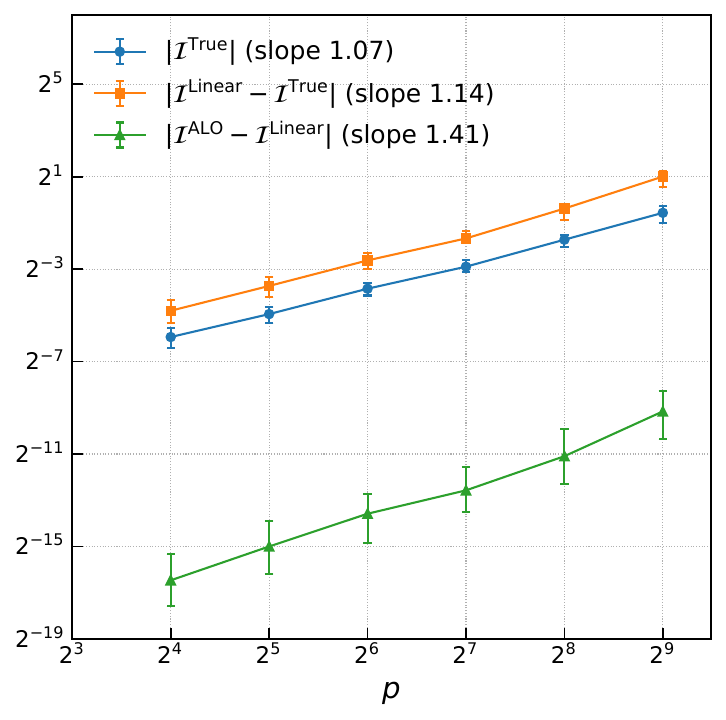}
    \caption{
        $5$-class classification.
        Both axes are log-transformed. Medians are plotted, and error bars indicate the first and third quartiles.
        The left panel fixes $p=100$, while the right panel fixes $n=5000$.
    }
    \label{fig:order_zizi}
\end{figure}
In Figure~\ref{fig:order_zizi}, we select $100$ training datapoints as $\bz_i$ for each $n$ and $p$, then plot the corresponding influence functions.
As shown in Figure~\ref{fig:order_zizi}, we plot the magnitudes of
$|\mathcal{I}^{\rm True}(\bz_i,\bz_i)|$,
$|\mathcal{I}^{\rm True}(\bz_i,\bz_i)-\mathcal{I}^{\rm Linear}(\bz_i,\bz_i)|$,
and
$|\mathcal{I}^{\rm Linear}(\bz_i,\bz_i)-\mathcal{I}^{\rm ALO}(\bz_i,\bz_i)|$
as one of $n$ and $p$ varies while the other is held fixed.
Using a similar experimental setting as in Section~\ref{sec:non-linear_order}, we observe that the empirical orders of
$|\mathcal{I}^{\rm True}(\bz_i,\bz_i)|$
and
$|\mathcal{I}^{\rm True}(\bz_i,\bz_i)-\mathcal{I}^{\rm Linear}(\bz_i,\bz_i)|$
are approximately $\frac{p}{n}$.
This observation validates \Cref{theo:I_true_corr} and \Cref{theo:step1_corr}.

For
$|\mathcal{I}^{\rm Linear}(\bz_i,\bz_i)-\mathcal{I}^{\rm ALO}(\bz_i,\bz_i)|$,
the empirical order is even smaller than $\frac{p^{1.5}}{n^2}$, which is below the bound established in Theorem~\ref{thm:g_base_ALO_ii}.
As discussed previously in Appendix~\ref{sec:linear_order}, although the bound in Theorem~\ref{thm:g_base_ALO_ii} may not be sharp, it is already sufficient to demonstrate that the ALO step is benign and preserves the magnitude of the influence function.

Finally, we examine the order of the projection step in Figure~\ref{fig:projection_order}.
In the bottom row of Figure~\ref{fig:projection_order}, we present the dependent case, where $\bz_i=\bz_{\new}$.
The first panel fixes $p=100$ and varies $n$, while the second panel fixes $n=5{,}000$ and varies $p$.
In both cases, the magnitude of $\mathcal{I}^{\rm TRAK}$ scales as $\propto n^{-1} p^{1}$, which matches the order of $\mathcal{I}^{\rm ALO}$ in the dependent setting.

Moreover, the second and fourth panels show that the magnitude of $\mathcal{I}^{\rm TRAK}$ scales as $\propto k$, which validates the sharpness of the bound in Theorem~\ref{theo:magnitude_projection} for the dependent case.

\subsubsection{Correlation-Based Results}
\label{sec:zizi-corr}
\begin{figure}[ht]
    \centering
    \includegraphics[width=0.48\linewidth]{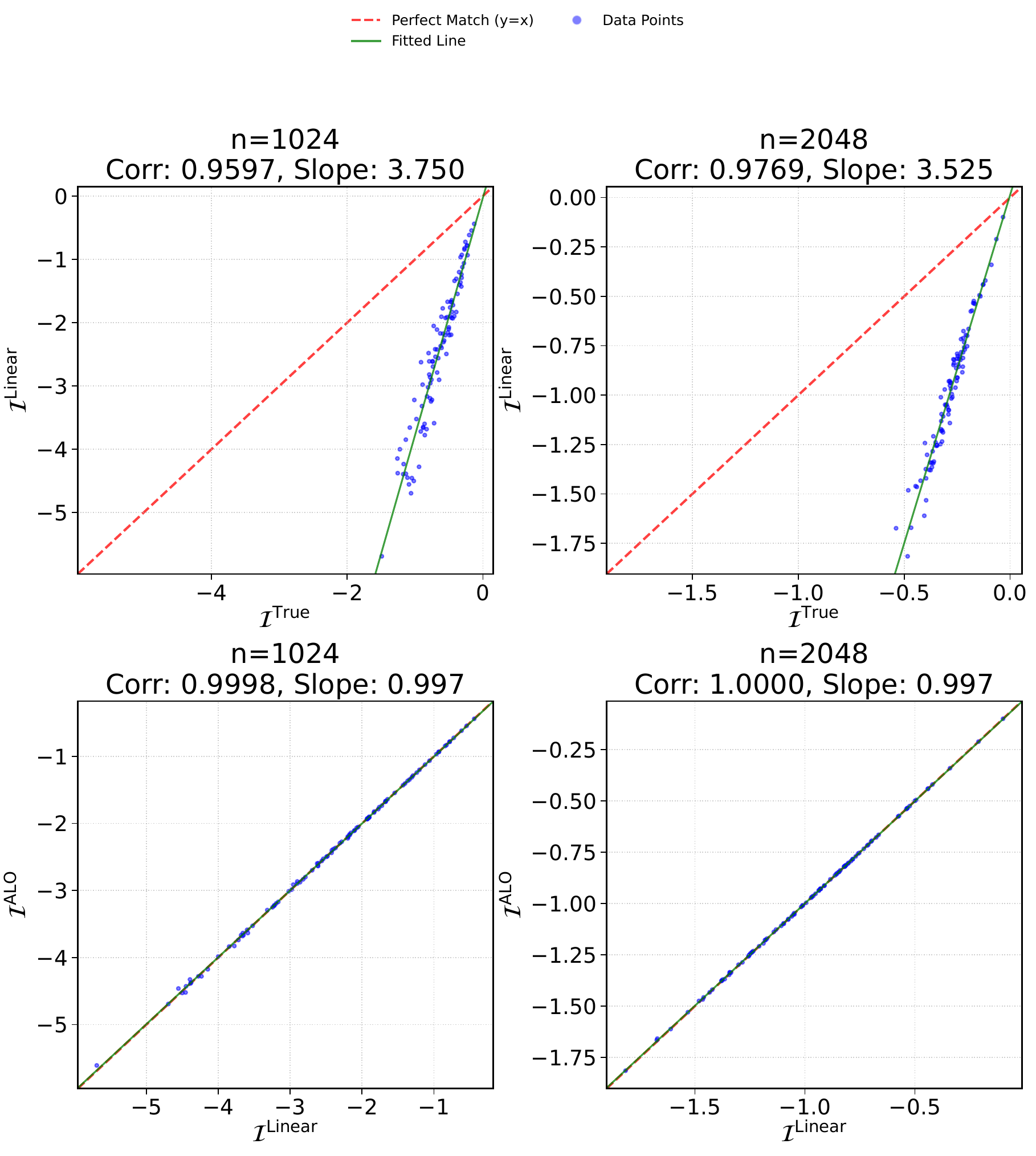}
    \includegraphics[width=0.48\linewidth]{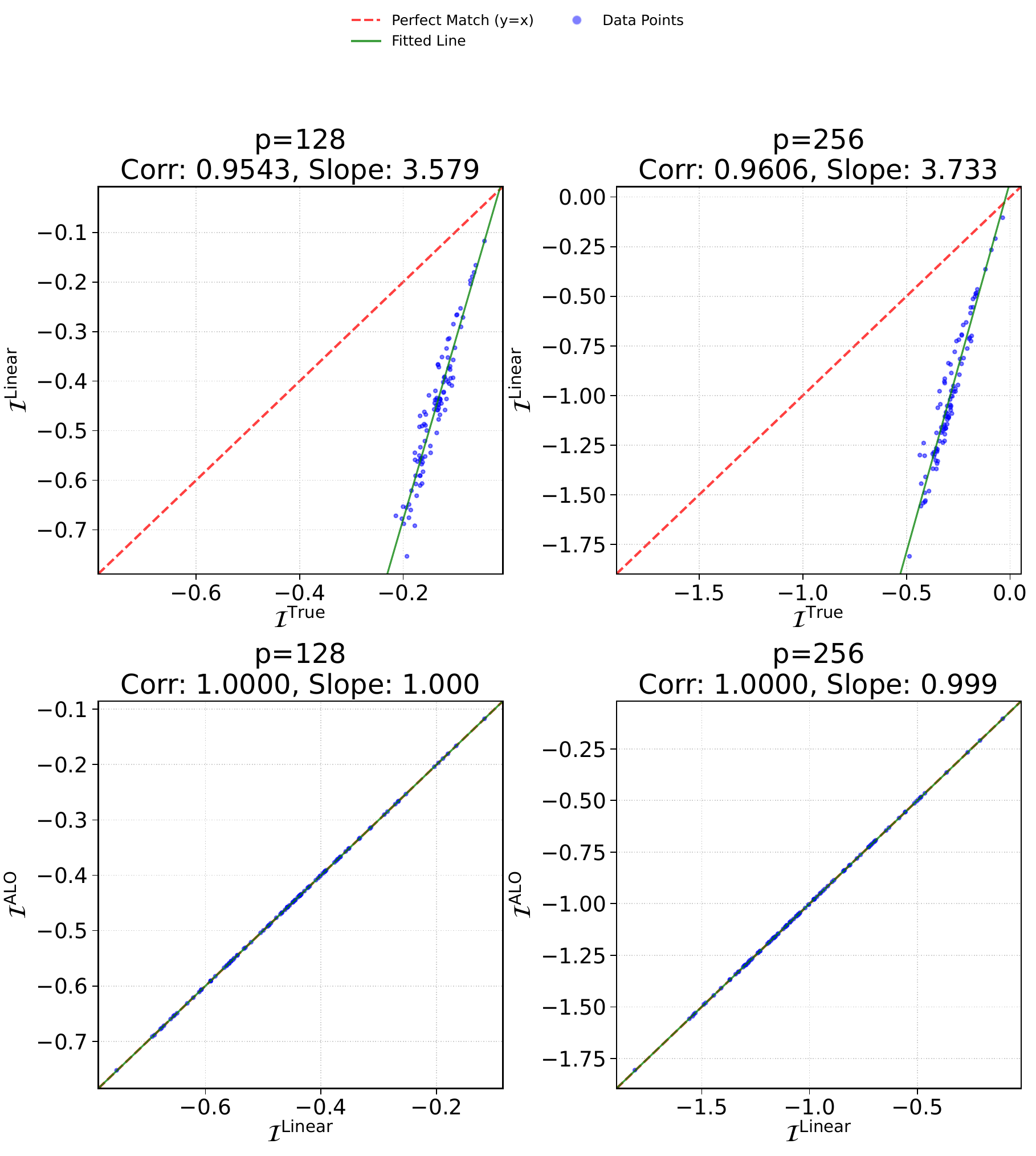}
    \caption{
        Correlation analysis for $5$-class classification where $p=100$ is fixed.
        \textbf{Left two panels:} results for $p=100$ fixed; \textbf{Right two panels:} results for $N=5000$ fixed.
        \textbf{Top row:} Linearization step: x-axis is $\mathcal{I}^{\rm True}$ and y-axis is $\mathcal{I}^{\rm Linear}$.
        \textbf{Bottom row:} ALO step: x-axis is $\mathcal{I}^{\rm True}$ and y-axis is $\mathcal{I}^{\rm ALO}$.
    }
    \label{fig:multi_alo_K5_ii}
\end{figure}

In Figure~\ref{fig:multi_alo_K5_ii}, we select $100$ training datapoints as $\bz_i$ and plot the corresponding influence functions.
The results are consistent with our theoretical findings.
For the linearization step, the slope deviates substantially from $1$, indicating a large magnitude error, in agreement with Theorem~\ref{theo:step1_corr}.
Nevertheless, the linearization exhibits a clear linear trend and retains a relatively high correlation.
For the ALO step, the magnitude error is much smaller, which aligns with Theorem~\ref{thm:g_base_ALO_ii}.

Another notable observation arises from comparing the dependent and independent settings.
Specifically, comparing the left two panels of Figure~\ref{fig:multi_alo_K5_ii} with Figure~\ref{fig:multi_proj_K5}, we find that both the linearization error and the ALO error are substantially larger in the dependent case.
This phenomenon is predicted by Theorem~\ref{theo:step1_corr} versus Theorem~\ref{theo:step1}, as well as Theorem~\ref{thm:g_base_ALO_ii} versus Theorem~\ref{thm:g_base_ALO}.
In particular, in the dependent setting,
$|\mathcal{I}^{\rm True}(\bz_i,\bz_i)|$,
$|\mathcal{I}^{\rm True}(\bz_i,\bz_i)-\mathcal{I}^{\rm Linear}(\bz_i,\bz_i)|$,
and
$|\mathcal{I}^{\rm Linear}(\bz_i,\bz_i)-\mathcal{I}^{\rm ALO}(\bz_i,\bz_i)|$
can all be of strictly higher order than in the independent case.

\subsection{More Empirical Studies}
\label{sec:empirical}
\subsubsection{Binary Classification on CIFAR2}
\label{sec:cifar2}

\paragraph{Experimental setup.}
We present results on CIFAR-2, constructed by taking the \(5{,}000\) airplane and \(5{,}000\) automobile images from CIFAR-10, yielding \(n=10{,}000\) training points. We use the same downsampling procedure as in Section~\ref{sec:cifar10} (from \(32\times 32\times 3\) to \(8\times 8\times 3\) via \(4\times 4\) average pooling), resulting in \(p=192\) features. We train a (binary) logistic regression model, obtaining \(81.8\%\) training accuracy and \(81.2\%\) test accuracy (\(n_{\text{test}}=2{,}000\)).

\paragraph{Correlation analysis.}
We sample \(100\) test points and \(100\) training points (totaling \(10{,}000\) pairs) and compute the exact LOO influence and our approximation. As shown in Figure~\ref{fig:cifar-2}, the Pearson correlation is consistently close to \(1\), empirically confirming the high fidelity of the ALO approximation, consistent with Theorem~\ref{thm:g_base_ALO}.

\paragraph{Rank alignment.}
We further evaluate top-\(k\) ranking alignment using two metrics:
(i) \textbf{Exact Match Count}: the number of test points (out of 100) where the retrieved top-\(k\) list matches the exact LOO list exactly (order preserved), and
(ii) \textbf{Overlap Ratio}: the average fraction of overlap between retrieved and exact top-\(k\) sets.
As reported in Table~\ref{tab:influence-overlap}, exact matches decrease as \(k\) increases, while the overlap ratio remains very close to one, indicating that the approximation preserves local neighborhoods of influential examples.

\paragraph{Qualitative visualization.}
Figure~\ref{fig:cifar-2-influ} visualizes the top-5 (proponents) and bottom-5 (opponents) influential training images for selected test points. The strong overlap between exact and approximate results, along with the close influence values, indicates that the approximation preserves semantic relationships in the data.

\begin{figure}[ht]
    \centering
    \includegraphics[width=0.3\linewidth]{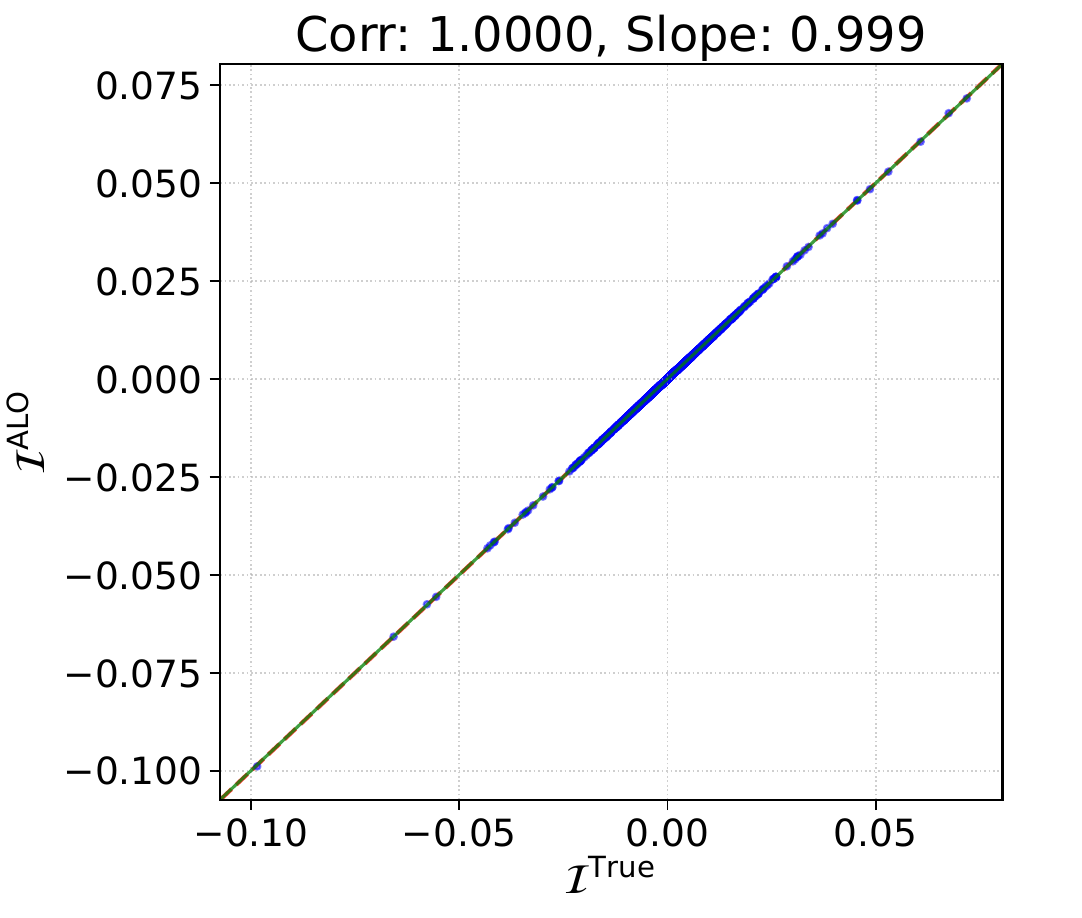}
    \caption{
        Correlation between \(\mathcal{I}^{\rm True}\) and \(\mathcal{I}^{\rm ALO}\) for CIFAR-2. Each plot contains \(10{,}000\) points (100 test points \(\times\) 100 training points).
    }
    \label{fig:cifar-2}
\end{figure}

\begin{table}[ht]
    \caption{Rank alignment between exact and approximate influence rankings on CIFAR-2. Metrics are reported for both proponents (Top-\(k\)) and opponents (Bottom-\(k\)). Exact Matches count the number of test points (out of 100) with identical ranked lists; Overlap Ratio is the average set overlap.}
    \label{tab:influence-overlap}
    \centering
    \resizebox{0.48\textwidth}{!}{%
    \begin{tabular}{lcccccc}
    \toprule
    \textbf{Metric} & \multicolumn{6}{c}{\textbf{Size (\(k\))}} \\
    \cmidrule(lr){2-7}
    & \textbf{1} & \textbf{3} & \textbf{5} & \textbf{10} & \textbf{20} & \textbf{50} \\
    \midrule
    \multicolumn{7}{l}{\textit{Proponents (Top-\(k\) Positive Influence)}} \\
    Exact Matches (Count) & 99 & 96 & 89 & 71 & 26 & 1 \\
    Overlap Ratio         & 0.990 & 0.990 & 0.992 & 0.992 & 0.994 & 0.996 \\
    \midrule
    \multicolumn{7}{l}{\textit{Opponents (Bottom-\(k\) Negative Influence)}} \\
    Exact Matches (Count) & 99 & 93 & 87 & 71 & 33 & 0 \\
    Overlap Ratio         & 0.990 & 0.983 & 0.992 & 0.997 & 0.995 & 0.995 \\
    \bottomrule
    \end{tabular}
    }%
\end{table}

\begin{figure}[t]
    \centering
    \includegraphics[width=0.48\linewidth]{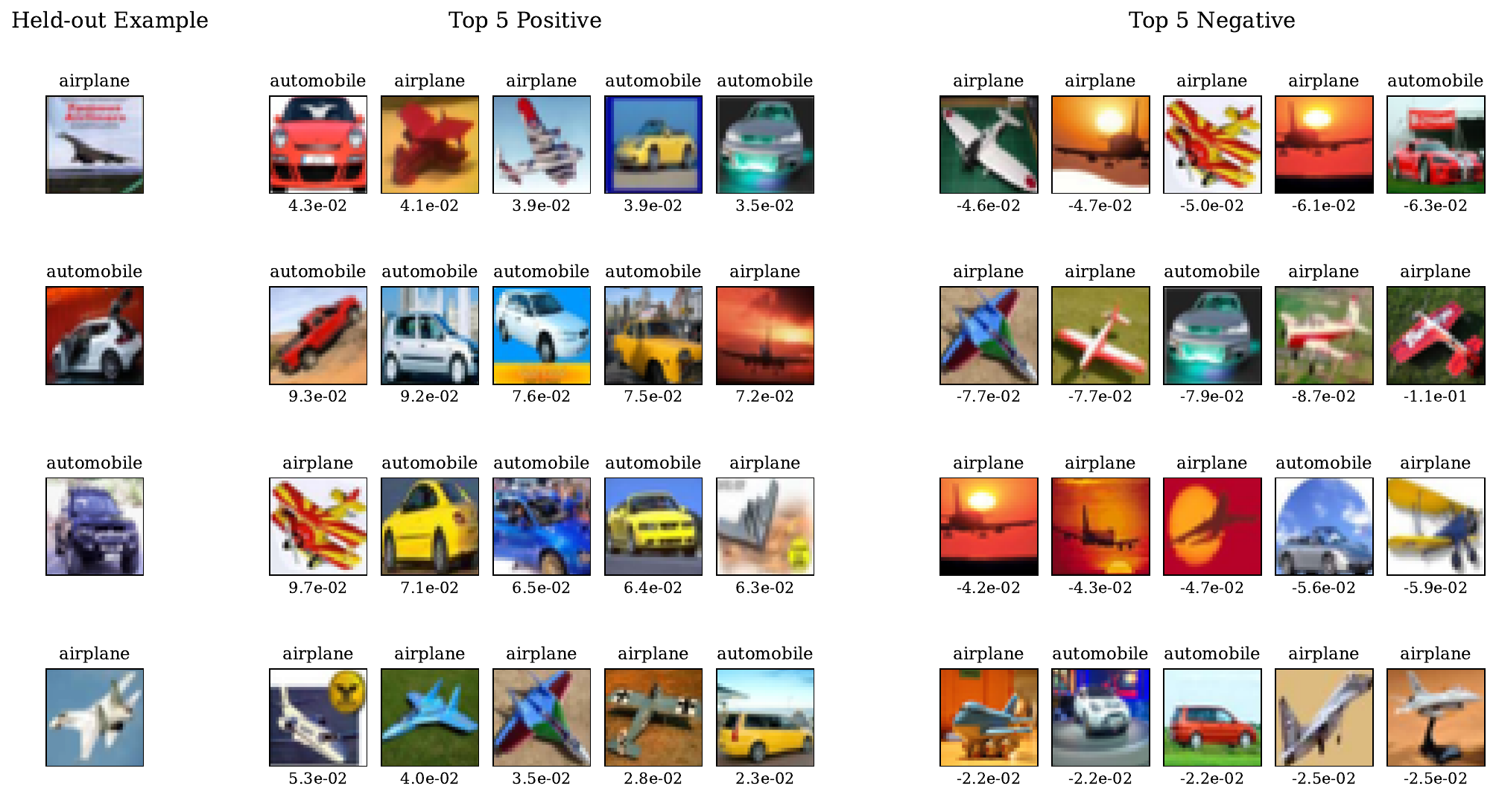}
    \includegraphics[width=0.48\linewidth]{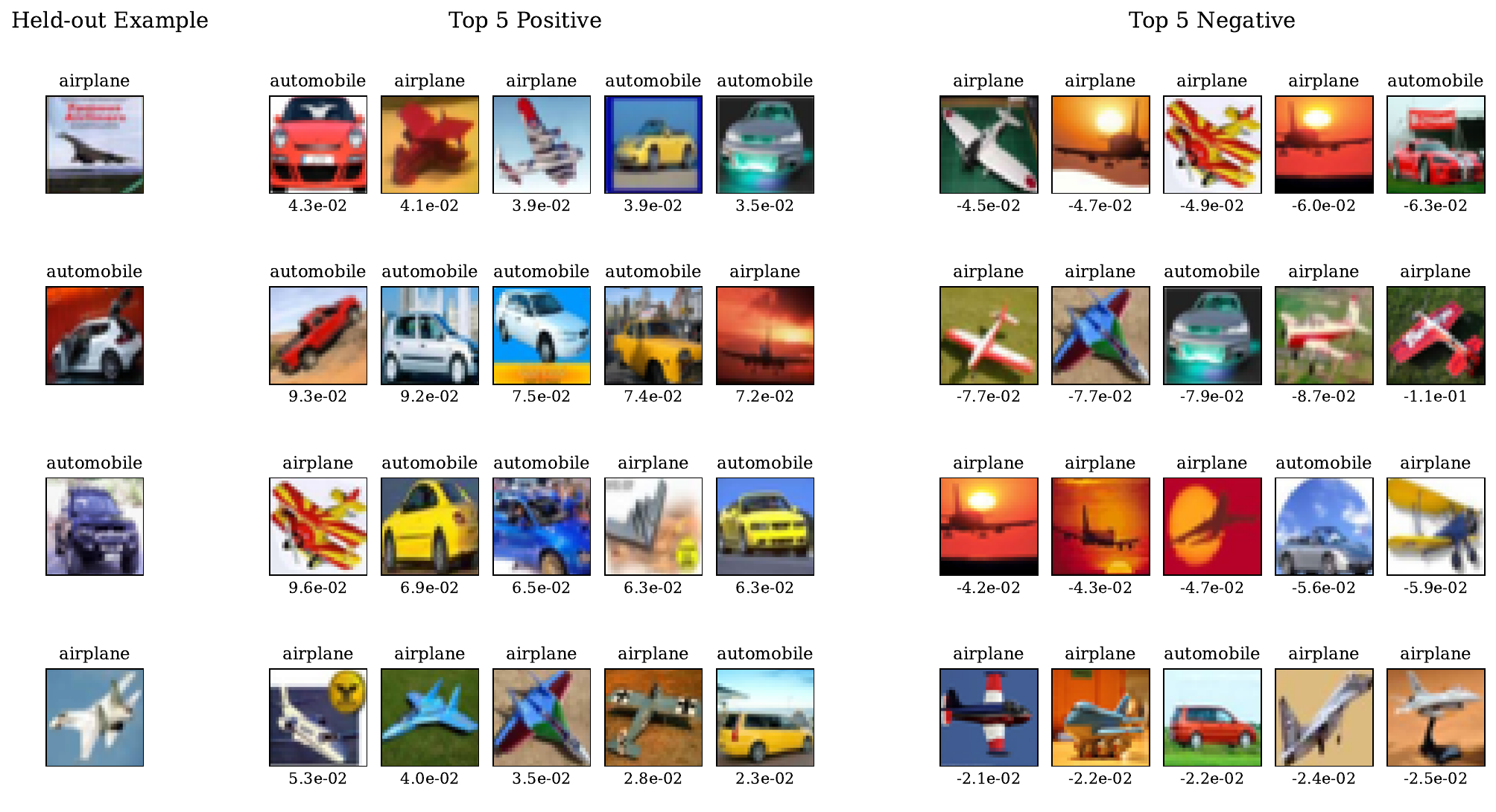}
    \caption{
        Top-5 and bottom-5 influential training examples identified by \(\mathcal{I}^{\rm True}\) and \(\mathcal{I}^{\rm ALO}\) for binary CIFAR-2. Labels denote the class; scores indicate the influence value.
    }
    \label{fig:cifar-2-influ}
\end{figure}

\subsubsection{Multi-class Classification on CIFAR10}
\label{sec:cifar10_add}

\paragraph{Qualitative visualization.}
Figure~\ref{fig:cifar-10-influ} visualizes the top-5 and bottom-5 influential
training examples for selected test images. Consistent with Table~\ref{tab:influence-overlap-cifar10}, which indicates that the Top-$5$ proponents and opponents exact match rate can be bigger than $70\%$, we see that the retrieved examples identified by $\mathcal{I}^{\rm Linear}$ are quite similar to those
identified by $\mathcal{I}^{\rm True}$, although the influence values are quite different, indicating that Linearization approximation captures ranking properties.

\begin{figure}[t]
    \centering
    \includegraphics[width=0.78\linewidth]{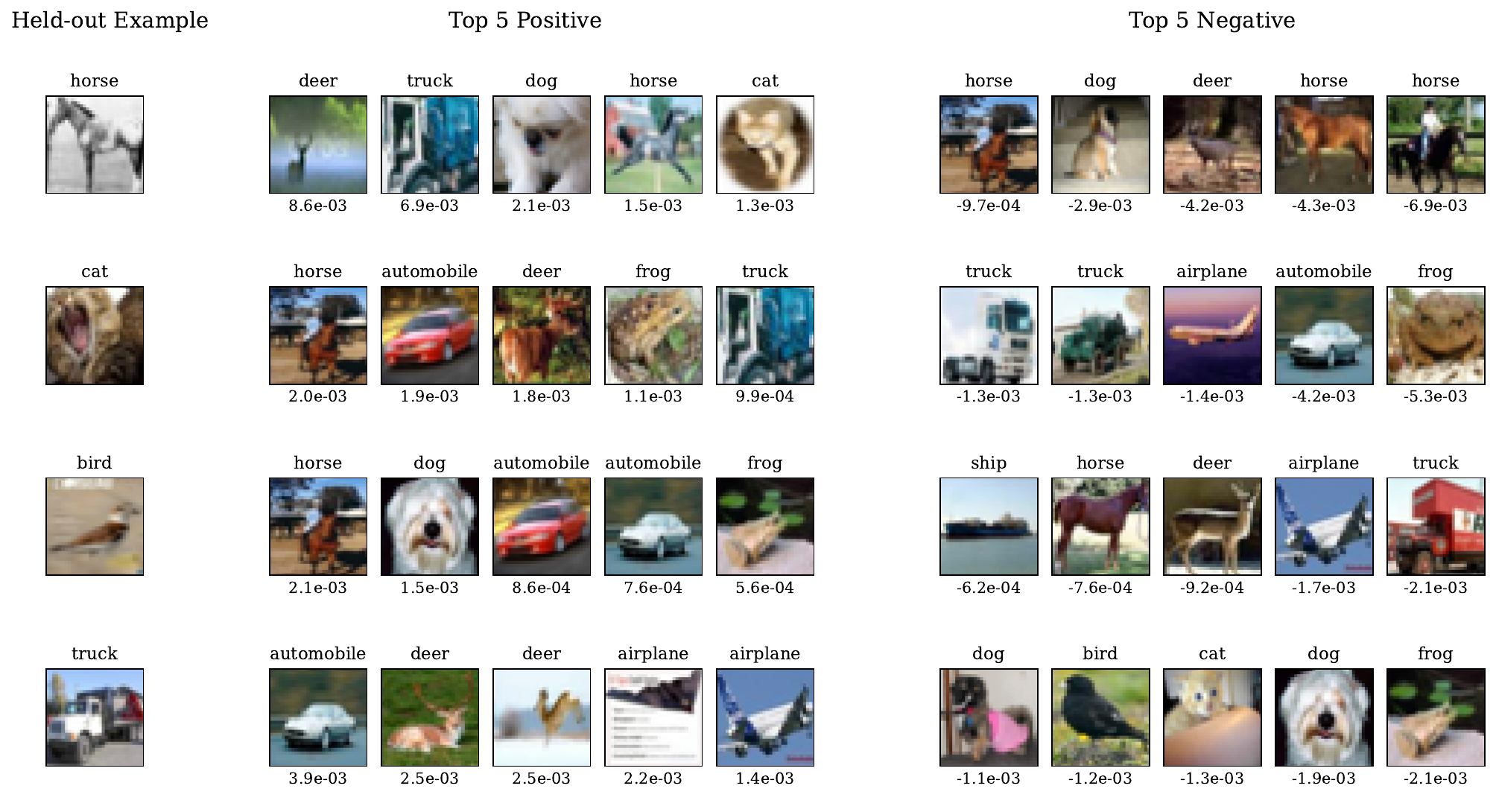}
    \includegraphics[width=0.78\linewidth]{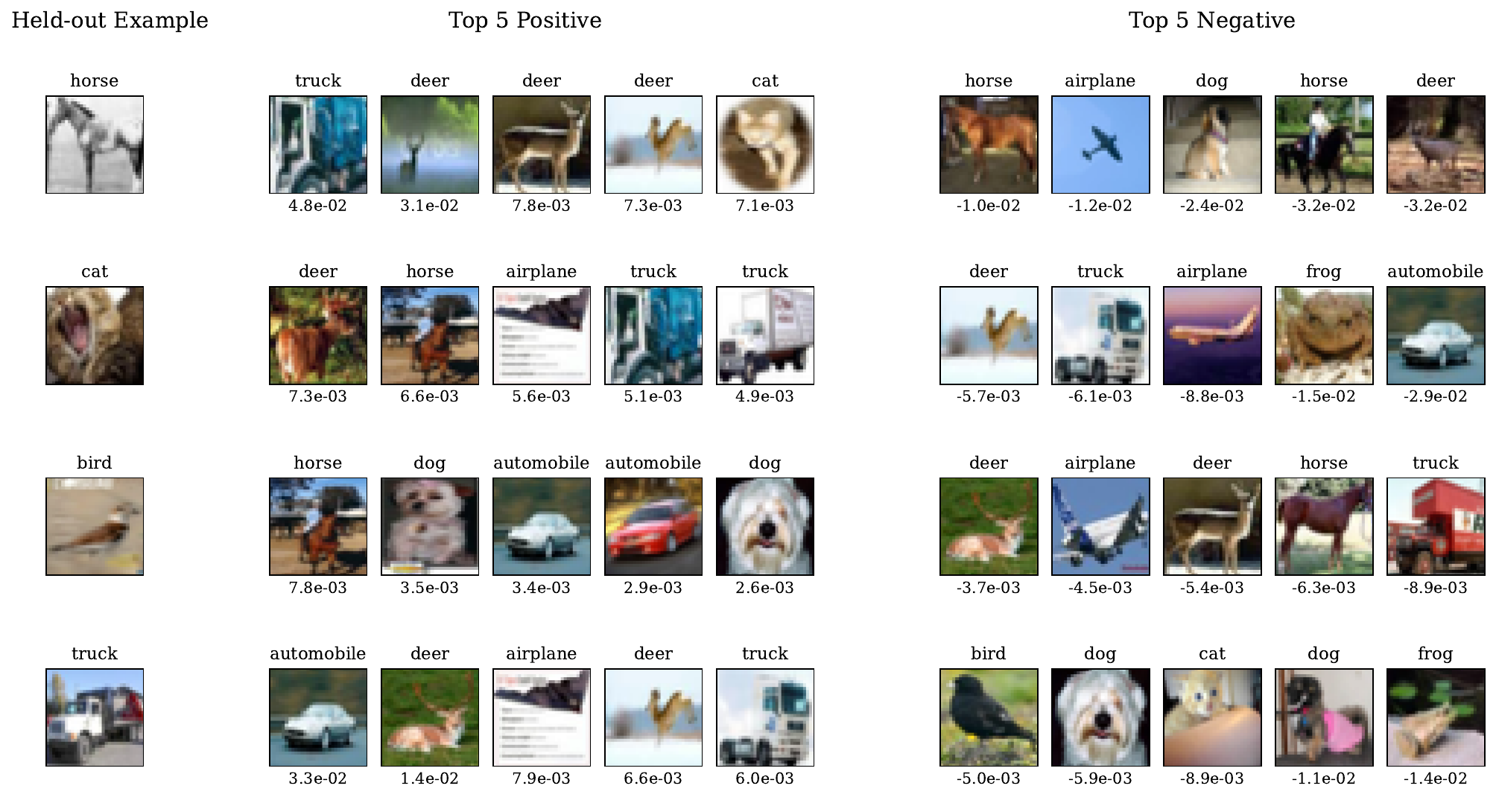}
    \includegraphics[width=0.78\linewidth]{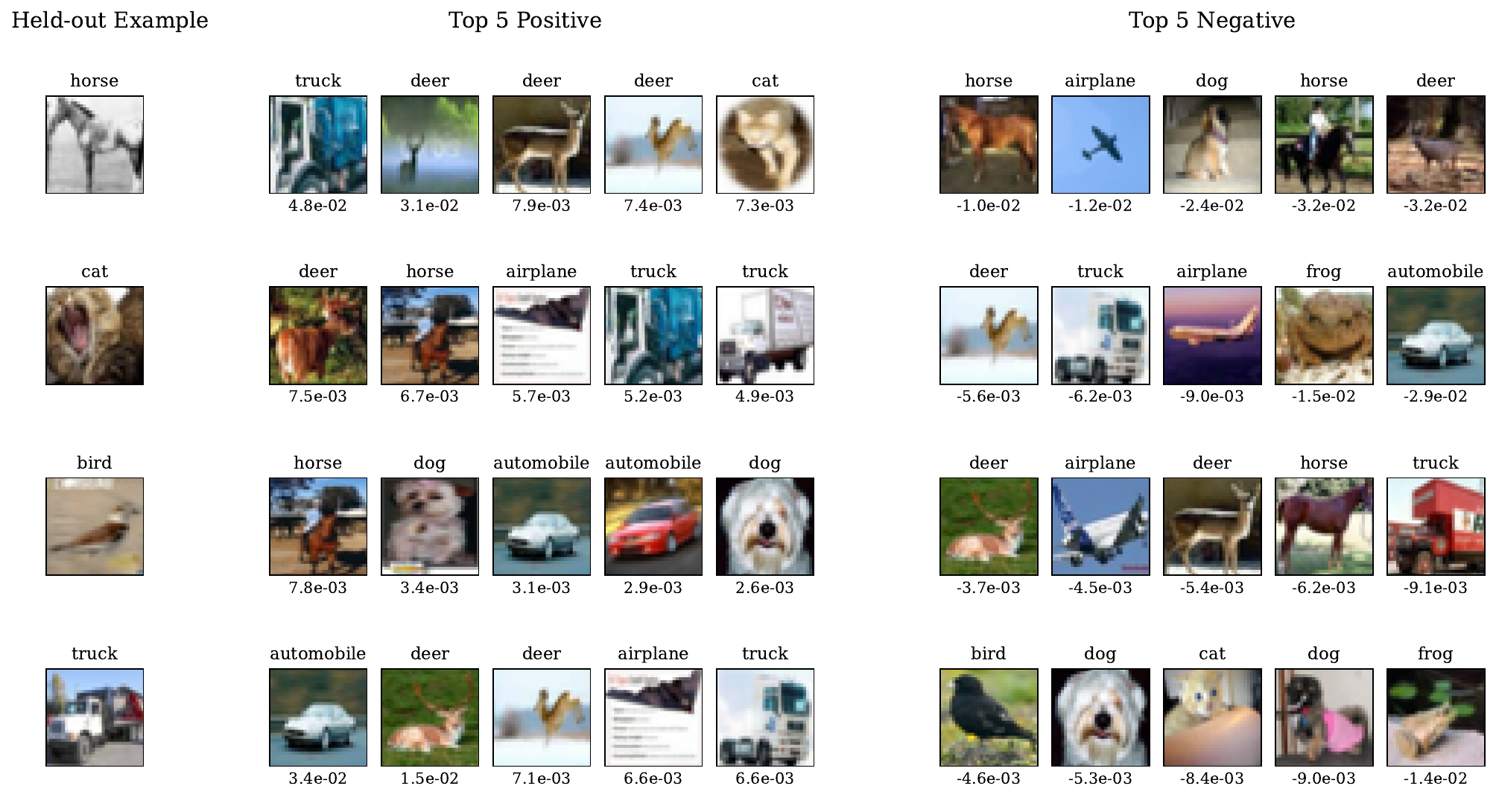}
    \caption{Top-5 and bottom-5 influential training examples identified by $\mathcal{I}^{\rm True}$, $\mathcal{I}^{\rm Linear}$ and $\mathcal{I}^{\rm ALO}$ for $10-$classification CIFAR10. Labels denote the class, and scores indicate the influence value.}
    \label{fig:cifar-10-influ}
\end{figure}

\newpage

\end{document}